
\documentclass{article}

\usepackage{microtype}
\usepackage{graphicx}
\usepackage{booktabs} 
\usepackage{amsmath,amssymb}
\usepackage{bbm}
\usepackage{multirow}
\usepackage{subcaption}
\usepackage{amsthm} 
\newtheorem{theorem}{Theorem}
\newtheorem{corollary}{Corollary}
\newtheorem{lemma}{Lemma}
\newtheorem{definition}{Definition}
\newtheorem{assumption}{Assumption}
\newtheorem{proposition}{Proposition}
\newtheorem{remark}{Remark}
\newcommand{\bv}{\boldsymbol{v}}
\newcommand{\bw}{\boldsymbol{w}}
\newcommand{\bx}{\boldsymbol{x}}
\newcommand{\mL}{\mathcal{L}}
\newcommand{\bbeta}{\boldsymbol{\beta}}
\renewcommand{\labelenumii}{\Roman{enumii}}

\usepackage{hyperref}


\usepackage[accepted]{icml2021}


\icmltitlerunning{The Implicit Regularization for Adaptive Optimization Algorithms on Homogeneous Neural Networks}

\begin{document}

\twocolumn[
\icmltitle{The Implicit Regularization for Adaptive Optimization Algorithms on Homogeneous Neural Networks}



\icmlsetsymbol{equal}{*}

\begin{icmlauthorlist}
\icmlauthor{Bohan Wang}{MSRA}
\icmlauthor{Qi Meng}{MSRA}
\icmlauthor{Wei Chen}{MSRA}
\icmlauthor{Tie-Yan Liu}{MSRA}
\end{icmlauthorlist}

\icmlaffiliation{MSRA}{	Microsoft Research Asia,
	Beijing, China}

\icmlcorrespondingauthor{Wei Chen}{wche@microsoft.com}

\icmlkeywords{Adaptive Optimizer, Implicit Regularization, Margin}
\vskip 0.3in
]


\printAffiliationsAndNotice{}  

\begin{abstract}
Despite their overwhelming capacity to overfit, deep neural networks trained by specific optimization algorithms tend to generalize well to unseen data. Recently, researchers explained it by investigating the implicit regularization effect of optimization algorithms. A remarkable progress is the work \cite{lyu2019gradient}, which proves gradient descent (GD) maximizes the margin of homogeneous deep neural networks.  Except GD, adaptive algorithms such as AdaGrad, RMSProp and Adam are popular owing to their rapid training process. 
However, theoretical guarantee for the generalization of adaptive optimization algorithms is still lacking. In this paper, we study the implicit regularization of adaptive optimization algorithms when they are optimizing the logistic loss on homogeneous deep neural networks. We prove that adaptive algorithms that adopt exponential moving average strategy in conditioner (such as Adam and RMSProp)  
can maximize the margin of the neural network,  while  AdaGrad that directly sums historical squared gradients in conditioner can not.  
It indicates superiority on generalization of exponential moving average strategy in the design of the conditioner. Technically, we provide a unified framework to analyze convergent direction of adaptive optimization algorithms by constructing novel \emph{adaptive gradient flow} and   \emph{surrogate margin}.  Our experiments can well support the theoretical findings on convergent direction of adaptive optimization algorithms. 
\end{abstract}

\section{Introduction}
Deep learning techniques have been very successful in several domains, like computer vision \cite{voulodimos2018deep}, speech recognition \cite{deng2013new} and natural language processing \cite{young2018recent}.  In practice, deep neural networks (DNN) learned by optimization algorithms such as gradient descent (GD) and its variants can generalize well to unseen data \cite{witten2016data}.   However, deep neural networks are non-convex. The non-convex deep neural networks have been found to have large amount of global minima \cite{choromanska2015loss}, while only few of them can guarantee satisfactory generalization property \cite{brutzkus2017sgd}. Explaining why the highly non-convex model trained by a specific algorithm can generalize has become an important open question in deep learning.

Regarding the above question, one plausible explanation is that optimization algorithms implicitly regularize the training process \cite{neyshabur2015path}. That is, the optimization algorithm tends to drive parameters to certain kinds of global minima which generalize well, although no explicit regularization is enforced. Recently, exciting results have been shown for vanilla gradient descent. 
 A remarkable progress is the work \cite{lyu2019gradient}, which proves that GD maximizes the margin of homogeneous (non-linear) deep neural networks.

On the other hand, adaptive algorithms such as AdaGrad \cite{duchi2011adaptive}, RMSProp \cite{hinton2012neural}, and Adam \cite{kingma2014adam} have been in spotlight these years. These algorithms are proposed to improve the convergence rate of GD (or SGD) by
using second-order moments of historical gradients as conditioner and have been widely applied in deep learning \cite{ruder2016overview}.  Despite the rapid convergence of adaptive methods, numerous works have provided empirical evidence that adaptive methods may suffer from poor generalization performance \cite{wilson2017marginal, luo2019adaptive}. 
Several works try to improve the performance of adaptive optimization algorithms such as AdamW \cite{loshchilov2017decoupled}, AdaBound \cite{luo2019adaptive}, AdaBelief \cite{zhuang2020adabelief}. However, there is little theoretical analysis for generalization of adaptive algorithms. These observations and the research for GD motivate us to study the implicit regularization for adaptive algorithms.  

The key factor for the success of adaptive optimization algorithms is to design better conditioners of the gradient. Adagrad adopts the simple average of the squared values of the historical gradients in its conditioner, while RMSProp and Adam improve the simple average to exponential moving average strategy. In this paper, we aim to study the influence of different types of conditioners on convergent direction of parameters trained by adaptive optimization algorithms. Specifically, we work on the homogeneous neural networks (including fully connected or convolutional neural network with ReLU or leaky ReLU activations) with separable data under logistic loss (for binary classification) and cross-entropy (for multi-class classification). 
For logistic loss, we focus on characterizing the convergent direction of parameters (i.e., $\lim_{t\rightarrow \infty} \frac{w_t}{\|w_t\|_2}$) with respect to the training iteration $t$, which is a key target along this line of researches \cite{ soudry2018implicit,gunasekar2018implicit,lyu2019gradient}. 

\textbf{Our main result} is summarized in Theorem 1, which states that RMSProp and {Adam (w/m) (a variant of Adam without momentum acceleration)\footnote{How momentum influence the convergence of an optimization algorithm on non-convex deep neural network is still an open problem. Here, we only study a variant of Adam which sets the momentum parameter as $0$.}} maximize margin of the neural network (equivalent to the optimum of optimization problem in Eq.(2)) and AdaGrad does not converge to max-margin solution due to the anisotropic $\boldsymbol{h}_{\infty}$. 
\begin{theorem} (Informal)
We use $\Phi(\boldsymbol{w},\boldsymbol{x})$ to denote the homogeneous neural network model with parameter $\boldsymbol{w}$ and input $\boldsymbol{x}$. (1) For AdaGrad, any limit point of $w_t/\|w_t\|_2$ is a KKT point of the optimization problem 
\begin{align}
\min \|\boldsymbol{h}^{-1/2}_{\infty}\odot \boldsymbol{w}\|^2 \quad \textit{subject to } y_i\Phi(\boldsymbol{w}, \boldsymbol{x}_i)\geq 1, \forall i,
\end{align}  where $\boldsymbol{h}_{\infty}=\lim_{t\rightarrow\infty} \boldsymbol{h}(t)$ is the limit of the conditioner in AdaGrad.
(2) For Adam (w/m) and RMSProp, any limit point of $\boldsymbol{w}(t)/\|\boldsymbol{w}(t)\|_2$ is a KKT point of the optimization problem 
\begin{align}
\min\| \boldsymbol{w}\|^2 \quad \textit{subject to } y_i\Phi(\boldsymbol{w}, \boldsymbol{x}_i)\geq 1, \forall i.
\end{align} 
\end{theorem}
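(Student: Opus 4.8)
The plan is to treat both algorithms uniformly as a \emph{preconditioned gradient flow} $\dot{\boldsymbol w}(t)=-\nabla\mathcal L(\boldsymbol w(t))\,/\,\boldsymbol c(t)$ (coordinatewise division), where $\boldsymbol c(t)=\boldsymbol h(t)$ is the running conditioner of AdaGrad and $\boldsymbol c(t)=\boldsymbol v(t)$ is the exponential-moving-average conditioner (with stabilizing constant) of RMSProp, and to show that the \emph{only} structural difference between the two is the limit of $\boldsymbol c(t)$: for AdaGrad it converges to a generally non-uniform positive vector $\boldsymbol h_\infty$, whereas for RMSProp it converges to a scalar multiple of $\boldsymbol 1$. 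Given that, the substitution $\boldsymbol u=\boldsymbol h_\infty^{1/2}\odot\boldsymbol w$ (resp.\ $\boldsymbol u=\kappa\,\boldsymbol w$ for a constant $\kappa>0$) turns the flow, up to an asymptotically vanishing perturbation, into plain gradient flow for the re-parametrized network $\widetilde\Phi(\boldsymbol u,\boldsymbol x):=\Phi(\boldsymbol h_\infty^{-1/2}\odot\boldsymbol u,\boldsymbol x)$, which is again homogeneous of the same degree in $\boldsymbol u$; applying the homogeneous-network margin/KKT theory to $\boldsymbol u$ and translating back---$\boldsymbol u/\|\boldsymbol u\|_2$ and $\boldsymbol w/\|\boldsymbol w\|_2$ are related by a fixed positive diagonal map, hence have corresponding limit points---recovers the two stated KKT problems, with objective $\|\boldsymbol h_\infty^{1/2}\odot\boldsymbol w\|_2$ for AdaGrad and $\|\boldsymbol w\|_2$ for RMSProp.

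The first ingredient is the convergence of the conditioner. For AdaGrad I would exhibit a conserved quantity along the flow: using $\dot{\boldsymbol w}=-\nabla\mathcal L/\boldsymbol h$ together with $\tfrac{d}{dt}h_j^2=(\partial_j\mathcal L)^2$ gives $\tfrac{d}{dt}\bigl(\mathcal L(\boldsymbol w(t))+2\sum_j h_j(t)\bigr)=0$, so $\sum_j h_j(t)$ stays bounded; each $h_j(t)$ is nondecreasing, hence converges to a finite $h_{\infty,j}>0$, and (being monotone and bounded) $\boldsymbol h$ has finite total variation---needed later. For RMSProp the conditioner is a moving average of $(\nabla\mathcal L)^2$ rather than a running sum and so retains no memory of early iterates; I would show $\nabla\mathcal L(\boldsymbol w(t))\to\boldsymbol 0$ (the loss tends to $0$, forcing $\ell'$ to vanish, while homogeneity bounds $\|\nabla_{\boldsymbol w}\Phi\|$ by a multiple of $\|\boldsymbol w\|^{L-1}$, which the exponentially small $\ell'$ dominates once the normalized margin is bounded away from $0$), whence the moving average---and therefore $\boldsymbol v(t)$---collapses onto its stabilizing constant, i.e.\ onto a scalar multiple of $\boldsymbol 1$. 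This dichotomy (cumulative sum $\Rightarrow$ non-uniform limit; exponential moving average $\Rightarrow$ uniform limit) is exactly the mechanism behind the two different implicit biases.

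The second, and I expect hardest, ingredient is to make the ``asymptotically plain gradient flow'' step rigorous directly, through the \emph{surrogate margin}, instead of black-boxing the homogeneous result. After the loss has become small I would set $\widetilde\gamma(t):=\log\!\bigl(1/\mathcal L(\boldsymbol w(t))\bigr)\big/\bigl\|\boldsymbol h_\infty^{1/2}\odot\boldsymbol w(t)\bigr\|_2^{\,L}$ (with $\boldsymbol h_\infty^{1/2}$ replaced by a constant in the RMSProp case) and prove $\tfrac{d}{dt}\log\widetilde\gamma(t)\ge -e(t)$ with $\int^{\infty}e(t)\,dt<\infty$, so that $\widetilde\gamma(t)$ converges. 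The core computation parallels the homogeneous case: Euler's identity $\langle\boldsymbol w,\nabla_{\boldsymbol w}\Phi(\boldsymbol w,\boldsymbol x)\rangle=L\,\Phi(\boldsymbol w,\boldsymbol x)$ links $\tfrac{d}{dt}\|\boldsymbol h_\infty^{1/2}\odot\boldsymbol w\|_2^2$ with $\tfrac{d}{dt}\log(1/\mathcal L)$, and a Cauchy--Schwarz step in the inner product weighted by the \emph{actual} conditioner $\boldsymbol c(t)$ yields monotonicity up to terms controlled by $\|\boldsymbol c(t)-\boldsymbol c_\infty\|$ ($=o(1)$) and $\|\dot{\boldsymbol c}(t)\|$ (integrable, by the facts just established). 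The remaining steps are comparatively standard: $\|\boldsymbol w(t)\|_2\to\infty$ and $\mathcal L(\boldsymbol w(t))\to0$ (the preconditioner is bounded above and below, so on separable data the flow cannot stall at positive loss); extract from compactness of the unit sphere a limit point $\hat{\boldsymbol w}$ of $\boldsymbol w(t)/\|\boldsymbol w(t)\|_2$ and, from the convergence of $\widetilde\gamma$, identify it with a first-order stationary point of the normalized-margin problem in the transformed coordinates; and invoke the known equivalence between such stationary points and KKT points of the matching norm-minimization problem. The real difficulty throughout is the coupling---the limit and the total variation of $\boldsymbol c(t)$ are themselves determined by the trajectory one is trying to control---so ``conditioner converges'', ``normalized margin bounded away from $0$'', and ``$\widetilde\gamma$ eventually monotone'' must be bootstrapped together rather than proved in sequence; this is where essentially all of the technical care goes.
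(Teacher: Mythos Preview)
Your plan is essentially the paper's: pass to the transformed variable $\boldsymbol u=\boldsymbol c_\infty^{1/2}\odot\boldsymbol w$, show the resulting flow is gradient flow for a homogeneous $\widetilde\Phi$ up to an integrable conditioner drift, run a surrogate-margin argument, and finish via approximate KKT points. Your conserved quantity $\mathcal L+2\sum_j h_j$ for AdaGrad is a clean formulation of the paper's contradiction argument for $\int_0^\infty(\partial_j\mathcal L)^2\,dt<\infty$.

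Two refinements in the paper dissolve the bootstrap you flag at the end. First, for RMSProp the paper does \emph{not} argue $\nabla\mathcal L\to 0$ through ``margin bounded away from $0$''; that would indeed be circular. Instead it proves $\int_0^\infty(\partial_j\mathcal L)^2\,dt<\infty$ for every $j$ using only $\mathcal L\ge 0$ and $\tfrac{d}{dt}\mathcal L=-\sum_j(\partial_j\mathcal L)^2/c_j(t)$: if the exponential moving average $F_j$ were unbounded one could slice time into level sets of $F_j$ and force $\mathcal L$ to decrease by $\infty$; boundedness of $F_j$ then gives $\int(\partial_j\mathcal L)^2<\infty$ and hence $F_j\to 0$. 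So ``conditioner converges'' is established \emph{before} any margin analysis, and the subsequent steps are genuinely sequential.

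Second, the paper's surrogate margin uses the \emph{time-varying} norm $\rho(t)=\|\boldsymbol c(t)^{1/2}\odot\boldsymbol w(t)\|$ (in your notation), not the fixed $\|\boldsymbol c_\infty^{1/2}\odot\boldsymbol w\|$. The payoff is that the Euler/Cauchy--Schwarz step becomes exact: $\tfrac12\tfrac{d}{dt}\rho^2$ splits cleanly into $L\nu(t)$ plus a pure drift term in $\tfrac{d}{dt}\log\boldsymbol\beta$, so the only error in $\tfrac{d}{dt}\log\tilde\gamma\ge -e(t)$ is $e(t)=L\sum_j(\tfrac{d}{dt}\log\beta_j^{-1/2})_+$, which is Lebesgue integrable (trivially for AdaGrad by monotonicity of $\boldsymbol c$; for RMSProp this takes a separate measure-theoretic argument). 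With the fixed $\boldsymbol c_\infty$ norm the Cauchy--Schwarz and the $\tfrac{d}{dt}\mathcal L$ identity live in mismatched inner products, and the resulting error is $\|\boldsymbol c(t)/\boldsymbol c_\infty-\mathbf 1\|_\infty$ times a quantity comparable to $\tfrac{d}{dt}\log\rho$, whose integral diverges; so the time-varying choice is what makes ``$\int e<\infty$'' go through without the bootstrap.
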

Theorem 1 indicates the importance of proper design on the conditioner, i.e., adaptive algorithms like Adam (w/m) and RMSProp that adopt exponential weighted average design on conditioner regularize the training to max-margin solution, which has low complexity. Therefore, we can expect good generalization performance for Adam (w/m) and RMSProp.  Furthermore, we illustrate that the convergence direction of AdaGrad is sensitive to initialization, which hurts its generalization.

{We establish Theorem 1 for both continuous flows of adaptive optimization algorithms and their discrete update rules.} The \textbf{technical contributions} to prove Theorem 1 are summarized as follows. (1) We propose \emph{adaptive gradient flow}, which is a unified framework to deal with adaptive gradients. With the adaptive gradient flow,  the analysis of convergent direction is transformed from original parameter space to a normalized parameter space.  (2) In the normalized parameter space, we construct \emph{surrogate margin} for the adaptive algorithms, and {with the surrogate margin, we show that the increasing rate of the parameter norm can be bounded by the decreasing rate of logarithmic loss and the loss converges to zero.} 
(3) We prove that any limit direction of the normalized parameter flow is a KKT point of the margin maximization problem in normalized parameter space. Moreover, we prove the convergent direction is unique if the neural network is definable \cite{kurdyka1998gradients}. The adaptive gradient flow and surrogate margin are designed for adaptive optimization algorithms, which makes the proof techniques different from that for vanilla GD in \cite{soudry2018implicit,lyu2019gradient}. {(4) We further prove the convergent direction for discrete update rules by characterizing the influence of the learning rate.}


Finally, we conduct experiments to observe the margin of homogeneous neural network during training of several adaptive optimization algorithms. For all experiments, the margins are increasing during training and the final margins of RMSProp and Adam (w/m) are larger than that of AdaGrad. We also observe the convergent direction of adaptive optimization algorithms under different realizations of initialization and results show that the convergent direction of AdaGrad is sensitive to initialization. These observations can well support our theoretical findings.

\section{Related Work}

\textbf{Implicit Regularization of First-order Optimization Methods.} \citet{soudry2018implicit} proved that gradient descent on linear logistic regression with separable data converges in the direction of the max $L^2$ margin solution of the corresponding hard-margin Support Vector Machine, and motivate a line of works on the implicit regularization of GD on linear model \cite{nacson2019convergence,ji2019implicit,li2019implicit,xu2018will}. 

Afterwards, researchers study the implicit regularization of GD on deep neural networks.  \citet{ji2018gradient,gunasekar2018implicit} studied the deep linear network and \citet{soudry2018implicit} studied the two-layer neural network with ReLU activation. \citet{nacson2019lexicographic} proved the asymptotic direction is along a KKT point of the $L^2$ max-margin problem for homogeneous deep neural networks. \citet{lyu2019gradient} independently proved similar result for homogeneous neural networks with simplified assumptions. Based on \cite{lyu2019gradient}, \citet{ji2020directional} further prove that parameters have only one asymptotic direction.

There are also works considering implicit regularization of other first-order optimization algorithms. \citet{nacson2019stochastic} worked on Stochastic Gradient Descent for linear logistic regression. \citet{gunasekar2018characterizing} studied mirror descent 
and steepest descent on linear model. \citet{arora2019exact} proved gradient descent on Neural Tangent Kernel  will converge 
to a global minimum near the initial point.

{However, there is little result on the implicit regularization of adaptive optimization methods.}

\textbf{Theoretical Evidence of Generalization of Adaptive Algorithms.}  {Adaptive algorithms have been in spotlight these years and many works empirically observe the generalization behavior of adaptive algorithms \cite{keskar2017improving,reddi2018adaptive,chen2018closing,luo2019adaptive}.  In comparison, there are few theoretical justifications.}  \citet{wilson2017marginal}  constructed a specific linear regression task where adaptive optimization algorithms converge to a solution that incorrectly classifies new data with probability arbitrarily close to half. 
\citet{zhou2020towards} modeled the distribution of stochastic noise in Adam, and showed that SGD tends to converge to flatter local minima. 
Another viewpoint is to study \textbf{the convergent direction of adaptive optimization algorithms}. To the best of our knowledge, the only work is \cite{qian2019implicit}, which proves the convergent direction of AdaGrad on linear logistic regression. In this paper, we study the convergent direction of adaptive optimization algorithms on deep neural networks which requires different techniques due to the non-convexity of deep networks.

Meanwhile, the \textbf{correlation between margin and generalization error} has also been extended to deep networks. \citet{bartlett2017spectrally} first bound the generalization error of deep neural networks using (spectrally) normalized margin by covering number. In parallel, \citet{neyshabur2018pac} adopt normalized margin into the PAC-Bayesian framework and derive generalization bound with different dependency on layer width from \cite{bartlett2017spectrally}. Empirically, \citet{jiang2019fantastic} present a large scale study of different generalization bounds in deep networks, and find there is a significant correlation between generalization error and normalized margin when optimizer is changed.  {These work support our study on generalization in deep learning through the margin theory.}


\section{Preliminaries}
In this paper, we study the logistic regression problem with homogeneous neural networks. Let training set $\boldsymbol{S}$ defined as $\boldsymbol{S}=\{(\boldsymbol{x}_i,y_i)\}_{i=1}^N$, where $\boldsymbol{x}_i\in\mathcal{X}$ ($i=1,2,\cdots,N$) are inputs, $y_i\in \mathbb{R}$ ($i=1,2,\cdots,N$) are labels, and $N$ is the size of $\boldsymbol{S}$. The empirical loss $\mathcal{L}$ with training set $\boldsymbol{S}$, neural network classifier $\Phi$, individual loss $\ell(x)=e^{-f(x)}$ and parameters $\boldsymbol{w}\in\mathbb{R}^p$ can be written as follows:
\begin{equation*}
    \mathcal{L}(\boldsymbol{w}, \boldsymbol{S})= \sum_{i=1}^N \ell\left(y_i\Phi(\boldsymbol{w},\boldsymbol{x}_i)\right).
\end{equation*}

In an optimization process, the training set $\boldsymbol{S}$ is fixed. Therefore, without loss of generality, we abbreviate $\mathcal{L}(\boldsymbol{w}, \boldsymbol{S})=\mathcal{L}(\boldsymbol{w})$, and $y_i\Phi(\boldsymbol{w},\boldsymbol{x}_i)=q_i(\boldsymbol{w})$. In this paper, we consider the exponential loss, i.e., $f(q_i(\boldsymbol{w}))=q_i(\boldsymbol{w})$, and the logistic loss, i.e., $f(q_i(\boldsymbol{w}))=-\log\log (1+e^{-q_i(\boldsymbol{w})})$. Both of $f$ are monotonously increasing and have an inverse. 

We will use Clarke's Subdifferential $\partial$ \cite{clarke1975generalized} in this paper as a natural extension of gradient $\nabla$ for locally Lipschitz functions. For any  locally  Lipschitz function $f:\mathbb{R}^p\rightarrow\mathbb{R}$, its Clarke's Subdifferential $\partial f$ at point $\boldsymbol{w}_0$ is defined as \begin{equation*}
    \operatorname{conv}\{\lim _{k \rightarrow \infty} \nabla f\left(\boldsymbol{w}_{k}\right): \boldsymbol{w}_{k} \rightarrow \boldsymbol{w}_0, \nabla f(\boldsymbol{w}_k) \text { exists }\}.
\end{equation*} Following \cite{davis2020stochastic}, we also define $f$  admits a chain rule if for any arc \footnote{A arc $\boldsymbol{z}:\mathbb{R}^{+}\rightarrow\mathbb{R}^p$ satisfies for any compact set $I\subset\mathbb{R}^{+}$, $\boldsymbol{z}$ is absolute continuous on $I$.  } $\boldsymbol{z}: \mathbb{R}^{+}\rightarrow\mathbb{R}^p$, $\forall \boldsymbol{h}\in \partial f(\boldsymbol{z}(t))$, $\frac{\mathrm{d}f(z(t))}{\mathrm{d}t}=\langle \boldsymbol{h}, \frac{\mathrm{d} z}{\mathrm{d}t}\rangle$, a.e. for $t>0$.
\subsection{Continuous Flow for Adaptive Algorithms}
Adaptive optimization algorithms including AdaGrad, RMSProp, Adam are widely used to optimize the loss function in deep learning. The update rules for these adaptive optimization algorithms can be written as \footnote{In this paper, we only consider no-momentum versions of the algorithms, i.e., the algorithms without momentum acceleration. }
\begin{equation}\label{eq:update}
    \boldsymbol{w}(k+1)-\boldsymbol{w}(k)=-\eta\boldsymbol{h}(k)\odot \partial^s \mathcal{L}(\boldsymbol{w}(k)), 
\end{equation}where $k=1,2,\cdots$ denotes the iteration index, $\partial^s \mathcal{L}(\boldsymbol{w}(k))\in \partial \mathcal{L}(\boldsymbol{w}(k))$, $\eta$ denotes a constant learning rate, $\boldsymbol{h}(k)$ is called the conditioner which adaptively assigns different learning rates for different coordinates. For AdaGrad, {\small$\boldsymbol{h}(k)^{-1}=\sqrt{\varepsilon\mathbf{1}_p+\sum_{\tau=0}^{k}\partial^s \mathcal{L}(\boldsymbol{w}(\tau))^2 }$} where $\epsilon$ is a positive constant, and $\mathbf{1}_p$ is a length-$p$ vector with all components to be $1$. Here, $\partial^s \mathcal{L}(\boldsymbol{w}(\tau))^2=\partial^s\mathcal{L}(\boldsymbol{w}(\tau))\odot\partial^s\mathcal{L}(\boldsymbol{w}(\tau))$ and $\odot$ denotes the element-wise product of a vector.  Different from AdaGrad, RMSProp adopts exponential weighted average strategy in $\boldsymbol{h}(k)$, i.e., {\small$ \boldsymbol{h}(k)^{-1}=\sqrt{\varepsilon\mathbf{1}_p+\sum_{\tau=0}^{k}(1-b) b^{k-\tau}\partial^s \mathcal{L}(\boldsymbol{w}(\tau))^2 }$}. {Adam further introduces a bias-correction coefficient $\frac{1}{1-b^k}$ and $\boldsymbol{h}(k)^{-1}=\sqrt{\varepsilon\mathbf{1}_p+\frac{\sum_{\tau=0}^{k}(1-b) b^{k-\tau}\partial^s \mathcal{L}(\boldsymbol{w}(\tau))^2}{1-b^k} }$.} In this paper, we use $\boldsymbol{h}^{A}(k)$, $\boldsymbol{h}^{R}(k)$ and {$\boldsymbol{h}^{M}(k)$} to distinguish  the term $\boldsymbol{h}(k)$ in AdaGrad, RMSProp and Adam  respectively.

Taking $\eta\rightarrow 0$, the continuous time limits (i.e., continuous flow) of the three optimization algorithms are
\begin{equation}\label{eq:flow}
    \frac{d\boldsymbol{w}(t)}{\mathrm{d}t}=-\boldsymbol{h}(t)\odot\partial^s \mathcal{L}(\boldsymbol{w}(t)),
\end{equation}
{\small$\boldsymbol{h}^A(t)^{-1}=\sqrt{\varepsilon\mathbf{1}_p+\int_{0}^{t} \partial^s \mathcal{L}(\boldsymbol{w}(\tau))^2 \mathrm{d} \tau}$, $\boldsymbol{h}^{R}(t)^{-1}=\sqrt{\varepsilon\mathbf{1}_p+\int_{0}^{t}(1-b)e^{-(1-b)(t-\tau)} \partial^s \mathcal{L}(\boldsymbol{w}(\tau))^2 \mathrm{d} \tau}$} and $\boldsymbol{h}^{M}(t)^{-1}=\sqrt{\varepsilon\mathbf{1}_p+\frac{\int_{0}^{t}(1-b)e^{-(1-b)(t-\tau)} \partial^s \mathcal{L}(\boldsymbol{w}(\tau))^2 \mathrm{d} \tau}{1-b^t}}$. 

Our study will start with the continuous version of the two algorithms. Specifically, for the continuous case, we focus on the following scenario.
\begin{assumption}
\label{assum: continuous}
The empirical loss is defined as $\mathcal{L}(\boldsymbol{w})=\sum_{i=1}^N e^{-f(q_i(\boldsymbol{w}))}$. The following propositions hold:

\renewcommand{\labelenumi}{\Roman{enumi}}
\renewcommand{\labelenumii}{\roman{enumii}}
\begin{enumerate}
    \item (Regularity). For any  $i$, $\Phi(\boldsymbol{w},\boldsymbol{x}_i)$ is locally Lipschitz and admits a chain rule with respect to $\boldsymbol{w}$;
    \item (Homogeneity). There exists $L>0$ such that $\forall \alpha>0$ and $i$, $\Phi(\alpha\boldsymbol{w},\boldsymbol{x}_i)=\alpha^{L} \Phi(\boldsymbol{w},\boldsymbol{x}_i)$;
    \item (Separability). There exists a time $t_0$ such that $f^{-1}(\log \frac{1}{ \mathcal{L}(t_0)})>0$.
\end{enumerate}
\end{assumption}

 I, II in Assumption \ref{assum: continuous} holds for a board class of networks allowing for ReLU, max pooling, and convolutional layers;
Assumption \ref{assum: continuous}.III holds generally for over-parameterized neural networks, which can achieve complete correct classification in training set.

\subsection{KKT point}
{
We give a brief introduction to KKT conditions and KKT points. For a constrained optimization problem defined as 
\begin{equation*}
    \min f(\boldsymbol{w}) \text{ subject to: }g_i(\boldsymbol{w})\le 0,\text{ }\forall i\in[N],
\end{equation*}
KKT conditions are necessary conditions for a point $\boldsymbol{w}_0$ to be optimal in above problem, which  require that there exists non-negative reals $\lambda_i$, such that
\begin{gather}
\label{eq:KKT_varepsilon}  0 \in  \partial f(\boldsymbol{w}_0)+\sum_{i=1}^N\lambda_i \partial g_i (\boldsymbol{w}_0); \quad
    \sum_{i=1}^N\lambda_i g_i (\boldsymbol{w}_0) = 0.
\end{gather}
}A weaker notion of KKT condition is $(\varepsilon,\delta)$ KKT condition, which requires left sides of eq. (\ref{eq:KKT_varepsilon}) 
to be respectively smaller than $\varepsilon$ and $\delta$. We will formally define $(\varepsilon,\delta)$ KKT points and give some of their properties in Appendix \ref{sec:appen_KKT}.  

\textbf{Notations.} {In this paper, we use $\boldsymbol{o}$, $\mathcal{O}$, $\Theta$, and $\Omega$ to hide the absolute multiplicative factors. Concretely,
$f(t)=\boldsymbol{o}(g(t))$ if $\overline{\lim}_{t\rightarrow\infty} \frac{f(t)}{g(t)}=0$; $f(t)=\mathcal{O}(g(t))$ if $\overline{\lim}_{t\rightarrow\infty} \frac{f(t)}{g(t)}<\infty$; $f(t)=\Omega(g(t))$ if $\underline{\lim}_{t\rightarrow\infty} \frac{f(t)}{g(t)}>0$; $f(t)=\Theta(g(t))$ if $f(t)=\Omega(g(t))$ and $f(t)=\mathcal{O}(g(t))$.}  

\section{Main Results}
In this section, we introduce the main results on convergent direction of adaptive optimization algorithms. In Section \ref{sec4.1}, we propose a unified adaptive gradient flow and prove that it converges to KKT point of max-margin problem. In Section \ref{sec4.2}, we apply results for adaptive gradient flow to AdaGrad, RMSProp and Adam (w/m) to get the convergent directions of their continuous flow. In Section \ref{sec4.3}, we prove the convergent directions of the discrete update rules of adaptive optimization algorithms.
\subsection{Adaptive Gradient Flow: Definition and Results}\label{sec4.1}
Adaptive optimizers such as AdaGrad, RMSProp and Adam can be viewed as adding component-wise conditioner to gradient updates and the limit of the component-wise conditioner may be anisotropic for different components. We first define adaptive gradient flow whose limit of component-wise conditioner is isotropic. 

\begin{definition}
\label{def:approximate_GF}
A function $\boldsymbol{v}(t)$ is called to obey an adaptive gradient flow $\mathcal{F}$ with loss $\mathcal{L}$ and component learning rate $\boldsymbol{\beta}(t)$, if it can be written as the following form
\begin{equation*}
    \frac{d \boldsymbol{v}(t)}{\mathrm{d}t}= -\boldsymbol{\beta}(t)\odot \partial^s \tilde{\mathcal{L}}(\boldsymbol{v}(t)),
\end{equation*}
where $\partial^s \tilde{\mathcal{L}} (\bv(t))\in \partial \tilde{\mathcal{L}} (\bv(t))$,  $\boldsymbol{\beta}(t)$ satisfies that $\lim_{t\rightarrow\infty}$ $ \boldsymbol{\beta}(t)=\mathbf{1}_p$, and $\frac{\mathrm{d}\log \boldsymbol{\beta}(t)}{\mathrm{d}t}$ is Lebesgue Integrable.
\end{definition}

We make some explanations for Definition \ref{def:approximate_GF}: 
Conditions $\lim_{t\rightarrow\infty} \boldsymbol{\beta}(t)=\mathbf{1}_p$ and $\frac{\mathrm{d}\log \boldsymbol{\beta}(t)}{\mathrm{d}t}$ being Lebesgue Integrable ensures $\boldsymbol{\beta}(t)$ converges to $\mathbf{1}_p$ without large fluctuation. These constraints are common, in the sense that AdaGrad, RMSProp and Adam (w/m) can be transferred into such flows by simple reparameterization (see Section \ref{sec4.2}); but are also vital, which guarantee adaptive gradient flows converge to KKT point of max-margin problem as follows:

\begin{theorem}
\label{thm:approximate_flow}
Let $\boldsymbol{v}$ obey an adaptive gradient flow $\mathcal{F}$ which satisfies Assumption \ref{assum: continuous}. Let $\bar{\boldsymbol{v}}$ be any limit point of $\{\hat{\boldsymbol{v}}(t)\}_{t=0}^{\infty}$ (where $\hat{\boldsymbol{v}}(t)=\frac{\boldsymbol{v}(t)}{\Vert \boldsymbol{v}(t)\Vert}$ ). Then $\bar{\boldsymbol{v}}$ is along the direction of a KKT point of the following $L^2$ max-margin problem $(P)$:
\begin{gather*}
\min \frac{1}{2}\Vert \boldsymbol{v} \Vert^2\\
\text{subject to } \tilde{q}_i(\boldsymbol{v}) \ge 1, \forall i \in[N].
\end{gather*}
\end{theorem}

$(P)$ is equivalent to the $L^2$ max-margin problem: suppose $\boldsymbol{v}_0$ is an optimal point of $(P)$. Then there exists an $i \in [N]$, such that, $\tilde{q}_i(\boldsymbol{v}_0)=1$ (otherwise, we can let $\boldsymbol{v_0}'=\boldsymbol{v}_0/\tilde{q}_{\min}(\boldsymbol{v}_0)^{\frac{1}{L}}$. Then $\boldsymbol{v_0}'$ is also a fixed point of $(P)$ and have a smaller $L^2$ norm than $\boldsymbol{v_0}'$, which leads to contradictory). Therefore, $\tilde{q}_{\min}(\boldsymbol{v}_0)=1$ ($\tilde{q}_{\min}(\boldsymbol{v})\overset{\triangle}{=}\min_i\{\tilde{q}_i(\boldsymbol{v})\}$), and maximizing the normalized margin $\frac{\tilde{q}_{\min}(\boldsymbol{v})}{\Vert \boldsymbol{v}\Vert^L}$ is equivalent to minimize $\Vert \boldsymbol{v}\Vert^2$.


 Theorem \ref{thm:approximate_flow} shows that the adaptive gradient flow actually drives the parameters to solutions of $L^2$ max-margin problem. We will give the proof skeleton of Theorem \ref{thm:approximate_flow} in Section \ref{sec:main_result_proof}.

\begin{remark}
\label{remark: multi_class}
Our result can be 
extended to the multi-class classification with logistic loss and same assumption as Assumption \ref{assum: continuous} except that $\Phi(\boldsymbol{w},\boldsymbol{x}_i)$ is a $C$-dimension vector in multi-class case with $C$ number of classes. The corresponding $L^2$ max-margin classification problem is then
\begin{gather*}
\min \frac{1}{2}\Vert \boldsymbol{v} \Vert^2\\
\text{subject to } (\Phi(\boldsymbol{w},\boldsymbol{x}_i))_{y_i}-(\Phi(\boldsymbol{w},\boldsymbol{x}_i))_{j} \ge 1,\\
\forall i \in[N], j\in[C]/\{y_i\}.
\end{gather*}
We defer the proof to Appendix \ref{appen: multi-class-classification}.
\end{remark}

While Theorem \ref{thm:approximate_flow}
 does NOT guarantee direction of parameters converges as $t\rightarrow \infty$, we present a theorem in the end of this section which provides such a guarantee when neural network $\Phi$ is definable with respect to parameters $\boldsymbol{w}$.

\begin{theorem}
\label{thm:approximate_flow_definable}
Let all assumptions in Theorem \ref{thm:approximate_flow} hold. Assume further $\Phi(\boldsymbol{w},\boldsymbol{x}_i)$ is definable with respect to parameter $\boldsymbol{w}$ for any $i\in [N]$. Then direction of parameters $\{\hat{\boldsymbol{v}}(t)\}_{t=0}^{\infty}$ converges.
\end{theorem}

We defer the formal definition of definable to Appendix \ref{sec:proof_definable}, but point out here that definability allows for linear, ReLU, polynomial activations, max pooling and convolutional layers, and skip connections. Furthermore, for locally Lipschitz definable function, chain rule holds almost everywhere (Lemma \ref{lem:chain_rule}).

The proof can be derived by bounding the curve length of $\hat{\boldsymbol{v}}(t)$ using $\tilde{\gamma}(t)$ and Kurdyka-Lojasiewicz inequalities developed in \cite{ji2020directional}, and we defer the details to Appendix \ref{sec:proof_definable}.

\subsection{Results for Adaptive Algorithms: Continuous Case}\label{sec4.2}
In this section, we will prove gradient flow of AdaGrad, RMSProp, and Adam (w/m) can be transferred into adaptive gradient flow. We start from proving convergence of conditioner in AdaGrad and further shows AdaGrad can be reparameterized as an adaptive gradient flow. 


\begin{theorem}
\label{lem:adagrad_approximate}
For AdaGrad flow defined as eq. (\ref{eq:flow}) with $\boldsymbol{h}(t)=\boldsymbol{h}^{A}(t)$, we have that
\begin{itemize}
    \item $\boldsymbol{h}^{A}(t)$ converges as $t\rightarrow\infty$. Furthermore, $\boldsymbol{h}_{\infty}=\lim_{t\rightarrow \infty} \boldsymbol{h}^{A}(t)$ has no zero component.
    \item 
    $\frac{\mathrm{d} \boldsymbol{v}^A(t)}{\mathrm{d}t}\in -\boldsymbol{\beta}^A(t)\odot \partial \tilde{\mathcal{L}}^A(\boldsymbol{v}^A(t))$
satisfies definition of adaptive gradient flow, where
\begin{align*}
    &\boldsymbol{v}^{A}(t)=\boldsymbol{h}_{\infty}^{-1 / 2} \odot \boldsymbol{w}(t),\boldsymbol{\beta}^A(t)=\boldsymbol{h}_{\infty}^{-1} \odot \boldsymbol{h}^A(t),\\
    &\tilde{\mathcal{L}}^A(\boldsymbol{v}^{A})=\mathcal{L}(\boldsymbol{h}_{\infty}^{\frac{1}{2}}\odot \boldsymbol{v}^{A}).
\end{align*}

\end{itemize}

\end{theorem}
We provide some intuitions for proof of Theorem \ref{lem:adagrad_approximate}. The former part of the first property is because $\boldsymbol{h}^A(t)$ is non-increasing with respect to $t$. However, the latter part yields that integration of square of the gradient converges to a positive real, which is non-trivial; the second property is obtained by component-wisely scaling $\boldsymbol{w}$ and direct verification; the last property can be obtained by Newton-Leibniz formula for absolutely continuous function since $\frac{\mathrm{d}\boldsymbol{\beta}^A(t)}{\mathrm{d}t}$ is non-negative. We defer the detailed proof to Appendix \ref{sec:appen_approximate}.  

Similar properties also hold for RMSProp and Adam (w/m) as the following Theorem .

\begin{theorem}
\label{lem:rms_approximate}
For RMSProp and Adam flow defined as eq. (\ref{eq:flow}) respectively with $\boldsymbol{h}(t)=\boldsymbol{h}^{R}(t)$ and $\boldsymbol{h}(t)=\boldsymbol{h}^{M}(t)$, we have that, for $I\in\{R,M\}$,
\begin{itemize}
    \item $\boldsymbol{h}^{I}(t)$ converges as $t\rightarrow\infty$. Furthermore, $\lim_{t\rightarrow \infty} \boldsymbol{h}^{I}(t)=\varepsilon^{-\frac{1}{2}}\mathbf{1}_p^T$.
    \item 
    $\frac{\mathrm{d} \boldsymbol{v}^I(t)}{\mathrm{d}t}\in -\boldsymbol{\beta}^I(t)\odot \bar{\partial} \tilde{\mathcal{L}}^I(\boldsymbol{v}^A(t))$
satisfies definition of adaptive gradient flow, where
{\small\begin{gather*}
    \boldsymbol{v}^{I}(t)=\varepsilon^{\frac{1}{4}} \boldsymbol{w}(t), \boldsymbol{\beta}^I(t)=\varepsilon^{\frac{1}{2}} \boldsymbol{h}(t),
    \tilde{\mathcal{L}}^I(\boldsymbol{v}^{I})=\mathcal{L}(\varepsilon^{-\frac{1}{4}} \boldsymbol{v}^{I}).
\end{gather*}}
\end{itemize}
\end{theorem}
Both conditioners $\boldsymbol{h}^R$ and $\boldsymbol{h}^M$ have an exponential decay term $e^{-(t-\tau)(1-b)}$, which drives $\int_{\tau=0}^t (1-b)e^{-(1-b)(t-\tau)}\bar{\partial}\mathcal{L}(\boldsymbol{w}(\tau))^2d \tau$ to zero, and conditioners to isotropy. The detailed proof requires a more careful analysis in measure than the AdaGrad flow. We defer them to Section \ref{sec:appen_approximate}.  

By Theorems \ref{lem:adagrad_approximate} and \ref{lem:rms_approximate}, gradient flow of AdaGrad, RMSProp and Adam (w/m) can both be transferred into adaptive gradient flows: $\boldsymbol{v}^I$ obeys an adaptive gradient flow with loss $\tilde{\mathcal{L}}^I$ and conditioner $\boldsymbol{\beta}^I$ ($I\in{A,R,M}$). Furthermore, Assumption \ref{assum: continuous} also holds for $\tilde{\mathcal{L}}^I$: for AdaGrad, RMSProp, and Adam , we can uniformly represent $\tilde{\mathcal{L}}^A(\boldsymbol{v}^A)$, $\tilde{\mathcal{L}}^R(\boldsymbol{v}^R)$, and $\tilde{\mathcal{L}}^M(\boldsymbol{v}^M)$ as $\mathcal{L}(\tilde{\boldsymbol{h}}^{\frac{1}{2}}\odot \boldsymbol{v})$, where  $\tilde{\boldsymbol{h}}$ is a component-wisely positive constant vector. By Assumption \ref{assum: continuous},  $\tilde{\mathcal{L}}(\boldsymbol{v})$ can be further written as  
\begin{align*}
    \tilde{\mathcal{L}}(\boldsymbol{v})=&\mathcal{L}(\tilde{\boldsymbol{h}}^{\frac{1}{2}}\odot \boldsymbol{v})=\sum_{i=1}^N e^{-f(q_i(\tilde{\boldsymbol{h}}^{\frac{1}{2}}\odot \boldsymbol{v}))}.
\end{align*}
If we denote $\tilde{q}_i(\boldsymbol{v})=q_i(\tilde{\boldsymbol{h}}^{\frac{1}{2}}\odot \boldsymbol{v})$, we have $\tilde{q}_i$ is also an $L$ homogeneous function, and $\tilde{\mathcal{L}}(\boldsymbol{v})=\sum_{i=1}^N e^{-f( \tilde{q}_i(\boldsymbol{v}))}$. 

Combining Theorem \ref{thm:approximate_flow} with Theorems \ref{lem:adagrad_approximate} and \ref{lem:rms_approximate}, one can obtain convergent directions of AdaGrad flow and RMSProp flow by simple parameter substitution of $(P)$.

\begin{theorem}
\label{thm:AdaGrad_flow}
Let $\boldsymbol{w}$ satisfy AdaGrad flow defined as eq. (\ref{eq:flow}) with $\boldsymbol{h}(t)=\boldsymbol{h}^A(t)$. Then, any limit point of $\{\hat{\boldsymbol{w}}(t)\}_{t=0}^{\infty}$ (where $\hat{\boldsymbol{w}}(t)=\frac{\boldsymbol{w}(t)}{\Vert \boldsymbol{w}(t)\Vert}$ is normalized parameter) is along the direction of a KKT point of the following optimization problem $(P^A)$:
\begin{gather*}
    \min \frac{1}{2}\Vert \boldsymbol{h}_{\infty}^{-\frac{1}{2}}\odot \boldsymbol{w} \Vert^2\\
    \text{Subject to: } q_i(\boldsymbol{w})\ge 1.
\end{gather*}
\end{theorem}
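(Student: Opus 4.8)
The claim follows from Theorem \ref{thm:approximate_flow} via the reduction in Theorem \ref{lem:adagrad_approximate} together with a diagonal change of variables. By Theorem \ref{lem:adagrad_approximate}, the rescaled trajectory $\boldsymbol{v}^A(t)=\boldsymbol{h}_\infty^{-1/2}\odot\boldsymbol{w}(t)$ obeys an adaptive gradient flow with loss $\tilde{\mathcal{L}}^A(\boldsymbol{v})=\mathcal{L}(\boldsymbol{h}_\infty^{1/2}\odot\boldsymbol{v})$ and component learning rate $\boldsymbol{\beta}^A$; as observed after Definition \ref{def:approximate_GF}, $\tilde{\mathcal{L}}^A$ satisfies Assumption \ref{assum: continuous} with $\tilde q_i(\boldsymbol{v})=q_i(\boldsymbol{h}_\infty^{1/2}\odot\boldsymbol{v})$, which is again $L$-homogeneous. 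Moreover $\boldsymbol{h}_\infty$ has no zero component (Theorem \ref{lem:adagrad_approximate}), so $T\colon\boldsymbol{w}\mapsto\boldsymbol{h}_\infty^{-1/2}\odot\boldsymbol{w}$ is an invertible diagonal linear map with $T^{-1}\boldsymbol{v}=\boldsymbol{h}_\infty^{1/2}\odot\boldsymbol{v}$. Applying Theorem \ref{thm:approximate_flow} to $\boldsymbol{v}^A$, every limit point of $\hat{\boldsymbol{v}}^A(t)=\boldsymbol{v}^A(t)/\Vert\boldsymbol{v}^A(t)\Vert$ is along the direction of a KKT point of
\[
(P)\colon\qquad \min \tfrac12\Vert\boldsymbol{v}\Vert^2\quad\text{subject to}\quad \tilde q_i(\boldsymbol{v})\ge 1,\ \forall i\in[N].
\]

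\textbf{Step 1: transfer the limit points through $T$.} Since $T$ is a fixed invertible linear map and $\hat{\boldsymbol{w}}(t),\hat{\boldsymbol{v}}^A(t)$ both lie on the unit sphere, along any subsequence one has $\hat{\boldsymbol{w}}(t_k)\to\bar{\boldsymbol{w}}$ if and only if $\hat{\boldsymbol{v}}^A(t_k)\to T\bar{\boldsymbol{w}}/\Vert T\bar{\boldsymbol{w}}\Vert$, the right-hand side being well defined because $\Vert\bar{\boldsymbol{w}}\Vert=1$ forces $T\bar{\boldsymbol{w}}\neq\mathbf{0}$. Hence the limit points of $\hat{\boldsymbol{v}}^A$ are exactly the renormalized $T$-images of the limit points of $\hat{\boldsymbol{w}}$, so it suffices to pull the KKT property back along $T$.

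\textbf{Step 2: transfer the KKT property through $T$.} Under $\boldsymbol{v}=T\boldsymbol{w}$ the constraint $\tilde q_i(\boldsymbol{v})\ge1$ reads $q_i(\boldsymbol{w})\ge1$ and the objective $\tfrac12\Vert\boldsymbol{v}\Vert^2$ reads $\tfrac12\Vert\boldsymbol{h}_\infty^{-1/2}\odot\boldsymbol{w}\Vert^2$, so $(P)$ in $\boldsymbol{w}$-coordinates is $\min\tfrac12\Vert\boldsymbol{h}_\infty^{-1/2}\odot\boldsymbol{w}\Vert^2$ s.t. $q_i(\boldsymbol{w})\ge1$. This problem has the same set of directions of KKT points as the stated $(P^A)$, because at any feasible point the iterate is nonzero and $\bar{\partial}(\tfrac12\Vert\cdot\Vert^2)$ and $\bar{\partial}(\tfrac12\Vert\cdot\Vert)$ are positive scalar multiples of one another, so a KKT triple for one yields a KKT triple for the other after a harmless rescaling of the multipliers. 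That the stationarity equation transforms correctly under $T$ I would check with the Clarke chain rule $\bar{\partial}\tilde q_i(\boldsymbol{v})=\boldsymbol{h}_\infty^{1/2}\odot\bar{\partial} q_i(\boldsymbol{h}_\infty^{1/2}\odot\boldsymbol{v})$ (valid for the invertible linear map $T^{-1}$) together with $\bar{\partial}(\tfrac12\Vert\boldsymbol{v}\Vert^2)=\boldsymbol{v}$: a coordinatewise rescaling of the stationarity relation $\boldsymbol{v}_0\in\sum_i\lambda_i\bar{\partial}\tilde q_i(\boldsymbol{v}_0)$ produces exactly the stationarity relation of the squared $(P^A)$ at $\boldsymbol{w}_0=T^{-1}\boldsymbol{v}_0$ with the same nonnegative multipliers; complementary slackness and feasibility are preserved since $\tilde q_i(\boldsymbol{v}_0)=q_i(\boldsymbol{w}_0)$. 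Therefore, if $\bar{\boldsymbol{w}}$ is a limit point of $\hat{\boldsymbol{w}}$, then $T\bar{\boldsymbol{w}}/\Vert T\bar{\boldsymbol{w}}\Vert$ is along a KKT point $\boldsymbol{v}_0$ of $(P)$, and $\bar{\boldsymbol{w}}$ is then along $T^{-1}\boldsymbol{v}_0$, a KKT point of $(P^A)$.

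\textbf{Main obstacle.} There is essentially no new analytic content beyond Theorems \ref{lem:adagrad_approximate} and \ref{thm:approximate_flow}; the only care required is bookkeeping --- checking that the diagonal rescaling commutes with normalization and with subsequential limits, and that the Clarke chain rule transports the KKT conditions exactly, including the innocuous rescaling of the multipliers caused by squaring the norm in $(P)$ versus $(P^A)$.
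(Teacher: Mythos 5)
Your proposal is correct and follows essentially the same route as the paper: apply Theorem \ref{thm:approximate_flow} to the rescaled flow $\boldsymbol{v}^A=\boldsymbol{h}_\infty^{-1/2}\odot\boldsymbol{w}$ from Theorem \ref{lem:adagrad_approximate}, transfer limit points through the invertible diagonal map, and pull the KKT conditions back via the chain-rule identity relating $\bar{\partial}\tilde q_i$ and $\bar{\partial}q_i$. Your explicit handling of the $\tfrac12\Vert\cdot\Vert^2$ versus $\tfrac12\Vert\cdot\Vert$ discrepancy in the objectives (a harmless positive rescaling of the multipliers) is a detail the paper glosses over, but it does not change the argument.
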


 \begin{theorem}
\label{thm:RMS_flow}
Let $\boldsymbol{w}$ satisfy RMSProp or Adam flow defined as eq. (\ref{eq:flow}) respectively with $\boldsymbol{h}(t)=\boldsymbol{h}^R(t),\boldsymbol{h}^M(t)$. Then, any limit point of $\{\hat{\boldsymbol{w}}(t)\}_{t=0}^{\infty}$ (where $\hat{\boldsymbol{w}}(t)=\frac{\boldsymbol{w}(t)}{\Vert \boldsymbol{w}(t)\Vert}$ is normalized parameter) is along the direction of a KKT point of the following optimization problem $(P^R)$:
\begin{gather*}
    \min\frac{1}{2}\Vert  \boldsymbol{w} \Vert^2\\
    \text{Subject to: } q_i(\boldsymbol{w})\ge 1.
\end{gather*}
\end{theorem}

Intuitively, $(P^R)$ is the $L^2$ max-margin problem, which means RMSProp flow biases parameters to a local minimum with good generalization property; on the other hand, the target of $(P^A)$ has a reliance of $\boldsymbol{h}_{\infty}$, which is a constant vector in $(P^A)$ but can be influenced by the optimization process and initialization, and may further lead to worse generalization. We will discuss the difference between convergent directions of AdaGrad and RMSProp in detail in Section \ref{sec:discussion_generalization}.

\subsection{Results for Adaptive Algorithms: Discrete Case}\label{sec4.3}
In practice, gradient descent methods are employed since calculating exact gradient flow requires huge efforts. In this section, we show same results hold in Theorems \ref{thm:AdaGrad_flow} and \ref{thm:RMS_flow} for discrete update rules of adaptive algorithms with slightly different assumptions.

As for the discrete case, two additional assumptions are needed as follows (For brevity, we put the complete assumption to the appendix):
\begin{assumption}
\label{assum:discrete}
\renewcommand{\labelenumi}{\Roman{enumi}}
\renewcommand{\labelenumii}{\roman{enumii}}
\begin{enumerate}
    \item (smooth). For any fixed $x$, $\Phi(\cdot;x)$ is $M$ smooth (i.e., $\Phi$ is twice continuously differentiable with respect to $x$ and all the eigenvalues of the Hessian are within $[-M,M]$);
    \item (Learning Rate). For $k>k_0$, $\eta_t\le C(t)$, where $C(t)$ is a non-decreasing function (defined in Appendix \ref{sec:discrete case}). Also, $\eta_t$ is lower bounded by a positive real, that is, there exists a constant $\tilde{\eta}>0$, such that, for any $k>k_0$, $\eta_k\ge \tilde{\eta}$.
\end{enumerate}
\end{assumption}
 
We make the following explanations for Assumption \ref{assum:discrete}. Assumption \ref{assum:discrete}(I) is needed technically because we need to consider second order Taylor expansion around each point along the training $\{\boldsymbol{w}(k); k=1,\cdots,\}$. Results based on this assumption are the state-of-art in the existing literature of the implicit bias of GD (e.g. [3]). We put loosening this assumption to future works. Assumption \ref{assum:discrete}(II) guarantees that the second order Taylor expansion is upper bounded and the step size is not too small. With Assumption \ref{assum:discrete}, we have the following theorem:
\begin{theorem}
\label{thm:discrete_adaptive}
With Assumptions \ref{assum: continuous} and Assumption \ref{assum:discrete}, Theorems \ref{thm:AdaGrad_flow} and Theorems \ref{thm:RMS_flow} hold respectively for discrete update of AdaGrad and discrete updates of RMSProp and Adam. 
\end{theorem}
We put the proof for Theorem \ref{thm:discrete_adaptive} to Appendix \ref{sec:discrete case}.
\subsection{Discussions}\label{sec:discussion_generalization}
We make some discussions on the results derived in Section 4.2 and 4.3. 
First, as shown in \cite{li2019implicit}, the optimization problem $P^R$ is equivalent to $L_2$ margin maximization problem. 
Theorems \ref{thm:RMS_flow} and \ref{thm:AdaGrad_flow} show that RMSProp and Adam (w/m) converge to max-margin solution,  while AdaGrad may drive the parameters to a different direction. The corresponding optimization problem of AdaGrad has a reliance on $\boldsymbol{h}_{\infty}$, which is shown to be sensitive to the optimization path before convergence (shown in Section 6.2), and makes the convergent direction sensitive (we will discuss this in detail in Appendix \ref{appen: h_infty}). Because the normalized margin is used as a complexity norm in generalization literature (i.e., larger normalized margin indicating better generalization performance) \cite{bartlett1999generalization}, {our results indicate the superiority on generalization of exponential moving average strategy in the design of the conditioner.}



Second, two key factors that guarantee generalization of RMSProp and Adam are exponential weighted average design on the conditioner and the added constant $\epsilon$ in $\boldsymbol{h}(t)$. Our results show the benefit of the two factors: it accelerates the training process at early stage of optimization by adaptively adjusting the learning rate, but it still converges to max-margin solution because the denominator of conditioner tends to constant $\epsilon$ at later stage.  Most of previous works explain $\epsilon$ to ensure positivity of $\boldsymbol{h}(t)$. Our results show that $\epsilon$ is important for the convergent direction of the parameters and the generalization ability. 

\section{Proof Sketch of Theorem \ref{thm:approximate_flow}}
\label{sec:main_result_proof}
In this section, we present the proof sketch of Theorem \ref{thm:approximate_flow}.  The proof can be divided into three stages: (I) we define surrogate margin and prove that it is lower bounded and equivalent to normalized margin as time tends to infinity; (II) We use surrogate margin to lower bound the decreasing rate of empirical loss $\mathcal{L}$, and prove $\lim_{t\rightarrow \infty} \tilde{\mathcal{L}}(\boldsymbol{w}(t))=0$; (III) For every convergent direction $\bar{\boldsymbol{v}}$, a series of $(\varepsilon_i,\delta_i)$ KKT point which converges to $\bar{\boldsymbol{v}}$ with $\lim_{i\rightarrow\infty} \varepsilon_i=\lim_{i\rightarrow\infty} \delta_i=0$ is constructed. We then show every convergent direction is a KKT point of optimization problem $(P)$. 


\subsection{surrogate margin on adaptive gradient flow}
\label{sec:smoothed_margin}
For adaptive gradient flow $-\boldsymbol{\beta}(t)\odot\partial^s \tilde{\mathcal{L}}(\boldsymbol{v}(t))= \frac{\mathrm{d} \bv (t)}{\mathrm{d}t }$, we first deal with the change of $\Vert \boldsymbol{v} \Vert$. To derive change of $\Vert \boldsymbol{v} \Vert$, we study the surrogate norm $\rho(t)\overset{\triangle}{=}\Vert\boldsymbol{\beta}(t)^{-\frac{1}{2}}\odot \boldsymbol{v}(t)\Vert$ because   $\rho(t)=\Theta(\Vert\boldsymbol{v}(t)\Vert)$ based on $\lim_{t\rightarrow\infty}\boldsymbol{\beta}(t)=\mathbf{1}$.

The normalized margin $\gamma(t)=\frac{\tilde{q}_{\min}(t)}{\Vert \boldsymbol{v}(t) \Vert^L}$ connects margin $\tilde{q}_{\min}(t)$ with parameter norm  $\Vert \boldsymbol{v}(t) \Vert$. 
The next lemma admits us to define an surrogate margin using $\tilde{\mathcal{L}}$.

\begin{lemma}
\label{lem:relationship_between_normalized_smoothed}
If  $\lim_{t\rightarrow\infty}\tilde{\mathcal{L}}=0$, we have $\frac{f^{-1}(\log\frac{1}{\tilde{\mathcal{L}}(\boldsymbol{v}(t))})}{\rho(t)^L}=\Theta(\gamma(t))$.

\end{lemma}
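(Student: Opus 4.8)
The plan is to reduce the lemma to a two‑sided comparison of $\log\frac{1}{\tilde{\mathcal L}(\boldsymbol v(t))}$ with $f(\tilde q_{\min}(t))$ up to an additive constant, to push that comparison through the increasing inverse $g=f^{-1}$, and finally to trade $\rho(t)$ for $\Vert\boldsymbol v(t)\Vert$ using $\boldsymbol\beta(t)\to\mathbf 1$. For the comparison I would use $\tilde{\mathcal L}(\boldsymbol v)=\sum_{i=1}^N e^{-f(\tilde q_i(\boldsymbol v))}$ together with monotonicity of $f$: the largest summand is $e^{-f(\tilde q_{\min}(\boldsymbol v))}$, so $e^{-f(\tilde q_{\min}(\boldsymbol v))}\le \tilde{\mathcal L}(\boldsymbol v)\le Ne^{-f(\tilde q_{\min}(\boldsymbol v))}$, equivalently $f(\tilde q_{\min}(\boldsymbol v))-\log N\le \log\tfrac{1}{\tilde{\mathcal L}(\boldsymbol v)}\le f(\tilde q_{\min}(\boldsymbol v))$. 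Applying the increasing map $g$ then yields $g\big(f(\tilde q_{\min}(\boldsymbol v))-\log N\big)\le g\big(\log\tfrac{1}{\tilde{\mathcal L}(\boldsymbol v)}\big)\le \tilde q_{\min}(\boldsymbol v)$, so everything comes down to showing the left end is also $\Theta(\tilde q_{\min})$.

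The crux is thus to prove $g(f(q)-\log N)=\Theta(q)$ as $q\to\infty$. First, the hypothesis $\tilde{\mathcal L}(\boldsymbol v(t))\to 0$ forces $\tilde q_{\min}(t)\to\infty$: from $\tilde{\mathcal L}\ge e^{-f(\tilde q_{\min})}$ we get $f(\tilde q_{\min}(t))\ge\log\tfrac{1}{\tilde{\mathcal L}(t)}\to\infty$, and $g$ is increasing and unbounded. Next, $g(f(q)-\log N)\le g(f(q))=q$ gives the upper bound, while for the lower bound I would write $q-g(f(q)-\log N)=\int_{f(q)-\log N}^{f(q)}g'(s)\,\mathrm{d}s$ and use $g'(s)=1/f'(g(s))\to 1$ as $s\to\infty$ --- equivalently $f'(q)\to 1$ as $q\to\infty$ --- so that this difference is $O(1)$; this is trivial for $f(q)=q$ and a short computation for the logistic loss, where $f'(q)=\frac{e^{-q}}{(1+e^{-q})\log(1+e^{-q})}$ and $\log(1+e^{-q})\sim e^{-q}$. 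Hence $g(f(q)-\log N)=q-O(1)$, which together with $\tilde q_{\min}(t)\to\infty$ gives $g\big(\log\tfrac{1}{\tilde{\mathcal L}(\boldsymbol v(t))}\big)=\Theta(\tilde q_{\min}(t))$. I expect this to be the main technical point, as it is the only step that uses the specific tail of the loss rather than just homogeneity and the already‑established facts about $\boldsymbol\beta$.

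Finally I would combine this with the observation already recorded above that $\lim_{t\to\infty}\boldsymbol\beta(t)=\mathbf 1$ implies $\rho(t)=\Theta(\Vert\boldsymbol v(t)\Vert)$, hence $\rho(t)^L=\Theta(\Vert\boldsymbol v(t)\Vert^L)$. Dividing $g\big(\log\tfrac{1}{\tilde{\mathcal L}(\boldsymbol v(t))}\big)=\Theta(\tilde q_{\min}(t))$ by $\rho(t)^L$ then gives $\frac{g(\log(1/\tilde{\mathcal L}(\boldsymbol v(t))))}{\rho(t)^L}=\Theta\big(\frac{\tilde q_{\min}(t)}{\Vert\boldsymbol v(t)\Vert^L}\big)=\Theta(\gamma(t))$, which is the assertion.
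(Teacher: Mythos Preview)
Your proposal is correct and follows essentially the same route as the paper: the same sandwich $e^{-f(\tilde q_{\min})}\le\tilde{\mathcal L}\le Ne^{-f(\tilde q_{\min})}$, the same reduction to showing $g(f(q)-\log N)=\Theta(q)$, and the same replacement of $\rho(t)$ by $\Vert\boldsymbol v(t)\Vert$ via $\boldsymbol\beta(t)\to\mathbf 1$. The only minor difference is in bounding the correction $q-g(f(q)-\log N)$: the paper applies the mean value theorem and the abstract property $xf'(x)\to\infty$ from its loss-property lemma to get that the correction is $o(q)$, whereas you check directly that $f'(q)\to 1$ for the exponential and logistic losses to obtain the sharper $O(1)$ bound; both yield the $\Theta$ conclusion.
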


Based on Lemma \ref{lem:relationship_between_normalized_smoothed}, we define surrogate margin $\tilde{\gamma}(t)$ as 
\begin{equation*}
    \tilde{\gamma}(t)=\frac{f^{-1}(\log \frac{1}{\tilde{\mathcal{L}}(\boldsymbol{v}(t))})}{\rho(t)^L}.
\end{equation*}

Since $\rho(t)=\Theta(\Vert\boldsymbol{v}(t)\Vert)$, $\tilde{\gamma}$ actually bridge the norm of parameters with empirical loss. A desired property for  $\tilde{\gamma}$ is to have a positive lower bound, since with this property, one can further bound parameter norm using empirical loss. The following lemma shows that $\tilde{\gamma}$ is lower bounded for adaptive gradient flow $\mathcal{F}$ with empirical loss $\tilde{\mathcal{L}}$ satisfying Assumption \ref{assum: continuous}.

\begin{lemma}
\label{lem:lower_bound_margin}
Let a function $\boldsymbol{v}(t)$ obey an adaptive gradient flow $\mathcal{F}$ with loss $\tilde{\mathcal{L}}$ and component learning rate $\boldsymbol{\beta}(t)$, where $\tilde{\mathcal{L}}$ satisfies Assumption \ref{assum: continuous}. Then there exists a time $t_1\ge t_0$, such that, for any time $t\ge t_1$, $\tilde{\gamma}(t)\ge e^{-\frac{1}{2}}\tilde{\gamma}(t_1)$.
\end{lemma}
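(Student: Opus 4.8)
The plan is to show that $\log\tilde\gamma(t)$ cannot decrease by more than $\tfrac12$ after some large time $t_1$, by differentiating $\log\tilde\gamma$ along the adaptive gradient flow and controlling the error terms coming from the conditioner $\boldsymbol\beta(t)$. Write $\tilde\gamma(t) = g\!\left(\log\frac{1}{\tilde{\mathcal L}(\boldsymbol v(t))}\right)\big/\rho(t)^L$, so that
\begin{equation*}
\frac{\mathrm d}{\mathrm dt}\log\tilde\gamma(t)
= \frac{g'\!\left(\log\frac{1}{\tilde{\mathcal L}}\right)}{g\!\left(\log\frac{1}{\tilde{\mathcal L}}\right)}\cdot\frac{-\frac{\mathrm d}{\mathrm dt}\tilde{\mathcal L}}{\tilde{\mathcal L}}
\;-\; L\,\frac{\frac{\mathrm d}{\mathrm dt}\rho(t)}{\rho(t)}.
\end{equation*}
First I would compute the two rates separately. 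Along the flow, $\frac{\mathrm d}{\mathrm dt}\tilde{\mathcal L} = \langle\bar\partial\tilde{\mathcal L}(\boldsymbol v),\dot{\boldsymbol v}\rangle = -\langle\bar\partial\tilde{\mathcal L},\boldsymbol\beta\odot\bar\partial\tilde{\mathcal L}\rangle = -\|\boldsymbol\beta^{1/2}\odot\bar\partial\tilde{\mathcal L}\|^2 \le 0$. For the surrogate norm, using $\rho(t)^2 = \langle\boldsymbol\beta^{-1}\odot\boldsymbol v,\boldsymbol v\rangle$ and $\dot{\boldsymbol v} = -\boldsymbol\beta\odot\bar\partial\tilde{\mathcal L}$,
\begin{equation*}
\frac{\mathrm d}{\mathrm dt}\rho(t)^2
= 2\langle\boldsymbol\beta^{-1}\odot\boldsymbol v,\dot{\boldsymbol v}\rangle + \langle\tfrac{\mathrm d}{\mathrm dt}(\boldsymbol\beta^{-1})\odot\boldsymbol v,\boldsymbol v\rangle
= -2\langle\boldsymbol v,\bar\partial\tilde{\mathcal L}\rangle - \big\langle\big(\tfrac{\mathrm d}{\mathrm dt}\log\boldsymbol\beta\big)\odot\boldsymbol\beta^{-1}\odot\boldsymbol v,\boldsymbol v\big\rangle.
\end{equation*}
The first term is the ``clean'' part: by homogeneity $-\langle\boldsymbol v,\bar\partial\tilde{\mathcal L}\rangle = L\sum_i e^{-f(\tilde q_i)}f'(\tilde q_i)\tilde q_i \ge 0$, which is exactly what one pairs against $-\frac{\mathrm d}{\mathrm dt}\tilde{\mathcal L} = \sum_i e^{-f(\tilde q_i)}f'(\tilde q_i)\langle\bar\partial\tilde q_i,\boldsymbol\beta\odot\bar\partial\tilde{\mathcal L}\rangle$ to recover the GD-style inequality that drives $\log\tilde\gamma$ upward (this is the Cauchy--Schwarz / monotonicity argument of \cite{lyu2019gradient}, now carried out with the $\boldsymbol\beta$-weighted inner product so that everything stays consistent). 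The second term is the genuinely new error term: call it $R(t) := \big\langle\big(\frac{\mathrm d}{\mathrm dt}\log\boldsymbol\beta\big)\odot\boldsymbol\beta^{-1}\odot\boldsymbol v,\boldsymbol v\big\rangle / \rho(t)^2$, which is bounded in absolute value by $\|\frac{\mathrm d}{\mathrm dt}\log\boldsymbol\beta(t)\|_\infty\cdot\sup_j\beta_j(t)^{-1}\cdot\big(\|\boldsymbol v\|^2/\rho^2\big) = \mathcal O\big(\|\tfrac{\mathrm d}{\mathrm dt}\log\boldsymbol\beta(t)\|_\infty\big)$ since $\boldsymbol\beta(t)\to\mathbf 1_p$ and $\rho = \Theta(\|\boldsymbol v\|)$.

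Assembling these, I would show
\begin{equation*}
\frac{\mathrm d}{\mathrm dt}\log\tilde\gamma(t) \;\ge\; -\,\tfrac{L}{2}\,|R(t)| \;\ge\; -\,C\,\Big\|\tfrac{\mathrm d}{\mathrm dt}\log\boldsymbol\beta(t)\Big\|_\infty
\end{equation*}
for $t$ large enough, where the clean terms have been discarded using their favorable sign (the key intermediate inequality being $\big(-\frac{\mathrm d}{\mathrm dt}\tilde{\mathcal L}\big)\cdot\big(\frac{\rho^2}{\langle\boldsymbol v,\bar\partial\tilde{\mathcal L}\rangle(-1)}\big)\cdot\frac{g'}{g} \ge$ the surrogate-norm growth rate, i.e. the adaptive analogue of the fact that for homogeneous nets the smoothed margin is nondecreasing — here it is only ``nondecreasing up to the $\boldsymbol\beta$-error''). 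Then, since $\frac{\mathrm d}{\mathrm dt}\log\boldsymbol\beta(t)$ is Lebesgue integrable by the definition of adaptive gradient flow, $\int_{t_1}^\infty\big\|\frac{\mathrm d}{\mathrm dt}\log\boldsymbol\beta(t)\big\|_\infty\,\mathrm dt \to 0$ as $t_1\to\infty$; choose $t_1\ge t_0$ so large that $C\int_{t_1}^\infty\|\frac{\mathrm d}{\mathrm dt}\log\boldsymbol\beta(t)\|_\infty\,\mathrm dt \le \tfrac12$. Integrating from $t_1$ to any $t$ gives $\log\tilde\gamma(t) - \log\tilde\gamma(t_1) \ge -\tfrac12$, i.e. $\tilde\gamma(t)\ge e^{-1/2}\tilde\gamma(t_1)$, as claimed. (A technical point: one must also arrange $t_1$ large enough that $f^{-1}(\log\frac1{\tilde{\mathcal L}(t)})>0$ persists for $t\ge t_1$, guaranteeing $g(\log\frac1{\tilde{\mathcal L}})>0$ so $\log\tilde\gamma$ is well defined — this follows from Assumption \ref{assum: continuous}.III together with the fact that $\tilde{\mathcal L}$ is nonincreasing along the flow.)

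The main obstacle I expect is making the ``clean terms cancel with the right sign'' step rigorous in the nonsmooth, merely-locally-Lipschitz setting: the derivatives above exist only for a.e.\ $t$ (Rademacher / chain-rule for definable or locally Lipschitz functions), the quantities $\tilde q_i$, $\tilde{\mathcal L}$ and $\rho$ need to be shown locally absolutely continuous so that the fundamental theorem of calculus applies, and the Cauchy--Schwarz argument that controls $\big(\sum_i e^{-f}f'\langle\bar\partial\tilde q_i,\boldsymbol\beta\odot\bar\partial\tilde{\mathcal L}\rangle\big)$ against $L\sum_i e^{-f}f'\tilde q_i$ and $\|\boldsymbol\beta^{1/2}\odot\bar\partial\tilde{\mathcal L}\|^2$ must be done with the weighted inner product $\langle\cdot,\boldsymbol\beta\odot\cdot\rangle$ throughout — a single mismatched weighting breaks the inequality. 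I would handle this by working with the weighted inner product $\langle a,b\rangle_{\boldsymbol\beta} := \langle a,\boldsymbol\beta\odot b\rangle$ from the start, noting $\dot{\boldsymbol v}$ is the $\langle\cdot,\cdot\rangle_{\boldsymbol\beta}$-negative-gradient, and deferring the measure-theoretic regularity lemmas (local absolute continuity of $t\mapsto\tilde{\mathcal L}(\boldsymbol v(t))$, $t\mapsto\rho(t)$, validity of the chain rule) to the appendix, citing the corresponding facts from \cite{lyu2019gradient,dutta2013approximate} adapted to the adaptive flow.
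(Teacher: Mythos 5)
Your proposal is correct and follows essentially the same route as the paper's proof: differentiate $\log\tilde\gamma$, split the rate of $\rho^2$ into the homogeneity-driven part (shown nonnegative via Cauchy--Schwarz in the $\boldsymbol\beta$-weighted inner product, exactly the paper's term $A$) plus an error term from the time-variation of $\boldsymbol\beta$ (the paper's term $B$), bound the error by the integrable quantity $\frac{\mathrm d}{\mathrm dt}\log\boldsymbol\beta$, and choose $t_1$ so the accumulated error over $[t_1,\infty)$ is at most $\tfrac12$. The only cosmetic difference is that you bound the error term by $\|\frac{\mathrm d}{\mathrm dt}\log\boldsymbol\beta\|_\infty$ while the paper keeps only the positive part $\sum_i\big(\frac{\mathrm d}{\mathrm dt}\log\boldsymbol\beta_i^{-1/2}\big)_+$, which is slightly sharper but immaterial given the integrability assumption.
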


\begin{remark}
Our surrogate margin can be obtained by replacing $\Vert \boldsymbol{v}(t)\Vert$  by $\rho(t)=\Vert \boldsymbol{\beta}(t)^{-\frac{1}{2}}\odot\boldsymbol{v}(t)\Vert$ in the smoothed margin in $\text{[3]}$. This allows us to lower bound the derivative of surrogate margin and further lower bound the surrogate margin as Lemma \ref{lem:lower_bound_margin}, while the derivative of smoothed margin for adaptive gradient flow can not be bounded easily.
\end{remark}

Here we briefly give a road map of the proof. The derivative of norm $\rho(t)$ can be split into two parts: one is the increasing of parameter $\boldsymbol{v}$, and another is the change of component learning rate $\boldsymbol{\beta}^{-\frac{1}{2}}(t)$.  Applying homogeneity of $\tilde{q}_i$ and
Cauchy–Schwarz inequality, we bound the first term using the derivative of $f^{-1}\left(\log \left(\frac{1}{\tilde{\mathcal{L}}(t)} \right)\right)$; the second term can be lower bounded by {\small$\sum_{i=1}^{p} \left(\frac{d \log \boldsymbol{\beta}_{i}^{-\frac{1}{2}}(t)}{\mathrm{d} t}\right)_{+}$}, whose integration is bounded by the definition of adaptive gradient flow. The proof is completed by putting two parts together.

By the discussion above, one can conclude that  derivative of $\tilde{\gamma}(t)$ can be calculated by subtracting a small enough term  {\small$\sum_{i=1}^{p} \left(\frac{d \log \boldsymbol{\beta}_{i}^{-\frac{1}{2}}(t)}{\mathrm{d} t}\right)_{+}$} from a non-negative term. This fact leads to the the convergence of $\tilde{\gamma}(t)$. 

\begin{lemma}
\label{lem:conver_gamma}
Suppose a function $\boldsymbol{v}$ obey an adaptive gradient flow $\mathcal{F}$, which satisfies Assumption \ref{assum: continuous}. Then the surrogate margin $\tilde{\gamma}(t)$ converges.
\end{lemma}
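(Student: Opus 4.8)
The plan is to prove convergence of $\tilde\gamma(t)$ by showing that, although $\tilde\gamma$ need not be monotone, its deviation from monotonicity is controlled by an $L^1$ quantity coming from $\boldsymbol\beta$. Concretely, I would first establish that $\tilde\gamma(t)$ is bounded: the lower bound is exactly Lemma \ref{lem:lower_bound_margin}, and an upper bound follows because $g(\log(1/\tilde{\mathcal L}(\boldsymbol v(t))))/\rho(t)^L = \Theta(\gamma(t)) = \Theta(\tilde q_{\min}(\boldsymbol v(t))/\Vert\boldsymbol v(t)\Vert^L)$ via Lemma \ref{lem:relationship_between_normalized_smoothed}, and the normalized margin $\gamma(t)$ is bounded above by, e.g., $\max_i \tilde q_i(\hat{\boldsymbol v}(t))$, a continuous function on the unit sphere. (One must first note $\lim_{t\to\infty}\tilde{\mathcal L}=0$, which is established in Stage II of the proof sketch, so Lemma \ref{lem:relationship_between_normalized_smoothed} applies.)

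Next I would compute $\frac{\mathrm d}{\mathrm dt}\log\tilde\gamma(t)$ for a.e.\ $t$, following the road map already sketched for Lemma \ref{lem:lower_bound_margin}. Writing $\tilde\gamma = g(\log(1/\tilde{\mathcal L}))\cdot\rho^{-L}$, the derivative of the numerator part is non-negative (since $\tilde{\mathcal L}$ is decreasing along the flow), and the derivative of $-L\log\rho(t)$ splits into a term from the motion of $\boldsymbol v$ along $\boldsymbol\beta^{-1/2}$ and a term from the drift of $\boldsymbol\beta$ itself. The key algebraic fact — the same Cauchy–Schwarz plus $L$-homogeneity estimate used in Lemma \ref{lem:lower_bound_margin} — is that the $\boldsymbol v$-motion contribution to $\frac{\mathrm d}{\mathrm dt}\log\rho^{-L}$ exactly cancels or is dominated by the numerator's growth, so that
\begin{equation*}
\frac{\mathrm d}{\mathrm dt}\log\tilde\gamma(t) \;\ge\; -\,\xi(t), \qquad \xi(t) \;\overset{\triangle}{=}\; L\sum_{i=1}^{p}\left(\frac{\mathrm d\log\boldsymbol\beta_i^{-\frac12}(t)}{\mathrm dt}\right)_{+},
\end{equation*}
where $\xi \ge 0$ and, by Definition \ref{def:approximate_GF} ($\frac{\mathrm d\log\boldsymbol\beta}{\mathrm dt}$ Lebesgue integrable), $\int_{t_1}^\infty \xi(t)\,\mathrm dt < \infty$. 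Hence the function $t\mapsto \log\tilde\gamma(t) + \int_{t}^{\infty}\xi(s)\,\mathrm ds$ is non-decreasing, and $\int_t^\infty\xi(s)\,\mathrm ds \to 0$.

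Finally I would conclude by a standard monotone-plus-$L^1$-tail argument: define $\Psi(t) = \log\tilde\gamma(t) + \int_t^\infty \xi(s)\,\mathrm ds$. Then $\Psi$ is non-decreasing and, by the boundedness of $\tilde\gamma$ established in the first step together with finiteness of $\int\xi$, $\Psi$ is bounded above, so $\lim_{t\to\infty}\Psi(t)$ exists and is finite. Since $\int_t^\infty\xi(s)\,\mathrm ds\to 0$, it follows that $\lim_{t\to\infty}\log\tilde\gamma(t)$ exists and is finite, hence $\tilde\gamma(t)$ converges to a positive limit. The main obstacle is the second step: verifying carefully that the derivative inequality $\frac{\mathrm d}{\mathrm dt}\log\tilde\gamma \ge -\xi$ holds for a.e.\ $t$ — this requires re-running the Cauchy–Schwarz/homogeneity bookkeeping of Lemma \ref{lem:lower_bound_margin} while tracking that the "good" (numerator) term genuinely absorbs the $\boldsymbol v$-part of the norm's growth, and handling absolute continuity so that the fundamental theorem of calculus is legitimately applied to $\log\rho$ and to $g(\log(1/\tilde{\mathcal L}))$; the rest is a soft limiting argument.
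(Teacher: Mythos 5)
Your proposal is essentially the paper's argument: the differential inequality $\frac{\mathrm{d}}{\mathrm{d}t}\log\tilde\gamma(t)\ge -\xi(t)$ with $\xi(t)=L\sum_{i=1}^p\bigl(\frac{\mathrm{d}\log\boldsymbol\beta_i^{-1/2}(t)}{\mathrm{d}t}\bigr)_{+}$ is exactly eq.~(\ref{eq:derivative_gamma}) from the proof of Lemma~\ref{lem:lower_bound_margin}, and the conclusion is the same monotone-plus-integrable-correction argument. One slip in the final step: the function $\log\tilde\gamma(t)+\int_t^\infty\xi(s)\,\mathrm{d}s$ is \emph{not} non-decreasing under your inequality --- its derivative is $\frac{\mathrm{d}}{\mathrm{d}t}\log\tilde\gamma(t)-\xi(t)\ge -2\xi(t)$, so the tail integral has the wrong sign; the correct monotone quantity is $\log\tilde\gamma(t)+\int_{t_1}^{t}\xi(s)\,\mathrm{d}s$, which is bounded above (since $\tilde\gamma$ is bounded above and $\int_{t_1}^\infty\xi<\infty$), hence converges, and subtracting the convergent integral yields convergence of $\log\tilde\gamma$. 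A minor stylistic difference: for the upper bound on $\tilde\gamma$ the paper uses directly that $g(\log\frac{1}{\tilde{\mathcal L}})\le \tilde q_{\min}$, so $\tilde\gamma(t)\le \tilde q_{\min}(\boldsymbol v(t))/\rho(t)^L$ and Lemma~\ref{lem:appen_bound_gamma} applies without invoking $\tilde{\mathcal L}\to 0$; your detour through Lemma~\ref{lem:relationship_between_normalized_smoothed} also works but imports an unnecessary dependency on Lemma~\ref{lem:assymptotic_loss}.
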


\subsection{Convergence of Empirical Loss and Parameters}
\label{sec:convergence_erm}
By Lemma \ref{lem:lower_bound_margin}, we have that for an adaptive gradient flow $\mathcal{F}$ with Assumption \ref{assum: continuous}, the norm $\rho(t)$ can be bounded as $\rho(t)= \mathcal{O}(  f^{-1}(\log(\frac{1}{\tilde{\mathcal{L}}(\boldsymbol{v}(t))}))^{\frac{1}{L}})$. On the other hand, by chain rule, the derivative of empirical loss with respect to time can be calculated as
{\small\begin{align*}
    &\frac{\mathrm{d} \tilde{\mathcal{L}}(\boldsymbol{v}(t))}{\mathrm{d}t}=\langle\partial^s \tilde{\mathcal{L}}(\boldsymbol{v}(t)) ,\frac{\mathrm{d} \boldsymbol{v}(t)}{\mathrm{d}t}\rangle
=-\Vert\boldsymbol{\beta}^{\frac{1}{2}}(t)\odot \partial^s \tilde{\mathcal{L}}(\boldsymbol{v}(t))\Vert^2\\
    &\le-\frac{\langle\partial^s \tilde{\mathcal{L}}(\boldsymbol{v}(t)), \boldsymbol{v}(t) \rangle^2}{\rho(t)^2},
\end{align*}}
where the last inequality is derived by the Cauchy inequality applying to $\langle\boldsymbol{\beta}^{\frac{1}{2}}\odot{\partial^s} \tilde{\mathcal{L}}, \boldsymbol{\beta}^{-\frac{1}{2}}\odot\boldsymbol{v} \rangle$. By the homogeneity of $\tilde{q}_i$, we can further lower bound $\frac{\langle{\partial^s} \tilde{\mathcal{L}}, \boldsymbol{v} \rangle^2}{\rho^2}$ using $\tilde{\mathcal{L}}$.
In other words, Lemma \ref{lem:lower_bound_margin} ensures that the decreasing rate of the empirical loss $\tilde{\mathcal{L}}$ can be lower bounded by a function of itself. Based on the above methodology, we can prove  that empirical loss will decrease to zero, while parameter norm will converge to infinity as the following lemma.   

\begin{lemma}
\label{lem:assymptotic_loss}

Let a function $\boldsymbol{v}(t)$ obey an adaptive gradient flow $\mathcal{F}$ with loss $\tilde{\mathcal{L}}$ and component learning rate $\boldsymbol{\beta}(t)$, where $\tilde{\mathcal{L}}$ satisfies Assumption \ref{assum: continuous}. Then, $\lim_{t\rightarrow\infty} \tilde{\mathcal{L}}(\boldsymbol{v}(t))=0$, and consequently,
$\lim_{t\rightarrow\infty} \Vert \boldsymbol{v}(t)\Vert=\infty$.
\end{lemma}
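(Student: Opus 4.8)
\textbf{Proof proposal for Lemma \ref{lem:assymptotic_loss}.}
The plan is to turn the surrogate-margin lower bound of Lemma \ref{lem:lower_bound_margin} into a closed differential inequality for $\tilde{\mathcal{L}}(\boldsymbol{v}(t))$ and then argue by contradiction. First I would record that $\tilde{\mathcal{L}}(\boldsymbol{v}(t))$ is non-increasing: by the chain rule and the flow equation, $\frac{d}{dt}\tilde{\mathcal{L}}(\boldsymbol{v}(t))=\langle\bar{\partial}\tilde{\mathcal{L}},\frac{d\boldsymbol{v}}{dt}\rangle=-\Vert\boldsymbol{\beta}^{1/2}\odot\bar{\partial}\tilde{\mathcal{L}}\Vert^2\le 0$, since $\boldsymbol{\beta}(t)$ is componentwise positive. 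Hence $\tilde{\mathcal{L}}(\boldsymbol{v}(t))\to\tilde{\mathcal{L}}_\infty\ge 0$, and it suffices to rule out $\tilde{\mathcal{L}}_\infty>0$. I would also fix, using Assumption \ref{assum: continuous}.III (which holds for $\tilde{\mathcal{L}}$), a time $t_0$ with $g(\log\tfrac{1}{\tilde{\mathcal{L}}(\boldsymbol{v}(t_0))})>0$; by monotonicity, $g(\log\tfrac{1}{\tilde{\mathcal{L}}(\boldsymbol{v}(t))})$ stays positive for all $t\ge t_0$, and from $\tilde{\mathcal{L}}\ge e^{-f(\tilde{q}_{\min})}$ we then get $\tilde{q}_i(\boldsymbol{v}(t))\ge\tilde{q}_{\min}(\boldsymbol{v}(t))\ge g(\log\tfrac{1}{\tilde{\mathcal{L}}(\boldsymbol{v}(t))})>0$ for all $i$ and all $t\ge t_0$.

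Next I would derive the differential inequality. Lemma \ref{lem:lower_bound_margin} gives $\tilde{\gamma}(t)\ge c_1>0$ for $t\ge t_1$, hence $\rho(t)^L\le c_1^{-1}g(\log\tfrac{1}{\tilde{\mathcal{L}}(\boldsymbol{v}(t))})$. Applying the Cauchy inequality to $\langle\boldsymbol{\beta}^{1/2}\odot\bar{\partial}\tilde{\mathcal{L}},\,\boldsymbol{\beta}^{-1/2}\odot\boldsymbol{v}\rangle=\langle\bar{\partial}\tilde{\mathcal{L}},\boldsymbol{v}\rangle$ yields $\frac{d}{dt}\tilde{\mathcal{L}}=-\Vert\boldsymbol{\beta}^{1/2}\odot\bar{\partial}\tilde{\mathcal{L}}\Vert^2\le-\langle\bar{\partial}\tilde{\mathcal{L}},\boldsymbol{v}\rangle^2/\rho(t)^2$. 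By $L$-homogeneity of the $\tilde{q}_i$, $-\langle\bar{\partial}\tilde{\mathcal{L}},\boldsymbol{v}\rangle=L\sum_i e^{-f(\tilde{q}_i)}f'(\tilde{q}_i)\tilde{q}_i$; since (for $t\ge t_0$) each $\tilde{q}_i\ge g(\log\tfrac{1}{\tilde{\mathcal{L}}})$ and $f'>0$, a short loss estimate — $f'\equiv 1$ for the exponential loss, and $f'(q)q$ bounded below by a positive continuous function of $\tilde{q}_{\min}$ for the logistic loss — gives $-\langle\bar{\partial}\tilde{\mathcal{L}},\boldsymbol{v}\rangle\ge C_2\,g(\log\tfrac{1}{\tilde{\mathcal{L}}})\,\tilde{\mathcal{L}}$. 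Substituting the bound on $\rho(t)$ gives, for $t\ge\max(t_0,t_1)$,
\begin{equation*}
\frac{d}{dt}\tilde{\mathcal{L}}(\boldsymbol{v}(t))\le-C_3\,g\!\left(\log\tfrac{1}{\tilde{\mathcal{L}}(\boldsymbol{v}(t))}\right)^{\,2-2/L}\tilde{\mathcal{L}}(\boldsymbol{v}(t))^2 .
\end{equation*}

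Now the contradiction: if $\tilde{\mathcal{L}}_\infty=c>0$, then for $t\ge T:=\max(t_0,t_1)$ we have $\tilde{\mathcal{L}}(\boldsymbol{v}(t))\in[c,\tilde{\mathcal{L}}(\boldsymbol{v}(T))]$ and $g(\log\tfrac{1}{\tilde{\mathcal{L}}(\boldsymbol{v}(t))})$ confined to the compact positive interval $[\,g(\log\tfrac{1}{\tilde{\mathcal{L}}(\boldsymbol{v}(T))}),\,g(\log\tfrac{1}{c})\,]$, so the right-hand side above is at most a strictly negative constant $-C_4$. Integrating forces $\tilde{\mathcal{L}}(\boldsymbol{v}(t))\to-\infty$, contradicting $\tilde{\mathcal{L}}\ge 0$; hence $\tilde{\mathcal{L}}_\infty=0$. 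For the divergence of the norm, from $\tilde{\mathcal{L}}\ge e^{-f(\tilde{q}_{\min})}$ we have $\tilde{q}_{\min}(\boldsymbol{v}(t))\ge g(\log\tfrac{1}{\tilde{\mathcal{L}}(\boldsymbol{v}(t))})\to\infty$, while $L$-homogeneity and continuity on the unit sphere give $\tilde{q}_{\min}(\boldsymbol{v})\le C\Vert\boldsymbol{v}\Vert^L$ with $C=\max_{\Vert u\Vert=1}\tilde{q}_{\min}(u)\in(0,\infty)$ (finite by continuity, positive because $\tilde{q}_{\min}(\boldsymbol{v}(t_0))>0$); therefore $\Vert\boldsymbol{v}(t)\Vert^L\ge g(\log\tfrac{1}{\tilde{\mathcal{L}}(\boldsymbol{v}(t))})/C\to\infty$.

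I expect the main obstacle to be two fine points rather than the overall scheme: (i) making the estimate $-\langle\bar{\partial}\tilde{\mathcal{L}},\boldsymbol{v}\rangle\ge C_2\,g(\log\tfrac{1}{\tilde{\mathcal{L}}})\,\tilde{\mathcal{L}}$ uniform over both the exponential and the logistic loss — the logistic case needs the $q\to\infty$ behavior of $f'$ and care that some $\tilde{q}_i$ may still be small before time $t_0$, which is exactly why one first passes $t_0$; and (ii) the fact that $t\mapsto\boldsymbol{v}(t)$ is only locally absolutely continuous and obeys a chain rule almost everywhere, so every ``$\frac{d}{dt}$'' manipulation and the final integration must be justified through the Newton--Leibniz formula for absolutely continuous functions, just as in the proof of Lemma \ref{lem:lower_bound_margin}.
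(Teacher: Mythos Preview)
Your argument is correct and matches the paper's: both combine the Cauchy--Schwarz bound $-\tfrac{d}{dt}\tilde{\mathcal{L}}\ge L^2\nu(t)^2/\rho(t)^2$, the homogeneity estimate $\nu(t)\ge \tfrac{g(\log 1/\tilde{\mathcal{L}})}{g'(\log 1/\tilde{\mathcal{L}})}\,\tilde{\mathcal{L}}$ (equivalently your $C_2\,g\,\tilde{\mathcal{L}}$ after bounding $f'$ from below on $[g(\log\tfrac{1}{\tilde{\mathcal{L}}(t_0)}),\infty)$), and the surrogate-margin control $\rho^L\le c_1^{-1}g(\log 1/\tilde{\mathcal{L}})$ from Lemma~\ref{lem:lower_bound_margin} to obtain a closed differential inequality. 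The only difference is cosmetic: the paper integrates directly to the quantitative bound $\int_{1/\tilde{\mathcal{L}}(t_1)}^{1/\tilde{\mathcal{L}}(t)}\tfrac{g'(\log x)^2}{g(\log x)^{2-2/L}}\,dx\ge c\,(t-t_1)$ (reused later for rate estimates), whereas you close by contradiction---for the qualitative claim of Lemma~\ref{lem:assymptotic_loss} the two finishes are equivalent.
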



\subsection{Convergence to KKT point}
\label{sec:convergent_KKT}

 We start by proving for any $t\ge t_1$, $\hat{\boldsymbol{v}}(t)=\frac{\boldsymbol{v}(t)}{\Vert \boldsymbol{v}(t)\Vert}$ is an approximate KKT point.   Based on the surrogate margin that we construct in Section \ref{sec:smoothed_margin}, we can further show for normalized $\boldsymbol{v}$ is an approximate KKT point as the following Lemma :
\begin{lemma}
\label{lem:construction_kkt}
Let $\hat{\boldsymbol{v}}$ and $\widehat{{\partial^s} \tilde{\mathcal{L}}(\boldsymbol{v})}$ be  $\boldsymbol{v}$ and ${\partial^s} \tilde{\mathcal{L}}(\boldsymbol{v})$ respectively normalized by their $L^2$ norms. Then $\hat{\boldsymbol{v}}(t)$ is a $\left(\mathcal{O}(1-\langle\hat{\boldsymbol{v}}(t),\widehat{-{\partial}^s \tilde{\mathcal{L}}(t)}\rangle),\mathcal{O}\left(\frac{1}{\log \frac{1}{\tilde{\mathcal{L}}(\boldsymbol{t})}}\right)\right)$ KKT point of optimization problem $(P)$ in Theorem \ref{thm:approximate_flow}. 
\end{lemma}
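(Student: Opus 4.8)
\textbf{Proof proposal for Lemma \ref{lem:construction_kkt}.}

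The plan is to explicitly produce the KKT multipliers $\lambda_i$ for problem $(P)$ evaluated at the normalized iterate $\hat{\boldsymbol{v}}(t)$, and then bound the two KKT defects. Recall that $(P)$ minimizes $\tfrac12\|\boldsymbol{v}\|^2$ subject to $g_i(\boldsymbol{v}) = 1 - \tilde q_i(\boldsymbol{v}) \le 0$. Since $\bar\partial\tilde{\mathcal L}(\boldsymbol{v}) = -\sum_i e^{-f(\tilde q_i(\boldsymbol{v}))} f'(\tilde q_i(\boldsymbol{v}))\,\bar\partial \tilde q_i(\boldsymbol{v})$, the natural choice is to set, at the point $\hat{\boldsymbol{v}}(t)$,
\begin{equation*}
\lambda_i(t) \;=\; c(t)\, e^{-f(\tilde q_i(\boldsymbol{v}(t)))} f'(\tilde q_i(\boldsymbol{v}(t)))
\end{equation*}
for a single normalization scalar $c(t)>0$ chosen so that the stationarity residual is as small as possible; because $\tilde q_i$ is $L$-homogeneous, $\bar\partial\tilde q_i$ is $(L-1)$-homogeneous, so the gradients at $\hat{\boldsymbol v}(t)$ and at $\boldsymbol v(t)$ differ only by the scalar $\|\boldsymbol v(t)\|^{L-1}$, and one absorbs this into $c(t)$. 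First I would pick $c(t)$ to match the length of $\hat{\boldsymbol v}(t)$ against the length of $-\bar\partial\tilde{\mathcal L}$ normalized to $\hat{\boldsymbol v}$ — i.e. so that $\hat{\boldsymbol v}(t) + \sum_i\lambda_i(t)\,\bar\partial g_i(\hat{\boldsymbol v}(t))$ is the difference of two vectors of equal norm pointing in directions $\hat{\boldsymbol v}(t)$ and $-\widehat{\bar\partial\tilde{\mathcal L}(t)}$; then $\|\hat{\boldsymbol v}(t) + \sum_i\lambda_i \bar\partial g_i\| = \Theta(\|\hat{\boldsymbol v}\|\cdot\|{-}\widehat{\bar\partial\tilde{\mathcal L}} - \hat{\boldsymbol v}\|) = \Theta\big(\sqrt{2(1 - \langle\hat{\boldsymbol v}(t), \widehat{-\bar\partial\tilde{\mathcal L}(t)}\rangle)}\big)$. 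Actually, to land exactly the stated $\varepsilon = \mathcal O(1 - \langle\hat{\boldsymbol v},\widehat{-\bar\partial\tilde{\mathcal L}}\rangle)$ rather than its square root, I expect one needs to also divide by a factor involving the gradient norm $\|\bar\partial\tilde{\mathcal L}\|$ — this is exactly the role the surrogate margin lower bound (Lemma \ref{lem:lower_bound_margin}) plays, since it shows $\|\bar\partial\tilde{\mathcal L}\|$ is bounded below by a polynomial in $\tilde{\mathcal L}$ times a power of $\|\boldsymbol v\|$, controlling the rescaling. So the careful bookkeeping of which power of $\|\boldsymbol v(t)\|$ and which power of $\tilde{\mathcal L}$ enters $c(t)$ is the first real task.

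Next I would bound the complementary-slackness defect $\sum_i \lambda_i(t)\, g_i(\hat{\boldsymbol v}(t)) = \sum_i\lambda_i(t)\,(1 - \tilde q_i(\hat{\boldsymbol v}(t)))$. With the chosen $c(t)$, each term is (up to the normalization) $e^{-f(\tilde q_i(\boldsymbol v))} f'(\tilde q_i(\boldsymbol v))\,(1 - \tilde q_i(\hat{\boldsymbol v}))$. The key estimate is that the minimal margin satisfies $\tilde q_{\min}(\boldsymbol v(t)) \ge g(\log\frac{1}{\tilde{\mathcal L}(\boldsymbol v(t))})$ (from $\tilde{\mathcal L} = \sum_i e^{-f(\tilde q_i)} \ge e^{-f(\tilde q_{\min})}$), so that $\tilde q_i(\hat{\boldsymbol v}(t)) = \tilde q_i(\boldsymbol v(t))/\|\boldsymbol v(t)\|^L$ is, after the surrogate-margin normalization, close to $1$ with an error governed by $1/\log\frac{1}{\tilde{\mathcal L}}$; the weights $e^{-f(\tilde q_i)} f'(\tilde q_i)$ with small $\tilde q_i$ get exponentially suppressed relative to the dominant ones, while the dominant indices contribute the $\mathcal O(1/\log\frac{1}{\tilde{\mathcal L}})$ term. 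This mirrors the standard smoothed-margin bookkeeping in \cite{lyu2019gradient}; the extra wrinkle here is only that $\tilde q_i$ carries the constant rescaling $\tilde{\boldsymbol h}^{1/2}$, which is benign since it is bounded away from $0$ and $\infty$.

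The main obstacle I anticipate is \emph{not} any single inequality but the coordination between the two defects through the single free scalar $c(t)$: the stationarity bound wants $c(t)$ large enough that $\sum_i\lambda_i\bar\partial g_i$ has norm comparable to $\|\hat{\boldsymbol v}\| = 1$, while the complementary-slackness bound needs the resulting $\lambda_i$'s not to blow up faster than $\log\frac{1}{\tilde{\mathcal L}}$. Pinning $c(t) = \Theta\!\big(1 \big/ \big(\|\boldsymbol v(t)\|^{L-1} \sum_i e^{-f(\tilde q_i)} f'(\tilde q_i)\tilde q_i\big)\big)$ — i.e. normalizing by the quantity $-\langle\bar\partial\tilde{\mathcal L},\boldsymbol v\rangle / (L\|\boldsymbol v\|^{L})$ that already appeared in Section \ref{sec:smoothed_margin} — and then verifying both bounds simultaneously is where the surrogate margin's lower bound (Lemma \ref{lem:lower_bound_margin}) and the loss-convergence estimate (Lemma \ref{lem:assymptotic_loss}) both get used: the former to control the denominator from below, the latter to guarantee $\log\frac{1}{\tilde{\mathcal L}(t)} \to \infty$ so the $\delta$-defect genuinely vanishes. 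Once $c(t)$ is fixed, the stationarity residual is algebraically exactly $\hat{\boldsymbol v}(t) - \widehat{-\bar\partial\tilde{\mathcal L}(t)}$ up to lower-order terms, whose squared norm is $2(1 - \langle\hat{\boldsymbol v}(t),\widehat{-\bar\partial\tilde{\mathcal L}(t)}\rangle)$, and a final argument (using that this inner product $\to 1$, hence the difference is small) converts the square root into the linear bound claimed, completing the proof.
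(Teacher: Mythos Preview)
Your plan is essentially the paper's: take $\lambda_i \propto e^{-f(\tilde q_i)} f'(\tilde q_i)$ with one normalizing scalar so that the stationarity residual collapses to a scalar multiple of $\hat{\boldsymbol v}+\widehat{\bar\partial\tilde{\mathcal L}}$, then handle the slackness sum via the gap $\tilde q_i-\tilde q_{\min}$ against the exponential weights. Two points where your write-up wobbles deserve correction.

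First, the paper does \emph{not} evaluate at $\hat{\boldsymbol v}(t)$ but at the rescaling $\tilde{\boldsymbol v}(t)=\tilde q_{\min}(\boldsymbol v(t))^{-1/L}\boldsymbol v(t)$, which is automatically feasible for $(P)$ since $\tilde q_i(\tilde{\boldsymbol v})=\tilde q_i/\tilde q_{\min}\ge 1$; the main-text statement is loose on this. The concrete choice is $\lambda_i=\tilde q_{\min}^{\,1-2/L}\|\boldsymbol v\|\,e^{-f(\tilde q_i)}f'(\tilde q_i)\big/\|\bar\partial\tilde{\mathcal L}\|$ (normalized by $\|\bar\partial\tilde{\mathcal L}\|$, not by $\nu$ as you wrote), and the stationarity residual becomes exactly $\tilde q_{\min}^{-1/L}\|\boldsymbol v\|\big(\hat{\boldsymbol v}+\widehat{\bar\partial\tilde{\mathcal L}}\big)$, whose \emph{squared} norm is $\gamma^{-2/L}\cdot 2(1-\cos\theta)=\mathcal O(1-\cos\theta)$ once the surrogate-margin lower bound caps $\gamma^{-2/L}$. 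So your square-root observation is right and nothing further is needed: the $\varepsilon(t)$ the paper records is the bound on the squared residual, and the later argument only uses $\varepsilon\to 0$. Your proposed ``divide by the gradient norm'' maneuver does not convert $\sqrt{1-\cos\theta}$ into $1-\cos\theta$; drop that detour.

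Second, for $\delta$: after the normalization, the paper lower-bounds $\|\bar\partial\tilde{\mathcal L}\|\ge \tfrac{L}{2K\|\boldsymbol v\|}\log\tfrac{1}{\tilde{\mathcal L}}\,e^{-f(\tilde q_{\min})}$, writes $f(\tilde q_i)-f(\tilde q_{\min})=(\tilde q_i-\tilde q_{\min})f'(\xi_i)$ by the mean-value theorem, uses the loss property $f'(\tilde q_i)\le K^{\lceil\log_2(\tilde q_i/\xi_i)\rceil}f'(\xi_i)$ together with the boundedness of $\tilde q_i/\tilde q_{\min}\le B_1/\gamma$, and finishes with $xe^{-x}\le e^{-1}$. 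Your informal ``dominant indices contribute $\mathcal O(1/\log\tfrac{1}{\tilde{\mathcal L}})$'' is the right picture; these three ingredients are what make it rigorous.
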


We made some explanations to Lemma \ref{lem:construction_kkt}: by the results in Section \ref{sec:convergence_erm}, we have $\lim_{t\rightarrow\infty}\mathcal{O}\left(\frac{1}{\log \frac{1}{\tilde{\mathcal{L}}(t)}}\right)=0$. Therefore, we only need to find a convergent  series $\tilde{\boldsymbol{v}}(t)$ with $1+\langle\hat{\boldsymbol{v}}(t),\widehat{{\partial}^s \tilde{\mathcal{L}}(t)}\rangle$ goes to zero. 

For this purpose, we construct an approximate norm
$\tilde{\rho}(t)$ as
    {\small$\sqrt{\rho(t)^2-2\int_{t_{1}}^{t}\left\langle\boldsymbol{v}(\tau), \boldsymbol{\beta}^{-\frac{1}{2}}(\tau) \odot \frac{d \boldsymbol{\beta}^{-\frac{1}{2}}}{\mathrm{d} t}(\tau) \odot \boldsymbol{v}(\tau)\right\rangle d \tau},
$}
 which measures the increasing of $\boldsymbol{v}(t)$. 
 $1-\langle\hat{\boldsymbol{v}}(t),-\widehat{{\partial}^s \tilde{\mathcal{L}}(t)}\rangle$ can then be bound by the next lemma:
 
 \begin{figure*}[t!]
\centering
\begin{subfigure}
[b]{0.5\columnwidth}        \includegraphics[trim=0cm 0.5cm 0cm 0cm, width=\textwidth]{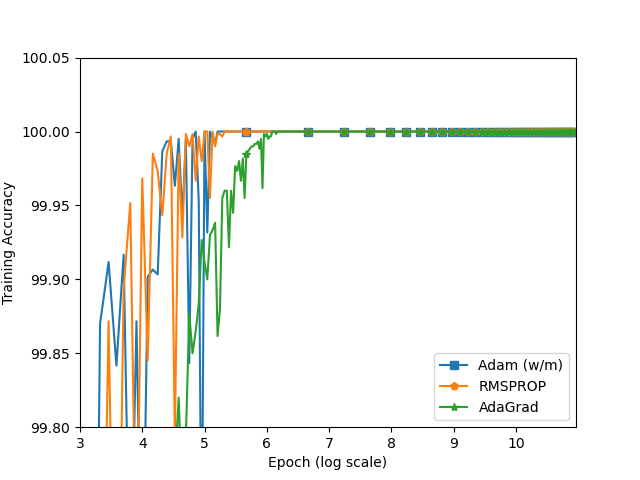}
\caption{ Training Accuracy }
\label{fig: training accuracy}
\end{subfigure}
\begin{subfigure}
   [b]{0.5\columnwidth}        \includegraphics[trim=0cm 0.5cm 0cm 0cm, width=\textwidth]{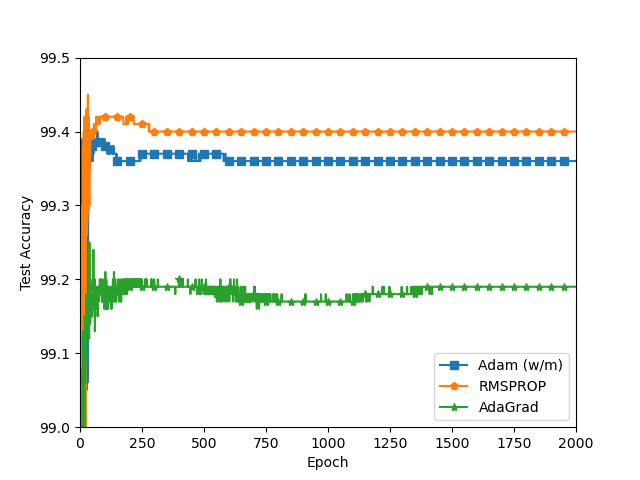}
\caption{ Test Accuracy}
\label{fig:test accuracy}
\end{subfigure}
\begin{subfigure}
[b]{0.5\columnwidth}        \includegraphics[trim=0cm 0.5cm 0cm 0cm, width=\textwidth]{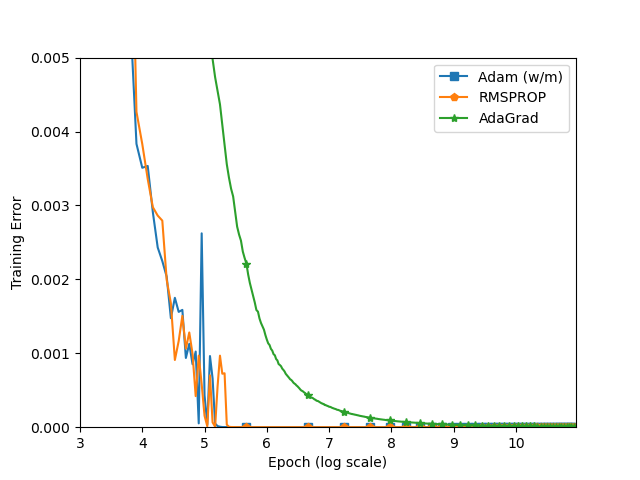}
\caption{ Training Loss}
\label{fig: loss}
\end{subfigure}
\begin{subfigure}
   [b]{0.5\columnwidth}        \includegraphics[trim=0cm 0.5cm 0cm 0cm, width=\textwidth]{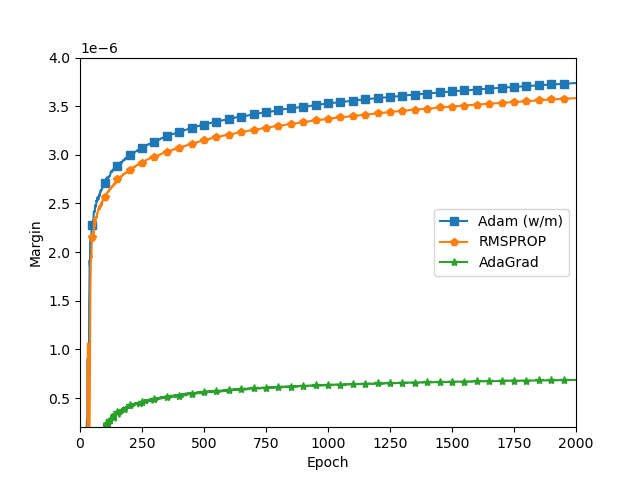}
\caption{ Normalized Margin}
\label{fig: margin}
\end{subfigure}
\vskip -0.3cm
\caption{Observation of normalized margin and generalization performance of different optimizers on MNIST. 
While all optimizers end with training accuracy $100\%$ in (a), $1-\textit{test accuracy}$ can reflect the generalization error.  }
\vskip -0.3cm
\end{figure*}
\begin{figure*}[htbp]
    \centering
    \begin{subfigure}[b]{0.5\columnwidth}   
    \includegraphics[trim=0cm 0.5cm 0cm 0cm, width=\textwidth]{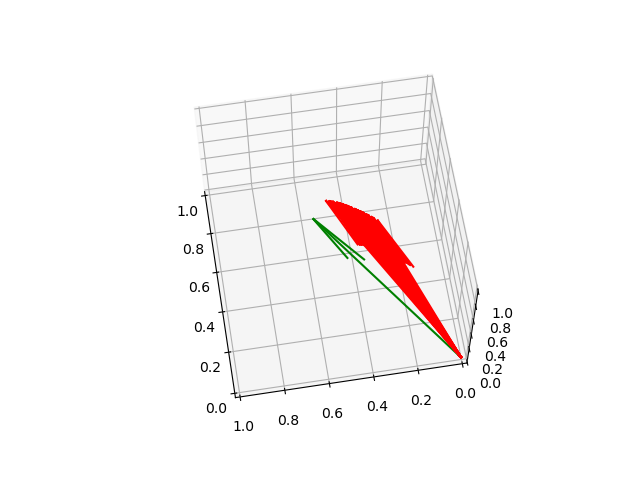}
        \caption{ $h^{-\frac{1}{2}}_{\infty}$ in AdaGrad}
        \label{fig1:direction_h}
    \end{subfigure}
    \begin{subfigure}[b]{0.5\columnwidth}     
    \includegraphics[trim=0cm 0.5cm 0cm 0cm, width=\textwidth]{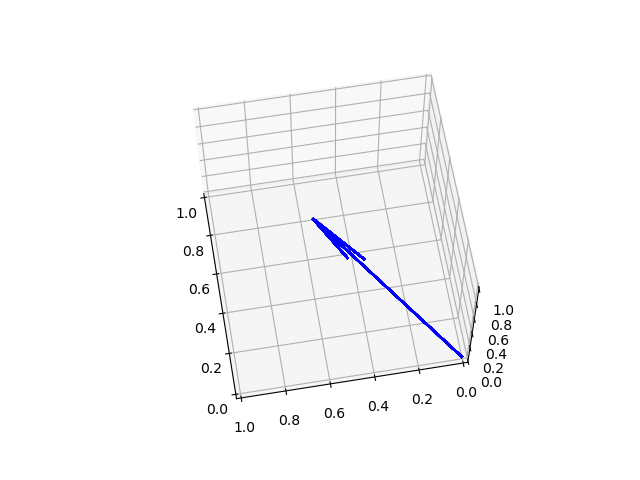}
        \caption{$h^{-\frac{1}{2}}_{\infty}$ in RMSProp }
        \label{fig2:direction_h}
    \end{subfigure}
   \begin{subfigure}[b]{0.5\columnwidth}   
    \includegraphics[trim=0cm 0.5cm 0cm 0cm, width=\textwidth]{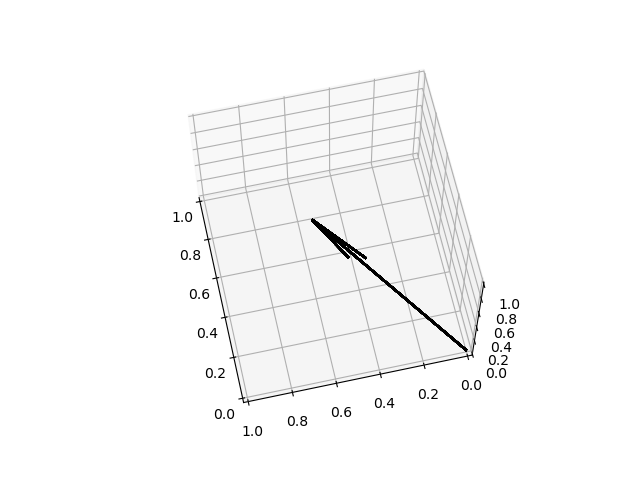}
        \caption{$h^{-\frac{1}{2}}_{\infty}$ in Adam}
        \label{fig3:direction_h}
    \end{subfigure}
    \begin{subfigure}[b]{0.5\columnwidth}   
    \includegraphics[trim=0cm 0.5cm 0cm 0cm, width=\textwidth]{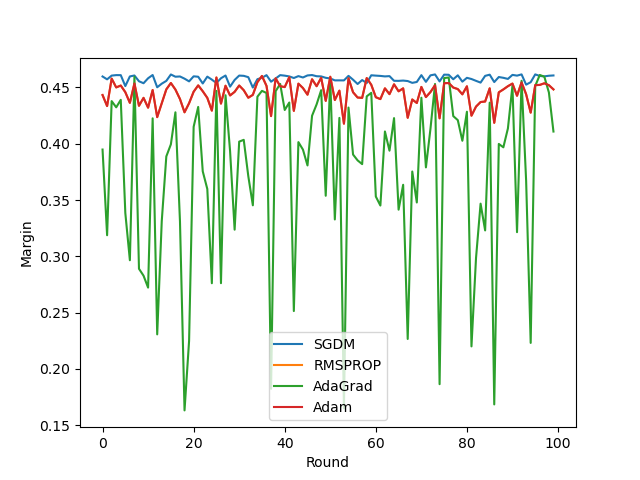}
        \caption{Margin}
        \label{fig:toy-example-margin}
    \end{subfigure}
    \vskip -0.3cm
    \caption{Direction of limit of conditioner in AdaGrad, RMSProp, and Adam (w/m) with different realizations of random initialization. In (a)-(c), the green vector stands for the isotropic direction $(\frac{1}{\sqrt{3}},\frac{1}{\sqrt{3}},\frac{1}{\sqrt{3}})$. One red vector in (a) stands  for direction of $(\boldsymbol{h}_{\infty}^A)^{-\frac{1}{2}}$ in one experiment. 
    Blue vector in (b) stands for  direction of $(\boldsymbol{h}_{\infty}^R)^{-\frac{1}{2}}$ under different initialization. Black vector in (c) stands for direction of $(\boldsymbol{h}_{\infty}^M)^{-\frac{1}{2}}$ under different initialization. In (d), final values of the margin for the four algorithms are plotted (Adam (w/m) coincides with RMSProp). }
    \vskip -0.3cm
    \label{fig:direction_h}
\end{figure*}
 
\begin{lemma}
\label{lem: bound_b}
For any $t_3>t_2\ge t_1$, there exists a $\xi \in[t_2,t_3]$, such that
{\small\begin{align*}
\left(\langle\hat{v}(\xi),-\widehat{{\partial}^s \tilde{\mathcal{L}}(\xi)}\rangle^{-2}-1\right) \le  \boldsymbol{o}\left(\frac{1}{\log \tilde{\rho}(t_3) -\log \tilde{\rho}(t_2) }\right),
\end{align*}}
and 
\begin{equation*}
   \Vert \hat{\boldsymbol{v}}(\xi)-\hat{\boldsymbol{v}}(t_2)\Vert \le  \mathcal{O}\left(\log \tilde{\rho}(t_3) -\log \tilde{\rho}(t_2) \right).
\end{equation*}
\end{lemma} 

Therefore, given a sequence of parameter direction $\{\hat{\boldsymbol{v}}(t_i)\}_{i=1}^{\infty}$ with limit $\hat{\boldsymbol{v}}$, we can always construct another sequence $\{t_i'\}_{i=1}^{\infty}$ with $\left(1-\langle\hat{v}(t_i'),-\widehat{{\partial}^s \tilde{\mathcal{L}}(t_i')}\rangle\right)$ and $\hat{\boldsymbol{v}}(t_i)-\hat{\boldsymbol{v}}(t_i')$ converging to zero.

Combining Lemma \ref{lem:construction_kkt} and \ref{lem: bound_b}, for any convergent direction $\bar{\boldsymbol{v}}$, we can construct a series of $\{t_i\}_{i=1}^{\infty}$, such that $\hat{\boldsymbol{v}}(t_i)$ is $(\varepsilon_i,\delta_i) $ KKT point, with $\lim_{i\rightarrow\infty} \hat{\boldsymbol{v}}(t_i)=\bar{\boldsymbol{v}}$, and $\lim_{i\rightarrow\infty}\varepsilon_i=\lim_{i\rightarrow\infty}\delta_i=0$. On the other hand,  constraints of $(P)$ satisfies Mangasarian-Fromovitz constraint qualification (see Appendix \ref{sec:appen_KKT}), which ensures that  $\bar{\boldsymbol{v}}$ is a KKT point of $(P)$, and completes the proof.

\section{Experiments}
\label{sec:experiment}
\subsection{Observations on Normalized Margin and Generalization Performance}
In this section, we conduct experiments to
 verify the theoretical results. We train a homogeneous neural networks using AdaGrad, RMSProp and Adam (w/m) respectively. {We adopt the homogeneous 4-layer convolutional neural network used in \cite{madry2017towards} as our model and use MNIST \cite{lecun1998mnist} as the dataset.  We use default learning rate on PyTorch platform for all the algorithms and Adam (w/m) adopts the same learning rate as Adam. Because our theory is established for full batch gradient without randomness, we set minibatch size to be $1024$  which is relatively large to mimic the full batch gradient. We put more details on the network structure and the settings of hyper-parameters in Appendix \ref{sec:expe_mnist}, where we also add standard SGD (with momentum) and Adam to observe influence of momentum.} 

We plot training accuracy, testing accuracy and training loss in Figure \ref{fig: training accuracy}, \ref{fig:test accuracy}, and \ref{fig: loss}. We also plot the value of the normalized margin during training in Figure \ref{fig: margin}.
We have the following observations:  (1) The normalized margins of AdaGrad, RMSProp and Adam (w/m) are lower bounded and the final normalized margin of AdaGrad is the lowest. It is consistent with our theoretical results. (2) The training loss of AdaGrad, RMSProp and Adam (w/m) goes to zero and AdaGrad achieves the lower test accuracy (the worse generalization), which shows the superiority of conditioners in RMSProp and Adam (w/m) on generalization. { (3) Although our theory does not include momentum version of the algorithms, the normalized margin of SGD and Adam are also lower bounded
, which shows potential on extension of our theory to momentum version. }


\subsection{Observations on Convergent Direction}
\label{sec:exper_toy_main}
In this section, we observe the direction of $\boldsymbol{h}_{\infty}$ on a simple case to illustrate that $\boldsymbol{h}_{\infty}$ of AdaGrad is anistropic and sensitive to initialization.
The model we use is expressed as  $\Phi(\boldsymbol{x},\boldsymbol{w},v)=v\sigma(\langle\boldsymbol{w},\boldsymbol{x}\rangle)$, where $x\in\mathbb{R}^2, \boldsymbol{w}\in\mathbb{R}^2$ and $v\in\mathbb{R}$ and $\sigma(x)$ is the Leaky ReLU activation function, i.e., $\sigma(x)=x$ for $x\geq 0$ and $\sigma(x)=\frac{x}{2}$ for $x<0$.

{We repeat AdaGrad, RMSProp and Adam (w/m) for 100 rounds with different random seeds of initialization. We plot  $\boldsymbol{h}^{-\frac{1}{2}}_{\infty}$ for AdaGrad, RMSProp and Adam in Figure 2 (a), (b) and (c), respectively. We can observe that the $\boldsymbol{h}^{-\frac{1}{2}}_{\infty}$ in AdaGrad are different for 100 runs and $\boldsymbol{h}^{-\frac{1}{2}}_{\infty}$ in RMSProp and Adam (w/m) are coincide. It indicates that $\boldsymbol{h}^{-\frac{1}{2}}_{\infty}$ in AdaGrad is sensitive to initialization. We also plot the value of the margin for the three algorithms under different initialization in Figure 2(d). We can observe that the margin of AdaGrad fluctuates under different initialization, while that for RMSProp and Adam (w/m) are smoother. We further show the relation between $\boldsymbol{h}^{-\frac{1}{2}}_{\infty}$ and the convergent direction of parameters in Appendix \ref{sec:expe_toy}. These results indicate that the convergent direction of AdaGrad is sensitive to initialization, which may hurt its generalization. }

\section{Conclusion}
In this paper, we study the convergent direction of both continuous and discrete cases of adaptive optimization algorithms on homogeneous deep neural networks. We prove that RMSProp and Adam (w/m) will converge to the KKT points of the $L^2$ max-margin problem, while AdaGrad does not. The main technical contribution of this paper is to propose a general framework for analyses of adaptive optimization algorithms' convergent direction. In future, we will study how optimization techniques such as momentum, weight decay and stochastic noise in optimization algorithm influence the convergent direction.

\bibliography{references}
\bibliographystyle{icml2021}
    
    



\clearpage

\onecolumn

\appendix
\section{Preliminaries}
In this section, we provide some definitions and basic lemmas which will be used in the proof. The section is organized as follows: in Subsection \ref{sec:property_exponential}, we show general properties which exponential loss and logistic loss share; in Subsection \ref{sec:appen_KKT}, (approximate) KKT conditions is defined and sufficient conditions of being an approximate KKT point
is given; in Subsection \ref{sec:appen_explanation_conditoner}, we show how conditioners of AdaGrad, RMSProp, and Adam in continuous flow is formulated; in Subsection \ref{sec:appen_o_minimal}, we introduce o-Minimal structure, definable set and definable functions, and show two  Kurdyka-Lojasiewicz inequalities; in Subsection \ref{sec;appen_measure}, we show some basic definitions from Measure Theory, including
measurable set and Lebesgue Integrability.

\subsection{Property of Exponential and Logistic Loss}
\label{sec:property_exponential}
In this subsection, we provide several properties which both exponential and logistic loss possess. The properties of exponential and logistic loss can be described as the following proposition:
\begin{proposition}\label{lem:property_loss} 
For $\ell\in \{\ell_{exp},\ell_{\log}\}$:
\begin{itemize}
    \item There exists a $C^1$ function $f$, such that $\ell=e^{-f}$;
    \item For any $x\in \mathbb{R}$, $f'>0$. Therefore, $f$ is reversible, and $f^{-1}\in C^1$;
     \item $f'(x)x$ is non-decreasing for $x\in(0,\infty)$, and $\lim_{x\rightarrow\infty} f'(x)x=\infty$;
    \item There exists a large enough $x_f$ and a constant  $K\ge 1$ ,such that, 
    \begin{itemize}
        \item $\forall \theta \in[\frac{1}{2},1),$ $\forall x\in (x_f,\infty)$, and $\forall y\in f^{-1}(x_f,\infty)$: $(f^{-1})'(x)\le K (f^{-1})'(\theta x)$ and $f'(y)\le Kf'(\theta y)$;
        \item For all $y\in [x_f,\infty)$, $\frac{f(x)}{f'(x)}\in[\frac{1}{2K}x,2Kx]$;
        \item For all $x\in [f^{-1}(x_f),\infty)$, $\frac{f^{-1}(x)}{(f^{-1})'(x)}\in[\frac{1}{2K}x,2Kx]$.
        \item $f(x)=\Theta(x)$ as $x\rightarrow\infty$.
    \end{itemize}

\end{itemize}
\end{proposition}

All properties are easy to verify in Proposition \ref{lem:property_loss} and we omit it here. For brevity, we will use $g(x)=f^{-1}(x)$ in the following proofs.

\subsection{KKT Condition}
\label{sec:appen_KKT}
Being a KKT point is a first order necessary condition for being an optimal point.  We first give the definition of approximate KKT point for general optimization problem $(Q)$.

\begin{definition}
\label{def:general_KKT}
Consider the following optimization problem $(Q)$ for $\boldsymbol{x}\in \mathbb{R}^d$:
\begin{gather*}
\min f(\boldsymbol{x})\\
\text{subject to } g_i(x) \le 0, \forall i \in[N],  
\end{gather*}
where $f, g_i$ ($i=1,\cdots,N$): $\mathbb{R}^d
\rightarrow \mathbb{R}$ are locally Lipschitz functions. We say that $x\in \mathbb{R}^d$
is a feasible
point of $(P)$ if $x$ satisfies $g_i(x) \le 0$ for all $i \in [N]$.

For any $\varepsilon,\delta>0$, a feasible point of $(Q)$ is an $(\varepsilon,\delta)$-KKT point if there exists $\lambda_i\ge 0$, $\mathbf{k}\in {\partial} f(\boldsymbol{x})$, and $\boldsymbol{h}_{i} \in {\partial} g_{i}(\boldsymbol{x})$ for all $i\in [N]$ (we will slightly abuse ${\partial} f (\boldsymbol{x})$ to respresent a element in ${\partial} f (\boldsymbol{x})$) such that

1. $\left\|\boldsymbol{k}+\sum_{i \in[N]} \lambda_{i} \boldsymbol{h}_{i}(\boldsymbol{x})\right\|_{2} \leq \varepsilon$;

2. $\forall i \in[N]: \lambda_{i} g_{i}(\boldsymbol{x}) \geq-\delta$.

Specifically, when $\varepsilon=\delta=0$, we call $\boldsymbol{x}$ a KKT point of $(Q)$.
\end{definition}

The following Mangasarian-Fromovitz constraint qualification (MFCQ) bridges $(\varepsilon,\delta)$ KKT points with KKT points.

\begin{definition}
 A feasible point $\boldsymbol{x}$ of $(Q)$ is said to satisfy MFCQ  if there exists $\boldsymbol{a}\in \mathbb{R}^d$ such that for every $i\in[N]$ with $g_i(\boldsymbol{x})=0$,
\begin{equation*}
    \forall \boldsymbol{h} \in {\partial} g_{i}(\boldsymbol{x}):\langle\boldsymbol{h}, \boldsymbol{a}\rangle>0.
\end{equation*}
\end{definition}

MFCQ guarantees that the limit of approximate KKT point with convergent $\varepsilon$ and $\delta$ is a KKT point.

\begin{lemma}
\label{lem: appro_KKT_to_KKT}
Suppose for any $k\in\mathbb{N}$, $\boldsymbol{x}_k$ is a $(\varepsilon_k,\delta_k)$-KKT point of $(Q)$ defined in Definition \ref{def:general_KKT}. If $\lim_{k\rightarrow\infty}\varepsilon_k=0$,  $\lim_{k\rightarrow\infty}\delta_k=0$, and $\lim_{k\rightarrow\infty}\boldsymbol{x}_k=\boldsymbol{x}$, where the limit point $\boldsymbol{x}$ satisfies MFCQ, then $\boldsymbol{x}$ is a KKT point of $(Q)$. 
\end{lemma}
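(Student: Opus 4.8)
The plan is to pass to the limit in the two defining inequalities of an $(\varepsilon_k,\delta_k)$-KKT point, using the boundedness of the Clarke subdifferentials near $x$ and MFCQ to control the multipliers $\lambda^{(k)}_i$. First I would record that, since the $g_i$ and $f$ are locally Lipschitz, the set-valued maps $\partial^\circ f$ and $\partial^\circ g_i$ are locally bounded and have closed graph (upper semicontinuity of the Clarke subdifferential). Hence on a fixed compact neighborhood $U$ of $x$ there is a constant $B$ with $\|\mathbf k\|,\|\boldsymbol h_i\|\le B$ for every selection $\mathbf k\in\partial^\circ f(y)$, $\boldsymbol h_i\in\partial^\circ g_i(y)$, $y\in U$. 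For $k$ large enough $x_k\in U$, so the selections $\mathbf k^{(k)}$, $\boldsymbol h^{(k)}_i$ witnessing the $(\varepsilon_k,\delta_k)$-KKT property all lie in a bounded set and, passing to a subsequence, converge to limits $\mathbf k^\infty\in\partial^\circ f(x)$, $\boldsymbol h^\infty_i\in\partial^\circ g_i(x)$ by the closed-graph property.

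The main obstacle is bounding the multiplier sequence $\{\lambda^{(k)}_i\}$; once that is done everything passes to the limit by continuity. Here is where MFCQ enters. Fix the vector $\boldsymbol a$ from the MFCQ definition at $x$. For indices $i$ with $g_i(x)<0$, condition~2 forces $\lambda^{(k)}_i g_i(x_k)\ge -\delta_k$, and since $g_i(x_k)\to g_i(x)<0$ we get $0\le \lambda^{(k)}_i\le \delta_k/(-g_i(x_k))\to 0$; so these multipliers vanish in the limit. For the remaining (active) indices $i$, i.e. $g_i(x)=0$, take the inner product of the vector in condition~1 with $\boldsymbol a$:
\begin{equation*}
\Big\langle \mathbf k^{(k)}+\sum_{i}\lambda^{(k)}_i\boldsymbol h^{(k)}_i,\ \boldsymbol a\Big\rangle \le \varepsilon_k\|\boldsymbol a\|.
\end{equation*}
Splitting the sum into active and inactive indices, using $\|\mathbf k^{(k)}\|\le B$, the just-established smallness of the inactive $\lambda^{(k)}_i$, and the fact that $\langle \boldsymbol h^{(k)}_i,\boldsymbol a\rangle$ is bounded below by a positive constant $c>0$ for all active $i$ and all large $k$ (this uses upper semicontinuity of $\partial^\circ g_i$ together with the strict inequality in MFCQ, which is an open condition, so $\langle\boldsymbol h,\boldsymbol a\rangle>c>0$ for every $\boldsymbol h\in\partial^\circ g_i(y)$ with $y$ near $x$), we obtain
\begin{equation*}
c\sum_{i\ \text{active}}\lambda^{(k)}_i \le \varepsilon_k\|\boldsymbol a\| + B\|\boldsymbol a\| + o(1),
\end{equation*}
which bounds $\sum_i \lambda^{(k)}_i$ uniformly in $k$.

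With the multipliers bounded, pass to a further subsequence so that $\lambda^{(k)}_i\to\lambda^\infty_i\ge 0$ for every $i$. Taking $k\to\infty$ in condition~1 gives $\mathbf k^\infty+\sum_i\lambda^\infty_i\boldsymbol h^\infty_i=0$ with $\mathbf k^\infty\in\partial^\circ f(x)$ and $\boldsymbol h^\infty_i\in\partial^\circ g_i(x)$, which is the stationarity part of the KKT condition. For complementary slackness: inactive indices have $\lambda^\infty_i=0$ (shown above), so $\lambda^\infty_i g_i(x)=0$; active indices have $g_i(x)=0$, so again $\lambda^\infty_i g_i(x)=0$. Feasibility $g_i(x)\le 0$ is immediate from $g_i(x_k)\le 0$ and continuity. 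Hence $x$ is a KKT point of $(Q)$. I would remark that all subsequence extractions are harmless since we only need to exhibit one valid choice of limiting multipliers and subgradients, and note that this argument is essentially the one in \cite{lyu2019gradient}; the only thing worth spelling out carefully is the use of the openness of the MFCQ strict inequality to get the uniform lower bound $c$ on $\langle\boldsymbol h^{(k)}_i,\boldsymbol a\rangle$.
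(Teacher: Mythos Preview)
The paper does not supply its own proof of this lemma; it is stated in Appendix~\ref{sec:appen_KKT} as a known fact (it is essentially Lemma~C.7 of \cite{lyu2019gradient}), and the surrounding text only verifies that MFCQ holds in the specific application. Your proposal is the standard argument and is correct: local Lipschitzness gives local boundedness and the closed-graph property of the Clarke subdifferentials; condition~2 kills the inactive multipliers; MFCQ combined with the compactness of $\partial^\circ g_i(x)$ and upper semicontinuity yields the uniform lower bound $c>0$ on $\langle\boldsymbol h^{(k)}_i,\boldsymbol a\rangle$ for active $i$, which bounds the active multipliers via the inner product of condition~1 with $\boldsymbol a$; and then a subsequence argument passes everything to the limit. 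There is nothing to add or correct.
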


\subsection{How is the Continuous Form of Conditioner Formulated?}
\label{sec:appen_explanation_conditoner}
In this subsection, we show how conditioners of the continuous case for AdaGrad, RMSProp, Adam (w/m) are derived. Both discrete updates of these optimizers can be written as 
\begin{gather}
\label{eq:discrte_1}
    \boldsymbol{w}(t+1)-\boldsymbol{w}(t)\in -\eta\frac{1}{\sqrt{\varepsilon+\psi(\boldsymbol{m}(t),t)}} \odot \partial^s 
    \mathcal{L}(\boldsymbol{w}(t)),
    \\
    \label{eq:discrete_2}
    \boldsymbol{m}(t+1)-\boldsymbol{m}(t)=\phi(\boldsymbol{m}(t),\partial^s  \mathcal{L}(\boldsymbol{w}(t)) ),
    \\
    \nonumber
    \boldsymbol{m}(0)=\boldsymbol{0}.
\end{gather}
where $\partial^s \mathcal{L}(\boldsymbol{w}(t))\in \partial \mathcal{L}(\boldsymbol{w}(t)) $. For AdaGrad, $\phi(\boldsymbol{m}(t),\partial^s  \mathcal{L}(\boldsymbol{w}(t)) )=\partial^s  \mathcal{L}(\boldsymbol{w}(t))^2$, $\psi(\boldsymbol{m}(t),t)=\boldsymbol{m}(t)$; for RMSProp, $\phi(\boldsymbol{m}(t),\partial^s  \mathcal{L}(\boldsymbol{w}(t)) )=(1-b)(\partial^s\mathcal{L}(\boldsymbol{w}(t))^2-\boldsymbol{m}(t))$, $\psi(\boldsymbol{m}(t),t)=\boldsymbol{m}(t)$; for Adam (w/m), $\phi(\boldsymbol{m}(t),\partial^s  \mathcal{L}(\boldsymbol{w}(t)) )=(1-b)(\partial^s \mathcal{L}(\boldsymbol{w}(t))^2-\boldsymbol{m}(t))$, $\psi(\boldsymbol{m}(t),t)=\frac{\boldsymbol{m}(t)}{1-b^t}$.

One can easily observe that eqs. (\ref{eq:discrte_1}) and (\ref{eq:discrete_2}) is a discretization of the following equations:
\begin{gather}
\label{eq:continuous_1}
    \frac{\mathrm{d} \boldsymbol{w}(t)}{\mathrm{d} t}=-\frac{1}{\sqrt{\varepsilon+\psi(\boldsymbol{m}(t),t)}} \odot \partial^s 
    \mathcal{L}(\boldsymbol{w}(t)),
    \\
    \label{eq:continuous_2}
   \frac{\mathrm{d} \boldsymbol{m}(t)}{\mathrm{d} t}=\phi(\boldsymbol{m}(t),\partial^s  \mathcal{L}(\boldsymbol{w}(t)) ),
   \\
   \nonumber
   \boldsymbol{m}(0)=\boldsymbol{0}.
\end{gather}

As for AdaGrad, 
\begin{equation*}
    \frac{\mathrm{d} \boldsymbol{m}(t)}{\mathrm{d} t}=\partial^s\mathcal{L}(\boldsymbol{w}(t))^2,
\end{equation*}
which leads to 
\begin{equation*}
    \boldsymbol{m}(t)=\int_{0}^t \partial^s\mathcal{L}(\boldsymbol{w}(\tau))^2 \mathrm{d} \tau,
\end{equation*}
and 
\begin{equation*}
      \frac{\mathrm{d} \boldsymbol{w}(t)}{\mathrm{d} t}=-\frac{1}{\sqrt{\varepsilon+\int_{0}^t \partial^s\mathcal{L}(\boldsymbol{w}(\tau))^2 \mathrm{d} \tau}} \odot \partial^s
    \mathcal{L}(\boldsymbol{w}(t))
\end{equation*}
As for RMSProp and Adam
\begin{equation*}
     \frac{\mathrm{d}\boldsymbol{m}(t)}{\mathrm{d}t}=(1-b)(\partial^s\mathcal{L}(\boldsymbol{w}(t))^2- \boldsymbol{m}(t)).
\end{equation*}

By solving the above differential equation, we have 
\begin{equation*}
    \frac{\mathrm{d}e^{(1-b)t}\boldsymbol{m}(t)}{\mathrm{d}t}=e^{(1-b)t}(1-b)\partial^s\mathcal{L}(\boldsymbol{w}(t))^2,
\end{equation*}
which by integration implies 
\begin{equation*}
    \boldsymbol{m}(t)=\int_{0}^t e^{-(1-b)(t-\tau)}(1-b)\partial^s\mathcal{L}(\boldsymbol{w}(\tau))^2\mathrm{d} \tau.
\end{equation*}
 
 Therefore, for RMSProp, the continuous flow is 
\begin{equation*}
      \frac{\mathrm{d} \boldsymbol{w}(t)}{\mathrm{d} t}=-\frac{1}{\sqrt{\varepsilon+\int_{0}^t e^{-(1-b)(t-\tau)}(1-b)\partial^s\mathcal{L}(\boldsymbol{w}(\tau))^2\mathrm{d} \tau}} \odot \partial^s
    \mathcal{L}(\boldsymbol{w}(t));
\end{equation*}
while for Adam  (w/m), the continuous flow is 
\begin{equation*}
      \frac{\mathrm{d} \boldsymbol{w}(t)}{\mathrm{d} t}=-\frac{1}{\sqrt{\varepsilon+\frac{\int_{0}^t e^{-(1-b)(t-\tau)}(1-b)\partial^s\mathcal{L}(\boldsymbol{w}(\tau))^2\mathrm{d} \tau}{1-b^t}}} \odot \partial^s
    \mathcal{L}(\boldsymbol{w}(t)).
\end{equation*}

\subsection{o-Minimal Structure and Definable functions}
\label{sec:appen_o_minimal}
Here we define o-Minimal structure and definable functions which we omit in Theorem \ref{thm:approximate_flow_definable}. 

\begin{definition}[Appendix B, \citet{ji2020directional}]
 An o-minimal structure is a collection 
$\mathcal{S}=\{\mathcal{S}_n\}_{n=1}^{\infty}$, where each $\mathcal{S}_n$ is a set of subsets of $\mathbb{R}_n$ satisfying the
following conditions:
1. $\mathcal{S}_1$ is the collection of all finite unions of open intervals and points;\\
2. $\mathcal{S}_n$ includes the zero sets of all polynomials on $\mathbb{R}_n$;\\
3. $\mathcal{S}_n$ is closed under finite union, finite intersection, and complement;\\
4. $\mathcal{S}$ is closed under Cartesian products: if $A\in \mathcal{S}_m$  and $B \in \mathcal{S}_n$, then $A \times B \in \mathcal{S}_{m+n}$;\\
5. $\mathcal{S}$ is closed under projection $\Pi_n$ onto the first $n$ coordinates: if $A \in \mathcal{S}_{n+1}$, then $\Pi_n(A) \in \mathcal{S}_n$.
\end{definition}

A definable function on above o-Minimal Structure can be defined as follows:

\begin{definition}[Appendix B, \citet{ji2020directional}]
 A function $f:D\rightarrow\mathbb{R}^m$ with $D\subset \mathbb{R}^n$ is definable if the graph of $f$ is in $\mathcal{S}_{n+m}$.
\end{definition}

A natural question is: which function is definable? The next Lemma helps to solve this question.

\begin{lemma}[Lemma B.2, \citet{ji2020directional}]
\quad
\begin{itemize}
    \item All polynomials are definable, therefore, linear or other polynomial activation is definable;
    \item If both $f(x)$ and $g(x)$ are definable, $\min{f(x),g(x)}$ and $\max{f(x),g(x)}$ are definable, therefore, ReLU activation is definable;
    \item If $f_i(x):D\rightarrow\mathbb{R}$ ($i=1,2,\cdots,n$) is definable, then $\boldsymbol{f}(x)=(f_1(x),f_2(x),\cdots,f_n(x))$ is definable.
    \item Suppose there exists $k,d_0,d_1,\cdots,d_L>0$, and $L$ definable functions $(g_1,g_2,\cdots,g_L)$, where $g_j:\rightarrow\mathbb{R}^{d_{0}} \times \cdots \times \mathbb{R}^{d_{j-1}} \times \mathbb{R}^{k} \rightarrow \mathbb{R}^{d_{j}}$. Let $h_1(x,W)\overset{\triangle}{=}g_1(x,W)$, and for $2\le j\le L$,
    \begin{equation*}
        h_{j}(x, W):=g_{j}\left(x, h_{1}(x, W), \ldots, h_{j-1}(x, W), W\right),
    \end{equation*} then all $h_j$ are definable. Therefore, neural networks with polynomial and ReLU activation, convolutional and max-pooling layers, and skip connections are definable.
\end{itemize}
\end{lemma}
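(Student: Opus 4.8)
The plan is to verify each listed closure property directly from the five axioms of the o-minimal structure $\mathcal{S}=\{\mathcal{S}_n\}$, pushing everything down to the facts that $\mathcal{S}_n$ contains all polynomial zero sets and is stable under finite unions, intersections, complements, Cartesian products, and projection $\Pi_n:\mathbb{R}^{n+1}\to\mathbb{R}^n$ deleting the last coordinate. One preliminary fact I would invoke is the usual consequence of these axioms (together with the diagonals $\{x_i=x_j\}$, which are polynomial zero sets) that definable sets are closed under arbitrary permutations of coordinates and under projection onto \emph{any} subset of coordinates, so that one may freely insert dummy $\mathbb{R}$-factors and reorder coordinates; I would also record that the order regions are definable, since $\{(y,z):y<z\}=\Pi_2\big(\{(y,z,w):z-y-w^{2}=0\}\cap\{w\neq0\}\big)\in\mathcal{S}_2$ and hence $\{(y,z):y\le z\}\in\mathcal{S}_2$ after adjoining the diagonal.

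Given these tools the individual claims are short. \textbf{Polynomials:} the graph of $p:\mathbb{R}^n\to\mathbb{R}$ is $\{(x,y):y-p(x)=0\}$, a polynomial zero set in $\mathbb{R}^{n+1}$, so every polynomial — in particular the identity and affine/polynomial activations — is definable. \textbf{Tupling:} if each $f_i:D\to\mathbb{R}$ is definable with graph $\Gamma_i$, then the graph of $(f_1,\dots,f_n)$ equals $\bigcap_{i=1}^{n}\sigma_i\big(\Gamma_i\times\mathbb{R}^{n-1}\big)$, where $\sigma_i$ is the coordinate permutation moving the output slot of $f_i$ into place; definability follows from the preliminary fact and closure under intersection. \textbf{Min and max:} for definable $f,g:D\to\mathbb{R}$ the graph of $\max(f,g)$ is $\big(\Gamma_f\cap\{(x,y):y\ge g(x)\}\big)\cup\big(\Gamma_g\cap\{(x,y):y>f(x)\}\big)$, and $\{(x,y):y\ge g(x)\}$ is definable because it is the last-coordinate projection of $\big(\Gamma_g\times\mathbb{R}\big)\cap\{(x,y,z):y\ge z\}$ once the auxiliary slot $z=g(x)$ is introduced and reordered; $\min$ is symmetric. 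Since $0$ and $x$ are polynomials, $\mathrm{ReLU}(x)=\max(x,0)$ is definable.

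\textbf{Composition and networks:} I would induct on $j$. The base case $h_1=g_1$ is the hypothesis, and for the step the graph of $h_j$ is
\begin{equation*}
\Pi\Big(\big\{(x,W,z,u_1,\dots,u_{j-1})\;:\;u_i=h_i(x,W)\ \forall i,\ z=g_j(x,u_1,\dots,u_{j-1},W)\big\}\Big),
\end{equation*}
where $\Pi$ deletes the slots $u_1,\dots,u_{j-1}$. The set before the projection is the intersection of the graphs of $h_1,\dots,h_{j-1}$ (definable by the inductive hypothesis) with the graph of $g_j$ (definable by assumption), each padded by dummy $\mathbb{R}$-factors and reordered; hence it is definable, and so is its projection. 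Thus every $h_j$ is definable. A network assembled from affine maps, polynomial or $\mathrm{ReLU}$ activations, convolutions (linear), max-pooling (iterated $\max$), and skip connections (additions) is exactly such a composition $h_L$ whose layer maps $g_j$ are built from polynomials and $\max$; by the items already proved each $g_j$ is definable, and the composition lemma finishes the argument.

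The obstacle I anticipate is not conceptual but bookkeeping: axiom~5 only grants deletion of the last coordinate, so the honest work is the preliminary step showing that, together with two-sided Cartesian products and the diagonals, this suffices to simulate arbitrary permutations and projections; once that is in hand the remaining items are short, coordinate-heavy manipulations, with the composition case the most delicate only because the recursion must be rewritten as a single projected intersection whose coordinate layout is chosen so that the variables to be eliminated sit at the end.
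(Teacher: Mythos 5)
The paper does not prove this lemma at all: it is imported verbatim as Lemma B.2 of \cite{ji2020directional}, so there is no in-paper argument to compare against. Your proposal supplies a correct, self-contained derivation from the five structure axioms, and it is essentially the standard argument (the one used in \cite{ji2020directional} and in van den Dries's treatment of structures): graphs of polynomials are polynomial zero sets; tupling is an intersection of padded graphs; $\max$ and $\min$ come from splitting the graph along the definable order relation; and composition is handled by writing the graph of $h_j$ as a projection of an intersection of padded graphs and inducting on $j$. Your encoding of the order relation via $z-y=w^{2}$, $w\neq 0$, and your case decomposition $\Gamma_{\max(f,g)}=(\Gamma_f\cap\{y\ge g(x)\})\cup(\Gamma_g\cap\{y>f(x)\})$ both check out. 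You correctly identify the only real work as the preliminary reduction showing that last-coordinate projection, two-sided products with $\mathbb{R}^m$ (note $\mathbb{R}^m\in\mathcal{S}_m$ as the zero set of the zero polynomial), and the diagonals $\{x_i=x_j\}$ together yield closure under arbitrary coordinate permutations and projections; the standard trick is to intersect $\mathbb{R}^n\times A$ with the diagonals encoding the permutation and then project onto the \emph{first} $n$ coordinates, which avoids the circularity you worry about. It is also worth noting that your argument never uses axiom~1 (the o-minimality condition on $\mathcal{S}_1$); these are purely closure properties of structures, which is consistent with how the cited source presents them.
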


An important property for definable function is Kurdyka-Lojasiewicz inequality, which can bound gradient of definable function in a small region. Here we present two Kurdyka-Lojasiewicz inequalities  given by \cite{ji2020directional}:
\begin{lemma}[Lemma 3.6, \citet{ji2020directional}]
\label{lem:construction_of_psi_conditional}
Given a locally Lipschitz definable function $f$ with an open domain $D\in\{x|\Vert x\Vert>1\}$, for any $c$, $\eta>0$, there exists $a>0$ and a definable desingularizing function $\Psi$ on $[0,a)$ (that is, $\Psi(x)\in C^1((0,a))\cap C^0([0,a))$ with $\Psi(0)=0$), such that,
\begin{equation*}
    \Psi^{\prime}(f(x))\|x\|\|\bar{\partial} f(x)\| \geq 1, \text{ if } f(x)\in(0,a), \text{ and } \left\|\bar{\partial}_{\perp} f(x)\right\| \geq c\|x\|^{\eta}\left\|\bar{\partial}_{ \backslash\backslash} f(x)\right\|,
\end{equation*}
where $\bar{\partial} f(x)$ is the unique one with the smallest norm in $\partial f(x)$,  $\bar{\partial}_{\backslash\backslash} f(x)$ is the projection of $\bar{\partial} f(x)$ to $x$ and $\bar{\partial}_{ \perp} f(x)=\bar{\partial} f(x)-\bar{\partial}_{ \backslash\backslash} f(x)$ is the remaining term 

\end{lemma}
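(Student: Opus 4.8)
The plan is to reduce the statement to the classical uniform Kurdyka–{\L}ojasiewicz (KL) inequality for definable functions on a \emph{bounded} open domain, by introducing a polar‑type change of coordinates whose radial variable is tuned to the exponent $\eta$. Throughout I read $\bar\partial_{\backslash\backslash} f(x)$ as the component of $\bar\partial f(x)$ parallel to $x$ (the radial part) and $\bar\partial_\perp f(x)=\bar\partial f(x)-\bar\partial_{\backslash\backslash} f(x)$ as the spherical part, so that $\|\bar\partial f\|^2=\|\bar\partial_\perp f\|^2+\|\bar\partial_{\backslash\backslash} f\|^2$.

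\textbf{Step 1 (coordinate change).} On the bounded open set $U=\{(y,s)\in\mathbb R^p\times\mathbb R:\ \tfrac12<\|y\|<2,\ 0<s<1\}$ define $\bar G(y,s)=f\!\left(\dfrac{y}{\|y\|\,s^{1/\eta}}\right)$. Since the ambient o‑minimal structure contains $\exp$ (Assumption \ref{assum: continuous_definable}), it contains $\log$ on $(0,\infty)$ and hence the real power $s\mapsto s^{1/\eta}$; together with $D\subseteq\{\|x\|>1\}$ this makes $(y,s)\mapsto y/(\|y\|s^{1/\eta})$ a definable map of $U$ into $D$, so $\bar G$ is locally Lipschitz and definable on $U$. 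Writing $x=y/(\|y\|s^{1/\eta})$ (so $\|x\|=s^{-1/\eta}\in(1,\infty)$), the function $\bar G$ is invariant under $y\mapsto ty$, hence $\nabla_y\bar G$ is spherical, and the chain rule for locally Lipschitz definable functions (Lemma \ref{lem:chain_rule}) gives, for a.e.\ $(y,s)$,
\[
\|\nabla_y\bar G(y,s)\|=\|x\|\,\|\bar\partial_\perp f(x)\|,\qquad |\partial_s\bar G(y,s)|=\tfrac1\eta\,\|x\|^{1+\eta}\,\|\bar\partial_{\backslash\backslash} f(x)\| .
\]

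\textbf{Step 2 (the dominance condition becomes a constant ratio).} Suppose $x\in D$ satisfies $\|\bar\partial_\perp f(x)\|\ge c\|x\|^{\eta}\|\bar\partial_{\backslash\backslash} f(x)\|$. Then Step 1 yields $|\partial_s\bar G|\le\tfrac1{c\eta}\|\nabla_y\bar G\|$, whence
\[
\|\nabla\bar G\|^2=\|\nabla_y\bar G\|^2+|\partial_s\bar G|^2\le\Big(1+\tfrac1{c^2\eta^2}\Big)\|\nabla_y\bar G\|^2\le\Big(1+\tfrac1{c^2\eta^2}\Big)\|x\|^2\,\|\bar\partial f(x)\|^2 .
\]
This is the point of the tuned radial variable $s=\|x\|^{-\eta}$: it turns the $\|x\|^\eta$‑weighted dominance condition into an $\eta$‑independent, constant ratio between the two blocks of the gradient, so the plain KL inequality becomes applicable.

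\textbf{Step 3 (apply uniform KL and conclude).} Apply the classical uniform (Kurdyka) KL inequality for locally Lipschitz definable functions \cite{kurdyka1998gradients} to $\bar G$ on the definable open set $\{(y,s)\in U:\bar G(y,s)>0\}$: there exist $a>0$ and a definable desingularizing function $\tilde\Psi$ on $[0,a)$ (i.e.\ $\tilde\Psi\in C^1((0,a))\cap C^0([0,a))$, $\tilde\Psi(0)=0$, $\tilde\Psi'>0$) with $\tilde\Psi'(\bar G(y,s))\,\|\nabla\bar G(y,s)\|\ge1$ whenever $\bar G(y,s)\in(0,a)$. Now take any $x\in D$ with $f(x)\in(0,a)$ and $\|\bar\partial_\perp f(x)\|\ge c\|x\|^{\eta}\|\bar\partial_{\backslash\backslash} f(x)\|$, and pick $(y,s)=(x/\|x\|,\|x\|^{-\eta})\in U$, so that $\bar G(y,s)=f(x)$; combining with Step 2,
\[
1\le\tilde\Psi'(f(x))\,\|\nabla\bar G\|\le\sqrt{1+\tfrac1{c^2\eta^2}}\;\tilde\Psi'(f(x))\,\|x\|\,\|\bar\partial f(x)\| .
\]
Setting $\Psi:=\sqrt{1+1/(c^2\eta^2)}\,\tilde\Psi$ — still $C^1((0,a))\cap C^0([0,a))$ with $\Psi(0)=0$, $\Psi'>0$, hence desingularizing — we obtain $\Psi'(f(x))\|x\|\|\bar\partial f(x)\|\ge1$ on the claimed region, proving the lemma.

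\textbf{Main obstacle.} The delicate parts are (i) the nonsmooth chain rule of Step 1: one must verify that composing the locally Lipschitz definable $f$ with the smooth definable coordinate map produces exactly the asserted splitting of $\nabla\bar G$ — in particular that the radial‑in‑$y$ component is identically zero (this is forced by the literal scale invariance of $\bar G$ in $y$) and that Clarke subdifferentials transform correctly under the diffeomorphism; and (ii) invoking a correctly stated \emph{uniform} Kurdyka‑type KL inequality that is valid for locally Lipschitz — not merely $C^1$ — definable functions on a bounded open set, producing a single desingularizing $\Psi$ near the infimum of values. The boundedness of $U$, which is exactly what the hypothesis $D\subseteq\{\|x\|>1\}$ provides (it confines $s=\|x\|^{-\eta}$ to $(0,1)$), is what makes the uniform version usable; everything else is the routine bookkeeping carried out above.
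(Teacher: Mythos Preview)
The paper does not prove this lemma: it is quoted verbatim as Lemma~3.6 of \cite{ji2020directional} and used as a black box in Appendix~\ref{sec:proof_definable}, so there is no in-paper argument to compare against. Your proposal is in fact essentially the proof given in the cited source --- a polar change of variables that compactifies the radial direction via $s=\|x\|^{-\eta}$, so that the weighted dominance condition $\|\bar\partial_\perp f\|\ge c\|x\|^{\eta}\|\bar\partial_{\backslash\backslash} f\|$ becomes a uniform (constant) ratio between the spherical and radial blocks of $\nabla\bar G$, after which the standard bounded-domain Kurdyka inequality applies.

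Two small remarks. First, the Step~1 identity $\|\nabla_y\bar G\|=\|x\|\,\|\bar\partial_\perp f(x)\|$ is off by a factor $\|y\|^{-1}$; since you work on the annulus $\tfrac12<\|y\|<2$ this only changes the constants in Step~2, and in Step~3 you evaluate at $\|y\|=1$ anyway, so nothing breaks. Second, the two ``obstacles'' you flag are real but already handled in the literature you are implicitly relying on: the Clarke chain rule under a smooth definable change of coordinates is standard, and the uniform KL inequality for locally Lipschitz (not just $C^1$) definable functions on a bounded definable open set is precisely the Bolte--Daniilidis--Lewis--Shiota extension of Kurdyka's theorem that \cite{ji2020directional} invokes. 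With those ingredients granted, your argument is sound.
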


\begin{lemma}[Lemma 3.7, \citet{ji2020directional}]
\label{lem:construction_of_psi_no_condition}
Given a locally Lipschitz definable function $f$ with an open domain $D\subset \{x|\Vert x\Vert>1\}$, for any $\lambda>0$, there exists $a>0$, and a definable desingularizing function $\Psi$ on $[0,a)$ such that 
\begin{equation*}
    \max \left\{1, \frac{2}{\lambda}\right\} \Psi^{\prime}(f(x))\|x\|^{1+\lambda}\|\bar{\partial} f(x)\| \geq 1, \text{ if } f(x)\in(0,a).
\end{equation*}
\end{lemma}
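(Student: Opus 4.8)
The statement is a Kurdyka--{\L}ojasiewicz--type desingularization on an \emph{unbounded} domain, and the plan is to obtain it by bootstrapping its conditional cousin, Lemma~\ref{lem:construction_of_psi_conditional}, together with the two standard structural facts that o-minimality supplies for definable functions: the Monotonicity Theorem and definable choice. The only structural gift in the hypotheses is $D\subset\{\|x\|>1\}$, so a factor $\|x\|^{\lambda}>1$ may be inserted at will; the whole argument is organized around deciding where to spend it. Concretely, I would split the set $\{x\in D: f(x)\in(0,a)\}$ (for $a$ small, to be fixed at the end) into a \emph{tangential} part $T$, where $\|\bar{\partial}_{\perp}f(x)\|\ge\|x\|\,\|\bar{\partial}_{\backslash\backslash}f(x)\|$, and its complementary \emph{radial} part $R$.

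On $T$, Lemma~\ref{lem:construction_of_psi_conditional} (applied with $c=\eta=1$) already gives a definable desingularizing $\Psi_1$ on some $[0,a_1)$ with $\Psi_1'(f(x))\,\|x\|\,\|\bar{\partial}f(x)\|\ge 1$; multiplying by $\|x\|^{\lambda}>1$ upgrades this for free to $\Psi_1'(f(x))\,\|x\|^{1+\lambda}\,\|\bar{\partial}f(x)\|\ge 1$, so the tangential part is finished by a one-line trade. For the radial part, $\|\bar{\partial}f(x)\|\ge|\partial_r f(x)|$ with $\partial_r f(x):=\langle\bar{\partial}f(x),x/\|x\|\rangle$, so it suffices to control the one-variable \emph{rate function}
\[
\nu(t)\;:=\;\inf\bigl\{\,\|x\|^{1+\lambda}\,|\partial_r f(x)|:x\in R,\ f(x)=t\,\bigr\},
\]
which is definable (the map $x\mapsto\partial_r f(x)$ agrees, off a definable null set, with $\langle\nabla f(x),x/\|x\|\rangle$). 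By the Monotonicity Theorem we may shrink to $(0,a_2)$ on which $\nu$ is continuous and monotone. The reduction is then: once $\int_0^{a_2}\mathrm{d}s/\nu(s)<\infty$ is established, the usual o-minimal desingularization construction (the same one behind Lemma~\ref{lem:construction_of_psi_conditional}) yields a definable desingularizing $\Psi_2$ on $[0,a_2)$ with $\max\{1,2/\lambda\}\,\Psi_2'(t)\,\nu(t)\ge 1$; and combining the two bounds gives the lemma on $[0,a)$ with $a:=\min(a_1,a_2)$ via $\Psi:=\Psi_1+\Psi_2$ (a sum of definable desingularizing functions is one, $\Psi'\ge\Psi_i'$, and every $x$ with $f(x)\in(0,a)$ lies in $T$ or in $R$).

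The heart of the proof is therefore the estimate $\nu(t)\gtrsim t$ near $0$. By definable choice, pick a definable curve $\gamma:(0,a_2)\to R$ with $f(\gamma(t))\equiv t$ and $\|\gamma(t)\|^{1+\lambda}|\partial_r f(\gamma(t))|\le 2\nu(t)$. Differentiating $f(\gamma(t))\equiv t$ gives $\langle\bar{\partial}f(\gamma(t)),\gamma'(t)\rangle=1$ for a.e.\ $t$, hence $1\le\|\bar{\partial}f(\gamma(t))\|\,\|\gamma'(t)\|$; since on $R$ the radial component dominates, $\|\bar{\partial}f(\gamma(t))\|\le\sqrt2\,\|\gamma(t)\|\,|\partial_r f(\gamma(t))|$. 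One then uses that $t\mapsto\|\gamma(t)\|$ is a definable one-variable function, hence (o-minimality, structure with $\exp$) asymptotically a monomial times a slowly varying factor: if $\|\gamma(t)\|$ stays bounded on $\{f<a_2\}$ one is in the classical bounded-domain situation and ordinary definable KL finishes it with $\|x\|^{1+\lambda}=\Theta(1)$; if $\|\gamma(t)\|\to\infty$, then antidifferentiating $s\mapsto s^{-1-\lambda}$ along $\gamma$ (the step that genuinely needs $\|\gamma(t)\|>1$, and that produces the factor $1/\lambda$, hence the $\max\{1,2/\lambda\}$) converts $\langle\bar{\partial}f(\gamma(t)),\gamma'(t)\rangle=1$ into $\nu(t)\ge c\,t$ for small $t$, giving the required integrability.

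The main obstacle is exactly this last estimate: it is the one place where $D\subset\{\|x\|>1\}$ and the enlarged exponent $1+\lambda$ are indispensable — over a domain meeting the origin, $s^{-1-\lambda}$ is not integrable at $s=0$ and no desingularizing $\Psi$ can exist — and it is where the non-elementary o-minimal input (Monotonicity Theorem, monomial asymptotics of definable curves) must be used to turn the soft identity $\langle\bar{\partial}f(\gamma),\gamma'\rangle=1$ into a quantitative decay rate with the stated constant. Everything else — the free tangential upgrade, the verification that $\Psi_1+\Psi_2$ is definable, $C^1$ on $(0,a)$, continuous at $0$ with $\Psi(0)=0$, and desingularizing, and the case bookkeeping — is routine within the o-minimal framework.
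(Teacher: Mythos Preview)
The paper does not prove this lemma; it is quoted as Lemma~3.7 of \cite{ji2020directional} and invoked as a black box in Appendix~\ref{sec:proof_definable}, so there is no in-paper proof to compare your proposal against.

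That said, as a sketch of how the cited result is obtained, your plan is sound in outline: the split into a tangential set $T$ (handled by Lemma~\ref{lem:construction_of_psi_conditional} plus the free factor $\|x\|^{\lambda}>1$) and a radial set $R$ (reduced to a one-variable rate $\nu$) is the right decomposition, and the sum $\Psi_1+\Psi_2$ is indeed a definable desingularizing function. The one place where your write-up remains genuinely incomplete is the claim that ``antidifferentiating $s\mapsto s^{-1-\lambda}$ along $\gamma$'' yields $\nu(t)\ge c\,t$: you have correctly identified that the finiteness of $\int_1^{\infty} r^{-1-\lambda}\,dr = 1/\lambda$ is what produces the constant $\max\{1,2/\lambda\}$, but the passage from the identity $\langle\bar\partial f(\gamma(t)),\gamma'(t)\rangle=1$ to a lower bound on $\|\gamma(t)\|^{1+\lambda}\,|\partial_r f(\gamma(t))|$ requires a more careful accounting of the radial and tangential components of $\gamma'$ and of how the o-minimal asymptotics of $r(t)=\|\gamma(t)\|$ interact with the radial derivative --- this is only gestured at here. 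If you intend this as a full proof rather than a roadmap, that is the step that needs to be written out.
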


At the end of this subsection, we show that definability actually guarantees that $\Phi$ admits a chain rule, which is formally stated as following:
\begin{lemma}[Lemma B.9, \citet{ji2020directional}]
\label{lem:chain_rule}
Given a locally Lipschitz definable $f:D\rightarrow\mathbb{R}$ with an open domain $D$, for any interval $I$ and any arc $z:I\rightarrow D$, it holds for a.e. $t\in I$ that
\begin{equation*}
    \frac{\mathrm{d} f\left(z_{t}\right)}{\mathrm{d} t}=\left\langle z_{t}^{*}, \frac{\mathrm{d} z_{t}}{\mathrm{d} t}\right\rangle, \quad \text { for all } z_{t}^{*} \in \partial f\left(z_{t}\right).
\end{equation*}
\end{lemma}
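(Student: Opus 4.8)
The plan is to reduce the a.e.\ chain rule along an arc to a pointwise \emph{projection formula} for the Clarke subdifferential on the strata of a suitable $C^1$ stratification of $f$, and then to establish that projection formula. Throughout, an arc $z:I\to D$ is a locally Lipschitz curve, so $z$ is differentiable a.e.; moreover $f\circ z$ is locally Lipschitz (composition of locally Lipschitz maps, using that $z$ maps compact subintervals into compact subsets of the open set $D$), hence absolutely continuous on compact subintervals and differentiable at a.e.\ $t\in I$. The target identity is automatic on a null set of bad $t$, so it suffices to prove it at a.e.\ $t$ where $z'(t)$, $(f\circ z)'(t)$ exist and some further density condition holds.

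First I would invoke the $C^1$ cell-decomposition and Whitney stratification theorems for o-minimal structures: since $f$ is definable on the open definable set $D$, there is a finite partition $D=\bigsqcup_j M_j$ into connected $C^1$ definable submanifolds (strata) such that (i) $f|_{M_j}\in C^1(M_j)$ for all $j$; (ii) $\{M_j\}$ is a Whitney stratification, in particular Whitney's condition (a) holds between strata and the frontier condition holds; and (iii) we may refine so that the (definable, null) set where $f$ is not differentiable is a union of strata of dimension $<\dim D$, hence every full-dimensional stratum lies where $f$ is differentiable and there $\nabla f$ is continuous. Writing $I_j=z^{-1}(M_j)$ (finitely many measurable sets partitioning $I$), it is enough to prove the identity for a.e.\ $t\in I_j$, each $j$. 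Fix $j$ and take $t\in I_j$ that is simultaneously a Lebesgue density point of $I_j$, a differentiability point of $z$, and a differentiability point of $f\circ z$; a.e.\ $t\in I_j$ qualifies. Put $x=z(t)$, $d=z'(t)$. Working in a chart near $x$ in which $M_j$ is the graph $\{(\xi,\phi(\xi))\}$ of a $C^1$ map $\phi$, the difference quotients of $z$ and of $f\circ z$ restricted to $s\in I_j$ (available because $t$ is a density point), together with the differentiability of $\phi$ and of $\psi:=f(\,\cdot\,,\phi(\cdot))$, show that $d\in T_xM_j$ and that $(f\circ z)'(t)=\langle\nabla(f|_{M_j})(x),d\rangle$. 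These two ``restricted chain rule'' facts are routine real analysis; I would spell them out, but they involve no o-minimality.

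It therefore remains to prove the \emph{projection formula}: for any stratum $M$, any $x\in M$, and any $v\in\partial f(x)$, one has $\langle v,d\rangle=\langle\nabla(f|_M)(x),d\rangle$ for every $d\in T_xM$, i.e.\ $\partial f(x)\subseteq\nabla(f|_M)(x)+(T_xM)^\perp$. Since the Clarke subdifferential is the closed convex hull of the reachable gradients $v=\lim_k\nabla f(y_k)$ ($y_k\to x$, $f$ differentiable at $y_k$), and the claimed set is affine, it suffices to treat a single reachable gradient; passing to a subsequence, all $y_k$ lie in one stratum $N$ with $f|_N\in C^1$, and the frontier condition forces $M\subseteq\overline N$ with either $N=M$ or $\dim N>\dim M$. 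When $N=M$ the claim is immediate: $\mathrm{proj}_{T_{y_k}M}\nabla f(y_k)=\nabla(f|_M)(y_k)$, $\nabla(f|_M)$ is continuous on $M$, and $T_{y_k}M\to T_xM$. When $\dim N>\dim M$, Whitney's condition (a) forces any Grassmannian limit of $T_{y_k}N$ to contain $T_xM$, so $d$ is a limit of vectors $d_k\in T_{y_k}N$; the remaining task is to show $\langle\nabla(f|_N)(y_k),d_k\rangle\to\langle\nabla(f|_M)(x),d\rangle$, i.e.\ that the bounded, definable gradient field $\nabla(f|_N)$ on $N$ has the correct boundary behaviour along $M$. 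This is precisely the projection formula of Bolte--Daniilidis--Lewis--Shiota for stratifiable functions, and I would prove it with the o-minimal toolkit: definability of $\nabla(f|_N)$, the curve-selection lemma to choose a definable $C^1$ curve in $N$ converging to $x$ along which $\nabla f\to v$, and the monotonicity/finiteness theorems to control $f$ and its derivative along that curve, comparing $f$ on the curve with $f|_M$ at the curve's foot point on $M$ via the Lipschitz bound.

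The step I expect to be the main obstacle is exactly this last one: the projection formula for reachable gradients coming from a strictly higher-dimensional stratum. It is the only place where $C^1$-compatibility of the stratification with $f$ does not suffice and one genuinely needs both Whitney regularity and the finiteness phenomena particular to o-minimal structures. Everything else --- existence of the stratification, the density-point reduction, and the $N=M$ case --- is standard and can be assembled from the o-minimal structure theory and the Kurdyka-Lojasiewicz machinery already imported from \cite{ji2020directional}.
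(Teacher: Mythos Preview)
The paper does not give its own proof of this lemma: it is simply cited verbatim as Lemma~B.9 of \cite{ji2020directional}, with no argument supplied. So there is nothing in the present paper to compare your proposal against.

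That said, your outline is the standard route by which this result is actually proved in the literature (Bolte--Daniilidis--Lewis--Shiota, later adapted by Davis--Drusvyatskiy--Kakade--Lee and by \cite{ji2020directional}): (i) take a $C^1$ Whitney stratification of $D$ compatible with $f$; (ii) use Lebesgue density points to reduce to the restricted chain rule on a single stratum $M$, yielding $z'(t)\in T_xM$ and $(f\circ z)'(t)=\langle\nabla(f|_M)(x),z'(t)\rangle$; (iii) establish the projection formula $\partial f(x)\subseteq\nabla(f|_M)(x)+(T_xM)^\perp$. You correctly flag step (iii) for reachable gradients from a strictly higher stratum as the crux, and the tools you name (Whitney~(a), curve selection, monotonicity) are exactly those used in the BDLS argument. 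One small point worth tightening: when you pass to a subsequence with all $y_k$ in one stratum $N$, you should note that the Clarke subdifferential of a locally Lipschitz function can be computed using limits of $\nabla f(y_k)$ with $y_k$ drawn from any fixed full-measure subset of differentiability points; this lets you assume $N$ is full-dimensional, so that $\nabla f(y_k)=\nabla(f|_N)(y_k)$ and the intrinsic and ambient gradients agree. With that clarification your sketch is sound and matches the argument behind the cited lemma.
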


Therefore, if we are deal with definable neural networks $\Phi$ as in Theorem \ref{thm:approximate_flow_definable}, we no longer need to assume $\Phi$ admits a chain rule which is already guaranteed by lemma \ref{lem:chain_rule}.

\subsection{Discussion of the influence of initialization on the solution of $P^{A}$}
\label{appen: h_infty}
For AdaGrad, $\boldsymbol{h}^{-2}_{\infty}$ is defined as $\varepsilon \mathbf{1}_p+\sum_{t=0}^{\infty} \nabla \mathcal{L}(\boldsymbol{w}(t))^2$, which is the sum of squared gradients along the trajectory. Intuitively, as the initialization changes, the trajectory changes respectively, and so does the direction of $\boldsymbol{h}_{\infty}$. This intuition can be further verified by Experiment in Section 6.2, where we plot the direction of $\boldsymbol{h}^{-\frac{1}{2}}_{\infty}$ as the initialization changes.
\begin{figure}
\centering
\includegraphics[scale=0.3]{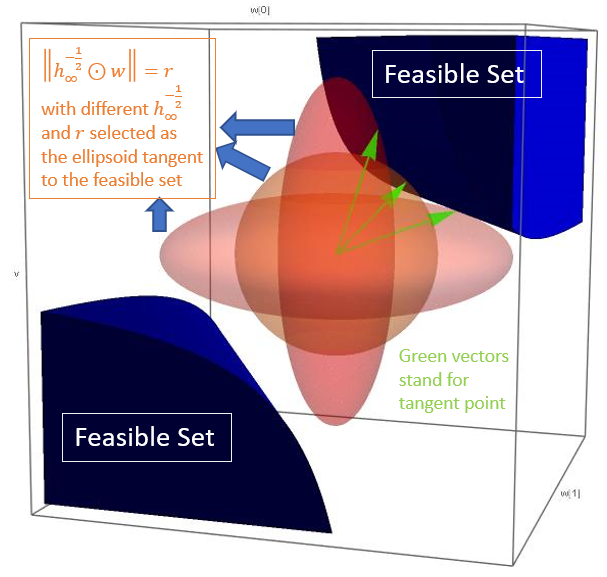}
\caption{How $\boldsymbol{h}_{\infty}$ influence convergence direction of $\boldsymbol{w}$}
\label{fig:figure}
\end{figure}
Furthermore, how $\boldsymbol{h}_{\infty}$ influence the max-margin problem can be interpreted as follows: optimizing $\Vert \boldsymbol{h}_{\infty}^{-\frac{1}{2}} \odot \boldsymbol{w}\Vert^2$ with constraints is equivalent to find the radius $r$ of ellipsoid $\Vert \boldsymbol{h}_{\infty}^{-\frac{1}{2}} \odot \boldsymbol{w}\Vert^2=r^2$ when the   ellipsoid is tangent to the feasible set. This intuition is visualized in Figure \ref{fig:figure}. One can easily observe that as the direction of $\boldsymbol{h}_{\infty}$ changes, the direction of the tangent point changes.
\subsection{Basic knowledge from Measure Theory}
\label{sec;appen_measure}
In this section, we present basic definitions of measurable set, measurable functions and Lebesgue Integrability. These definition involves use of exterior measure and Borel set in Euclidean space, which we omit them here. Readers interested in measure theory can refer to \cite{stein2009real} for details.
\begin{definition}[\citet{stein2009real}, Chapter 1, page 16] 
A subset $E$ of $\mathbb{R}^d$is Lesbesgue measurable, or simply measurable, if for any $\varepsilon>0$, there exists an open set $G$, with $E\subset G$, and
\begin{equation*}
    m_{*}(G/E)\le \varepsilon,
\end{equation*}
where $m_{*}$ is the exterior measure on $\mathbb{R}^d$.

\begin{definition}[\citet{stein2009real}, Chapter 1, page 28] 
A function $f$ on a measurable subset $E$ of $\mathbb{R}^d$ is measurable if for all $a\in \mathbb{R}$, the set 
\begin{equation*}
    f^{-1}([-\infty,a))=\{x\in E: f(x)<a\}
\end{equation*}
is measurable.
\end{definition}

\begin{definition}[\citet{stein2009real}, Chapter 1, page 64]
A measurable function $f$ defined on a measurable subset of $\mathbb{R}^d$ is Lesbesgue integrable if 
\begin{equation*}
    \int_{E}\vert f(x) \vert\mathrm{d} m(x)< \infty.
\end{equation*}
\end{definition}
 
\end{definition}

\section{Proof of Results for Adaptive Algorithms in Continuous Case}
\label{section:appen_continuous}

This section collects proof of Theorem \ref{thm:approximate_flow}, Theorem \ref{lem:adagrad_approximate},  Theorem \ref{lem:rms_approximate}, and also contains proof of Theorem \ref{thm:AdaGrad_flow} and Theorem \ref{thm:RMS_flow}. Organization of this section is as follows: In Subsection \ref{sec:appen_approximate}, we present proof of Theorems \ref{lem:adagrad_approximate} and Theorem \ref{lem:rms_approximate}; in Subsection \ref{sec:appen_proof_approximate}, we present proof of Theorem \ref{thm:approximate_flow} based on the proof skeleton in Section \ref{sec:main_result_proof}; in Subsection \ref{sec:appen_direction_adagrad_rmsp}, we prove  Theorem \ref{thm:AdaGrad_flow} and Theorem \ref{thm:RMS_flow} based on \ref{thm:approximate_flow}, Theorem \ref{lem:adagrad_approximate},  Theorem \ref{lem:rms_approximate}; finally, in Subsection \ref{sec:appen_convergence_rate}, we provide tight convergence rate of loss and parameter norm in adaptive gradient flows.

\subsection{Proof of Theorem  \ref{lem:adagrad_approximate} and Theorem \ref{lem:rms_approximate}: Transition from Continuous Adaptive Algorithms to Adaptive Gradient Flow}
\label{sec:appen_approximate}

\subsubsection{Proof of Theorem \ref{lem:adagrad_approximate}}

The proof of Theorem \ref{lem:adagrad_approximate} is divided into two stages: we first prove convergence of $\boldsymbol{h}^A(t)$ and $\boldsymbol{\beta}^A(t)$; then we show $\frac{\mathrm{d} \boldsymbol{v}^A(t)}{\mathrm{d}t}=-\boldsymbol{\beta}^A(t)\odot \partial^s \tilde{\mathcal{L}}^A(\boldsymbol{v}^A(t))$ satisfies adaptive gradient flow, and is equivalent to AdaGrad flow.

We first show $\int_{0}^{\infty} \partial^s \mathcal{L}(\boldsymbol{w}(t))^2  \mathrm{d} \tau$ is bounded.

\begin{lemma}
\label{lem:adagrad_flow_rate_conver}
For AdaGrad flow defined as eq. (\ref{eq:flow}) with $\boldsymbol{h}=\boldsymbol{h}^A$,  
\begin{equation*}
    \int_{0}^{\infty} (\partial^s \mathcal{L}(\boldsymbol{w}(\tau)))_i^2  \mathrm{d} \tau< \infty, i=1, \cdots, p.
\end{equation*}
\end{lemma}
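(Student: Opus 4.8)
\textbf{Proof Plan for Lemma~\ref{lem:adagrad_flow_rate_conver}.}

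The plan is to exploit the self-referential structure of the AdaGrad conditioner: the quantity we want to bound, $\int_0^\infty (\bar\partial\mathcal{L}(\boldsymbol{w}(t)))_i^2\,\mathrm{d}t$, is exactly the term that appears inside the definition of $\boldsymbol{h}^A(t)^{-2}$. Concretely, for each coordinate $i$ write $a_i(t) = \varepsilon + \int_0^t (\bar\partial\mathcal{L}(\boldsymbol{w}(\tau)))_i^2\,\mathrm{d}\tau$, so that $\boldsymbol{h}^A(t)^{-2}_i = a_i(t)$ and $a_i'(t) = (\bar\partial\mathcal{L}(\boldsymbol{w}(t)))_i^2$. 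The claim is equivalent to $\sup_t a_i(t) < \infty$, i.e.\ $a_i(\infty) < \infty$. First I would compute the time derivative of the empirical loss along the flow: by the chain rule (valid by Assumption~\ref{assum: continuous}.I),
\begin{equation*}
\frac{\mathrm{d}\mathcal{L}(\boldsymbol{w}(t))}{\mathrm{d}t} = \left\langle \bar\partial\mathcal{L}(\boldsymbol{w}(t)), \frac{\mathrm{d}\boldsymbol{w}(t)}{\mathrm{d}t}\right\rangle = -\left\langle \bar\partial\mathcal{L}(\boldsymbol{w}(t)), \boldsymbol{h}^A(t)\odot\bar\partial\mathcal{L}(\boldsymbol{w}(t))\right\rangle = -\sum_{i=1}^p \frac{(\bar\partial\mathcal{L}(\boldsymbol{w}(t)))_i^2}{\sqrt{a_i(t)}}.
\end{equation*}
Since $\mathcal{L} \ge 0$ and $\mathcal{L}$ is non-increasing, the left side is integrable over $[0,\infty)$ with total variation at most $\mathcal{L}(\boldsymbol{w}(0))$. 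Hence $\sum_{i=1}^p \int_0^\infty \frac{(\bar\partial\mathcal{L}(\boldsymbol{w}(t)))_i^2}{\sqrt{a_i(t)}}\,\mathrm{d}t = \sum_{i=1}^p \int_0^\infty \frac{a_i'(t)}{\sqrt{a_i(t)}}\,\mathrm{d}t < \infty$.

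Now the key step: for each fixed $i$, $\int_0^\infty \frac{a_i'(t)}{\sqrt{a_i(t)}}\,\mathrm{d}t = 2\sqrt{a_i(t)}\big|_0^\infty = 2\big(\sqrt{a_i(\infty)} - \sqrt{\varepsilon}\big)$, using that $a_i$ is absolutely continuous, non-decreasing, and bounded below by $\varepsilon > 0$ (so the integrand is well-defined and the Newton--Leibniz formula for $\sqrt{a_i(\cdot)}$ applies). Since this integral is finite, $\sqrt{a_i(\infty)} < \infty$, hence $a_i(\infty) = \varepsilon + \int_0^\infty (\bar\partial\mathcal{L}(\boldsymbol{w}(t)))_i^2\,\mathrm{d}t < \infty$, which is the claim. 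Note this argument also delivers the second half of the first bullet of Theorem~\ref{lem:adagrad_approximate}: since $a_i(\infty) \ge \varepsilon > 0$, the limit $\lim_{t\to\infty}\boldsymbol{h}^A(t)_i = a_i(\infty)^{-1/2}$ is finite and strictly positive, so $\boldsymbol{h}_\infty$ has no zero component.

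I expect the main technical obstacle to be regularity bookkeeping rather than the core estimate: one must justify that $t\mapsto \mathcal{L}(\boldsymbol{w}(t))$ is absolutely continuous (or at least that its derivative is integrable with $\int_0^\infty |\mathrm{d}\mathcal{L}/\mathrm{d}t|\,\mathrm{d}t \le \mathcal{L}(\boldsymbol{w}(0))$), which follows because $\mathcal{L}$ is locally Lipschitz, $\boldsymbol{w}(t)$ is locally Lipschitz in $t$ (it is an absolutely continuous solution of the flow), and the composition admits a chain rule a.e.\ by Assumption~\ref{assum: continuous}.I. A secondary subtlety is that $\bar\partial\mathcal{L}$ is a subgradient, so one should fix a measurable selection along the arc; since $a_i(t)$ is defined through the same selection that drives the flow, this is consistent. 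One should also confirm $a_i(t) \ge \varepsilon$ so that division by $\sqrt{a_i(t)}$ never blows up and $\sqrt{a_i(\cdot)}$ is Lipschitz on compact time intervals, making the fundamental theorem of calculus applicable. With these points addressed, the chain of equalities and inequalities above closes the proof.
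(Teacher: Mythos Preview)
Your proof is correct and rests on exactly the same energy estimate as the paper's: along the flow $\mathcal{L}$ is non-increasing with $-\tfrac{\mathrm{d}}{\mathrm{d}t}\mathcal{L}=\sum_i a_i'/\sqrt{a_i}$, so each $\int_0^\infty a_i'/\sqrt{a_i}\,\mathrm{d}t$ is finite. Your direct evaluation $\int_0^T a_i'/\sqrt{a_i}\,\mathrm{d}t=2\sqrt{a_i(T)}-2\sqrt{\varepsilon}$ is in fact a bit more streamlined than the paper's contradiction argument (which detours through the divergence of $\int a_i'/a_i$ and a comparison $a_i\ge\sqrt{a_i}$ for large $t$), but the substance is the same.
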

\begin{proof}
We use reduction of absurdity. If there exists an $i$, such that, $\int_{0}^{\infty} (\partial^s \mathcal{L}(\boldsymbol{w}(\tau)))_i^2  \mathrm{d} \tau$ diverges, by equivalence of integral convergence, then
\begin{equation*}
    \int_{0}^{\infty} \frac{(\partial^s \mathcal{L}(\boldsymbol{w}(t)))_i^2 }{\varepsilon+\int_{0}^{t} (\partial^s \mathcal{L}(\boldsymbol{w}(\tau)))_i^2   \mathrm{d} \tau}  \mathrm{d}t = \infty.
\end{equation*}

Since $\int_{0}^{\infty}(\partial^s \mathcal{L}(\boldsymbol{w}(\tau)))_i^2   \mathrm{d} \tau=\infty$, when $t$ is large enough, 
\begin{equation*}
    \varepsilon+\int_{0}^{t} (\partial^s \mathcal{L}(\boldsymbol{w}(\tau)))_i^2  \mathrm{d} \tau>\sqrt{\varepsilon+\int_{0}^{t} (\partial^s \mathcal{L}(\boldsymbol{w}(\tau)))_i^2 \mathrm{d} \tau}  .
\end{equation*}

Therefore,
\begin{equation*}
\int_{0}^{\infty} \frac{(\partial^s \mathcal{L}(\boldsymbol{w}(t)))_i^2 }{\sqrt{\varepsilon+\int_{0}^{t} (\partial^s \mathcal{L}(\boldsymbol{w}(\tau)))_i^2  \mathrm{d} \tau}}  \mathrm{d}t = \infty.
\end{equation*}

By integrating $\frac{\mathrm{d} \mathcal{L}(\boldsymbol{w}(t))}{\mathrm{d} t}$,
\begin{align*}
     \mathcal{L}(\boldsymbol{w}(0))-\mathcal{L}(\boldsymbol{w}(t))&=-\int_{0}^{t}  \frac{\mathrm{d} \mathcal{L}(\boldsymbol{w}(\tau))}{\mathrm{d} \tau} \mathrm{d}\tau
     \\
     &=-\int_{0}^{t} \left\langle \partial^s \mathcal{L}(\boldsymbol{w}(\tau)) ,\frac{\mathrm{d} \boldsymbol{w}(\tau)}{\mathrm{d}\tau} \right\rangle \mathrm{d}\tau
     \\
     &=\int_{0}^{t} \langle \partial^s \mathcal{L}(\boldsymbol{w}(\tau)) ,\boldsymbol{h}^A(\tau)\odot\partial^s \mathcal{L}(\boldsymbol{w}(\tau)) \rangle \mathrm{d}\tau
     \\
     &=\sum_{i=1}^p\int_{0}^{t}  \frac{(\partial^s \mathcal{L}(\boldsymbol{w}(s)))_i^2 }{\sqrt{\varepsilon+\int_{0}^{t}(\partial^s \mathcal{L}(\boldsymbol{w}(\tau)))_i^2   \mathrm{d} \tau}} \mathrm{d}s
     \\
     &=\infty,
\end{align*}

 which leads to a contradictory, since $\mathcal{L}(\boldsymbol{\omega}(0))-\mathcal{L}(\boldsymbol{\omega}(t))$ is upper  bounded by $\mathcal{L}(\boldsymbol{\omega}(0))$.

The proof is completed.
\end{proof}

Now we are ready to prove Theorem \ref{lem:adagrad_approximate}.

\begin{theorem}[Theorem \ref{lem:adagrad_approximate}, restated]
\label{lem:construct_v^A}
Define $\boldsymbol{h}_{\infty}=\lim_{t\rightarrow \infty} \boldsymbol{h}^A(t)$. Then $\boldsymbol{h}_{\infty}$ has no zero elements. Let
\begin{gather*}
    \boldsymbol{v}^A(t)=\boldsymbol{h}_{\infty}^{-1 / 2} \odot \boldsymbol{w}(t),\\
    \boldsymbol{\beta}^A(t)=\boldsymbol{h}_{\infty}^{-1} \odot \boldsymbol{h}^A(t),\\
    \tilde{\mathcal{L}}^A(\boldsymbol{v})=\mathcal{L}(\boldsymbol{h}_{\infty}^{\frac{1}{2}}\odot \boldsymbol{v}). 
\end{gather*}
We have 
\begin{equation}
\label{eq:adagrad_normalized_flow}
    \frac{\mathrm{d}\boldsymbol{v}^A(t)}{\mathrm{d} t}=- \boldsymbol{\beta}^A(t) \odot \partial^s \tilde{\mathcal{L}}^A(\boldsymbol{v}^A(t)),
\end{equation}
while $\lim_{t\rightarrow\infty} \boldsymbol{\beta}^A(t)=1$, and $\frac{\mathrm{d}\log \boldsymbol{\beta}^A(t)}{\mathrm{d}t}$ is Lebesgue integrable.

\end{theorem}
\begin{proof}
Since 
\begin{equation*}
    \boldsymbol{h}_{\infty}=\frac{1}{\sqrt{\varepsilon\mathbf{1}_p+\int_{0}^{\infty}(\partial^s \mathcal{L}(\boldsymbol{w}(\tau)))_i^2   \mathrm{d} \tau}},
\end{equation*}
by Lemma \ref{lem:adagrad_flow_rate_conver}, $\boldsymbol{h}_{\infty}$ has no zero elements.

We then prove eq. (\ref{eq:adagrad_normalized_flow}) by direct calculation.
\begin{align*}
    \frac{\mathrm{d}\boldsymbol{v}^A(t)}{\mathrm{d} t}&=\boldsymbol{h}_{\infty}^{-1 / 2} \odot\frac{d\boldsymbol{w}(t)}{\mathrm{d} t}
    \\
    &=-\boldsymbol{h}_{\infty}^{-1 / 2}\odot\boldsymbol{h}^A(t)\odot \partial^s \mathcal{L}(\boldsymbol{w}(t)) 
    \\
    &=-\boldsymbol{h}_{\infty}^{-1 / 2}\odot\boldsymbol{h}^A(t)\odot\boldsymbol{h}_{\infty}^{-1 / 2}\odot  \boldsymbol{h}_{\infty}^{1 / 2}\odot\partial^s_{\boldsymbol{w}} \tilde{\mathcal{L}}^A( \boldsymbol{h}_{\infty}^{-1 / 2}\odot\boldsymbol{w}(t)) 
    \\
    &=-\boldsymbol{\beta}^A(t)\odot  \partial^s_{\boldsymbol{h}_{\infty}^{-1 / 2}\odot\boldsymbol{w}(t)} \tilde{\mathcal{L}}^A( \boldsymbol{h}_{\infty}^{-1 / 2}\odot\boldsymbol{w}(t))
    \\
    &=-\boldsymbol{\beta}^A(t)\odot  \partial^s_{\boldsymbol{v}(t)} \tilde{\mathcal{L}}^A( \boldsymbol{v}(t)),
\end{align*}
which completes the proof of eq. ( \ref{eq:adagrad_normalized_flow}).

$\lim_{t\rightarrow\infty} \boldsymbol{\beta}^A(t)=1$ can be derived directly by convergence of $\boldsymbol{h}^A(t)$, while since 
\begin{equation*}
    \frac{\mathrm{d}\log \boldsymbol{\beta}^A(t)}{\mathrm{d}t}= \frac{\mathrm{d}\log \boldsymbol{h}^A(t)}{\mathrm{d}t}\overset{(*)}{\le} \mathbf{0},
\end{equation*}
where inequality $(*)$ is due to $\boldsymbol{h}^A$ is non-increasing.

Therefore, 
\begin{equation*}
    \int_{0}^{\infty}\left\vert\frac{\mathrm{d}\log \boldsymbol{\beta}^A(t)}{\mathrm{d}t}\right\vert \mathrm{d}t= -\int_{0}^{\infty}\frac{\mathrm{d}\log \boldsymbol{\beta}^A(t)}{\mathrm{d}t} \mathrm{d}t=\log \boldsymbol{\beta}^A(0)<\infty.
\end{equation*}

The proof is completed.
\end{proof}

\subsubsection{Proof of Theorem \ref{lem:rms_approximate}}
We first prove Theorem \ref{lem:rms_approximate} for RMSProp, and then extend the proof for Adam (w/m) and other Adam-like optimizers. The proof strategy is similar with AdaGrad:  we first prove convergence of $\boldsymbol{h}^R(t)$ and integrability $\frac{\mathrm{d}\log \boldsymbol{\beta}^R(t)}{\mathrm{d} t}$; then we show $\frac{\mathrm{d} \boldsymbol{v}^R(t)}{\mathrm{d}t}=-\boldsymbol{\beta}^R(t)\odot \partial^s \tilde{\mathcal{L}}^R(\boldsymbol{v}^R(t))$ is equivalent to RMSProp flow, and satisfies adaptive gradient flow.

However, for RMSProp flow, the convergence of $\boldsymbol{h}^R$ requires more effort. We start from the following lemma, which bounds $F_i(t)\overset{\triangle}{=}\int_{0}^{t} (1-b) b^{t-\tau} (\partial^s \mathcal{L}(\boldsymbol{w}(\tau)))_i^2  \mathrm{d} \tau$ ($i\in [p]$).
\begin{lemma}
\label{lem:boundedness_for_F}
For RMSProp flow defined as eq. (\ref{eq:flow}) with $\boldsymbol{h}=\boldsymbol{h}^R$ , $F_i(t)$ is bounded ($i=1,2,\cdots,p$), that is,
\begin{equation*}
    \overline{\lim}_{t\rightarrow \infty} F_i(t)< \infty.
\end{equation*}
\end{lemma}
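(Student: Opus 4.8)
The plan is to show that $F_i(t) = \int_0^t (1-b)e^{-(1-b)(t-\tau)}(\bar{\partial}\mathcal{L}(\boldsymbol{w}(\tau)))_i^2\,\mathrm{d}\tau$ stays bounded by controlling the total amount of "energy" $\int_0^\infty (\bar{\partial}\mathcal{L}(\boldsymbol{w}(\tau)))_i^2\,\mathrm{d}\tau$ that the flow can dissipate, and then observing that an exponentially-weighted average of a function whose integral is finite must itself be bounded. Concretely, first I would derive the loss-dissipation identity: along the RMSProp flow,
\begin{equation*}
\frac{\mathrm{d}\mathcal{L}(\boldsymbol{w}(t))}{\mathrm{d}t} = \langle \bar{\partial}\mathcal{L}(\boldsymbol{w}(t)), \tfrac{\mathrm{d}\boldsymbol{w}}{\mathrm{d}t}\rangle = -\sum_{i=1}^p \boldsymbol{h}_i^R(t)\,(\bar{\partial}\mathcal{L}(\boldsymbol{w}(t)))_i^2 \le 0,
\end{equation*}
so $\mathcal{L}$ is non-increasing and bounded below by $0$; hence $\int_0^\infty \boldsymbol{h}_i^R(t)(\bar{\partial}\mathcal{L})_i^2\,\mathrm{d}t \le \mathcal{L}(\boldsymbol{w}(0)) < \infty$ for each $i$.

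The next step is to get a uniform positive lower bound on $\boldsymbol{h}_i^R(t)$. Since $\boldsymbol{h}^R(t)^{-2} = \varepsilon\mathbf{1}_p + F(t)$, it suffices to get a \emph{uniform upper bound} on $F_i(t)$ — which is exactly what we want to prove, so I cannot circularly assume it. Instead I would argue by a Grönwall/bootstrap or contradiction argument. Suppose $F_i$ is unbounded; pick a large threshold $M$ and a time where $F_i$ exceeds it. Combining the ODE $\frac{\mathrm{d}F_i}{\mathrm{d}t} = (1-b)\big((\bar{\partial}\mathcal{L}(\boldsymbol{w}(t)))_i^2 - F_i(t)\big)$ (obtained by differentiating the integral-with-decay, exactly as in the continuous-derivation subsection of the appendix) with the bound $\boldsymbol{h}_i^R(t) = (\varepsilon + F_i(t))^{-1/2} \ge (\varepsilon + \sup_{s\le t}F_i(s))^{-1/2}$, the finite-dissipation estimate $\int_0^\infty \boldsymbol{h}_i^R(t)(\bar{\partial}\mathcal{L})_i^2\,\mathrm{d}t < \infty$ forces $\int_0^\infty (\varepsilon+F_i(t))^{-1/2}\,\frac{\mathrm{d}F_i}{1-b} + \int_0^\infty (\varepsilon+F_i(t))^{-1/2}F_i(t)\,\mathrm{d}t$ to be finite. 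The first piece telescopes to $2\big((\varepsilon+F_i(t))^{1/2} - (\varepsilon+F_i(0))^{1/2}\big)/(1-b)$ plus a nonnegative remainder, and if $F_i$ is unbounded this tends to $+\infty$ — contradiction. So $\overline{\lim}_{t\to\infty} F_i(t) < \infty$.

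I expect the main obstacle to be handling the sign of $\frac{\mathrm{d}F_i}{\mathrm{d}t}$, which can oscillate, so the telescoping argument must be done carefully: one should write $\frac{\mathrm{d}}{\mathrm{d}t}(\varepsilon+F_i)^{1/2} = \tfrac12(\varepsilon+F_i)^{-1/2}\frac{\mathrm{d}F_i}{\mathrm{d}t} = \tfrac{1-b}{2}(\varepsilon+F_i)^{-1/2}\big((\bar\partial\mathcal{L})_i^2 - F_i\big)$ and integrate, giving
\begin{equation*}
(\varepsilon+F_i(t))^{1/2} - (\varepsilon+F_i(0))^{1/2} = \tfrac{1-b}{2}\int_0^t (\varepsilon+F_i(s))^{-1/2}(\bar\partial\mathcal{L}(\boldsymbol{w}(s)))_i^2\,\mathrm{d}s - \tfrac{1-b}{2}\int_0^t (\varepsilon+F_i(s))^{-1/2}F_i(s)\,\mathrm{d}s.
\end{equation*}
The first integral on the right is at most $\varepsilon^{-1/2}\int_0^\infty(\bar\partial\mathcal{L})_i^2$... but wait, we do not yet know that integral is finite (that was the AdaGrad lemma, not RMSProp). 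So instead bound it by $\int_0^\infty \boldsymbol{h}_i^R(\bar\partial\mathcal{L})_i^2 \le \mathcal{L}(\boldsymbol{w}(0))$ using $\boldsymbol{h}_i^R = (\varepsilon+F_i)^{-1/2}$, which is precisely the loss-dissipation bound. Since the second integral on the right is nonnegative, we get $(\varepsilon+F_i(t))^{1/2} \le (\varepsilon+F_i(0))^{1/2} + \tfrac{1-b}{2}\mathcal{L}(\boldsymbol{w}(0))$ for all $t$, which directly yields the uniform bound on $F_i(t)$ — cleaner than the contradiction route, and it even gives an explicit constant. This last observation — that $\boldsymbol{h}_i^R$ is itself the integrating weight appearing in the dissipation identity — is the crux that makes everything fit together, and it is why RMSProp's conditioner converges to the isotropic limit $\varepsilon^{-1/2}\mathbf{1}_p$.
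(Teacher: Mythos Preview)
Your final argument is correct and is genuinely different from the paper's. The paper proceeds by contradiction: assuming $\overline{\lim}_{t\to\infty}F_i(t)=\infty$, it introduces stopping times $t_k=\inf\{t:F_i(t)\ge k\}$, bounds the dissipation $\int_{t_k}^{t_{k+1}}(\varepsilon+F_i)^{-1/2}(\bar\partial\mathcal{L})_i^2\,\mathrm{d}t$ below by $(1-b)/\sqrt{\varepsilon+k+1}$ via a level-crossing estimate, and sums to force $\mathcal{L}(t_K)=\infty$. Your route instead differentiates $(\varepsilon+F_i)^{1/2}$ directly, uses the ODE $\frac{\mathrm{d}F_i}{\mathrm{d}t}=(1-b)\bigl((\bar\partial\mathcal{L})_i^2-F_i\bigr)$, and observes that the resulting positive integrand $(\varepsilon+F_i)^{-1/2}(\bar\partial\mathcal{L})_i^2$ is \emph{exactly} the $i$-th summand in the loss-dissipation identity, so it is bounded by $\mathcal{L}(\boldsymbol{w}(0))$; dropping the nonnegative $F_i$ term gives the explicit bound $(\varepsilon+F_i(t))^{1/2}\le\sqrt{\varepsilon}+\tfrac{1-b}{2}\mathcal{L}(\boldsymbol{w}(0))$. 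This is shorter, avoids the stopping-time machinery, and yields a quantitative constant the paper does not obtain. The only cosmetic point to add is the trivial case $b=1$ (then $F_i\equiv 0$), which the paper also treats separately.
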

\begin{proof}
When $b=1$, $F_i(t)=0$ for all $t$ and $i$, which trivially yields the claim. When $b\ne 1$, we use reduction of absurdity. If there exists an $i$, such that, $\overline{\lim}_{t\rightarrow \infty} F_i(t)= \infty$, then $t_k\overset{\triangle}{=}\inf \{t: F_i(t)\ge k\}<\infty$ holds. Furthermore, since $\mathcal{L}$ is locally Lipschitz with respect to $\boldsymbol{w}$, $g_i(t)$ is locally bounded for any $t$, which leads to the absolute continuity of $F_i$. Therefore, since $F_i(0)=0$, $t_k$ monotonously increases.

Therefore, we have that
\begin{align*}
    \int_{t_{k}}^{t_{k+1}} \frac{(\partial^s \mathcal{L}(\boldsymbol{w}(t)))_i^2}{\sqrt{\varepsilon+F_i(t)}} \mathrm{d}t&\ge\int_{t_{k}}^{t_{k+1}} \frac{(\partial^s \mathcal{L}(\boldsymbol{w}(t)))_i^2}{\sqrt{\varepsilon+k+1}} \mathrm{d}t
    \\
    &=\frac{1}{\sqrt{\varepsilon+k+1}}\int_{t_{k}}^{t_{k+1}} (\partial^s \mathcal{L}(\boldsymbol{w}(t)))_i^2 \mathrm{d}t
    \\
    &\ge \frac{1}{\sqrt{\varepsilon+k+1}}\int_{t_{k}}^{t_{k+1}} (1-b)e^{-(1-b)(t_{k+1}-t)} (\partial^s \mathcal{L}(\boldsymbol{w}(t)))_i^2 \mathrm{d}t
    \\
    &= \frac{1-b}{\sqrt{\varepsilon+k+1}} (F_i(t_{k+1})-e^{-(1-b)(t_{k+1}-t_k)} F_i(t_{k}))
    \\
    &\ge \frac{1-b}{\sqrt{\varepsilon+k+1}} (k+1-e^{-(1-b)(t_{k+1}-t_k)}k)
    \\
    &\ge \frac{1-b}{\sqrt{\varepsilon+k+1}}.
\end{align*}

Adding all $k\ge 1$, we then have 
 \begin{equation*}
     \mathcal{L}(t_1)\ge\int_{t_{1}}^{\infty} \frac{(\partial^s \mathcal{L}(\boldsymbol{w}(t)))_i^2}{\sqrt{\varepsilon+F_i(t)}} \mathrm{d}t =\infty,
 \end{equation*}
 which leads to a contradictory.
 
 The proof is completed.
\end{proof}

The next lemma shows that $\int_{0}^{\infty}(\partial^s \mathcal{L}(\boldsymbol{w}(t)))_i^2\mathrm{d}t$ converges, which indicates that $\lim_{t\rightarrow\infty} F_i(t)=0$.

\begin{lemma}
\label{lem:F_go_zero}
For RMSProp flow defined as eq. (\ref{eq:flow}) with $\boldsymbol{h}=\boldsymbol{h}^R$ , $\int_{0}^{\infty}(\partial^s \mathcal{L}(\boldsymbol{w}(t)))_i^2\mathrm{d}t$ converges, which indicates that $\lim_{t\rightarrow\infty} F_i(t)=0$ ($i=1,2,\cdots,p$). Consequently,  $\lim_{t\rightarrow\infty} F_i(t)=0$ and $\lim_{t\rightarrow\infty} \boldsymbol{h}^R(t)=\frac{1}{\sqrt{\varepsilon}} \mathbf{1}_p$.
\end{lemma}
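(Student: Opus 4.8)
The plan is to bootstrap from Lemma~\ref{lem:boundedness_for_F}. Once each $F_i$ is known to be bounded, the RMSProp conditioner $\boldsymbol{h}^R(t)$ is pinched between two positive constants, so the loss-descent (energy) argument used for AdaGrad in Lemma~\ref{lem:adagrad_flow_rate_conver} applies almost verbatim and yields square-integrability of every gradient component $g_i(t):=(\bar{\partial}\mathcal{L}(\boldsymbol{w}(t)))_i$; the two remaining claims then follow from an elementary tail estimate for the exponential convolution. Concretely, Lemma~\ref{lem:boundedness_for_F} supplies $M<\infty$ with $F_i(t)\le M$ for all $t$ and all $i$, and since $\boldsymbol{h}^R(t)_i=(\varepsilon+F_i(t))^{-1/2}$ with $F_i\ge 0$, I get $\boldsymbol{h}^R(t)_i\in[c,\varepsilon^{-1/2}]$ for $c:=(\varepsilon+M)^{-1/2}>0$, uniformly in $t$.

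Next I would differentiate the loss along the flow. By Assumption~\ref{assum: continuous}.I, $\mathcal{L}$ admits a chain rule along the (absolutely continuous) curve $\boldsymbol{w}(\cdot)$, so for a.e.\ $t$
\[
\frac{\mathrm{d}}{\mathrm{d}t}\mathcal{L}(\boldsymbol{w}(t))=\langle\bar{\partial}\mathcal{L}(\boldsymbol{w}(t)),\dot{\boldsymbol{w}}(t)\rangle=-\sum_{i=1}^p \boldsymbol{h}^R(t)_i\,g_i(t)^2\le 0 ,
\]
hence $\mathcal{L}$ is non-increasing; integrating and using $\boldsymbol{h}^R(t)_i\ge c$ gives
\[
c\sum_{i=1}^p\int_0^t g_i(s)^2\,\mathrm{d}s\ \le\ \mathcal{L}(\boldsymbol{w}(0))-\mathcal{L}(\boldsymbol{w}(t))\ \le\ \mathcal{L}(\boldsymbol{w}(0)) ,
\]
and letting $t\to\infty$ yields $\int_0^\infty g_i(s)^2\,\mathrm{d}s\le \mathcal{L}(\boldsymbol{w}(0))/c<\infty$ for every $i$.

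Finally, for the vanishing of $F_i$ I would use a convolution tail bound: given $\kappa>0$, choose $T$ with $\int_T^\infty g_i(s)^2\,\mathrm{d}s<\kappa$, and for $t>T$ split $F_i(t)$ as $\int_0^T+\int_T^t$. On $[0,T]$ the kernel is bounded by $(1-b)e^{-(1-b)(t-T)}$, so that contribution is at most $(1-b)e^{-(1-b)(t-T)}\int_0^T g_i(s)^2\,\mathrm{d}s\to 0$ as $t\to\infty$; on $[T,t]$ the kernel is at most $1$, so that contribution is $<\kappa$. Thus $\limsup_{t\to\infty}F_i(t)\le\kappa$ for every $\kappa>0$, i.e.\ $F_i(t)\to 0$, whence $\boldsymbol{h}^R(t)^{-2}=\varepsilon\mathbf{1}_p+\boldsymbol{F}(t)\to\varepsilon\mathbf{1}_p$ componentwise, so $\boldsymbol{h}^R(t)\to\varepsilon^{-1/2}\mathbf{1}_p$. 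The genuinely delicate estimate — boundedness of $F_i$ — has already been dispatched upstream in Lemma~\ref{lem:boundedness_for_F}; what remains here (justifying the differentiation of $t\mapsto\mathcal{L}(\boldsymbol{w}(t))$, exactly as in Lemma~\ref{lem:adagrad_flow_rate_conver}, and the convolution tail bound) is routine, so this lemma is essentially a corollary of Lemma~\ref{lem:boundedness_for_F}.
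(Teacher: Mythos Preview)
Your proof is correct and follows essentially the same route as the paper: use Lemma~\ref{lem:boundedness_for_F} to uniformly bound $F_i$, plug that bound into the loss-descent identity to get $\int_0^\infty g_i^2<\infty$, and then split the exponential convolution at a large $T$ to show $F_i(t)\to 0$. The only cosmetic difference is that the paper writes the $[0,T]$ contribution as $e^{-(1-b)(t-T)}F_i(T)$ via the semigroup property, while you bound the kernel pointwise; these are the same computation.
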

\begin{proof}
Similar to Lemma \ref{lem:boundedness_for_F}, when $b=1$, the claim trivially holds. When $b\ne1$, by Lemma \ref{lem:boundedness_for_F}, there exist $M_i>0$ $(i=1,2,\cdots, p)$, such that,
$F_i(t)\le M_i$ for any $t>0$. Therefore,
\begin{equation*}
    \mathcal{L}(0)\ge\sum_{i=1}^p\int_{0}^{\infty} \frac{(\partial^s \mathcal{L}(\boldsymbol{w}(t)))_i^2}{\sqrt{\varepsilon+F_i(t)}} \mathrm{d}t\ge \sum_{i=1}^p\int_{0}^{\infty} \frac{(\partial^s \mathcal{L}(\boldsymbol{w}(t)))_i^2}{\sqrt{M_i+\varepsilon}} \mathrm{d}t,
\end{equation*}
which proves $\int_{0}^{\infty} (\partial^s \mathcal{L}(\boldsymbol{w}(t)))_i^2 \mathrm{d}t<\infty$.

Therefore, for any positive real $\varepsilon>0$ and a fixed index $i\in[p]$, there exists a time $T$, such that,
\begin{equation*}
    \int_{T}^{\infty} (\partial^s \mathcal{L}(\boldsymbol{w}(t)))_i^2 \mathrm{d}t \le \varepsilon.
\end{equation*}

Thus, for any $t\ge T$,
\begin{equation*}
    F_i(t)= e^{-(t-T)(1-b)} F_i(T)+ \int_{T}^{\infty} (\partial^s \mathcal{L}(\boldsymbol{w}(t)))_i^2 \mathrm{d}t\le e^{-(t-T)(1-b)} F_i(T)+ \varepsilon,
\end{equation*}
which leads to
\begin{equation*}
    \overline{\lim}_{t\rightarrow\infty} F_i(t)\le \varepsilon.
\end{equation*}
Since $\varepsilon$ and $i$ can be picked arbitrarily, the proof is completed.
\end{proof}

Similar to the proof of Lemma \ref{lem:construct_v^A},  we can rewrite the RMSProp flow as
\begin{equation*}
    \frac{\mathrm{d}\boldsymbol{v}^R(t)}{\mathrm{d} t}=- \boldsymbol{\beta}^R(t) \odot \partial^s \tilde{\mathcal{L}}^R(\boldsymbol{v}^R(t)),
\end{equation*}
where 
\begin{gather*}
    \boldsymbol{v}^R(t)=\sqrt[4]{\varepsilon}\boldsymbol{w}(t),\\
    \boldsymbol{\beta}^R(t)=\sqrt{\varepsilon}  \boldsymbol{h}(t),\\
   \tilde{ \mathcal{L}}^R(\boldsymbol{v})=\mathcal{L}(\sqrt[4]{\varepsilon^{-1}} \boldsymbol{v}),
\end{gather*}
and $\lim_{t\rightarrow\infty} \boldsymbol{\beta}^R(t)=\mathbf{1}_p$. We only need to prove $\frac{d \log \boldsymbol{\beta}^R(t)}{\mathrm{d}t}$ is  Lebesgue integrable to complete the proof of Theorem \ref{lem:rms_approximate}.

\begin{proof}[Proof of Theorem \ref{lem:rms_approximate} for RMSProp]
For any fixed $i=1,2,\cdots,p$, by Lemma \ref{lem:F_go_zero},
\begin{equation*}
     \int_{0}^{\infty} \frac{\mathrm{d} \log \boldsymbol{\beta}^R (t)}{\mathrm{d}t} \mathrm{d}t=-\int_{0}^{\infty} \frac{\mathrm{d} \log \sqrt{\frac{\varepsilon+F_i(t)}{\varepsilon}}}{\mathrm{d}t} \mathrm{d}t=0.
\end{equation*}
Therefore,
\begin{equation*}
     0=\int_{0}^{\infty} \frac{d \log \sqrt{\frac{\varepsilon+F_i(t)}{\varepsilon}}}{\mathrm{d}t} \mathrm{d}t=\int_{0}^{\infty} \left(\frac{\mathrm{d} \log \boldsymbol{\beta}^R (t)}{\mathrm{d}t}\right)_{+} \mathrm{d}t+\int_{0}^{\infty} \left(\frac{\mathrm{d} \log \boldsymbol{\beta}^R (t)}{\mathrm{d}t}\right)_{-} \mathrm{d}t,
\end{equation*}
we only need to prove the convergence of $\int_{0}^{\infty} \left(\frac{d \log \sqrt{\frac{\varepsilon+F_i(t)}{\varepsilon}}}{\mathrm{d}t}\right)_{-} \mathrm{d}t$, is equivalent to convergence of $\int_{0}^{\infty} (\frac{\mathrm{d} \log \sqrt{\frac{\varepsilon+F_i(t)}{\varepsilon}}}{\mathrm{d}t})_{+} \mathrm{d}t$.

For any $k\in \mathcal{Z}^{+}$, denote $B_k=(k,k+1)\cap \{t:\frac{d \log \sqrt{\frac{\varepsilon+F_i(t)}{\varepsilon}}}{\mathrm{d}t}>0\}$. Since $\frac{d \log \sqrt{\frac{\varepsilon+F_i(t)}{\varepsilon}}}{\mathrm{d}t}$ is measurable, we have that $B_k$ is a measurable set. Denote $M_k$ as an upper bound for $\frac{d \log \sqrt{\frac{\varepsilon+F_i(t)}{\varepsilon}}}{\mathrm{d}t}$ on $[k,k+1)$ (which is guaranteed since $\mathcal{L}$ is locally Lipschitz). Since $B_k$ is a measurable set, there exists an open set $\tilde{B}_k\subset(k,k+1)$, such that $m(\tilde{B}_k/B_k)\le \frac{1}{k^2 M_k}$, where $m$ is Lebesgue measure on $\mathbb{R}$.

Therefore, we have
\begin{align*}
    \int_{t\in [k,k+1)} \left(\frac{d \log \sqrt{\frac{\varepsilon+F_i(t)}{\varepsilon}}}{\mathrm{d}t}\right)_{+} \mathrm{d}t
    = & \int_{t\in B_k} \left(\frac{d \log \sqrt{\frac{\varepsilon+F_i(t)}{\varepsilon}}}{\mathrm{d}t}\right) \mathrm{d}t
    \\
    \le &\int_{t\in \tilde{B}_k} \left(\frac{d \log \sqrt{\frac{\varepsilon+F_i(t)}{\varepsilon}}}{\mathrm{d}t}\right) \mathrm{d}t +M_k\cdot \frac{1}{k^2 M_k}
    \\
    \\
    = &\int_{t\in \tilde{B}_k} \left(\frac{d \log \sqrt{\frac{\varepsilon+F_i(t)}{\varepsilon}}}{\mathrm{d}t}\right) \mathrm{d}t + \frac{1}{k^2 }.
\end{align*}

Furthermore, let $\tilde{B}_k=\cup_{j=1}^{\infty} (t_{k,j},t'_{k,j})$. Then we have
\begin{align*}
    &\int_{t\in \tilde{B}_k}\left(\frac{d \log \sqrt{\frac{\varepsilon+F_i(t)}{\varepsilon}}}{\mathrm{d}t}\right) \mathrm{d}t
    \\
    =&\sum_{j=1}^\infty  \int_{t_{k,j}}^{t'_{k,j}} \frac{d \log \sqrt{\frac{\varepsilon+lF_i(t)}{\varepsilon}}}{\mathrm{d}t} \mathrm{d}t
    \\
    =&\sum_{j=1}^\infty  \left( \log \sqrt{\frac{\varepsilon+F_i(t_{k,j}')}{\varepsilon}}-\log \sqrt{\frac{\varepsilon+F_i(t_{k,j})}{\varepsilon}}\right)
    \\
    =&\frac{1}{2} \sum_{j=1}^\infty   \log \frac{\varepsilon+F_i(t_{k,j}')}{\varepsilon+F_i(t_{k,j})}
    \\
    = &\frac{1}{2} \sum_{j=1}^\infty   \log \frac{\varepsilon+e^{-(t_{k,j}'-t_{k,j})(1-b)}F_i(t_{k,j})+\int_{t_{k,j}}^{t_{k,j}'} (1-b)e^{-(t_{k,j}'-t)(1-b)}(\partial^s \mathcal{L}(\boldsymbol{w}(t)))_i^2 \mathrm{d}t}{\varepsilon+F_i(t_{k,j})}
    \\
    \le&\frac{1}{2} \sum_{j=1}^\infty   \log \frac{\varepsilon+F_i(t_{k,j})+\int_{t_{k,j}}^{t_{k,j}'} (1-b)e^{-(t_{k,j}'-t)(1-b)}(\partial^s \mathcal{L}(\boldsymbol{w}(t)))_i^2 \mathrm{d}t}{\varepsilon+F_i(t_{k,j})}
    \\
    \le &\frac{1}{2} \sum_{j=1}^\infty \frac{\int_{t_{k,j}}^{t_{k,j}'} (1-b)e^{-(t_{k,j}'-t)(1-b)}(\partial^s \mathcal{L}(\boldsymbol{w}(t)))_i^2 \mathrm{d}t}{\varepsilon+F_i(t_{k,j})}
    \\
    \le &\frac{1}{2} \sum_{j=1}^\infty \frac{\int_{t_{k,j}}^{t_{k,j}'} (\partial^s \mathcal{L}(\boldsymbol{w}(t)))_i^2 \mathrm{d}t}{\varepsilon}
    \\
    \le &\frac{1}{2}\frac{\int_{t\in\tilde{B}_k} (\partial^s \mathcal{L}(\boldsymbol{w}(t)))_i^2 \mathrm{d}t}{\varepsilon} <\infty.
\end{align*}

The proof is completed.
\end{proof}

In the rest of the section, we extend the proof of Theorem \ref{lem:rms_approximate} from RMSProp to Adam.

Conditioner for Adam is the same as RMSProp, except that Adam will divide a bias-corrected term $1-b^t$ for conditioner each step, that is,
\begin{equation*}
    \frac{\mathrm{d} \boldsymbol{w}(t)}{\mathrm{d} t}=-\frac{\partial^s \mathcal{L}(\boldsymbol{w}(t))}{\sqrt{\varepsilon\mathbf{1}_d+\frac{\int_{0}^t e^{-(1-b)(t-\tau)}(1-b)\partial^s\mathcal{L}(\boldsymbol{w}(\tau))^2\mathrm{d} \tau}{(1-b^t)}}}.
\end{equation*}

Generally, updates of RMSProp and Adam (w/m) can be both expressed as 
\begin{equation*}
    \frac{\mathrm{d} \boldsymbol{w}(t)}{\mathrm{d} t}=-\frac{\partial^s \mathcal{L}(\boldsymbol{w}(t))}{\sqrt{\varepsilon\mathbf{1}_d+\frac{\int_{0}^t e^{-(1-b)(t-\tau)}(1-b)\partial^s\mathcal{L}(\boldsymbol{w}(\tau))^2\mathrm{d} \tau}{(1-a^t)}}} ,
\end{equation*}
where for RMSProp $a=0$, and for Adam $a=b$. For any  $0\le a<1$, $0\le b\le1$, define $F_i(t)\overset{\triangle}{=}\int_{0}^{t} (1-b) b^{t-\tau} (\partial^s \mathcal{L}(\boldsymbol{w}(t)))_i^2  \mathrm{d} \tau$ as in RMSProp case. We will show Lemmas \ref{lem:boundedness_for_F} and  \ref{lem:F_go_zero}.

\begin{lemma}
\label{lem:boundedness_for_F_adam}
For adaptive gradient flow defined as eq. (\ref{eq:flow}) with $\boldsymbol{h}^{-1}(t)=\sqrt{\varepsilon\mathbf{1}_d+\frac{\int_{0}^t e^{-(1-b)(t-\tau)}(1-b)\partial^s\mathcal{L}(\boldsymbol{w}(\tau))^2\mathrm{d} \tau}{(1-a^t)}}$ with  $0\le a<1$ and $0\le b\le1$, $F_i(t)$ is bounded ($i=1,2,\cdots,p$), that is,
\begin{equation*}
    \overline{\lim}_{t\rightarrow \infty} F_i(t)< \infty.
\end{equation*}
\end{lemma}
\begin{proof}
 The proof follows the same routine as proof of Lemma \ref{lem:boundedness_for_F}, except in this case we have
 \begin{align*}
     \int_{t_{k}}^{t_{k+1}} \frac{(\partial^s \mathcal{L}(\boldsymbol{w}(t)))_i^2}{\sqrt{\varepsilon+F_i(t)/(1-a^\tau)}} \mathrm{d}t \ge\int_{t_{k}}^{t_{k+1}} \frac{(\partial^s \mathcal{L}(\boldsymbol{w}(t)))_i^2}{\sqrt{\varepsilon+(k+1)/(1-a^{t_k})}} \mathrm{d}t.
 \end{align*}
 For $k \ge 1$, we further have
 \begin{equation*}
     \int_{t_{k}}^{t_{k+1}} \frac{(\partial^s \mathcal{L}(\boldsymbol{w}(t)))_i^2}{\sqrt{\varepsilon+(k+1)/(1-a^{t_k})}} \mathrm{d}t \ge \int_{t_{k}}^{t_{k+1}} \frac{(\partial^s \mathcal{L}(\boldsymbol{w}(t)))_i^2}{\sqrt{\varepsilon+(k+1)/(1-a^{t_1})}} \mathrm{d}t\ge  \frac{1-b}{\sqrt{\varepsilon+(k+1)/(1-a^{t_1})}},
 \end{equation*}
sum of which diverges.
 
 The proof is completed.
\end{proof}

\begin{lemma}
\label{lem:F_go_zero_adam}
For adaptive gradient flow, $\int_{0}^{\infty} g_i(t)^2 \mathrm{d}t$ converges ($i=1,2,\cdots,p$). Consequently,  $\lim_{t\rightarrow\infty} F_i(t)=0$ and $\lim_{t\rightarrow\infty} \boldsymbol{h}(t)=\frac{1}{\sqrt{\varepsilon}} \mathbf{1}_p$.
\end{lemma}
\begin{proof}
The proof is the same as proof of Lemma \ref{lem:F_go_zero}, except that
\begin{equation*}
    \mathcal{L}(0)\ge\sum_{i=1}^p\int_{1}^{\infty} \frac{(\partial^s \mathcal{L}(\boldsymbol{w}(t)))_i^2}{\sqrt{\varepsilon+F_i(t)/(1-a^t)}} \mathrm{d}t\ge \sum_{i=1}^p\int_{1}^{\infty} \frac{(\partial^s \mathcal{L}(\boldsymbol{w}(t)))_i^2}{\sqrt{M_i/(1-a^{1        })+\varepsilon}} \mathrm{d}t.
\end{equation*}

The proof is completed.
\end{proof}

\begin{proof}[Proof of Theorem \ref{lem:rms_approximate} for Adam flow]
We only need to prove Lebesgue integrability of $\frac{\mathrm{d}\log \boldsymbol{\beta}(t)}{\mathrm{d} t}$.

 The proof is the same as proof of Theorem \ref{lem:rms_approximate} for RMSProp flow, except that
 \begin{align*}
     &F_i(t'_{k,j})/(1-a^{t'_{k,j}})-F_{i}(t_{k,j})/(1-a^{t_{k,j}})
     \\
     \le&  F_i(t'_{k,j})/(1-a^{t'_{k,j}})-F_{i}(t_{k,j})/(1-a^{t'_{k,j}})
     \\
     \le& \frac{1}{1-a}(F_i(t'_{k,j})-F_i(t_{k,j})).
 \end{align*}
 
 The proof is completed.
\end{proof}

It is worth noting that the current framework of adaptive gradient flow can not cover Adam with a decaying $\varepsilon$ or without $\varepsilon$. It will be interesting to see if the framework can be modified to analyze these optimizers, and we leave this as a future work.

\subsection{Proof of Theorem \ref{thm:approximate_flow}}
\label{sec:appen_proof_approximate}
\subsubsection{Proof of surrogate margin Lemmas: Lemma \ref{lem:relationship_between_normalized_smoothed}, Lemma \ref{lem:lower_bound_margin}, and Lemma \ref{lem:conver_gamma}}
In the beginning, we first prove a basic lemma for normalized margin, i.e., the normalized margin $\gamma$ and normalized gradients are upper bounded:
\begin{lemma}
\label{lem:appen_bound_gamma}
For any $\boldsymbol{v} \in \mathbb{R}^p/\{\boldsymbol{0}\}$, the normalized margin $\gamma=\frac{\tilde{q}_{\min}(\boldsymbol{v})}{\Vert\boldsymbol{v}\Vert^L}$ and normalized gradients $\Vert\frac{\partial^s\tilde{q}_i(\boldsymbol{v})}{\Vert\boldsymbol{v}\Vert^{L-1}}\Vert$ ($i=1,2,\cdots,p$) are upper bounded universally.
\end{lemma}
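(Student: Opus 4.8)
The plan is to exploit the $L$-homogeneity of $\tilde{q}_i$ to reduce both quantities to the compact unit sphere $S^{p-1}=\{\boldsymbol{u}\in\mathbb{R}^p:\Vert\boldsymbol{u}\Vert=1\}$, where boundedness is automatic from continuity and local Lipschitzness (which $\tilde{q}_i$ inherits from Assumption \ref{assum: continuous}, since $\tilde{q}_i(\boldsymbol{v})=q_i(\tilde{\boldsymbol{h}}^{1/2}\odot\boldsymbol{v})$ is the composition of $q_i$ with a fixed positive diagonal scaling).

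First I would handle the margin term. Writing $\hat{\boldsymbol{v}}=\boldsymbol{v}/\Vert\boldsymbol{v}\Vert$, homogeneity gives $\tilde{q}_i(\boldsymbol{v})=\Vert\boldsymbol{v}\Vert^{L}\tilde{q}_i(\hat{\boldsymbol{v}})$, hence $\frac{\tilde{q}_{\min}(\boldsymbol{v})}{\Vert\boldsymbol{v}\Vert^{L}}=\min_i\tilde{q}_i(\hat{\boldsymbol{v}})$, which depends only on the direction $\hat{\boldsymbol{v}}\in S^{p-1}$. Each $\tilde{q}_i$ is locally Lipschitz, hence continuous, so it attains a finite maximum $B_i^{(0)}:=\max_{\boldsymbol{u}\in S^{p-1}}\tilde{q}_i(\boldsymbol{u})$ on the compact set $S^{p-1}$; therefore $\gamma\le\min_i B_i^{(0)}<\infty$ uniformly in $\boldsymbol{v}\neq\boldsymbol{0}$.

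Next I would treat the normalized subgradient. The key step is that the (Clarke) subdifferential of an $L$-homogeneous locally Lipschitz function is $(L-1)$-homogeneous, i.e.\ $\bar{\partial}\tilde{q}_i(\alpha\boldsymbol{v})=\alpha^{L-1}\bar{\partial}\tilde{q}_i(\boldsymbol{v})$ for every $\alpha>0$. I would prove this directly from the definition of the Clarke directional derivative: substituting $\boldsymbol{y}=\alpha\boldsymbol{z}$ in the $\limsup$ defining $\tilde{q}_i^{\circ}(\alpha\boldsymbol{v};\boldsymbol{u})$ and using $\tilde{q}_i(\alpha\boldsymbol{z}+t\boldsymbol{u})=\alpha^{L}\tilde{q}_i(\boldsymbol{z}+(t/\alpha)\boldsymbol{u})$ yields $\tilde{q}_i^{\circ}(\alpha\boldsymbol{v};\boldsymbol{u})=\alpha^{L-1}\tilde{q}_i^{\circ}(\boldsymbol{v};\boldsymbol{u})$, and the subdifferential identity follows since the subdifferential is the set of $\boldsymbol{\xi}$ with $\langle\boldsymbol{\xi},\boldsymbol{u}\rangle\le\tilde{q}_i^{\circ}(\cdot;\boldsymbol{u})$ for all $\boldsymbol{u}$. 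Consequently $\frac{\bar{\partial}\tilde{q}_i(\boldsymbol{v})}{\Vert\boldsymbol{v}\Vert^{L-1}}=\bar{\partial}\tilde{q}_i(\hat{\boldsymbol{v}})$. Since $\tilde{q}_i$ is locally Lipschitz and $S^{p-1}$ is compact, $\tilde{q}_i$ is Lipschitz on a bounded open neighborhood of $S^{p-1}$ with some constant $\ell_i<\infty$, and every element of $\bar{\partial}\tilde{q}_i(\boldsymbol{u})$ for $\boldsymbol{u}$ in that neighborhood has norm at most $\ell_i$; hence $\Vert\bar{\partial}\tilde{q}_i(\boldsymbol{v})\Vert/\Vert\boldsymbol{v}\Vert^{L-1}\le\ell_i$ uniformly in $\boldsymbol{v}\neq\boldsymbol{0}$. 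Taking $B_1:=\max\{\min_i B_i^{(0)},\ell_1,\dots,\ell_p\}$ gives the claimed universal bound.

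The only genuinely non-routine point is the $(L-1)$-homogeneity of the subdifferential; everything else is compactness plus the definitions. I do not expect a serious obstacle here, but some care is needed because $\tilde{q}_i$ is merely locally Lipschitz rather than differentiable, so one cannot simply differentiate Euler's relation $\langle\bar{\partial}\tilde{q}_i(\boldsymbol{v}),\boldsymbol{v}\rangle=L\tilde{q}_i(\boldsymbol{v})$ and must instead argue at the level of the Clarke directional derivative as above.
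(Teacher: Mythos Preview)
Your proposal is correct and takes essentially the same approach as the paper: reduce to the unit sphere via $L$-homogeneity (respectively $(L-1)$-homogeneity of the subdifferential) and then invoke continuity/local Lipschitzness together with compactness of $S^{p-1}$. The paper's proof is terser---it simply asserts the gradient case ``follows by similar routine since $\tilde{q}_i$ is locally Lipschitz''---whereas you spell out the $(L-1)$-homogeneity of the Clarke subdifferential via the directional derivative, which is a welcome level of detail but not a different argument.
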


\begin{proof}
By homogeneity of  $\tilde{q}_i$ ($i=1,2,\cdots,p$), only parameters with unit norm needed to be considered. That is,
\begin{align*}
   & \left\{\gamma(\boldsymbol{v}):\boldsymbol{v}\in  \mathbb{R}^p/\{\boldsymbol{0}\right\}
    \\
    =&\left\{\gamma(\boldsymbol{v}):\Vert\boldsymbol{v}\Vert=1\right\}.
\end{align*}

Since $\tilde{q}_i$ is continuous and $\{\boldsymbol{v}:\Vert\boldsymbol{v}\Vert=1\}$ is a compact set, normalized margin is upper bounded.

 Normalized gradients $\Vert\frac{\partial^s\tilde{q}_i(\boldsymbol{v})}{\Vert\boldsymbol{v}\Vert^{L-1}}\Vert$ are also bounded following similar routine since $\tilde{q}_i$ is locally Lipschitz.
\end{proof}

We then formally define surrogate norm $\rho(t)$ and surrogate margin $\tilde{\gamma}$ as follows:
\begin{definition}[Surrogate norm and surrogate margin]
 Let  $\boldsymbol{v}(t)$ obey an adaptive gradient flow $\mathcal{F}$ which satisfies Assumption \ref{assum: continuous}, with loss $\tilde{\mathcal{L}}$ and component learning rate $\boldsymbol{\beta}(t)$. The surrogate margin $\rho(t)$ along $\mathcal{F}$ is defined as 
\begin{equation*}
  \rho(t)=\Vert \boldsymbol{\beta}(t)^{-\frac{1}{2}}\odot\boldsymbol{v}(t)\Vert,
\end{equation*}
and surrogate margin $\tilde{\gamma}(t)$ is defined as 
\begin{equation*}
    \tilde{\gamma}(t)=\frac{f^{-1}(\log \frac{1}{\tilde{\mathcal{L}}(\boldsymbol{v}(t))})}{\rho(t)^L}.
\end{equation*}
\end{definition}

We can now restate Lemma \ref{lem:relationship_between_normalized_smoothed} as follows:
\begin{lemma}[Lemma \ref{lem:relationship_between_normalized_smoothed} restated]
\label{lem:appen_smoothed_normal}
 Let  $\boldsymbol{v}(t)$ obey an adaptive gradient flow $\mathcal{F}$ which satisfies Assumption \ref{assum: continuous}, with loss $\tilde{\mathcal{L}}$ and component learning rate $\boldsymbol{\beta}(t)$. Then we have $\lim_{t\rightarrow\infty}\frac{\rho(t)}{\Vert\boldsymbol{v}(t)\Vert}=1$. Furthermore, if further $\lim_{t\rightarrow \infty }\tilde{\mathcal{L}}(t)=\infty$, we have  $\lim_{t\rightarrow\infty}\frac{\gamma(t)}{\tilde{\gamma}(t)}=1$
\end{lemma}

\begin{proof}
By the definition of approximate norm $\rho(t)=\Vert\boldsymbol{\beta}(t)^{-\frac{1}{2}}\odot \boldsymbol{v}(t) \Vert$ and $\lim_{t\rightarrow\infty}\boldsymbol{\beta}(t)=\mathbf{1}_p$, we have that
\begin{equation*}
    1=\underline{\lim}_{t\rightarrow\infty}\min_i \boldsymbol{\beta}^{-\frac{1}{2}}_i(t)\le\underline{\lim}_{t\rightarrow\infty}\frac{\rho(t)}{\Vert\boldsymbol{v}(t)\Vert}\le\overline{\lim}_{t\rightarrow\infty}\frac{\rho(t)}{\Vert\boldsymbol{v}(t)\Vert}\le \overline{\lim}_{t\rightarrow\infty}\max_i \boldsymbol{\beta}^{-\frac{1}{2}}_i(t)=1,
\end{equation*}
which leads to $\lim_{t\rightarrow\infty}\frac{\rho(t)}{\Vert\boldsymbol{v}(t)\Vert}=1$.

 By the definition of $\tilde{\mathcal{L}}(\boldsymbol{v})$, we have 
\begin{equation*}
   e^{-f(\tilde{q}_{\min}(\boldsymbol{v}))}\le\tilde{\mathcal{L}}(\boldsymbol{v})=\sum_{i=1}^N e^{-f(\tilde{q}_i(\boldsymbol{v}))}\le N e^{-f(\tilde{q}_{\min}(\boldsymbol{v}))}.
\end{equation*}
Rearranging the above equation, we have
\begin{align*}
    &\tilde{q}_{\min}(\boldsymbol{v})\ge g\left(\log\frac{1}{\tilde{\mathcal{L}}(\boldsymbol{v})}\right)\ge  g(f(\tilde{q}_{\min}(\boldsymbol{v}))-\log N)
    \\
    =&g(f(\tilde{q}_{\min}(\boldsymbol{v})))-\log N g'(\xi)=\tilde{q}_{\min}(\boldsymbol{v})-\log N g'(\xi),
\end{align*}
where $\xi\in (f(\tilde{q}_{\min}(\boldsymbol{v}))-\log N,f(\tilde{q}_{\min}(\boldsymbol{v})))$.

Therefore, the surrogate margin can be bounded as 
\begin{align}
\nonumber
   &\frac{\tilde{q}_{\min}(\boldsymbol{v}(t))-\log N g'(\xi)}{\Vert\boldsymbol{v}(t)\Vert^L} \frac{\Vert\boldsymbol{v}(t)\Vert^L}{\rho(t)^L}=\frac{\tilde{q}_{\min}(\boldsymbol{v}(t))-\log N g'(\xi)}{\rho(t)^L} 
   \\\label{eq:equivalence_margin}
   \le& \tilde{\gamma}(t)\le \frac{\tilde{q}_{\min}}{\rho(t)^L}=\gamma(t) \left(\frac{\Vert\boldsymbol{v}(t)\Vert}{\rho(t)}\right)^L.
\end{align}

By the assumption that $\lim_{t\rightarrow\infty}\tilde{\mathcal{L}}=0$, there exists a large enough time $\tilde{t}$, such that, any time $t\ge \tilde{t}$, $g(\xi)>0$. The left side of the above equation \ref{eq:equivalence_margin} can be further rearranged as 
\begin{align*}
    \frac{\tilde{q}_{\min}(\boldsymbol{v}(t))-\log N g'(\xi)}{\Vert\boldsymbol{v}(t)\Vert^L} \frac{\Vert\boldsymbol{v}(t)\Vert^L}{\rho(t)^L}=&\left(\gamma(t)-\frac{\log N g'(\xi)}{\Vert \boldsymbol{v}(t)\Vert^L}\right)\frac{\Vert\boldsymbol{v}(t)\Vert^L}{\rho(t)^L}
    \\
    =&\left(\gamma(t)-\frac{\log N g'(\xi)g(\xi)}{\Vert \boldsymbol{v}(t)\Vert^Lg(\xi)}\right)\frac{\Vert\boldsymbol{v}(t)\Vert^L}{\rho(t)^L}
    \\
    \ge &\left(\gamma(t)-\frac{\log N g'(\xi)\gamma(t)}{g(\xi)}\right)\frac{\Vert\boldsymbol{v}(t)\Vert^L}{\rho(t)^L}
    \\
    =&\left(\gamma(t)-\frac{\log N \gamma(t)}{g(\xi)f'(g(\xi))}\right)\frac{\Vert\boldsymbol{v}(t)\Vert^L}{\rho(t)^L}.
\end{align*}

By the assumption that $\lim_{t\rightarrow\infty} \tilde{\mathcal{L}}(t)=0$, $\underline{\lim}_{t\rightarrow\infty} \xi\ge \underline{\lim}_{t\rightarrow\infty}  \log \frac{1}{\tilde{\mathcal{L}}(t)}-\log N=\infty$, which further indicates $\lim_{t\rightarrow\infty}g(\xi)f'(g(\xi))=\infty$ by the third item of Proposition \ref{lem:property_loss}. 
The proof is completed by taking $t$ to infinity of eq. (\ref{eq:equivalence_margin}).

\end{proof}

By Lemmas \ref{lem:appen_bound_gamma} and \ref{lem:appen_smoothed_normal}, we have that if $\lim_{t\rightarrow\infty}\tilde{\mathcal{L}}(t)\rightarrow0$, $\tilde{\gamma}(t)$ is upper bounded. We then lower bound $\tilde{\gamma}(t)$ by proving Lemma \ref{lem:lower_bound_margin}. As a warm-up, we first calculate the derivative of $\rho^2$.
\begin{lemma}
\label{lem:derivative_rho}
The derivative of $\rho^2$ is as follows:
\begin{equation*}
    \frac{1}{2}\frac{\mathrm{d} \rho(t)^2}{\mathrm{d}t}= L \nu(t)+ \left\langle \boldsymbol{v}(t),\boldsymbol{\beta}^{-\frac{1}{2}}(t)\odot \frac{\mathrm{d}\boldsymbol{\beta}^{-\frac{1}{2}}}{\mathrm{d}t}(t)\odot \boldsymbol{v}(t) \right\rangle,
\end{equation*}
where $\nu(t)$ is defined as $\sum_{i=1}^{N} e^{-f\left(\tilde{q}_{i}(\boldsymbol{v}(t))\right)} f^{\prime}\left(\tilde{q}_{i}(\boldsymbol{v}(t))\right) \tilde{q}_{i}(\boldsymbol{v}(t))$. Furthermore, we have that $\nu(t)>\frac{g\left(\log \frac{1}{\tilde{\mathcal{L}}(\boldsymbol{v}(t))}\right)}{g^{\prime}\left(\log \frac{1}{\tilde{\mathcal{L}}(\boldsymbol{v}(t))}\right)} \tilde{\mathcal{L}}(\boldsymbol{v}(t))$.
\end{lemma}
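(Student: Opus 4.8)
The plan is to differentiate $\rho(t)^2=\sum_{i=1}^p\boldsymbol{\beta}_i^{-1}(t)\,\boldsymbol{v}_i(t)^2$ directly, substitute the flow equation, and then invoke Euler's identity for the $L$-homogeneous functions $\tilde{q}_i$. Since $\boldsymbol{v}$ solves an ODE and $\log\boldsymbol{\beta}$ has Lebesgue-integrable derivative, both $\boldsymbol{v}$ and $\boldsymbol{\beta}$ are absolutely continuous and $\boldsymbol{\beta}$ stays componentwise positive, so $\rho^2$ is absolutely continuous and the identities below hold for a.e.\ $t$. Writing $\boldsymbol{\beta}^{-1}=\boldsymbol{\beta}^{-\frac{1}{2}}\odot\boldsymbol{\beta}^{-\frac{1}{2}}$, so that $\frac{\mathrm{d}}{\mathrm{d}t}\boldsymbol{\beta}^{-1}=2\,\boldsymbol{\beta}^{-\frac{1}{2}}\odot\frac{\mathrm{d}\boldsymbol{\beta}^{-\frac{1}{2}}}{\mathrm{d}t}$, the product rule gives
\[
\frac{1}{2}\frac{\mathrm{d}\rho(t)^2}{\mathrm{d}t}=\Big\langle\boldsymbol{v}(t),\,\boldsymbol{\beta}^{-\frac{1}{2}}(t)\odot\frac{\mathrm{d}\boldsymbol{\beta}^{-\frac{1}{2}}}{\mathrm{d}t}(t)\odot\boldsymbol{v}(t)\Big\rangle+\big\langle\boldsymbol{\beta}^{-1}(t)\odot\boldsymbol{v}(t),\,\frac{\mathrm{d}\boldsymbol{v}(t)}{\mathrm{d}t}\big\rangle.
\]
Into the last inner product I would substitute $\frac{\mathrm{d}\boldsymbol{v}(t)}{\mathrm{d}t}=-\boldsymbol{\beta}(t)\odot\bar{\partial}\tilde{\mathcal{L}}(\boldsymbol{v}(t))$ from Definition \ref{def:approximate_GF}; the two factors of $\boldsymbol{\beta}$ cancel componentwise, and the term collapses to $-\langle\boldsymbol{v}(t),\bar{\partial}\tilde{\mathcal{L}}(\boldsymbol{v}(t))\rangle$.

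It then remains to identify $-\langle\boldsymbol{v},\bar{\partial}\tilde{\mathcal{L}}(\boldsymbol{v})\rangle$ with $L\nu$. By the chain rule for $\tilde{\mathcal{L}}=\sum_i e^{-f(\tilde{q}_i)}$ (legitimate by the Regularity part of Assumption \ref{assum: continuous}), one has $\bar{\partial}\tilde{\mathcal{L}}(\boldsymbol{v})=-\sum_{i\in[N]}e^{-f(\tilde{q}_i(\boldsymbol{v}))}f'(\tilde{q}_i(\boldsymbol{v}))\,\bar{\partial}\tilde{q}_i(\boldsymbol{v})$, while applying the chain rule to $s\mapsto\tilde{q}_i(s\boldsymbol{v})$ together with $L$-homogeneity yields Euler's identity $\langle\bar{\partial}\tilde{q}_i(\boldsymbol{v}),\boldsymbol{v}\rangle=L\,\tilde{q}_i(\boldsymbol{v})$. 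Combining these,
\[
-\big\langle\boldsymbol{v},\bar{\partial}\tilde{\mathcal{L}}(\boldsymbol{v})\big\rangle=L\sum_{i\in[N]}e^{-f(\tilde{q}_i(\boldsymbol{v}))}f'(\tilde{q}_i(\boldsymbol{v}))\,\tilde{q}_i(\boldsymbol{v})=L\,\nu(t),
\]
which, substituted above, is exactly the claimed formula for $\frac{1}{2}\frac{\mathrm{d}\rho^2}{\mathrm{d}t}$.

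For the lower bound on $\nu(t)$, the key estimate is $\tilde{q}_i(\boldsymbol{v})\ge\tilde{q}_{\min}(\boldsymbol{v})\ge g\!\big(\log\tfrac{1}{\tilde{\mathcal{L}}(\boldsymbol{v})}\big)$ for every $i$, which follows from $\tilde{\mathcal{L}}(\boldsymbol{v})=\sum_i e^{-f(\tilde{q}_i(\boldsymbol{v}))}\ge e^{-f(\tilde{q}_{\min}(\boldsymbol{v}))}$ and monotonicity of $g=f^{-1}$; moreover $\tilde{\mathcal{L}}$ is non-increasing along the flow (its time derivative equals $-\|\boldsymbol{\beta}^{\frac{1}{2}}\odot\bar{\partial}\tilde{\mathcal{L}}\|^2\le 0$), so this common lower bound is positive for $t\ge t_0$ by the Separability part of Assumption \ref{assum: continuous}. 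Since $x\mapsto f'(x)x$ is non-decreasing on $(0,\infty)$ by Lemma \ref{lem:property_loss}, each summand obeys $f'(\tilde{q}_i(\boldsymbol{v}))\tilde{q}_i(\boldsymbol{v})\ge f'\!\big(g(\log\tfrac{1}{\tilde{\mathcal{L}}})\big)\,g(\log\tfrac{1}{\tilde{\mathcal{L}}})$; pulling this constant out of the sum and using $\sum_i e^{-f(\tilde{q}_i(\boldsymbol{v}))}=\tilde{\mathcal{L}}(\boldsymbol{v})$ gives $\nu(t)\ge f'\!\big(g(\log\tfrac{1}{\tilde{\mathcal{L}}})\big)\,g(\log\tfrac{1}{\tilde{\mathcal{L}}})\,\tilde{\mathcal{L}}(\boldsymbol{v}(t))$. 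Finally, differentiating $f(g(x))=x$ gives $f'(g(x))=1/g'(x)$, hence $f'(g(x))g(x)=g(x)/g'(x)$, which is precisely the asserted bound; strictness holds because whenever $N\ge 2$ one has $\tilde{\mathcal{L}}(\boldsymbol{v})>e^{-f(\tilde{q}_i(\boldsymbol{v}))}$ for each $i$, hence $g(\log\tfrac{1}{\tilde{\mathcal{L}}})<\tilde{q}_i(\boldsymbol{v})$ strictly. I expect the only delicate point to be the subgradient bookkeeping — verifying that the chain rule and Euler's identity are valid for the merely locally Lipschitz $\tilde{q}_i$ — but this is exactly what the Regularity assumption supplies, and the rest is elementary.
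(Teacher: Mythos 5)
Your proposal is correct and follows essentially the same route as the paper: differentiate $\rho^2$ by the product rule, substitute the flow so the conditioner cancels, invoke Euler's identity for the $L$-homogeneous $\tilde{q}_i$ to identify $-\langle\boldsymbol{v},\bar{\partial}\tilde{\mathcal{L}}\rangle=L\nu$, and lower-bound $\nu$ via $\tilde{q}_i\ge g(\log\frac{1}{\tilde{\mathcal{L}}})$ and the monotonicity of $f'(x)x$. Your added remarks on absolute continuity, positivity of the common lower bound, and the strict-versus-non-strict inequality are harmless refinements of what the paper does implicitly.
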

\begin{proof}

By taking derivative directly, we have that 
\begin{align*}
    \frac{1}{2}\frac{\mathrm{d} \rho(t)^2}{\mathrm{d} t}&= \left\langle\boldsymbol{\beta}^{-\frac{1}{2}}(t)\odot\boldsymbol{v}(t), \frac{\mathrm{d}\boldsymbol{\beta}(t)^{-\frac{1}{2}}\odot\boldsymbol{v}(t)}{\mathrm{d}t}\right\rangle
    \\
    &=\left\langle\boldsymbol{\beta}(t)^{-\frac{1}{2}}\odot\boldsymbol{v}(t), \boldsymbol{v}(t)\odot\frac{\mathrm{d}\boldsymbol{\beta}(t)^{-\frac{1}{2}}}{\mathrm{d}t}+\boldsymbol{\beta}(t)^{-\frac{1}{2}}\odot\frac{\mathrm{d}\boldsymbol{v}(t)}{\mathrm{d}t}\right\rangle
    \\
    &=-\left\langle\boldsymbol{v}(t),\partial^s \tilde{\mathcal{L}}(\boldsymbol{v}(t))\right\rangle+\left\langle\boldsymbol{\beta}^{-\frac{1}{2}}(t)\odot\boldsymbol{v}(t), \boldsymbol{v}(t)\odot\frac{\mathrm{d}\boldsymbol{\beta}(t)^{-\frac{1}{2}}}{\mathrm{d}t}\right\rangle
    \\
    &\overset{(*)}{=}L \sum_{i=1}^{N} e^{-f\left(\tilde{q}_{i}(\boldsymbol{v}(t))\right)}f^{\prime}\left(\tilde{q}_{i}(\boldsymbol{v}(t))\right) \tilde{q}_{i}(\boldsymbol{v}(t))+\left\langle\boldsymbol{\beta}(t)^{-\frac{1}{2}}\odot\boldsymbol{v}(t), \boldsymbol{v}(t)\odot\frac{\mathrm{d}\boldsymbol{\beta}(t)^{-\frac{1}{2}}}{\mathrm{d}t}\right\rangle ,
\end{align*}
where eq. $(*)$ comes from Homogeneity Assumption \ref{assum: continuous}. I. 

Furthermore, since $\tilde{q}_{i}(\boldsymbol{v}(t))\ge\tilde{q}_{\min}(\boldsymbol{v}(t))\ge g(\log \frac{1}{\tilde{\mathcal{L}}(\boldsymbol{v}(t))})$ for all $i\in[N]$ and $f'(x)x$ keeps increasing on $(0,\infty)$ by Proposition \ref{lem:property_loss}, 
\begin{equation*}
    f^{\prime}\left(\tilde{q}_{i}(\boldsymbol{v}(t))\right) \tilde{q}_{i} (\boldsymbol{v}(t))\geq f^{\prime}\left(g\left(\log \frac{1}{\tilde{\mathcal{L}}(\boldsymbol{v}(t))}\right)\right) \cdot g\left(\log \frac{1}{\tilde{\mathcal{L}}(\boldsymbol{v}(t))}\right)=\frac{g\left(\log \frac{1}{\tilde{\mathcal{L}}(\boldsymbol{v}(t))}\right)}{g^{\prime}\left(\log \frac{1}{\tilde{\mathcal{L}}(\boldsymbol{v}(t))}\right)}.
\end{equation*}
Therefore, 
\begin{align*}
    &\sum_{i=1}^{N} e^{-f\left(\tilde{q}_{i}(\boldsymbol{v}(t))\right(\boldsymbol{v}(t)))} f^{\prime}\left(\tilde{q}_{i}(\boldsymbol{v}(t))\right) \tilde{q}_{i}(\boldsymbol{v}(t)) 
    \\
    \geq& \frac{g\left(\log \frac{1}{\tilde{\mathcal{L}}(\boldsymbol{v}(t))}\right)}{g^{\prime}\left(\log \frac{1}{\tilde{\mathcal{L}}(\boldsymbol{v}(t))}\right)}\sum_{i=1}^{N} e^{-f\left(\tilde{q}_{i}(\boldsymbol{v}(t))\right)} =\frac{g\left(\log \frac{1}{\tilde{\mathcal{L}}(\boldsymbol{v}(t))}\right)}{g^{\prime}\left(\log \frac{1}{\tilde{\mathcal{L}}(\boldsymbol{v}(t))}\right)} \tilde{\mathcal{L}}(\boldsymbol{v}(t)).
\end{align*}

The proof is completed.
\end{proof}

With the estimation of $\frac{\mathrm{d} \rho^2}{\mathrm{d}t}$ above, we come to the proof of Lemma \ref{lem:lower_bound_margin}. 

\begin{proof}[Proof of Lemma \ref{lem:lower_bound_margin}]
We first construct time $t_1$ as follows:  by properties of $\boldsymbol{\beta}(t)$ in Definition \ref{def:approximate_GF}, there exists some large enough time $t_1>t_0$, such that for any $t>t_1$,
\begin{gather*}
\sum_{i=1}^p \int_{t_1}^{\infty}\left(\frac{\mathrm{d}\log \left(\boldsymbol{\beta}^{-\frac{1}{2}}_i(t)\right)}{\mathrm{d}t}\right)_{+} \le \min\left\{\frac{1}{2L},\frac{1}{4}\right\},
\\
\sum_{i=1}^p \int_{t_1}^{\infty}\left(\frac{\mathrm{d}\log \left(\boldsymbol{\beta}^{-\frac{1}{2}}_i(t)\right)}{\mathrm{d}t}\right)_{-} \ge -\min\left\{\frac{1}{2L},\frac{1}{4}\right\},
\end{gather*}
and
\begin{equation*}
    \frac{1}{2}\le\Vert \boldsymbol{\beta}^{-\frac{1}{2}}(t) \Vert_{\infty}\le \frac{3}{2}.
\end{equation*}

Taking 
logarithmic derivative to $\tilde{\gamma}(\boldsymbol{v}(t))$, we have
\begin{align*}
&\frac{\mathrm{d}}{\mathrm{d} t} \log \tilde{\gamma}(t)
\\
=&\frac{\mathrm{d}}{\mathrm{d} t}\left(\log \left(g\left(\log \frac{1}{\tilde{\mathcal{L}}(\boldsymbol{v}(t))}\right)\right)-L \log \rho(t)\right) 
\\
=&\frac{g^{\prime}\left(\log \frac{1}{\tilde{\mathcal{L}}(\boldsymbol{v}(t))}\right)}{g\left(\log \frac{1}{\tilde{\mathcal{L}}(\boldsymbol{v}(t))}\right)} \cdot \frac{1}{\tilde{\mathcal{L}}(\boldsymbol{v}(t))} \cdot\left(-\frac{\mathrm{d} \tilde{\mathcal{L}}(\boldsymbol{v}(t))}{\mathrm{d} t}\right)-L^{2} \cdot \frac{\nu(t)}{\rho(t)^{2}}- \frac{L\left\langle \boldsymbol{v}(t),\boldsymbol{\beta}^{-\frac{1}{2}}(t)\odot \frac{\mathrm{d}\boldsymbol{\beta}^{-\frac{1}{2}}}{\mathrm{d}t}(t)\odot \boldsymbol{v}(t) \right\rangle}{\rho(t)^2}.
\end{align*}

Let $A=\frac{g^{\prime}\left(\log \frac{1}{\tilde{\mathcal{L}}(\boldsymbol{v}(t))}\right)}{g\left(\log \frac{1}{\tilde{\mathcal{L}}(\boldsymbol{v}(t))}\right)} \cdot \frac{1}{\tilde{\mathcal{L}}(\boldsymbol{v}(t))} \cdot\left(-\frac{\mathrm{d} \tilde{\mathcal{L}}(\boldsymbol{v}(t))}{\mathrm{d} t}\right)-L^{2} \cdot \frac{\nu(t)}{\rho(t)^{2}}$ and $B=\frac{L\left\langle \boldsymbol{v}(t), \boldsymbol{\beta}^{-\frac{1}{2}}(t)\odot \frac{\mathrm{d}\boldsymbol{\beta}^{-\frac{1}{2}}}{\mathrm{d}t}(t)\odot \boldsymbol{v}(t) \right\rangle}{\rho(t)^2}$. We then have
\begin{align*}
A&  \ge\frac{1}{\nu(t)} \cdot\left(-\frac{\mathrm{d} \tilde{\mathcal{L}}(\boldsymbol{v}(t))}{\mathrm{d} t}\right)-L^{2} \cdot \frac{\nu(t)}{\rho(t)^{2}} 
\\
& \geq \frac{1}{\nu(t)} \cdot\left(-\frac{\mathrm{d} \tilde{\mathcal{L}}(\boldsymbol{v}(t))}{\mathrm{d} t}-\frac{L^{2} \nu(t)^{2}}{\rho(t)^{2}}\right)
\\
&=\frac{1}{\nu(t)}\left(\left\langle \frac{\mathrm{d} \boldsymbol{v}(t)}{\mathrm{d}t},\boldsymbol{\beta}(t)^{-1}\odot\frac{\mathrm{d} \boldsymbol{v}(t)}{\mathrm{d}t} \right\rangle-\left\langle\boldsymbol{\beta}(t)^{-\frac{1}{2}}\odot \frac{\mathrm{d} \boldsymbol{v}(t)}{\mathrm{d}t},\widehat{\boldsymbol{\beta}(t)^{-\frac{1}{2}}\odot \boldsymbol{v}(t)} \right\rangle^2\right) 
\\
\overset{(*)}{\ge}0,
\end{align*}
where inequality $(*)$ comes from Cauchy-Schwarz inequality.

As for $B$, we have that 
\begin{align*}
    B&=-\frac{L\left\langle \boldsymbol{v}(t),\boldsymbol{\beta}^{-\frac{1}{2}}\odot \frac{\mathrm{d}\boldsymbol{\beta}^{-\frac{1}{2}}(t)}{\mathrm{d}t}(t)\odot \boldsymbol{v}(t) \right\rangle}{\rho^2}
    \\
    &=-L\frac{\sum_{i=1}^p v_i^2(t)\beta_i^{-\frac{1}{2}}(t)\frac{\mathrm{d}\beta_i^{-\frac{1}{2}}(t)}{\mathrm{d}t}}{\sum_{i=1}^p v_i^2(t)\beta_i^{-1}(t) }
    \\
    &\ge-L \sum_{i=1}^p \left(\frac{ \mathrm{d}\log \boldsymbol{ \beta}^{-\frac{1}{2}}_i(t)}{\mathrm{d}t}\right)_{+}.
\end{align*}

Combining the estimation of $A$ and $B$, we then have
\begin{align}
\label{eq:derivative_gamma}
    \frac{\mathrm{d}}{\mathrm{d} t} \log \tilde{\gamma} (t)\ge A+B\ge -L \sum_{i=1}^p \left(\frac{ \mathrm{d}\log \boldsymbol{ \beta}^{-\frac{1}{2}}_i(t)}{\mathrm{d}t}\right)_{+},
\end{align}
and integrating both sides leads to
\begin{equation*}
    \log \tilde{\gamma}(t)- \log \tilde{\gamma}(t_1)\ge-\frac{1}{2}. 
\end{equation*}
The proof is completed.

\end{proof}

By the proof of Lemma \ref{lem:lower_bound_margin}, we can then prove  convergence of surrogate margin $\tilde{\gamma}$.

\begin{proof}[Proof of Lemma \ref{lem:conver_gamma}]
By eq. (\ref{eq:derivative_gamma}), $\log \tilde{\gamma}(t)+\int_{t_1}^t L \sum_{i=1}^p \left(\frac{ \mathrm{d}\log \boldsymbol{ \beta}^{-\frac{1}{2}}_i(\tau)}{\mathrm{d}\tau}\right)_{+} \mathrm{d} \tau$ is non-decreasing. Furthermore, since $g(\log \frac{1}{\tilde{\mathcal{L}}(\boldsymbol{v}(t))})\le\tilde{q}_{\min}(\boldsymbol{v}(t))$, we have that 
\begin{equation*}
    \overline{\lim}_{t\rightarrow\infty} \tilde{\gamma}(t)\le \overline{\lim}_{t\rightarrow\infty} \frac{\tilde{q}_{\min}(\boldsymbol{v}(t))}{\rho(t)^L}=\overline{\lim}_{t\rightarrow\infty} \frac{\tilde{q}_{\min}(\boldsymbol{v}(t))}{\Vert\boldsymbol{v}(t)\Vert^L},
\end{equation*}
which by Lemma \ref{lem:appen_bound_gamma}, the last term is bounded.

Therefore,  $\log \tilde{\gamma}(t)+\int_{t_1}^t L \sum_{i=1}^p \left(\frac{ \mathrm{d}\log \boldsymbol{ \beta}^{-\frac{1}{2}}_i(\tau)}{\mathrm{d}t}\right)_{+} \mathrm{d} \tau$ is upper bounded, which further indicates that it converges due to its monotony. The proof is completed by the convergence of $\int_{t_1}^t L \sum_{i=1}^p \left(\frac{ \mathrm{d}\log \boldsymbol{ \beta}^{-\frac{1}{2}}_i(\tau)}{\mathrm{d}\tau}\right)_{+} \mathrm{d} \tau$.

\end{proof}

\subsubsection{Convergence of $\tilde{\mathcal{L}}$ and $\rho$: Proof of Lemma \ref{lem:assymptotic_loss}}

 Here we restate the complete  Lemma \ref{lem:assymptotic_loss}.
\begin{lemma}
\label{lem:estimation_loss_G}
Let $\boldsymbol{v}$ obey an adaptive gradient flow which satisfies Assumption \ref{assum: continuous}. Let $t_1$ be constructed as Lemma \ref{lem:lower_bound_margin}. Then, for any $t\ge t_1$, define 
\begin{equation*}
    G(t)= \int_{\frac{1}{\tilde{\mathcal{L}}(t_1)}}^{ \frac{1}{\tilde{\mathcal{L}}(t)}}  \frac{g^{\prime}\left(\log x\right)^{2}}{g\left(\log x\right)^{2-2/ L}} \cdot  dx.
\end{equation*} Then, for any $t\ge t_1$, the following inequality holds:
\begin{equation*}
    G(t)-G(t_1)\ge\int_{\frac{1}{\tilde{\mathcal{L}}(t_1)}}^{ \frac{1}{\tilde{\mathcal{L}}(t)}}  \frac{g^{\prime}\left(\log x\right)^{2}}{g\left(\log x\right)^{2-2/ L}} \cdot  dx \ge (t-t_1)e^{-\frac{1}{L}}L^{2} \tilde{\gamma}\left(t_{1}\right)^{2/ L}.
\end{equation*}
Consequently, $\lim_{t\rightarrow\infty} \tilde{\mathcal{L}}=0$ and $\lim_{t\rightarrow\infty}\rho=\infty$.
\end{lemma}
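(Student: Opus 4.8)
The plan is to turn the facts already in hand into a closed-form differential inequality for $\tilde{\mathcal{L}}$ alone, integrate it, and then read off the two limits. First I would recall, exactly as in the sketch of Section~\ref{sec:convergence_erm}, that by the chain rule and Cauchy--Schwarz applied to $\langle\boldsymbol{\beta}^{1/2}(t)\odot\bar{\partial}\tilde{\mathcal{L}}(\boldsymbol{v}(t)),\boldsymbol{\beta}^{-1/2}(t)\odot\boldsymbol{v}(t)\rangle$,
\[
\frac{\mathrm{d}\tilde{\mathcal{L}}(\boldsymbol{v}(t))}{\mathrm{d}t}=-\|\boldsymbol{\beta}^{1/2}(t)\odot\bar{\partial}\tilde{\mathcal{L}}(\boldsymbol{v}(t))\|^2\le-\frac{\langle\bar{\partial}\tilde{\mathcal{L}}(\boldsymbol{v}(t)),\boldsymbol{v}(t)\rangle^2}{\rho(t)^2},
\]
and that, by the $L$-homogeneity of the $\tilde{q}_i$, $\langle\bar{\partial}\tilde{\mathcal{L}}(\boldsymbol{v}),\boldsymbol{v}\rangle=-L\nu$ with $\nu(t)>\frac{g(\log(1/\tilde{\mathcal{L}}(\boldsymbol{v}(t))))}{g'(\log(1/\tilde{\mathcal{L}}(\boldsymbol{v}(t))))}\tilde{\mathcal{L}}(\boldsymbol{v}(t))$ by Lemma~\ref{lem:derivative_rho}. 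Hence $\mathrm{d}\tilde{\mathcal{L}}/\mathrm{d}t\le-L^2\nu(t)^2/\rho(t)^2$ for a.e.\ $t\ge t_1$, and the gradient-flow structure already shows $\tilde{\mathcal{L}}$ is non-increasing, so by Assumption~\ref{assum: continuous}.III it stays in the positive range of $g$ for all $t\ge t_1\ge t_0$.

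Next I would substitute the margin lower bound: Lemma~\ref{lem:lower_bound_margin} gives $\tilde{\gamma}(t)=g(\log(1/\tilde{\mathcal{L}}(t)))/\rho(t)^L\ge e^{-1/2}\tilde{\gamma}(t_1)$ for $t\ge t_1$, equivalently $\rho(t)^2\le e^{1/L}g(\log(1/\tilde{\mathcal{L}}(t)))^{2/L}/\tilde{\gamma}(t_1)^{2/L}$. Feeding this and the lower bound on $\nu$ into $\mathrm{d}\tilde{\mathcal{L}}/\mathrm{d}t\le-L^2\nu^2/\rho^2$, every factor except a function of $\tilde{\mathcal{L}}$ cancels, leaving
\[
\frac{\mathrm{d}\tilde{\mathcal{L}}(\boldsymbol{v}(t))}{\mathrm{d}t}\le-L^2e^{-1/L}\tilde{\gamma}(t_1)^{2/L}\,\frac{g(\log(1/\tilde{\mathcal{L}}))^{2-2/L}}{g'(\log(1/\tilde{\mathcal{L}}))^2}\,\tilde{\mathcal{L}}^2.
\]
Writing $u(t)=1/\tilde{\mathcal{L}}(\boldsymbol{v}(t))$, so $\mathrm{d}\tilde{\mathcal{L}}/\mathrm{d}t=-u^{-2}\,\mathrm{d}u/\mathrm{d}t$, this rearranges to $\frac{g'(\log u)^2}{g(\log u)^{2-2/L}}\frac{\mathrm{d}u}{\mathrm{d}t}\ge L^2e^{-1/L}\tilde{\gamma}(t_1)^{2/L}$; in particular $u$ is strictly increasing and absolutely continuous on $[t_1,t]$ (where $\tilde{\mathcal{L}}$ is bounded away from $0$). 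Integrating from $t_1$ to $t$ and substituting $x=u(s)$ turns the left-hand side into $\int_{1/\tilde{\mathcal{L}}(t_1)}^{1/\tilde{\mathcal{L}}(t)}\frac{g'(\log x)^2}{g(\log x)^{2-2/L}}\,\mathrm{d}x=G(t)$, which is exactly the claimed bound $G(t)=G(t)-G(t_1)\ge(t-t_1)e^{-1/L}L^2\tilde{\gamma}(t_1)^{2/L}$ (here $G(t_1)=0$, so the first ``$\ge$'' in the statement is an equality).

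For the consequences, the right-hand side diverges, so $G(t)\to\infty$. Since the integrand $g'(\log x)^2g(\log x)^{-(2-2/L)}$ is continuous and positive on $[1/\tilde{\mathcal{L}}(t_1),\infty)$, its integral over any bounded interval is finite, so $G(t)\to\infty$ forces $u(t)=1/\tilde{\mathcal{L}}(\boldsymbol{v}(t))\to\infty$, i.e.\ $\tilde{\mathcal{L}}(\boldsymbol{v}(t))\to0$. Then $g(\log(1/\tilde{\mathcal{L}}(t)))\to\infty$ by Lemma~\ref{lem:property_loss}, while $\tilde{\gamma}(t)$ is bounded above universally by Lemmas~\ref{lem:appen_bound_gamma} and \ref{lem:appen_smoothed_normal}; from $\rho(t)^L=g(\log(1/\tilde{\mathcal{L}}(t)))/\tilde{\gamma}(t)$ we conclude $\rho(t)\to\infty$, and since $\rho(t)=\Theta(\|\boldsymbol{v}(t)\|)$, also $\|\boldsymbol{v}(t)\|\to\infty$.

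The hard part will be the bookkeeping of the second step — checking that Cauchy--Schwarz, the lower bound on $\nu$, and the upper bound on $\rho$ combine so that the non-$\tilde{\mathcal{L}}$ quantities cancel cleanly and the constant $e^{-1/L}L^2\tilde{\gamma}(t_1)^{2/L}$ comes out exactly — together with the measure-theoretic point that $\boldsymbol{v}(t)$ is only absolutely continuous, so the identities hold a.e.\ and the substitution $x=u(s)$ must be justified through the absolute continuity and monotonicity of $u$ rather than a formal change of variables.
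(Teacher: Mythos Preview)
Your proposal is correct and follows essentially the same approach as the paper's own proof: both derive $-\mathrm{d}\tilde{\mathcal{L}}/\mathrm{d}t\ge L^2\nu(t)^2/\rho(t)^2$ via Cauchy--Schwarz, plug in the lower bound on $\nu$ from Lemma~\ref{lem:derivative_rho} and the upper bound on $\rho$ from Lemma~\ref{lem:lower_bound_margin}, rewrite the result as a differential inequality in $u=1/\tilde{\mathcal{L}}$, and integrate to obtain the stated bound on $G(t)$. Your additional care about a.e.\ validity and the change of variables, and your explicit argument for $\rho(t)\to\infty$ via the upper bound on $\tilde{\gamma}$, are sound refinements of exactly the same argument.
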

\begin{proof}

\begin{equation*}
    -\frac{\mathrm{d} \tilde{\mathcal{L}}(\boldsymbol{v}(t))}{\mathrm{d} t}=\left\|\boldsymbol{\beta}^{-\frac{1}{2}}(t)\odot\frac{\mathrm{d} \boldsymbol{v}(t)}{\mathrm{d} t}\right\|_{2}^{2} \geq\langle\boldsymbol{\beta}(t)^{-\frac{1}{2}}\odot \frac{\mathrm{d} \boldsymbol{v}(t)}{\mathrm{d}t},\widehat{\boldsymbol{\beta}^{-\frac{1}{2}}(t)\odot \boldsymbol{v}(t)} \rangle^{2}= L^{2} \cdot \frac{\nu(t)^{2}}{\rho(t)^{2}}.
\end{equation*}

Furthermore, we have that 
\begin{align*}
    L^{2} \frac{\nu(t)^{2}}{\rho(t)^{2}}
    &\ge 
    L^{2} \cdot\left(\frac{g\left(\log \frac{1}{\tilde{\mathcal{L}}(\boldsymbol{v}(t))}\right)}{g^{\prime}\left(\log \frac{1}{\tilde{\mathcal{L}}(\boldsymbol{v}(t))}\right)} \tilde{\mathcal{L}}(\boldsymbol{v}(t))\right)^{2} \cdot\left(\frac{\tilde{\gamma}(t)}{g\left(\log \frac{1}{\tilde{\mathcal{L}}(\boldsymbol{v}(t))}\right)}\right)^{2/ L}
    \\
    &\ge L^{2}\cdot\left(\frac{g\left(\log \frac{1}{\tilde{\mathcal{L}}(\boldsymbol{v}(t))}\right)}{g^{\prime}\left(\log \frac{1}{\tilde{\mathcal{L}}(\boldsymbol{v}(t))}\right)} \tilde{\mathcal{L}}(\boldsymbol{v}(t))\right)^{2} \cdot\left(\frac{e^{-1/2}\tilde{\gamma}(t_1)}{g\left(\log \frac{1}{\tilde{\mathcal{L}}(\boldsymbol{v}(t))}\right)}\right)^{2/ L}.
\end{align*}

By simple calculation, we have that
\begin{equation*}
    \frac{g^{\prime}\left(\log \frac{1}{\tilde{\mathcal{L}}(\boldsymbol{v}(t))}\right)^{2}}{g\left(\log \frac{1}{\tilde{\mathcal{L}}(\boldsymbol{v}(t))}\right)^{2-2/ L}} \cdot \frac{\mathrm{d}}{\mathrm{d} t} \frac{1}{\tilde{\mathcal{L}}(\boldsymbol{v}(t))} \geq e^{-\frac{1}{L}}L^{2} \tilde{\gamma}\left(t_{1}\right)^{2/ L}.
\end{equation*}

Taking integration to both sides, we have 
\begin{equation*}
    \int_{\frac{1}{\tilde{\mathcal{L}}(t_1)}}^{ \frac{1}{\tilde{\mathcal{L}}(t)}}  \frac{g^{\prime}\left(\log x\right)^{2}}{g\left(\log x\right)^{2-2/ L}} \cdot  dx \ge (t-t_1)e^{-\frac{1}{L}}L^{2} \tilde{\gamma}\left(t_{1}\right)^{2/ L}.
\end{equation*}

Since $\lim_{t\rightarrow\infty}(t-t_1)e^{-\frac{1}{L}}L^{2} \tilde{\gamma}\left(t_{1}\right)^{2/ L}=\infty $, we have that $\lim_{t\rightarrow\infty}  \frac{1}{\tilde{\mathcal{L}}(t)}=\infty$.

The proof is completed.
\end{proof}

\subsubsection{Verification of KKT Condition}
\label{sec:KKT_def}

In Lemma \ref{lem:construction_kkt}, we omit the construction of coefficients $\lambda_i$ to highlight the key factors of coefficients $(\mathcal{O}(1-\langle\hat{v}(t),\widehat{-\partial^s \tilde{\mathcal{L}}(t)}\rangle),\mathcal{O}\left(\log \frac{1}{\tilde{\mathcal{L}}(\boldsymbol{v}(t))}\right))$. We restate Lemma \ref{lem:construction_kkt} and provide the detailed construction as follows:

\begin{lemma}[Lemma \ref{lem:construction_kkt} restated]
\label{lem: construction of KKT_appen}
Let $\boldsymbol{v}$ obey adaptive gradient flow $\mathcal{F}$ with empirical loss $\tilde{\mathcal{L}}$ satisfying Assumption \ref{assum: continuous}. Let time $t_1$ be constructed as Lemma \ref{lem:lower_bound_margin}. Then, define coefficients in Definition \ref{def:general_KKT} as $\lambda_i(t)=\tilde{q}_{\min }(\boldsymbol{v}(t))^{1-2 / L} \Vert \boldsymbol v(t)\Vert \cdot e^{-f\left(\tilde{q}_{i}(\boldsymbol{v}(t))\right)} f^{\prime}\left(\tilde{q}_{i}(\boldsymbol{v}(t
))\right) /\Vert\partial^s \tilde{\mathcal{L}}(\boldsymbol{v}(t)) \Vert_{2}$. Then, for any time $t\ge t_1$, $\tilde{\boldsymbol{v}}(t)=\tilde{q}_{\min}(\boldsymbol{v}(t))^{-\frac{1}{L}}\boldsymbol{v}(t)$ is an $(\varepsilon(t),\delta(t))$ KKT point of $L^2$ max-margin problem $(P)$ defined in Theorem \ref{thm:approximate_flow} , where $\varepsilon(t)$, $\delta(t)$ are defined as follows:
\begin{align*}
     \varepsilon(t)&=8e^{\frac{1}{L}}\frac{1}{\tilde{\gamma}(t_1)^{2/L}}(1-\cos(\boldsymbol{\theta}(t)))
     \\
     \delta(t)&=\frac{2e^{-1+\frac{1}{L}}K N}{L}K^{ \log _{2} \left(2^{L+1}e^{\frac{1}{2}} B_{1} / \tilde{\gamma}(t_1)\right)} \tilde{\gamma}(t_1)^{-2/L} \frac{1}{\log\frac{1}{\tilde{\mathcal{L}}(\boldsymbol{v}(t))}},
\end{align*}
where $\cos(\boldsymbol{\theta}(t))$  is defined as the cosine of angle between  $ \boldsymbol{v}(t)$ and $-\partial^s\tilde{\mathcal{L}}(\boldsymbol{v}(t))$, i.e., 
\begin{equation*}
    \cos(\boldsymbol{\theta}(t))=\left\langle\hat{\boldsymbol{v}}(t), -\widehat{\partial^s\tilde{\mathcal{L}}(\boldsymbol{v}(t))}\right\rangle.
\end{equation*}

\end{lemma}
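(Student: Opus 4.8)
The plan is to verify the two defining inequalities of an $(\varepsilon(t),\delta(t))$-KKT point of $(\tilde{P})$ (the $L^2$ max-margin problem with constraints $g_i(\boldsymbol{v})=1-\tilde{q}_i(\boldsymbol{v})\le 0$) at the candidate $\tilde{\boldsymbol{v}}(t)=\tilde{q}_{\min}(\boldsymbol{v}(t))^{-1/L}\boldsymbol{v}(t)$ with the stated multipliers $\lambda_i(t)$. Feasibility is immediate from $L$-homogeneity, since $\tilde{q}_i(\tilde{\boldsymbol{v}}(t))=\tilde{q}_i(\boldsymbol{v}(t))/\tilde{q}_{\min}(\boldsymbol{v}(t))\ge 1$. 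For the stationarity condition $\boldsymbol{v}_0-\sum_i\lambda_i\bar{\partial}\tilde{q}_i(\boldsymbol{v}_0)=0$, I would use the $(L-1)$-homogeneity of $\bar{\partial}\tilde{q}_i$ together with the identity $-\bar{\partial}\tilde{\mathcal{L}}(\boldsymbol{v})=\sum_i e^{-f(\tilde{q}_i(\boldsymbol{v}))}f'(\tilde{q}_i(\boldsymbol{v}))\,\bar{\partial}\tilde{q}_i(\boldsymbol{v})$: plugging in the given $\lambda_i(t)$ and simplifying the powers of $\tilde{q}_{\min}$ collapses everything to $\sum_i\lambda_i(t)\,\bar{\partial}\tilde{q}_i(\tilde{\boldsymbol{v}}(t))=\gamma(t)^{-1/L}\bigl(-\widehat{\bar{\partial}\tilde{\mathcal{L}}(\boldsymbol{v}(t))}\bigr)$ with $\gamma$ the normalized margin, while at the same time $\tilde{\boldsymbol{v}}(t)=\gamma(t)^{-1/L}\hat{\boldsymbol{v}}(t)$. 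Hence the stationarity residual has norm $\gamma(t)^{-1/L}\bigl\|\hat{\boldsymbol{v}}(t)+\widehat{\bar{\partial}\tilde{\mathcal{L}}(\boldsymbol{v}(t))}\bigr\|=\gamma(t)^{-1/L}\sqrt{2\,(1-\cos\theta(t))}$, and it remains only to bound $\gamma(t)^{-1/L}$: Lemma~\ref{lem:lower_bound_margin} gives $\tilde{\gamma}(t)\ge e^{-1/2}\tilde{\gamma}(t_1)$, and for $t\ge t_1$ the choice of $t_1$ forces $\rho(t)=\Theta(\|\boldsymbol{v}(t)\|)$ with explicit constants, so $\gamma(t)\ge 2^{-L}e^{-1/2}\tilde{\gamma}(t_1)$; assembling the constants yields $\varepsilon(t)$.

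The complementary-slackness requirement $\lambda_i(t)g_i(\tilde{\boldsymbol{v}}(t))\ge-\delta(t)$, i.e. $0\le\lambda_i(t)\bigl(\tilde{q}_i(\tilde{\boldsymbol{v}}(t))-1\bigr)\le\delta(t)$ for every $i$, is where I expect the main difficulty. Expanding and using $\tilde{q}_{\min}^{-2/L}\|\boldsymbol{v}\|^2=\gamma^{-2/L}$ together with the Cauchy--Schwarz lower bound $\|\bar{\partial}\tilde{\mathcal{L}}(\boldsymbol{v})\|\ge L\nu(t)/\|\boldsymbol{v}\|$ (where $\nu(t)=\sum_i e^{-f(\tilde{q}_i)}f'(\tilde{q}_i)\tilde{q}_i$ as in Lemma~\ref{lem:derivative_rho}), this reduces to $\lambda_i(t)(\tilde{q}_i(\tilde{\boldsymbol{v}}(t))-1)\le\frac{\gamma(t)^{-2/L}}{L}\cdot\frac{e^{-f(\tilde{q}_i)}f'(\tilde{q}_i)(\tilde{q}_i-\tilde{q}_{\min})}{\nu(t)}$, so the whole lemma hinges on bounding the numerator uniformly over $i$ --- both when $\tilde{q}_i$ is close to $\tilde{q}_{\min}$ (where $\tilde{q}_i-\tilde{q}_{\min}$ is small) and when it is far (where $e^{-f(\tilde{q}_i)}$ is tiny). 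I would obtain this from the quantitative loss estimates of Lemma~\ref{lem:property_loss}: a dyadic telescoping of $g(f(\tilde{q}_i))-g(f(\tilde{q}_{\min}))$ over $[f(\tilde{q}_{\min}),f(\tilde{q}_i)]$ using $(f^{-1})'(x)\le K(f^{-1})'(x/2)$ (which is what produces the factor $K^{\log_2(\cdot)}$ appearing in $\delta(t)$), absorbing $f'(\tilde{q}_i)g'(f(\tilde{q}_{\min}))\le 1$, and then combining $s e^{-s}\le e^{-1}$ with $f(\tilde{q}_i)\ge f(\tilde{q}_{\min})\ge\log\frac{1}{\tilde{\mathcal{L}}}$ and the lower bound $\nu(t)\ge\frac{1}{2K}\tilde{\mathcal{L}}\log\frac{1}{\tilde{\mathcal{L}}}$ (from $f'(x)x\ge f(x)/(2K)$ and $\tilde{\mathcal{L}}\ge e^{-f(\tilde{q}_{\min})}$) to extract the decaying factor $1/\log\frac{1}{\tilde{\mathcal{L}}(\boldsymbol{v}(t))}$.

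Finally, controlling the exponent of the telescoping factor and doing the constant bookkeeping. Since $\tilde{q}_i(\boldsymbol{v})\le B_1\|\boldsymbol{v}\|^L$ with $B_1$ the uniform bound on the normalized $\tilde{q}_i$ from Lemma~\ref{lem:appen_bound_gamma}, and $\tilde{q}_{\min}(\boldsymbol{v})=\gamma(\boldsymbol{v})\|\boldsymbol{v}\|^L$, the same margin/norm estimates as above give $\tilde{q}_i/\tilde{q}_{\min}\le B_1/\gamma(t)\le 2^{L+1}e^{1/2}B_1/\tilde{\gamma}(t_1)$, and since $f=\Theta(\mathrm{id})$ the ratio $f(\tilde{q}_i)/f(\tilde{q}_{\min})$ is controlled by essentially the same quantity, so the telescoping factor is at most $K^{\log_2(2^{L+1}e^{1/2}B_1/\tilde{\gamma}(t_1))}$ --- exactly the constant in $\delta(t)$. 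Combining this with $\gamma(t)^{-2/L}\le 4e^{1/L}\tilde{\gamma}(t_1)^{-2/L}$ and a factor $N$ from comparing $\nu(t)$ to the total loss $\tilde{\mathcal{L}}\le N e^{-f(\tilde{q}_{\min})}$ delivers $\delta(t)$; and since $\log\frac{1}{\tilde{\mathcal{L}}(\boldsymbol{v}(t))}\to\infty$ by Lemma~\ref{lem:assymptotic_loss} while $1-\cos\theta(t)\le 2$, both $\varepsilon(t)$ and $\delta(t)$ are bounded, which is all that is needed downstream. The crux is keeping $K^{\log_2(\cdot)}$ finite, i.e. ensuring $\tilde{q}_{\max}/\tilde{q}_{\min}$ stays bounded along the flow --- and that boundedness is precisely what the surrogate-margin lower bound (Lemma~\ref{lem:lower_bound_margin}) provides.
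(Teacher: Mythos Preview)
Your proposal is correct and matches the paper's proof in structure and in all essential steps: feasibility from $L$-homogeneity, the stationarity residual collapsing to $\gamma(t)^{-1/L}\bigl\|\hat{\boldsymbol{v}}+\widehat{\bar\partial\tilde{\mathcal L}}\bigr\|$, the lower bound $\|\bar\partial\tilde{\mathcal L}\|\ge L\nu/\|\boldsymbol v\|$ for the slackness term, the $K$-telescoping from Lemma~\ref{lem:property_loss} to control $f'(\tilde q_i)/f'(\xi_i)$, and the surrogate-margin bound (Lemma~\ref{lem:lower_bound_margin}) to keep $\tilde q_{\max}/\tilde q_{\min}$---hence the exponent of $K$---uniformly bounded. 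The paper applies the mean-value theorem to $f$ rather than to $g=f^{-1}$ (your ``$f'(\tilde q_i)g'(f(\tilde q_{\min}))\le 1$'' step is not literally valid, since that ratio equals $f'(\tilde q_i)/f'(\tilde q_{\min})$ which need not be $\le 1$; this is precisely why the $K$-telescoping is needed), but the two formulations are equivalent and your overall plan goes through.
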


\begin{proof}
We verify the definition of approximate KKT point directly.

\begin{align*}
    &\quad\left\|\tilde{\boldsymbol{v}}(t)-\sum_{i=1}^{N} \lambda_{i}(t) \partial^s \tilde{q}_i\left(\tilde{\boldsymbol{v}}(t)\right)\right\|_{2}^{2}
    \\
    &=\left\|\tilde{q}_{\min}(\boldsymbol{v}(t))^{- 1/ L} \Vert \boldsymbol{v}(t)\Vert\hat{\boldsymbol{v}}(t)-\frac{\tilde{q}_{\min}(\boldsymbol{v}(t))^{- 1/ L} \Vert \boldsymbol{v}(t)\Vert e^{-f\left(\tilde{q}_{i}(\boldsymbol{v}(t))\right)} f^{\prime}\left(\tilde{q}_{i}(\boldsymbol{v}(t))\right)\partial^s \tilde{q}_i\left(\boldsymbol{v}(t)\right)} {\Vert\partial^s \tilde{\mathcal{L}} (\boldsymbol{v}(t))\Vert_{2}} \right\|_{2}^{2}
    \\
    &=\tilde{q}_{\min}(\boldsymbol{v}(t))^{- 2/ L} \Vert \boldsymbol {v}(t)\Vert^2\left\|\hat{\boldsymbol{v}}(t)+\frac{\partial^s \tilde{\mathcal{L}}(\boldsymbol{v}(t))} {\Vert\partial^s \tilde{\mathcal{L}} (\boldsymbol{v}(t))\Vert_{2}} \right\|_{2}^{2}
    \\
    &=\tilde{q}_{\min}(\boldsymbol{v}(t))^{-2 / L} \Vert \boldsymbol v(t)\Vert^{2}\left(2-2 \left\langle\hat{\boldsymbol{v}}(t), \widehat{-\partial^s\tilde{\mathcal{L}}(\boldsymbol{v}(t))}\right\rangle\right) 
    \\
    &\leq 2g\left(\log \left(\frac{1}{\tilde{\mathcal{L}}(\boldsymbol{v}(t))}\right)\right)^{-2 / L} \Vert \boldsymbol{v}(t)\Vert^{2}(1-\cos(\boldsymbol{\theta}(t)))
    \\
    &\leq 8g\left(\log \left(\frac{1}{\tilde{\mathcal{L}}(\boldsymbol{v}(t))}\right)\right)^{-2 / L} \rho(t)^{2}(1-\cos(\boldsymbol{\theta}(t)))
    \\
    &= 8\frac{1}{\tilde{\gamma}(t)^{2/L}}(1-\cos(\boldsymbol{\theta}(t)))\le 8\frac{1}{\tilde{\gamma}(t)^{2/L}}(1-\cos(\boldsymbol{\theta}(t)))
    \\
    &\le  8e^{\frac{1}{L}}\frac{1}{\tilde{\gamma}(t_1)^{2/L}}(1-\cos(\boldsymbol{\theta}(t))).
\end{align*}

As for $\delta$, we have that
\begin{align*}
    &\sum_{i=1}^N \lambda_i\left( \tilde{q}_i(\tilde{\boldsymbol{v}}(t))-1\right)
    \\
    =&\frac{\sum_{i=1}^N \tilde{q}_{\min}^{1-2 / L}(\boldsymbol{v}(t)) \Vert \boldsymbol{v}(t)\Vert \cdot e^{-f\left(\tilde{q}_{i}(\boldsymbol{v}(t)\right)} f^{\prime}\left(\tilde{q}_{i}(\boldsymbol{v}(t)\right) \left(\frac{\tilde{q}_i(\boldsymbol{v}(t))}{\tilde{q}_{\min}(\boldsymbol{v}(t))}-1\right)}{\Vert\partial^s \tilde{\mathcal{L}}(\boldsymbol{v}(t) \Vert_{2}}
    \\
    =&\frac{\sum_{i=1}^N \tilde{q}_{\min}(\boldsymbol{v}(t))^{-2 / L} \Vert \boldsymbol {v}(t)\Vert \cdot e^{-f\left(\tilde{q}_{i}(\boldsymbol{v}(t))\right)} f^{\prime}\left(\tilde{q}_{i}(\boldsymbol{v}(t))\right) \left(\tilde{q}_i(\boldsymbol{v}(t))-\tilde{q}_{\min}(\boldsymbol{v}(t))\right)}{\Vert\partial^s \tilde{\mathcal{L}}(\boldsymbol{v}(t)) \Vert_{2}}
    \\
    \overset{(*)}{\le}& \sum_{i=1}^N \frac{2K}{L} \tilde{q}_{\min}(\boldsymbol{v}(t))^{-2 / L} \Vert \boldsymbol {v}(t)\Vert^2 e^{f\left(\tilde{q}_{\min}(\boldsymbol{v}(t))\right)-f\left(\tilde{q}_{i}(\boldsymbol{v}(t))\right)} f^{\prime}\left(\tilde{q}_{i}(\boldsymbol{v}(t))\right)
    \\\cdot&  \left(\tilde{q}_i(\boldsymbol{v}(t))-\tilde{q}_{\min}(\boldsymbol{v}(t))\right)\frac{1}{\log\frac{1}{\tilde{\mathcal{L}}(\boldsymbol{v}(t))}},
    \end{align*}
    where inequality $(*)$ is because 
\begin{align*}
    \Vert \partial^s \tilde{\mathcal{L}}(\boldsymbol{v}(t))\Vert\ge& \left\langle\partial^s \tilde{\mathcal{L}}(\boldsymbol{v}(t)), \hat{\boldsymbol{v}}(t)\right\rangle=\frac{L\nu(t)}{\Vert \boldsymbol{v}(t)\Vert}
    \\
    \ge&\frac{L}{\Vert \boldsymbol{v}(t)\Vert}\frac{g\left(\log \frac{1}{\tilde{\mathcal{L}}(\boldsymbol{v}(t))}\right)}{g^{\prime}\left(\log \frac{1}{\tilde{\mathcal{L}}(\boldsymbol{v}(t))}\right)} \tilde{\mathcal{L}}(\boldsymbol{v}(t)) \geq\frac{L}{\Vert \boldsymbol{v}(t)\Vert} \frac{1}{2 K} \log \frac{1}{\tilde{\mathcal{L}}(\boldsymbol{v}(t))} \cdot \tilde{\mathcal{L}}(\boldsymbol{v}(t)) 
    \\
    \geq& \frac{L}{\Vert \boldsymbol{v}(t)\Vert}\frac{1}{2 K}  e^{-f\left(\tilde{q}_{\min}(\boldsymbol{v}(t))\right)}\log \frac{1}{\tilde{\mathcal{L}}(\boldsymbol{v}(t))}.
\end{align*}
Bounding $\Vert \boldsymbol{v}\Vert$ using $\rho$ and applying the definition of surrogate margin $\tilde{\gamma}$, we further have

    \begin{align*}
    &\sum_{i=1}^N \lambda_i\left( \tilde{q}_i(\tilde{\boldsymbol{v}}(t))-1\right)
    \\
    \le & \sum_{i=1}^N \frac{8K}{L} \tilde{q}_{\min}^{-2 / L}(\boldsymbol{v}(t)) \rho(t)^2  e^{f\left(\tilde{q}_{\min}(\boldsymbol{v}(t))\right)-f\left(\tilde{q}_{i}(\boldsymbol{v}(t))\right)} f^{\prime}\left(\tilde{q}_{i}(\boldsymbol{v}(t))\right)
    \left(\tilde{q}_i(\boldsymbol{v}(t))-\tilde{q}_{\min}(\boldsymbol{v}(t)\right)\frac{1}{\log\frac{1}{\tilde{\mathcal{L}}(\boldsymbol{v}(t))}}
    \\
    \le &\sum_{i=1}^N \frac{8K}{L} \tilde{\gamma}(t)^{-2/L} \cdot e^{f\left(\tilde{q}_{\min}(\boldsymbol{v}(t))\right)-f\left(\tilde{q}_{i}(\boldsymbol{v}(t))\right)} f^{\prime}\left(\tilde{q}_{i}(\boldsymbol{v}(t))\right)
    \left(\tilde{q}_i(\boldsymbol{v}(t))-\tilde{q}_{\min}(\boldsymbol{v}(t))\right)\frac{1}{\log\frac{1}{\tilde{\mathcal{L}}(\boldsymbol{v}(t))}}
    \\
    \le &\sum_{i=1}^N \frac{8e^{\frac{1}{L}}K}{L} \tilde{\gamma}(t_1)^{-2/L} \cdot e^{f\left(\tilde{q}_{\min}(\boldsymbol{v}(t))\right)-f\left(\tilde{q}_{i}(\boldsymbol{v}(t))\right)} f^{\prime}\left(\tilde{q}_{i}(\boldsymbol{v}(t))\right) \left(\tilde{q}_i(\boldsymbol{v}(t))-\tilde{q}_{\min}(\boldsymbol{v}(t))\right)\frac{1}{\log\frac{1}{\tilde{\mathcal{L}}(\boldsymbol{v}(t))}}.
\end{align*}

Since $f(\tilde{q}_{\min}(\boldsymbol{v}(t))(\boldsymbol{v}(t)))-f(\tilde{q}_i(\boldsymbol{v}(t)))=(\tilde{q}_{\min}(\boldsymbol{v}(t))-\tilde{q}_i(\boldsymbol{v}(t)))f'(\xi_i(t))$ ($\xi_i(t)\in [\tilde{q}_{\min}(\boldsymbol{v}(t)),\tilde{q}_i(\boldsymbol{v}(t))]$ is guaranteed by Mean Value Theorem), we then have 
\begin{align*}
    &\sum_{i=1}^N \lambda_i\left( \tilde{q}_i(\tilde{\boldsymbol{v}}(t))-1\right)
    \\
    \le&
    \sum_{i=1}^N \frac{2e^{\frac{1}{L}}K}{L} \tilde{\gamma}(t_1)^{-2/L} \cdot e^{f\left(\tilde{q}_{\min}(\boldsymbol{v}(t))\right)-f\left(\tilde{q}_{i}(\boldsymbol{v}(t))\right)} f^{\prime}\left(\tilde{q}_{i}(\boldsymbol{v}(t))\right)
    \\
    & \left(\tilde{q}_i(\boldsymbol{v}(t))-\tilde{q}_{\min}(\boldsymbol{v}(t))\right)\frac{1}{\log\frac{1}{\tilde{\mathcal{L}}(\boldsymbol{v}(t))}}
    \\
    =&\sum_{i=1}^N \frac{2e^{\frac{1}{L}}K}{L} \tilde{\gamma}(t_1)^{-2/L} \cdot e^{(\tilde{q}_{\min}(\boldsymbol{v}(t))-\tilde{q}_i(\boldsymbol{v}(t)))f'(\xi_i(t))} f^{\prime}\left(\tilde{q}_{i}(\boldsymbol{v}(t))\right) 
    \\
    &\left(\tilde{q}_i(\boldsymbol{v}(t))-\tilde{q}_{\min}(\boldsymbol{v}(t))\right)\frac{1}{\log\frac{1}{\tilde{\mathcal{L}}(\boldsymbol{v}(t))}}
    \\
    \le &\sum_{i=1}^N \frac{2e^{\frac{1}{L}}K}{L} \tilde{\gamma}(t_1)^{-2/L} \cdot e^{(\tilde{q}_{\min}(\boldsymbol{v}(t))-\tilde{q}_i(\boldsymbol{v}(t)))f'(\xi_i(t))}K^{\left\lceil\log _{2}\left(\tilde{q}_{i}(\boldsymbol{v}(t)) / \xi_{i}(t)\right)\right\rceil} f^{\prime}\left(\xi_{i}(t)\right)
    \\
    &\left(\tilde{q}_i(\boldsymbol{v}(t))-\tilde{q}_{\min}(\boldsymbol{v}(t))\right)\frac{1}{\log\frac{1}{\tilde{\mathcal{L}}(\boldsymbol{v}(t))}}
    \\
    \overset{(*)}{\le}&\sum_{i=1}^N \frac{2e^{\frac{1}{L}}K}{L} \tilde{\gamma}(t_1)^{-2/L} \cdot e^{(\tilde{q}_{\min}(\boldsymbol{v}(t))-\tilde{q}_i(\boldsymbol{v}(t)))f'(\xi_i(t))}K^{ \log _{2}\left(2e^{\frac{1}{2}} B_{1} / \tilde{\gamma}(t_1)\right)} f^{\prime}\left(\xi_{i}(t)\right)
    \\& \left(\tilde{q}_i(\boldsymbol{v}(t))-\tilde{q}_{\min}(\boldsymbol{v}(t))\right)\frac{1}{\log\frac{1}{\tilde{\mathcal{L}}(\boldsymbol{v}(t))}}
    \\
    \overset{(**)}{\le}&\frac{2e^{-1+\frac{1}{L}}K N}{L}K^{ \log _{2} \left(2^{L+1}e^{\frac{1}{2}} B_{1} / \tilde{\gamma}(t_1)\right)} \tilde{\gamma}(t_1)^{-2/L} \frac{1}{\log\frac{1}{\tilde{\mathcal{L}}(\boldsymbol{v}(t))}},
\end{align*}
where eq. $(*)$ is because 
\begin{align*}
    \left\lceil\log _{2}\left(\tilde{q}_{i}(\boldsymbol{v}(t)) / \xi_{i}(t)\right)\right\rceil &\leq \log _{2}\left(\tilde{q}_{i}(\boldsymbol{v}(t)) / \tilde{q}_{\min}(\boldsymbol{v}(t))\right)+1
    \\&\leq \log_{2}\left(\tilde{q}_{i}(\boldsymbol{v}(t))\Vert\boldsymbol{v}(t)\Vert^L /\Vert\boldsymbol{v}(t)\Vert^L \tilde{q}_{\min}(\boldsymbol{v}(t))\right)+1
    \\
    &\leq\log _{2}\left(2^{L+1} B_{1} \rho(t)^L / \tilde{q}_{\min}\right) \le \log _{2}\left(2^{L+1} B_{1} / \tilde{\gamma}(t)\right)
    \\
    &\leq \log _{2}\left(2^{L+1}e^{\frac{1}{2}} B_{1} / \tilde{\gamma}(t_1)\right),
\end{align*}
which further leads to 
\begin{equation*}
    f^{\prime}\left(\tilde{q}_{i}(\boldsymbol{v}(t))\right) \le K^{ \log _{2}\left(2e^{\frac{1}{2}} B_{1} / \tilde{\gamma}(t_1)\right)} f^{\prime}\left(\xi_{i}(t)\right),
\end{equation*}
and eq. $(**)$ is because $e^{-x}x\le e^{-1}$.

The proof is completed.
\end{proof}

By Lemma \ref{lem:assymptotic_loss}, we have proved that $\lim_{t\rightarrow\infty} \tilde{\mathcal{L}}(t)=0$, which leads to $\lim_{t\rightarrow\infty} \delta(t)=0$. As stated in the main text, we only need to bound $\varepsilon(t)$, or equivalently $(1-\cos(\boldsymbol{\theta}(t)))$. 

Before moving forward, we introduce an equivalent proposition of that  $(1-\cos(\boldsymbol{\theta}))$ goes to zero.
\begin{lemma}
\label{lem:equivalence of b}
If there exists a time sequence $\{t_i\}_{i=1}^\infty$, such that, $\lim_{i\rightarrow\infty} t_i=\infty$ and  $\lim_{i\rightarrow\infty}\left\langle\widehat{\boldsymbol{\beta}^{-\frac{1}{2}}\odot\boldsymbol{v}},   \widehat{\boldsymbol{\beta}^{\frac{1}{2}}\odot\bar{\partial \tilde{\mathcal{L}}}}\right\rangle=-1$, then
\begin{equation*}
    \lim_{i\rightarrow \infty} \cos(\boldsymbol{\theta}(t_i))= 1.
\end{equation*}
\end{lemma}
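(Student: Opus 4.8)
The plan is to exploit the single structural property of an adaptive gradient flow that $\boldsymbol{\beta}(t)\to\mathbf{1}_p$ (Definition \ref{def:approximate_GF}), so that the $\boldsymbol{\beta}$-weighted $L^2$ normalizations appearing in the hypothesis become asymptotically indistinguishable from the plain $L^2$ normalizations that define $\cos(\boldsymbol{\theta}(t))$. Abbreviate $\boldsymbol{a}_i=\hat{\boldsymbol{v}}(t_i)$ and $\boldsymbol{b}_i=-\widehat{\bar{\partial}\tilde{\mathcal{L}}(\boldsymbol{v}(t_i))}$, both unit vectors, so that $\cos(\boldsymbol{\theta}(t_i))=\langle\boldsymbol{a}_i,\boldsymbol{b}_i\rangle$; write $\boldsymbol{D}_i=\mathrm{diag}(\boldsymbol{\beta}^{-1/2}(t_i))$ and $\boldsymbol{E}_i=\mathrm{diag}(\boldsymbol{\beta}^{1/2}(t_i))$, both of which converge to the identity since $\boldsymbol{\beta}(t_i)\to\mathbf{1}_p$. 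Because $L^2$-normalization is invariant under multiplication by a positive scalar and flips sign under multiplication by $-1$, one has $\widehat{\boldsymbol{\beta}^{-1/2}(t_i)\odot\boldsymbol{v}(t_i)}=\widehat{\boldsymbol{D}_i\boldsymbol{a}_i}$ and $\widehat{\boldsymbol{\beta}^{1/2}(t_i)\odot\bar{\partial}\tilde{\mathcal{L}}(\boldsymbol{v}(t_i))}=-\widehat{\boldsymbol{E}_i\boldsymbol{b}_i}$, so the hypothesis $\langle\widehat{\boldsymbol{\beta}^{-1/2}\odot\boldsymbol{v}},\widehat{\boldsymbol{\beta}^{1/2}\odot\bar{\partial}\tilde{\mathcal{L}}}\rangle\to-1$ is (the two minus signs cancelling) exactly $\langle\widehat{\boldsymbol{D}_i\boldsymbol{a}_i},\widehat{\boldsymbol{E}_i\boldsymbol{b}_i}\rangle\to1$.

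First I would record an elementary normalization-perturbation estimate: if $\boldsymbol{D}$ is diagonal with $\|\boldsymbol{D}-\boldsymbol{I}\|_{\mathrm{op}}\le\epsilon<1$ and $\boldsymbol{a}$ is a unit vector, then $\|\boldsymbol{D}\boldsymbol{a}-\boldsymbol{a}\|\le\epsilon$, hence $\big|\,\|\boldsymbol{D}\boldsymbol{a}\|-1\,\big|\le\epsilon$, and therefore $\|\widehat{\boldsymbol{D}\boldsymbol{a}}-\boldsymbol{a}\|\le\|\widehat{\boldsymbol{D}\boldsymbol{a}}-\boldsymbol{D}\boldsymbol{a}\|+\|\boldsymbol{D}\boldsymbol{a}-\boldsymbol{a}\|=\big|1-\|\boldsymbol{D}\boldsymbol{a}\|\big|+\|\boldsymbol{D}\boldsymbol{a}-\boldsymbol{a}\|\le2\epsilon$. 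Applying this with $\epsilon_i:=\max\{\|\boldsymbol{D}_i-\boldsymbol{I}\|_{\mathrm{op}},\|\boldsymbol{E}_i-\boldsymbol{I}\|_{\mathrm{op}}\}$, which tends to $0$, yields $\|\widehat{\boldsymbol{D}_i\boldsymbol{a}_i}-\boldsymbol{a}_i\|\to0$ and $\|\widehat{\boldsymbol{E}_i\boldsymbol{b}_i}-\boldsymbol{b}_i\|\to0$.

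Next I would use the identity $\|\boldsymbol{u}-\boldsymbol{w}\|^2=2-2\langle\boldsymbol{u},\boldsymbol{w}\rangle$, valid for unit vectors, to turn the hypothesis into a distance statement: $\langle\widehat{\boldsymbol{D}_i\boldsymbol{a}_i},\widehat{\boldsymbol{E}_i\boldsymbol{b}_i}\rangle\to1$ means $\|\widehat{\boldsymbol{D}_i\boldsymbol{a}_i}-\widehat{\boldsymbol{E}_i\boldsymbol{b}_i}\|\to0$. The triangle inequality then gives $\|\boldsymbol{a}_i-\boldsymbol{b}_i\|\le\|\boldsymbol{a}_i-\widehat{\boldsymbol{D}_i\boldsymbol{a}_i}\|+\|\widehat{\boldsymbol{D}_i\boldsymbol{a}_i}-\widehat{\boldsymbol{E}_i\boldsymbol{b}_i}\|+\|\widehat{\boldsymbol{E}_i\boldsymbol{b}_i}-\boldsymbol{b}_i\|\to0$, and applying the same identity once more gives $\cos(\boldsymbol{\theta}(t_i))=\langle\boldsymbol{a}_i,\boldsymbol{b}_i\rangle=1-\frac{1}{2}\|\boldsymbol{a}_i-\boldsymbol{b}_i\|^2\to1$, which is the assertion.

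For completeness I would check that the normalizations are well defined along the sequence, i.e.\ that $\boldsymbol{v}(t_i)\ne\boldsymbol{0}$ and $\bar{\partial}\tilde{\mathcal{L}}(\boldsymbol{v}(t_i))\ne\boldsymbol{0}$ for $i$ large: the former holds because $\|\boldsymbol{v}(t)\|\to\infty$ (Lemma \ref{lem:assymptotic_loss}), and the latter because $\langle\bar{\partial}\tilde{\mathcal{L}}(\boldsymbol{v}(t)),\boldsymbol{v}(t)\rangle=-L\nu(t)<0$ (Lemma \ref{lem:derivative_rho}, as $\nu(t)>0$). There is no genuine obstacle in this lemma: its whole content is that $\boldsymbol{\beta}(t)\to\mathbf{1}_p$ forces the weighted and unweighted direction cosines to coincide in the limit, and the only point requiring a little care is making the weighted-to-unweighted comparison quantitative and uniform over unit vectors, which is exactly what the $2\epsilon$ bound above provides.
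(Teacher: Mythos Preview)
Your proposal is correct and follows essentially the same approach as the paper: both arguments rest entirely on $\boldsymbol{\beta}(t)\to\mathbf{1}_p$ forcing the $\boldsymbol{\beta}$-weighted normalizations $\widehat{\boldsymbol{\beta}^{-1/2}\odot\boldsymbol{v}}$ and $\widehat{\boldsymbol{\beta}^{1/2}\odot\bar{\partial}\tilde{\mathcal{L}}}$ to converge to the unweighted ones $\hat{\boldsymbol{v}}$ and $\widehat{\bar{\partial}\tilde{\mathcal{L}}}$, then on a triangle-inequality/bilinear-expansion step. The paper expands $-\cos(\boldsymbol{\theta})$ bilinearly into four inner products and shows three of them vanish; you instead convert everything to distances via $\|\boldsymbol{u}-\boldsymbol{w}\|^2=2-2\langle\boldsymbol{u},\boldsymbol{w}\rangle$ and use the triangle inequality directly, together with your explicit $2\epsilon$ perturbation bound for $\|\widehat{\boldsymbol{D}\boldsymbol{a}}-\boldsymbol{a}\|$---these are equivalent routes to the same conclusion, with yours being slightly more quantitative and also checking well-definedness of the normalizations, which the paper omits.
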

\begin{proof}
\begin{align*}
    &-\cos(\boldsymbol{\theta}(t))
    \\
    =&\left\langle\widehat{\boldsymbol{\beta}(t)^{-\frac{1}{2}}\odot\boldsymbol{v}(t)}+\left( \hat{\boldsymbol{v}}(t)-\widehat{\boldsymbol{\beta}(t)^{-\frac{1}{2}}\odot\boldsymbol{v}}(t)\right),\widehat{\boldsymbol{\beta}(t)^{\frac{1}{2}}\odot\partial^s \tilde{\mathcal{L}}(t)}+\left(\widehat{\partial^s \tilde{\mathcal{L}}(t)}-\widehat{\boldsymbol{\beta}(t)^{\frac{1}{2}}\odot\partial^s \tilde{\mathcal{L}}(t)}\right) \right\rangle
    \\
    =&\left\langle\widehat{\boldsymbol{\beta}(t)^{-\frac{1}{2}}\odot\boldsymbol{v}(t)},\widehat{\boldsymbol{\beta}(t)^{\frac{1}{2}}\odot\partial^s \tilde{\mathcal{L}}(t)} \right\rangle
    +
    \left\langle\widehat{\boldsymbol{\beta}(t)^{-\frac{1}{2}}\odot\boldsymbol{v}(t)},\left(\widehat{\partial^s \tilde{\mathcal{L}}(t)}-\widehat{\boldsymbol{\beta}(t)^{\frac{1}{2}}\odot\partial^s \tilde{\mathcal{L}}(t) }\right) \right\rangle
    \\
    +&
    \left\langle\left( \hat{\boldsymbol{v}}(t)-\widehat{\boldsymbol{\beta}(t)^{-\frac{1}{2}}\odot\boldsymbol{v}(t)}\right),\widehat{\boldsymbol{\beta}(t)^{\frac{1}{2}}\odot\partial^s \tilde{\mathcal{L}}(t)} \right\rangle
    \\
    +&
    \left\langle\left( \hat{\boldsymbol{v}}(t)-\widehat{\boldsymbol{\beta}(t)^{-\frac{1}{2}}\odot\boldsymbol{v}(t)}\right),\left(\widehat{\partial^s \tilde{\mathcal{L}}(t)}-\widehat{\boldsymbol{\beta}(t)^{\frac{1}{2}}\odot\partial^s \tilde{\mathcal{L}}(t)}\right) \right\rangle.
\end{align*}

Furthermore,
\begin{equation*}
    \lim_{i\rightarrow\infty} \left\Vert\widehat{\partial^s \tilde{\mathcal{L}}(t)}-\widehat{\boldsymbol{\beta}(t)^{\frac{1}{2}}\odot\partial^s \tilde{\mathcal{L}}(t)}\right\Vert=2-2\lim_{i\rightarrow\infty}\left\langle\widehat{\partial^s \tilde{\mathcal{L}}(t)},\widehat{\boldsymbol{\beta}(t)^{\frac{1}{2}}\odot\partial^s \tilde{\mathcal{L}}(t)}\right\rangle=0.
\end{equation*}

Following the same routine, we have $\lim_{i\rightarrow\infty} \left\Vert\hat{\boldsymbol{v}}(t)-\widehat{\boldsymbol{\beta}(t)^{-\frac{1}{2}}\odot\boldsymbol{v}(t)}\right\Vert=0$.

The proof is completed.
\end{proof}

Let $\cos(\tilde{\boldsymbol{\theta}}(t))=-\left\langle\widehat{\boldsymbol{\beta}(t)^{-\frac{1}{2}}\odot\boldsymbol{v}(t)},   \widehat{\boldsymbol{\beta}(t)^{\frac{1}{2}}\odot\partial^s \tilde{\mathcal{L}}(t)}\right\rangle=\left\langle\widehat{\boldsymbol{\beta}(t)^{-\frac{1}{2}}\odot\boldsymbol{v}(t)},   \widehat{\boldsymbol{\beta}(t)^{-\frac{1}{2}}\odot\frac{\mathrm{d} \boldsymbol{v}(t)}{\mathrm{d}t}}\right\rangle$. Then we only need to bound $1-\cos(\tilde{\boldsymbol{\theta}}(t))$.

In Section \ref{sec:convergent_KKT}, we briefly state the methodology of proving the convergence of  $\varepsilon(t)$. We will make it more clear here: In the proof Lemma \ref{lem:lower_bound_margin}, we show that sum of the derivative of $g(\log \frac{1}{\tilde{\mathcal{L}}(t)})$ and $\frac{1}{\rho(t)^L}$ with $\boldsymbol{\beta}(t)$ fixed can be bounded as
\begin{align}
\nonumber
    &\frac{\mathrm{d}}{\mathrm{d} t}\left(\log \left(g\left(\log \frac{1}{\tilde{\mathcal{L}}(t)}\right)\right)\right)-L \left\langle\partial^s_{\boldsymbol{v}}\log \rho(t),\frac{\mathrm{d}\boldsymbol{v}(t)}{\mathrm{d}t}\right\rangle
\\
\nonumber
=&\frac{g^{\prime}\left(\log \frac{1}{\tilde{\mathcal{L}}(t)}\right)}{g\left(\log \frac{1}{\tilde{\mathcal{L}}(t)}\right)} \cdot \frac{1}{\tilde{\mathcal{L}}(t)} \cdot\left(-\frac{\mathrm{d} \tilde{\mathcal{L}}(t)}{\mathrm{d} t}\right)-L^{2} \cdot \frac{\nu(t)}{\rho(t)^{2}}
\\
\nonumber
\ge &\frac{1}{\nu(t)}\left(\left\langle \frac{\mathrm{d} \boldsymbol{v}(t)}{\mathrm{d}t},\boldsymbol{\beta}(t)^{-1}\odot\frac{\mathrm{d} \boldsymbol{v}(t)}{\mathrm{d}t} \right\rangle-\left\langle\boldsymbol{\beta}(t)^{-\frac{1}{2}}\odot \frac{\mathrm{d} \boldsymbol{v}(t)}{\mathrm{d}t},\widehat{\boldsymbol{\beta}(t)^{-\frac{1}{2}}\odot \boldsymbol{v}(t)} \right\rangle^2\right) 
\\
\nonumber
=&\frac{\left\langle\boldsymbol{\beta}(t)^{-\frac{1}{2}}\odot \frac{\mathrm{d} \boldsymbol{v}(t)}{\mathrm{d}t},\widehat{\boldsymbol{\beta}(t)^{-\frac{1}{2}}\odot \boldsymbol{v}(t)} \right\rangle^2}{\nu(t)}(\cos(\tilde{\boldsymbol{\theta}}(t))^{-2}-1)
\\
\nonumber
=&\frac{L^2\nu(t)}{\rho(t)^2}(\cos(\tilde{\boldsymbol{\theta}}(t))^{-2}-1)
\\
\label{eq:reason_tilde_rho}
=&L \left\langle\partial^s_{\boldsymbol{v}}\log \rho(t), \frac{\mathrm{d}\boldsymbol{v}}{\mathrm{d}t}\right\rangle(\cos(\tilde{\boldsymbol{\theta}}(t))^{-2}-1).
\end{align}

Eq. (\ref{eq:reason_tilde_rho}) indicates that, intuitively, $(\cos(\tilde{\boldsymbol{\theta}}(t))^{-2}-1)$ can be bounded by the division of change of $\tilde{\gamma}$ to change of parameter part $\boldsymbol{v}$ in $\rho$. For this purpose, we define $\tilde{\rho}$ in Section \ref{sec:convergent_KKT} to describe the accumulated change of $\boldsymbol{v}$ in $\rho$. The following lemma describe the basic property of $\tilde{\rho}$ and its relationship with $\rho$.

\begin{lemma}
\label{lem:property_of_tilde_rho}
(1). The derivative of $\tilde\rho^2$ is as follows: 
\begin{equation*}
    \frac{1}{2}\frac{d \tilde\rho(t)^2}{\mathrm{d}t}= \langle\partial^s \tilde{\mathcal{L}}(\boldsymbol{v}(t)),\boldsymbol{v}(t) \rangle=L \nu(t);
\end{equation*}

(2). $\rho(t) $ satisfies that, for any $t^2>t^1>t_1$,
\begin{equation*}
    \rho(t_2)\ge e^{-\frac{1}{2}}\rho(t_1);
\end{equation*}

(3). For any $t>t_1$, $\sqrt{1-\frac{e^{\frac{1}{2}}}{2}}\le\frac{\rho(t)}{\tilde{\rho}(t)}\le \sqrt{1+\frac{e^{\frac{1}{2}}}{2}}$.
\end{lemma}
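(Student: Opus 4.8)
All three parts reduce to Lemma~\ref{lem:derivative_rho} together with the choice of $t_1$ in Lemma~\ref{lem:lower_bound_margin}, and I would establish them in the listed order. For (1), the quickest route is to differentiate the definition of $\tilde\rho$ directly: since $\tilde\rho(t)^2 = \|\boldsymbol{\beta}^{-1/2}(t)\odot\boldsymbol{v}(t)\|^2 - 2\int_{t_1}^t \langle \boldsymbol{v}(\tau), \boldsymbol{\beta}^{-1/2}(\tau)\odot\frac{\mathrm{d}\boldsymbol{\beta}^{-1/2}}{\mathrm{d}\tau}(\tau)\odot\boldsymbol{v}(\tau)\rangle\,\mathrm{d}\tau = \rho(t)^2 - 2\int_{t_1}^t(\cdots)\,\mathrm{d}\tau$, differentiating in $t$ and substituting the expression for $\frac{\mathrm{d}\rho^2}{\mathrm{d}t}$ from Lemma~\ref{lem:derivative_rho} makes the $\boldsymbol{\beta}$-cross term cancel exactly, leaving $\frac12\frac{\mathrm{d}\tilde\rho(t)^2}{\mathrm{d}t} = L\nu(t)$; by $L$-homogeneity of the $\tilde q_i$ this equals $-\langle\bar{\partial}\tilde{\mathcal{L}}(\boldsymbol{v}(t)),\boldsymbol{v}(t)\rangle$, the inner-product form in the statement.

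For (2), I would write $\frac{\mathrm{d}}{\mathrm{d}t}\log\rho(t) = \frac{1}{2\rho(t)^2}\frac{\mathrm{d}\rho(t)^2}{\mathrm{d}t}$ and plug in Lemma~\ref{lem:derivative_rho}. The summand $L\nu(t)/\rho(t)^2$ is nonnegative — for $t\ge t_1$ one has $\tilde q_{\min}(\boldsymbol{v}(t))>0$ by separability together with monotonicity of $\tilde{\mathcal{L}}$, so $\nu(t)>0$ — while the remaining $\boldsymbol{\beta}$-cross term divided by $\rho^2$ is bounded below by $-\sum_{i=1}^p\left(\frac{\mathrm{d}\log\boldsymbol{\beta}_i^{-1/2}(t)}{\mathrm{d}t}\right)_{+}$, which is exactly the estimate of the quantity $B$ carried out in the proof of Lemma~\ref{lem:lower_bound_margin}. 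Hence $\frac{\mathrm{d}}{\mathrm{d}t}\log\rho(t)\ge -\sum_{i=1}^p\left(\frac{\mathrm{d}\log\boldsymbol{\beta}_i^{-1/2}(t)}{\mathrm{d}t}\right)_{+}$, and integrating this from $s$ to $t$ for any $t_1\le s\le t$ and invoking the defining property $\sum_i\int_{t_1}^\infty\left(\frac{\mathrm{d}\log\boldsymbol{\beta}_i^{-1/2}}{\mathrm{d}\tau}\right)_{+}\mathrm{d}\tau\le\frac12$ of $t_1$ gives $\log\rho(t)-\log\rho(s)\ge-\frac12$, i.e. $\rho(t)\ge e^{-1/2}\rho(s)$.

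For (3), subtracting the two defining formulas gives $\rho(t)^2-\tilde\rho(t)^2 = 2\int_{t_1}^t\langle\boldsymbol{v}(\tau),\boldsymbol{\beta}^{-1/2}(\tau)\odot\frac{\mathrm{d}\boldsymbol{\beta}^{-1/2}}{\mathrm{d}\tau}(\tau)\odot\boldsymbol{v}(\tau)\rangle\,\mathrm{d}\tau$, whose integrand is at most $\rho(\tau)^2\sum_i\left|\frac{\mathrm{d}\log\boldsymbol{\beta}_i^{-1/2}(\tau)}{\mathrm{d}\tau}\right|$ in absolute value. I would then feed in part (2): on $[t_1,t]$ it gives $\rho(\tau)\le e^{1/2}\rho(t)$, hence $|\rho(t)^2-\tilde\rho(t)^2|\le 2e\,\rho(t)^2\int_{t_1}^\infty\sum_i\left|\frac{\mathrm{d}\log\boldsymbol{\beta}_i^{-1/2}}{\mathrm{d}\tau}\right|\mathrm{d}\tau$. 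Since $\frac{\mathrm{d}\log\boldsymbol{\beta}}{\mathrm{d}t}$ is Lebesgue integrable, the tail integral can be made as small as desired by enlarging $t_1$; choosing it below a fixed absolute constant makes $|\rho(t)^2-\tilde\rho(t)^2|$ a controlled fraction of $\rho(t)^2$, hence also of $\tilde\rho(t)^2$, and rearranging yields $\sqrt{1-\tfrac{e^{1/2}}{2}}\le \rho(t)/\tilde\rho(t)\le\sqrt{1+\tfrac{e^{1/2}}{2}}$ for $t>t_1$.

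The only place requiring care is the constant bookkeeping in (3): one must decide whether to compare $\rho^2-\tilde\rho^2$ against $\rho^2$ or against $\tilde\rho^2$, and then fix the smallness threshold for the tail of $\int\sum_i|\mathrm{d}\log\boldsymbol{\beta}_i^{-1/2}/\mathrm{d}\tau|$ so that both the upper and lower bound on $\rho/\tilde\rho$ come out with the stated constant $e^{1/2}/2$ — this imposes a mildly stronger requirement on $t_1$ than the one in Lemma~\ref{lem:lower_bound_margin}, which is harmless since $t_1$ only needs to be ``large enough''. Everything else is a routine chain-rule computation.
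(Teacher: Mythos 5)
Your overall route is the same as the paper's: (1) is the cancellation of the $\boldsymbol{\beta}$-cross term against Lemma~\ref{lem:derivative_rho} (and you correctly note the sign, $L\nu(t)=-\langle\bar{\partial}\tilde{\mathcal{L}}(\boldsymbol{v}),\boldsymbol{v}\rangle$, which the lemma statement itself garbles); (2) integrates a lower bound on $\frac{\mathrm{d}}{\mathrm{d}t}\log\rho$; (3) bounds the accumulated cross-term integral by a small multiple of $\rho(t)^2$ using (2) and the integrability of $\frac{\mathrm{d}\log\boldsymbol{\beta}}{\mathrm{d}t}$ (the paper pulls out $\rho(\tau_0)^2$ via the First Mean Value Theorem where you use a sup bound — equivalent in effect).

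One inequality in your part (2) is wrong as stated. The cross term $\rho(t)^{-2}\langle\boldsymbol{v},\boldsymbol{\beta}^{-1/2}\odot\frac{\mathrm{d}\boldsymbol{\beta}^{-1/2}}{\mathrm{d}t}\odot\boldsymbol{v}\rangle$ is a weighted average of the quantities $x_i=\frac{\mathrm{d}\log\boldsymbol{\beta}_i^{-1/2}}{\mathrm{d}t}$, so its correct lower bound is $\sum_i(x_i)_-$ (the sum of negative parts, as in the paper), not $-\sum_i(x_i)_+$. The latter is what falls out of the \emph{upper}-bound estimate of the quantity $B$ in Lemma~\ref{lem:lower_bound_margin}, and it genuinely fails as a lower bound when some $x_i<0$ (e.g.\ $x=(-1,0)$ with all weight on the first coordinate gives average $-1$, while $-\sum_i(x_i)_+=0$); this matters for RMSProp, where $\boldsymbol{\beta}$ is not monotone. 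The conclusion survives because the construction of $t_1$ also enforces $\sum_i\int_{t_1}^{\infty}(x_i)_-\,\mathrm{d}t\ge-\min\{\frac{1}{2L},\frac14\}\ge-\frac12$, so you should just swap in the negative-part bound. Your remark that (3) may need $t_1$ enlarged is harmless; in the paper the threshold $\frac14$ built into $t_1$ already suffices.
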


\begin{proof}
(1). can be directly verified similar to Lemma \ref{lem:derivative_rho}. As for (2).,
since $\frac{d \log \rho(t)}{\mathrm{d}t}=\frac{1}{2}\frac{\frac{d \rho(t)^2}{\mathrm{d}t}}{\rho(t)^2}$, we have that
\begin{align*}
    \frac{d \log \rho(t)}{\mathrm{d}t}&=\frac{L \sum_{i=1}^{N} e^{-f\left(\tilde{q}_{i}(\boldsymbol{v}(t))\right)}f^{\prime}\left(\tilde{q}_{i}(\boldsymbol{v}(t))\right) \tilde{q}_{i}(\boldsymbol{v}(t))+\left\langle\boldsymbol{\beta}(t)^{-\frac{1}{2}}\odot\boldsymbol{v}(t), \boldsymbol{v}(t)\odot\frac{\mathrm{d}\boldsymbol{\beta}(t)^{-\frac{1}{2}}}{\mathrm{d}t}\right\rangle}{\rho(t)^2}
    \\
    &\ge \frac{\left\langle\boldsymbol{\beta}(t)^{-\frac{1}{2}}\odot\boldsymbol{v}(t), \boldsymbol{v}(t)\odot\frac{\mathrm{d}\boldsymbol{\beta}(t)^{-\frac{1}{2}}}{\mathrm{d}t}\right\rangle}{\rho(t)^2}
    \\
    &\ge  \frac{\sum_{i=1}^p \boldsymbol{\beta}_i(t)^{-\frac{1}{2}}(t)\frac{\mathrm{d}\boldsymbol{\beta}_i^{-\frac{1}{2}}(t)}{\mathrm{d}t}\boldsymbol{v}_{i}(t)^2}{\sum_{i=1}^p \boldsymbol{\beta}_i^{-1}(t) \boldsymbol{v}_{i}(t)^2}
    \\
    &=  \frac{\sum_{i=1}^p \boldsymbol{\beta}_i^{\frac{1}{2}}(t)\frac{\mathrm{d}\boldsymbol{\beta}_i^{-\frac{1}{2}}(t)}{\mathrm{d}t}\boldsymbol{\beta}_i^{-1}(t)\boldsymbol{v}_{i}(t)^2}{\sum_{i=1}^p \boldsymbol{\beta}_i^{-1}(t) \boldsymbol{v}_{i}(t)^2}
    \\
    &= \frac{\sum_{i=1}^p \frac{d \log\boldsymbol{\beta}_i^{-\frac{1}{2}}(t)}{\mathrm{d}t}\boldsymbol{\beta}_i^{-1}(t)\boldsymbol{v}_{i}(t)^2}{\sum_{i=1}^p \boldsymbol{\beta}_i^{-1}(t) \boldsymbol{v}_{i}(t)^2}
    \\
    &\ge \sum_{i=1}^p \left(\frac{d \log\boldsymbol{\beta}_i^{-\frac{1}{2}}(t)}{\mathrm{d}t}\right)_{-}.
\end{align*}
The proof for (2). is completed by integration.

As for (3)., by expanding $\langle \boldsymbol{v}(t),\boldsymbol{\beta}^{-\frac{1}{2}}(t)\odot \frac{\mathrm{d}\boldsymbol{\beta}^{-\frac{1}{2}}(t)}{\mathrm{d} t}\odot \boldsymbol{v}(t) \rangle$, we have that 
\begin{align*}
    \tilde{\rho}(t)^2=&\int_{t_1}^t \langle \boldsymbol{v}(\tau),\boldsymbol{\beta}^{-\frac{1}{2}}(\tau)\odot \frac{\mathrm{d}\boldsymbol{\beta}^{-\frac{1}{2}}(\tau)}{\mathrm{d} \tau}\odot \boldsymbol{v}(\tau) \rangle \mathrm{d} \tau\\
    =&\sum_{i=1}^p\int_{t_1}^t  \boldsymbol{v}_i^2(\tau)\boldsymbol{\beta}_i^{-1}(\tau)\boldsymbol{\beta}_i^{\frac{1}{2}}(\tau) \frac{\mathrm{d}\boldsymbol{\beta}_i^{-\frac{1}{2}}(\tau)}{\mathrm{d}t}  \mathrm{d} \tau
    \\
    \le &\sum_{i=1}^p\int_{t_1}^t  \boldsymbol{\beta}_i^{-1}(\tau)\boldsymbol{v}_i^2(\tau)\left(\boldsymbol{\beta}_i^{\frac{1}{2}}(\tau) \frac{\mathrm{d}\boldsymbol{\beta}_i^{-\frac{1}{2}}(\tau)}{\mathrm{d}t} \right)_{+}\mathrm{d} \tau
    \\
    \le&\int_{t_1}^t \left(\sum_{i=1}^p \boldsymbol{\beta}_i^{-1}\boldsymbol{v}_i^2(\tau)\right)\left(\sum_{i=1}^p\left(\boldsymbol{\beta}_i^{\frac{1}{2}}(\tau) \frac{\mathrm{d}\boldsymbol{\beta}_i^{-\frac{1}{2}}(\tau)}{\mathrm{d}t}\right)_{+}\right) \mathrm{d} \tau
    \\
    \overset{(*)}{=}& \Vert \boldsymbol{\beta}(\tau_0)^{-\frac{1}{2}}\odot\boldsymbol{v}(\tau_0)\Vert^2 \int_{t_1}^t \left(\sum_{i=1}^p\left(\boldsymbol{\beta}_i^{\frac{1}{2}}(\tau) \frac{\mathrm{d}\boldsymbol{\beta}_i^{-\frac{1}{2}}(\tau)}{\mathrm{d}t}\right)_{+}\right) \mathrm{d} \tau
    \\
    \le&\frac{1}{4}\Vert \boldsymbol{\beta}(\tau_0)^{-\frac{1}{2}}\odot\boldsymbol{v}(\tau_0)\Vert^2 
    \\
    \le&\frac{e^{\frac{1}{2}}}{4} \Vert\boldsymbol{\beta}(t)^{-\frac{1}{2}}\odot\boldsymbol{v}(t)\Vert^2,
\end{align*}
where $\tau_0$ in eq. $(*)$ is in $[t_1,t]$ and First Mean Value Theorem guarantees its existence.

The second inequality follows by lower bound $\int_{t_1}^t \langle \boldsymbol{v}(\tau),\boldsymbol{\beta}^{-\frac{1}{2}}(\tau)\odot \frac{\mathrm{d}\boldsymbol{\beta}^{-\frac{1}{2}}(\tau)}{\mathrm{d} \tau}\odot \boldsymbol{v}(\tau) \rangle \mathrm{d} \tau$ similarly.

The proof is completed.
\end{proof}

With Lemma \ref{lem:property_of_tilde_rho}, we integrate the analysis of the change of $\tilde{\gamma}$ into the following Lemma.

\begin{lemma}
\label{lem: b_inte}
For any time $t_3>t_2\ge t_1$, 
\begin{align*}
    \int_{t_{2}}^{t_{3}}\left(\cos(\tilde{\boldsymbol{\theta}}(\tau))^{-2}-1\right) \cdot \frac{\mathrm{d}}{d \tau} \log \tilde{\rho}(\tau)  d \tau \leq& \frac{1}{L\left(1-\frac{e^{\frac{1}{2}}}{2}\right)} \log \frac{\tilde{\gamma}\left(t_{3}\right)}{\tilde{\gamma}\left(t_{2}\right)}
    \\
        +&\frac{L}{\left(1-\frac{e^{\frac{1}{2}}}{2}\right)}\sum_{i=1}^p\int_{t_2}^{t_3} \sum_{i=1}^p \left(\frac{ \mathrm{d}\log \boldsymbol{\beta}^{-\frac{1}{2}}_i(t)}{\mathrm{d}t}\right)_{+} \mathrm{d}t.
\end{align*}
\end{lemma}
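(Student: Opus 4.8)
The plan is to convert the pointwise chain of (in)equalities that ends in eq.~(\ref{eq:reason_tilde_rho}) into the asserted integral bound, by dividing through by a nonnegative factor, integrating, and recognizing that the main term is a telescoping derivative of $\log\tilde{\gamma}$. First I would recast the pointwise estimate in terms of $\tilde{\rho}$. By Lemma~\ref{lem:property_of_tilde_rho}(1), $\frac{\mathrm{d}}{\mathrm{d}\tau}\log\tilde{\rho}(\tau)=\frac{L\nu(\tau)}{\tilde{\rho}(\tau)^2}$, and for $\tau\ge t_1$ this is strictly positive: $\tilde{q}_i(\boldsymbol{v}(\tau))\ge\tilde{q}_{\min}(\boldsymbol{v}(\tau))>0$ for $\tau\ge t_1\ge t_0$ (by Assumption~\ref{assum: continuous}.III and the monotonicity of $\tilde{\mathcal{L}}$), so that $f'(x)x>0$ on $(0,\infty)$ forces $\nu(\tau)>0$. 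Hence $\frac{L^2\nu(\tau)}{\rho(\tau)^2}=L\,\frac{\tilde{\rho}(\tau)^2}{\rho(\tau)^2}\,\frac{\mathrm{d}}{\mathrm{d}\tau}\log\tilde{\rho}(\tau)$, and multiplying the last line of eq.~(\ref{eq:reason_tilde_rho}) by the nonnegative quantity $\cos(\tilde{\boldsymbol{\theta}}(\tau))^{-2}-1$ gives, for a.e.\ $\tau\ge t_1$,
\[
\frac{\mathrm{d}}{\mathrm{d}\tau}\log g\!\left(\log\tfrac{1}{\tilde{\mathcal{L}}(\tau)}\right)-L\left\langle\bar{\partial}_{\boldsymbol{v}}\log\rho(\tau),\tfrac{\mathrm{d}\boldsymbol{v}}{\mathrm{d}\tau}\right\rangle \ \ge\ L\,\frac{\tilde{\rho}(\tau)^2}{\rho(\tau)^2}\left(\cos(\tilde{\boldsymbol{\theta}}(\tau))^{-2}-1\right)\frac{\mathrm{d}}{\mathrm{d}\tau}\log\tilde{\rho}(\tau).
\]
Since the left side is itself $\ge 0$ (again by eq.~(\ref{eq:reason_tilde_rho})) and $\rho(\tau)^2/\tilde{\rho}(\tau)^2\le\left(1-\tfrac{e^{1/2}}{2}\right)^{-1}$ by Lemma~\ref{lem:property_of_tilde_rho}(3), I may divide through to obtain the pointwise bound
\[
\left(\cos(\tilde{\boldsymbol{\theta}}(\tau))^{-2}-1\right)\frac{\mathrm{d}}{\mathrm{d}\tau}\log\tilde{\rho}(\tau) \ \le\ \frac{1}{L\left(1-\tfrac{e^{1/2}}{2}\right)}\left[\frac{\mathrm{d}}{\mathrm{d}\tau}\log g\!\left(\log\tfrac{1}{\tilde{\mathcal{L}}(\tau)}\right)-L\left\langle\bar{\partial}_{\boldsymbol{v}}\log\rho(\tau),\tfrac{\mathrm{d}\boldsymbol{v}}{\mathrm{d}\tau}\right\rangle\right].
\]

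Next I would integrate this over $[t_2,t_3]$. Splitting $\frac{\mathrm{d}}{\mathrm{d}\tau}\log\rho$ into a change-of-$\boldsymbol{v}$ part $\left\langle\bar{\partial}_{\boldsymbol{v}}\log\rho,\tfrac{\mathrm{d}\boldsymbol{v}}{\mathrm{d}\tau}\right\rangle$ and a change-of-$\boldsymbol{\beta}$ part $\rho^{-2}\left\langle\boldsymbol{v},\boldsymbol{\beta}^{-\frac12}\odot\tfrac{\mathrm{d}\boldsymbol{\beta}^{-\frac12}}{\mathrm{d}\tau}\odot\boldsymbol{v}\right\rangle$ (the decomposition used in Lemma~\ref{lem:derivative_rho}), the combination $\frac{\mathrm{d}}{\mathrm{d}\tau}\log g(\log\tfrac{1}{\tilde{\mathcal{L}}})-L\frac{\mathrm{d}}{\mathrm{d}\tau}\log\rho=\frac{\mathrm{d}}{\mathrm{d}\tau}\log\tilde{\gamma}$ telescopes to $\log\frac{\tilde{\gamma}(t_3)}{\tilde{\gamma}(t_2)}$, while the leftover $L\rho^{-2}\left\langle\boldsymbol{v},\boldsymbol{\beta}^{-\frac12}\odot\tfrac{\mathrm{d}\boldsymbol{\beta}^{-\frac12}}{\mathrm{d}\tau}\odot\boldsymbol{v}\right\rangle=L\,\dfrac{\sum_i\boldsymbol{v}_i^2\boldsymbol{\beta}_i^{-1}\,\frac{\mathrm{d}}{\mathrm{d}\tau}\log\boldsymbol{\beta}_i^{-\frac12}}{\sum_i\boldsymbol{v}_i^2\boldsymbol{\beta}_i^{-1}}$ is $L$ times a convex combination of the $\frac{\mathrm{d}}{\mathrm{d}\tau}\log\boldsymbol{\beta}_i^{-\frac12}$, hence at most $L\sum_{i=1}^p\left(\frac{\mathrm{d}}{\mathrm{d}\tau}\log\boldsymbol{\beta}_i^{-\frac12}\right)_{+}$ (the same estimate as in the proof of Lemma~\ref{lem:lower_bound_margin}). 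Substituting these two estimates into the integral of the pointwise bound and collecting constants yields the asserted inequality.

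The main obstacle is measure-theoretic rather than algebraic. Since $\cos(\tilde{\boldsymbol{\theta}}(\tau))$ may vanish, the integrand $\left(\cos(\tilde{\boldsymbol{\theta}})^{-2}-1\right)\frac{\mathrm{d}}{\mathrm{d}\tau}\log\tilde{\rho}$ is a priori only $[0,+\infty]$-valued and not known to be integrable, so I must check that every manipulation (the chain-rule identities along the absolutely continuous arc $\boldsymbol{v}(\cdot)$, the division by the bounded ratio $\rho^2/\tilde{\rho}^2$, the telescoping of $\frac{\mathrm{d}}{\mathrm{d}\tau}\log\tilde{\gamma}$) holds for a.e.\ $\tau$ and passes through integration of a possibly unbounded nonnegative integrand; the finiteness of the right-hand side then retroactively certifies integrability, which is exactly what is used afterwards (the right side is bounded, and tends to $0$ as $t_2\to\infty$, even though $\log\tilde{\rho}(t_3)-\log\tilde{\rho}(t_2)$ may be arbitrarily large). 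A secondary point is verifying the sign condition $\frac{\mathrm{d}}{\mathrm{d}\tau}\log g(\log\tfrac{1}{\tilde{\mathcal{L}}})-L\left\langle\bar{\partial}_{\boldsymbol{v}}\log\rho,\tfrac{\mathrm{d}\boldsymbol{v}}{\mathrm{d}\tau}\right\rangle\ge 0$ a.e., which is what licenses multiplying by $\rho^2/\tilde{\rho}^2$ without reversing the inequality; it follows from eq.~(\ref{eq:reason_tilde_rho}) together with $\cos(\tilde{\boldsymbol{\theta}})^{-2}-1\ge 0$ and $\nu\ge 0$.
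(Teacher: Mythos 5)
Your proposal is correct and follows essentially the same route as the paper: both rest on the differential inequality behind eq.~(\ref{eq:reason_tilde_rho}), the identity $\frac{\mathrm{d}}{\mathrm{d}t}\log\tilde{\rho}=L\nu/\tilde{\rho}^2$, the ratio bound of Lemma~\ref{lem:property_of_tilde_rho}(3), the convex-combination bound on the $\boldsymbol{\beta}$-term, and integration of the resulting lower bound on $\frac{\mathrm{d}}{\mathrm{d}t}\log\tilde{\gamma}$. (One wording slip: the factor $\cos(\tilde{\boldsymbol{\theta}})^{-2}-1$ is already present in the last line of eq.~(\ref{eq:reason_tilde_rho}), so there is nothing to "multiply by" — but the displayed inequality you derive from it is the right one, and the rest goes through.)
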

\begin{proof}
Recall that 
\begin{equation*}
   \frac{d \log \tilde{\rho}(t)}{\mathrm{d}t}=\frac{L\nu(t)}{\tilde{\rho}(t)^2}=\frac{\left\langle\boldsymbol{v}(t),\boldsymbol{\beta}(t)^{-1}\odot\frac{\mathrm{d}\boldsymbol{v}(t)}{\mathrm{d}t}\right\rangle}{\tilde{\rho}(t)^2},
\end{equation*}
and 
\begin{align*}
    \frac{\mathrm{d}}{\mathrm{d}t}\log \tilde{\gamma}(t)\ge&\frac{1}{\nu(t)}\left(\left\langle \frac{\mathrm{d} \boldsymbol{v}(t)}{\mathrm{d}t},\boldsymbol{\beta}(t)^{-1}\odot\frac{\mathrm{d} \boldsymbol{v}(t)}{\mathrm{d}t} \right\rangle-\left\langle\boldsymbol{\beta}(t)^{-\frac{1}{2}}\odot \frac{\mathrm{d} \boldsymbol{v}(t)}{\mathrm{d}t},\widehat{\boldsymbol{\beta}(t)^{-\frac{1}{2}}\odot \boldsymbol{v}(t)} \right\rangle\right) 
    \\
    -&L \sum_{i=1}^p \left(\frac{ \mathrm{d}\log \boldsymbol{\beta}_i(t)^{-\frac{1}{2}}(t)}{\mathrm{d}t}\right)_{+}.
\end{align*}

We then have
\begin{align*}
     \frac{\mathrm{d}}{\mathrm{d}t}\log \tilde{\gamma}(t)\ge&\frac{1}{\nu(t)}\left(\left\langle \frac{\mathrm{d} \boldsymbol{v}(t)}{\mathrm{d}t},\boldsymbol{\beta}(t)^{-1}\odot\frac{\mathrm{d} \boldsymbol{v}(t)}{\mathrm{d}t} \right\rangle-\left\langle\boldsymbol{\beta}(t)^{-\frac{1}{2}}\odot \frac{\mathrm{d} \boldsymbol{v}(t)}{\mathrm{d}t},\widehat{\boldsymbol{\beta}(t)^{-\frac{1}{2}}\odot \boldsymbol{v}(t)} \right\rangle^2\right) 
    \\
    -&L \sum_{i=1}^p \left(\frac{ \mathrm{d}\log \boldsymbol{\beta}(t)^{-\frac{1}{2}}_i(t)}{\mathrm{d}t}\right)_{+}
    \\
    =&L\frac{\tilde{\rho}(t)^2}{L^2\nu(t)^2}\frac{d \log \tilde{\rho}(t)}{\mathrm{d}t}\left(\left\langle \frac{\mathrm{d} \boldsymbol{v}(t)}{\mathrm{d}t},\boldsymbol{\beta}(t)^{-1}\odot\frac{\mathrm{d} \boldsymbol{v}(t)}{\mathrm{d}t} \right\rangle-\left\langle\boldsymbol{\beta}(t)^{-\frac{1}{2}}\odot \frac{\mathrm{d} \boldsymbol{v}(t)}{\mathrm{d}t},\widehat{\boldsymbol{\beta}(t)^{-\frac{1}{2}}\odot \boldsymbol{v}(t)} \right\rangle^2\right) 
    \\
    -&L \sum_{i=1}^p \left(\frac{ \mathrm{d}\log \boldsymbol{\beta}(t)^{-\frac{1}{2}}_i(t)}{\mathrm{d}t}\right)_{+}
    \\
    \ge&L\left(1-\frac{e^{\frac{1}{2}}}{2}\right)\frac{\rho(t)^2}{L^2\nu(t)^2}\frac{d \log \tilde{\rho}(t)}{\mathrm{d}t}\left(\left\langle \frac{\mathrm{d} \boldsymbol{v}(t)}{\mathrm{d}t},\boldsymbol{\beta}(t)^{-1}\odot\frac{\mathrm{d} \boldsymbol{v}(t)}{\mathrm{d}t} \right\rangle\right.
    \\
    -&\left.\left\langle\boldsymbol{\beta}(t)^{-\frac{1}{2}}\odot \frac{\mathrm{d} \boldsymbol{v}(t)}{\mathrm{d}t},\widehat{\boldsymbol{\beta}^{-\frac{1}{2}}\odot \boldsymbol{v}(t)} \right\rangle^2\right) 
    \\
    -&L \sum_{i=1}^p \left(\frac{ \mathrm{d}\log \boldsymbol{\beta}^{-\frac{1}{2}}_i(t)}{\mathrm{d}t}\right)_{+}
    \\
    =&L\left(1-\frac{e^{\frac{1}{2}}}{2}\right)\frac{d \log \tilde{\rho}}{\mathrm{d}t}(\cos(\tilde{\boldsymbol{\theta}}(t))^{-2}-1)-L \sum_{i=1}^p \left(\frac{ \mathrm{d}\log \boldsymbol{\beta}^{-\frac{1}{2}}_i(t)}{\mathrm{d}t}\right)_{+}.
\end{align*}

The proof is completed.
\end{proof}

Applying the First Mean Value Theorem together with Lemmas \ref{lem:equivalence of b} and  \ref{lem: b_inte}, one can easily obtain the first inequality in Lemma \ref{lem: bound_b}. To make the following proof simpler, we restate Lemma \ref{lem: bound_b}  as the following corollary, while using $\cos(\tilde{\boldsymbol{\theta}}(t))$ instead of $\cos(\boldsymbol{\theta}(t))$.

\begin{corollary}[First inequality in Lemma \ref{lem: bound_b}, restated]
\label{coro:bound}
For any time $t_3>t_2\ge t_1$, there exists a time $\xi\in(t_2,t_3)$, such that,
\begin{align*}
    \left(\cos(\tilde{\boldsymbol{\theta}}(\xi))^{-2}-1\right) ( \log \tilde{\rho}(t_2) -\log \tilde{\rho}(t_1)) \leq& \frac{1}{L\left(1-\frac{e^{\frac{1}{2}}}{2}\right)} \log \frac{\tilde{\gamma}\left(t_{3}\right)}{\tilde{\gamma}\left(t_{2}\right)}
    \\
        +&\frac{L}{\left(1-\frac{e^{\frac{1}{2}}}{2}\right)}\sum_{i=1}^p\int_{t_2}^{t_3} \sum_{i=1}^p \left(\frac{ \mathrm{d}\log \boldsymbol{\beta}^{-\frac{1}{2}}_i(t)}{\mathrm{d}t}\right)_{+} \mathrm{d}t.
\end{align*}
\end{corollary}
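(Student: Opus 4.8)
The plan is to obtain Corollary~\ref{coro:bound} from Lemma~\ref{lem: b_inte} by a single application of the weighted First Mean Value Theorem for integrals, with the non-negative function $\tau\mapsto\frac{\mathrm{d}}{\mathrm{d}\tau}\log\tilde{\rho}(\tau)$ playing the role of the weight. Lemma~\ref{lem: b_inte} already controls the \emph{integral} $\int_{t_2}^{t_3}\left(\cos(\tilde{\boldsymbol{\theta}}(\tau))^{-2}-1\right)\frac{\mathrm{d}}{\mathrm{d}\tau}\log\tilde{\rho}(\tau)\,\mathrm{d}\tau$, and the corollary is just the assertion that this control survives after the weight is factored out at a single time $\xi$.

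First I would verify that the weight is admissible. By Lemma~\ref{lem:property_of_tilde_rho}(1) we have $\frac{\mathrm{d}}{\mathrm{d}\tau}\log\tilde{\rho}(\tau)=\frac{L\nu(\tau)}{\tilde{\rho}(\tau)^{2}}$, where $\nu(\tau)=\sum_{i=1}^{N}e^{-f(\tilde{q}_i(\boldsymbol{v}(\tau)))}f'(\tilde{q}_i(\boldsymbol{v}(\tau)))\tilde{q}_i(\boldsymbol{v}(\tau))$; for $\tau\ge t_1$ each $\tilde{q}_i(\boldsymbol{v}(\tau))\ge\tilde{q}_{\min}(\boldsymbol{v}(\tau))\ge g(\log\frac{1}{\tilde{\mathcal{L}}(\tau)})>0$ and $f'>0$ (Lemma~\ref{lem:property_loss}), so $\nu(\tau)>0$, the weight is strictly positive, and, $\log\tilde{\rho}$ being absolutely continuous, it is locally integrable with $\int_{t_2}^{t_3}\frac{\mathrm{d}}{\mathrm{d}\tau}\log\tilde{\rho}(\tau)\,\mathrm{d}\tau=\log\tilde{\rho}(t_3)-\log\tilde{\rho}(t_2)>0$. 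I would also note that $\cos(\tilde{\boldsymbol{\theta}}(\tau))^{-2}-1\ge 0$ (since $|\cos(\tilde{\boldsymbol{\theta}}(\tau))|\le 1$) and that it is bounded on $[t_2,t_3]$, using that $\frac{1}{2}\le\|\boldsymbol{\beta}^{-\frac{1}{2}}(\tau)\|_{\infty}\le\frac{3}{2}$ there (by the construction of $t_1$ in Lemma~\ref{lem:lower_bound_margin}) and that $\frac{\mathrm{d}\boldsymbol{v}}{\mathrm{d}\tau}=-\boldsymbol{\beta}(\tau)\odot\bar{\partial}\tilde{\mathcal{L}}(\boldsymbol{v}(\tau))$ does not vanish for $\tau\ge t_1$.

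Next I would apply the First Mean Value Theorem to the left-hand side of Lemma~\ref{lem: b_inte}: there exists $\xi\in(t_2,t_3)$ such that
\[
\int_{t_2}^{t_3}\left(\cos(\tilde{\boldsymbol{\theta}}(\tau))^{-2}-1\right)\frac{\mathrm{d}}{\mathrm{d}\tau}\log\tilde{\rho}(\tau)\,\mathrm{d}\tau=\left(\cos(\tilde{\boldsymbol{\theta}}(\xi))^{-2}-1\right)\left(\log\tilde{\rho}(t_3)-\log\tilde{\rho}(t_2)\right),
\]
and substituting this identity into the bound of Lemma~\ref{lem: b_inte} yields the inequality of the corollary, the left-hand factor being the increment $\log\tilde{\rho}(t_3)-\log\tilde{\rho}(t_2)$ over $[t_2,t_3]$.

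I expect the only delicate point --- hence the main obstacle --- to be the regularity needed for the exact First Mean Value Theorem: $\cos(\tilde{\boldsymbol{\theta}}(\cdot))^{-2}$ depends on $\frac{\mathrm{d}\boldsymbol{v}}{\mathrm{d}\tau}$, which is defined only almost everywhere and may be discontinuous where the Clarke subgradient $\bar{\partial}\tilde{\mathcal{L}}$ is multivalued. I would handle this either by observing that the arc $\boldsymbol{v}(\cdot)$ is $C^{1}$ on the full-measure set where the subgradient is single-valued, so $\cos(\tilde{\boldsymbol{\theta}}(\cdot))$ is continuous there and the classical statement applies; or, more robustly, by replacing the exact theorem with the elementary fact that for a bounded measurable $\phi$ and a non-negative integrable weight $\psi$ with $\int_{t_2}^{t_3}\psi>0$ one has $\operatorname{ess\,inf}_{[t_2,t_3]}\phi\le\frac{\int_{t_2}^{t_3}\phi\psi}{\int_{t_2}^{t_3}\psi}$, so a positive-measure set of $\xi\in(t_2,t_3)$ satisfies $\phi(\xi)\int_{t_2}^{t_3}\psi\le\int_{t_2}^{t_3}\phi\psi$ --- which is all the corollary (an inequality, not an identity) needs. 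Either route, combined with Lemma~\ref{lem: b_inte}, closes the argument.
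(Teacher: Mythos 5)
Your proposal is correct and takes essentially the same route as the paper, which obtains Corollary~\ref{coro:bound} in one line by applying the First Mean Value Theorem to the integral inequality of Lemma~\ref{lem: b_inte} with the non-negative weight $\frac{\mathrm{d}}{\mathrm{d}\tau}\log\tilde{\rho}(\tau)$. Your write-up is in fact more careful than the paper's: you justify the mean-value step for a merely measurable integrand (via the $\operatorname{ess\,inf}$ argument), and you correctly identify the factored-out increment as $\log\tilde{\rho}(t_3)-\log\tilde{\rho}(t_2)$, which is what the statement's $\log\tilde{\rho}(t_2)-\log\tilde{\rho}(t_1)$ (a typo, as the subsequent application in the proof of Theorem~\ref{thm:approximate_flow} confirms) should read.
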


We then prove the second inequality in Lemma \ref{lem: bound_b} to complete the proof of  Lemma \ref{lem: bound_b}.

\begin{proof}[Proof of Lemma \ref{lem: bound_b}]
By direct calculation, we have that 
\begin{equation*}
    \left\|\frac{d \hat{\boldsymbol{v}}(t)}{\mathrm{d} t}\right\| =\frac{1}{\Vert \boldsymbol{v}(t)\Vert}\left\|\left(\boldsymbol{I}-\hat{\boldsymbol{v}}(t) \hat{\boldsymbol{v}}(t)^{\top}\right) \frac{\mathrm{d} \boldsymbol{v}(t)}{\mathrm{d} t}\right\|  \leq \frac{1}{\Vert \boldsymbol{v}(t)\Vert}\left\|\frac{\mathrm{d} \boldsymbol{v}(t)}{\mathrm{d} t}\right\| \le \frac{2}{\Vert \boldsymbol{v}(t)\Vert}\left\|\boldsymbol{\beta}(t)^{-1}\odot\frac{\mathrm{d} \boldsymbol{v}(t)}{\mathrm{d} t}\right\| .
\end{equation*}

Furthermore,
\begin{align*}
   &\quad\left\|\boldsymbol{\beta}(t)^{-1}\odot\frac{\mathrm{d} \boldsymbol{v}(t)}{\mathrm{d} t}\right\|=\left\|\partial^s \tilde{\mathcal{L}}(t)\right\|
\\
   &\le\sum_{i \in[N]} e^{-f\left(\tilde{q}_{i}(\boldsymbol{v}(t))\right)} f^{\prime}\left(\tilde{q}_{i}(\boldsymbol{v}(t))\right)\left\|
   \partial^s \tilde{q}_i(\boldsymbol{v}(t))\right\| 
\\
   &\le \sum_{i \in[N]} e^{-f\left(\tilde{q}_{i}(\boldsymbol{v}(t))\right)} f^{\prime}\left(\tilde{q}_{i}(\boldsymbol{v}(t))\right)\tilde{q}_i(\boldsymbol{v}(t))\frac{1}{\tilde{q}_i(\boldsymbol{v}(t))}B_1\Vert \boldsymbol{v}(t)\Vert^{L-1}
   \\
   &\le \sum_{i \in[N]}2^{L-1} e^{-f\left(\tilde{q}_{i}(\boldsymbol{v}(t))\right)} f^{\prime}\left(\tilde{q}_{i}(\boldsymbol{v}(t))\right)\tilde{q}_i(\boldsymbol{v}(t))\frac{1}{\tilde{q}_i(\boldsymbol{v}(t))}B_1 \rho(t)^{L-1}
   \\
   &\le \sum_{i \in[N]}2^{L-1} e^{-f\left(\tilde{q}_{i}(\boldsymbol{v}(t))\right)} f^{\prime}\left(\tilde{q}_{i}(\boldsymbol{v}(t))\right)\tilde{q}_i(\boldsymbol{v}(t))\frac{1}{g\left(\log \frac{1}{\tilde{\mathcal{L}}(t)}\right)}B_1\rho(t)^{L-1}
   \\
    &= 2^{L-1}\frac{ \nu(t)}{\rho(t)} \frac{\rho(t)^L}{g\left(\log \frac{1}{\tilde{\mathcal{L}}(t)}\right)}B_1=2^{L-1}\frac{ \nu(t)}{\rho(t)} \frac{1}{\tilde{\gamma}(t)}B_1
\end{align*}

Therefore,
\begin{align*}
    \left\|\frac{d \hat{\boldsymbol{v}}(t)}{\mathrm{d} t}\right\| &\le 2^{L}\frac{ \nu(t)}{\rho(t)\Vert \boldsymbol{v}(t)\Vert} \frac{1}{\tilde{\gamma}(t)}B_1\le 3\cdot 2^{L-1} \frac{ \nu(t)}{\rho(t)^2} \frac{1}{\tilde{\gamma}(t)}B_1\le \frac{ 3\cdot 2^{L-1}}{(1-\frac{e^{\frac{1}{2}}}{2})}\frac{ \nu(t)}{\tilde{\rho}(t)^2} \frac{1}{\tilde{\gamma}(t)}B_1
    \\
    &= \frac{ 3\cdot 2^{L-1}}{(1-\frac{e^{\frac{1}{2}}}{2})}\frac{ B_1}{L}\frac{\mathrm{d}\log \tilde{\rho}(t)}{\mathrm{d}t} \frac{1}{\tilde{\gamma}(t)}\le\frac{ 3\cdot 2^{L-1}}{(1-\frac{e^{\frac{1}{2}}}{2})} \frac{ e^{\frac{1}{2}} B_1}{L}\frac{\mathrm{d}\log \tilde{\rho}(t)}{\mathrm{d}t} \frac{1}{\tilde{\gamma}(t_1)}.
\end{align*}

The proof is completed.
\end{proof}

Applying Lemma \ref{lem:construction_kkt} and Lemma \ref{lem: bound_b}, we can then prove Theorem \ref{thm:approximate_flow}.

\begin{proof}[Proof of Theorem \ref{thm:approximate_flow}]
Let $\bar{\boldsymbol{v}}$ be any limit point of series $\{\boldsymbol{v}(t)\}_{t=0}^{\infty}$. We construct a series of approximate KKT point which converges to $\bar{\boldsymbol{v}}$ by induction.

Let $t^1=t_1$. Now suppose $t^{k-1}$ has been constructed. By Lemma \ref{lem:conver_gamma} and that $\bar{\boldsymbol{v}}$ is a limit point,  there exists $s_k>t^{k-1}$ such that, for any $t>s_k$
\begin{equation*}
    \left\|\hat{\boldsymbol{v}}\left(s_{k}\right)-\bar{\boldsymbol{v}}\right\|_{2} \leq \frac{1}{k}, \text{ }\frac{2}{L}\log \frac{\tilde{\gamma}(t)}{\tilde{\gamma}(s_k)}\le \frac{1}{2k^3}, \text{ and}, L\sum_{i=1}^p\int_{s_m}^\infty\left(\frac{\mathrm{d}\log \boldsymbol{\beta}_i(t)^{-\frac{1}{2}}}{\mathrm{d}t}\right)_{+}\mathrm{d}t\le \frac{1}{2k^3}.
\end{equation*}

Let $s'_k>s_k$ satisfy $\log \tilde\rho\left(s_{k}^{\prime}\right)=\log \tilde\rho\left(s_{k}\right)+\frac{1}{k}$ (which is guaranteed as $\lim_{t\rightarrow \infty} \rho=\infty$ and Lemma \ref{lem:property_of_tilde_rho}). Therefore, by Corollary \ref{coro:bound}, there exists $t^{k} \in\left(s_{k}, s_{k}^{\prime}\right)$, such that
\begin{equation}
\label{eq: bound_b}
    \cos(\tilde{\boldsymbol{\theta}}\left(t^{k}\right))^{-2}-1 \leq \frac{1}{k^2(1-\frac{e^{\frac{1}{2}}}{2})}.
\end{equation}

Furthermore,
\begin{equation*}
    \left\|\hat{\boldsymbol{v}}\left(t^{k}\right)-\bar{\boldsymbol{v}}\right\|_{2} \leq\left\|\hat{\boldsymbol{v}}\left(t^{k}\right)-\hat{\boldsymbol{v}}\left(s_{k}\right)\right\|_{2}+\left\|\hat{\boldsymbol{v}}\left(s_{k}\right)-\overline{\boldsymbol{v}}\right\|_{2} \leq  \frac{ 3\cdot 2^{L-1}}{(1-\frac{e^{\frac{1}{2}}}{2})}\frac{ e^{\frac{1}{2}}}{L} \frac{1}{\tilde{\gamma}(t_1)} \frac{1}{k}+\frac{1}{k} \rightarrow 0.
\end{equation*}

Therefore, $\lim_{t\rightarrow\infty}\hat{\boldsymbol{v}}(t^k)=\bar{\boldsymbol{v}}$. Furthermore, by eq. (\ref{eq: bound_b}) and Lemma \ref{lem: construction of KKT_appen}, we have that $\boldsymbol{v}(t^k)/\tilde{q}_{\min}^{\frac{1}{L}}(\boldsymbol{v}(t^k))$ is an ($\varepsilon_i$, $\delta_i$) KKT point with $\lim_{i\rightarrow\infty} \varepsilon_i=\lim_{i\rightarrow\infty} \delta_i=0$. Since 
\begin{equation*}
    \boldsymbol{v}(t^k)/\tilde{q}_{\min}^{\frac{1}{L}}(\boldsymbol{v}(t^k))=\hat{\boldsymbol{v}}(t^k)/\gamma(t^k),
\end{equation*}
and $\gamma(t^k)$ converges to a positive number, we further have
\begin{equation*}
    \lim_{k\rightarrow \infty}\boldsymbol{v}(t^k)/\tilde{q}_{\min}^{\frac{1}{L}}(\boldsymbol{v}(t^k))=\bar{\boldsymbol{v}}/\lim_{k\rightarrow\infty} \gamma(t^k)^{\frac{1}{L}},
\end{equation*}
is a KKT point of $(P)$, and along the same direction of $\bar{\boldsymbol{v}}$.

The proof is completed.
\end{proof}

\subsection{Convergent Direction of AdaGrad, RMSProp and Adam (w/m): proof of Theorems \ref{thm:AdaGrad_flow} and \ref{thm:RMS_flow}}
\label{sec:appen_direction_adagrad_rmsp}
First of all, we prove Theorem \ref{thm:AdaGrad_flow} by substitute $\boldsymbol{v}$ in Theorem \ref{thm:approximate_flow} with $\boldsymbol{h}_{\infty}\odot \boldsymbol{w}$. 
\begin{proof}[Proof of Theorem \ref{thm:AdaGrad_flow}]
    Let $\bar{\boldsymbol{w}}$ be any limit point of series $\{\hat{\boldsymbol{w}}(t)\}_{t=0}^{\infty}$. Since $\boldsymbol{v}(t)=\boldsymbol{h}^{-\frac{1}{2}}_{\infty}\odot \boldsymbol{w}(t)$, $\widehat{\boldsymbol{h}^{-\frac{1}{2}}_{\infty}\odot\bar{\boldsymbol{w}}}$ is a limit point of $\{\hat{\boldsymbol{v}}(t)\}$. By Theorem \ref{thm:approximate_flow}, $\widehat{\boldsymbol{h}^{-\frac{1}{2}}_{\infty}\odot\bar{\boldsymbol{w}}}/\tilde{q}_{\min}\left(\widehat{\boldsymbol{h}^{-\frac{1}{2}}_{\infty}\odot\bar{\boldsymbol{w}}}\right)^{1/L}=\boldsymbol{h}^{-\frac{1}{2}}_{\infty}\odot\bar{\boldsymbol{w}}/q_{\min}(\bar{\boldsymbol{w}})^{1/L}$ is a KKT point of $(\tilde{P})$. Therefore, there exist non-negative reals $\{\lambda_i\}_{i=1}^N$, such that
\begin{gather*}
    \boldsymbol{h}^{-\frac{1}{2}}_{\infty}\odot\bar{\boldsymbol{w}}/q_{\min}(\bar{\boldsymbol{w}})^{1/L}=\sum_{i=1}^N \lambda_i \partial^s \tilde{q}_{i}(\boldsymbol{h}^{-\frac{1}{2}}_{\infty}\odot\bar{\boldsymbol{w}}/q_{\min}(\bar{\boldsymbol{w}})^{1/L}),\\
    \sum_{i=1}^N \lambda_i (\tilde{q}_{i}(\boldsymbol{h}^{-\frac{1}{2}}_{\infty}\odot\bar{\boldsymbol{w}}/q_{\min}(\bar{\boldsymbol{w}})^{1/L})-1)=0.
\end{gather*}

Applying the relationship between $\tilde{q}_i$ and $q_i$, we then have
\begin{gather*}
    \boldsymbol{h}^{-1}_{\infty}\odot\bar{\boldsymbol{w}}/q_{\min}(\bar{\boldsymbol{w}})^{1/L}=\sum_{i=1}^N \lambda_i \partial^s q_{i}(\bar{\boldsymbol{w}}/q_{\min}(\bar{\boldsymbol{w}})^{1/L}),\\
    \sum_{i=1}^N \lambda_i (q_{i}(\bar{\boldsymbol{w}}/q_{\min}(\bar{\boldsymbol{w}})^{1/L})-1)=0.
\end{gather*}

The proof is completed.
\end{proof}

Theorem \ref{thm:RMS_flow} can be obtained in the same way.
\begin{proof}[Proof of Theorem \ref{thm:RMS_flow}] 
The claim holds since 
$\boldsymbol{v}^R$ is just $\boldsymbol{w}$ with a positive scaling factor, $\boldsymbol{v}^R$ and $\boldsymbol{w}$ share the same direction.

The proof is completed.

\end{proof}

\subsection{Convergence Rate of Empirical Loss and Parameter Norm}
\label{sec:appen_convergence_rate}
In the end of this section, we will give a tight bound for the convergence rate of empirical loss and parameter norm, which is derived by estimating $G(x)$ in Lemma \ref{lem:estimation_loss_G}. These results will further be used in Appendix \ref{sec:proof_definable}.
\begin{theorem}
Let $\boldsymbol{v}$ obey an adaptive gradient flow $\mathcal{F}$ with empirical loss $\tilde{L}$ satisfying Assumption \ref{assum: continuous}. Let $G(x)$ be defined as Lemma \ref{lem:estimation_loss_G}, i.e.,
\begin{equation*}
    G(x)= \int_{\frac{1}{\tilde{\mathcal{L}}(t_1)}}^{x}  \frac{g^{\prime}\left(\log z\right)^{2}}{g\left(\log z\right)^{2-2/ L}} \cdot  d z.
\end{equation*}
Then $G(x)=\Theta(x(\log x)^{\frac{2}{L}-2})$, and $G^{-1}(x)=\Theta(x(\log x)^{2-\frac{2}{L}})$. Consequently,
\begin{equation*}
    \tilde{\mathcal{L}}(t)=\Theta \left(\frac{1}{t\log t^{2-\frac{2}{L}}}\right), \text{ and }\Vert \boldsymbol{v}(t)\Vert=\Theta \left(\frac{1}{\log t^{\frac{1}{L}}}\right).
\end{equation*}
\end{theorem}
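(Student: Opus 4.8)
The plan is to first pin down the asymptotics of $G$ and of its inverse from the elementary loss properties, and then feed these into the convergence estimate of Lemma~\ref{lem:estimation_loss_G} together with a matching upper bound on the decay rate of $\tilde{\mathcal{L}}$.

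First I would use Lemma~\ref{lem:property_loss} to show that the integrand of $G$ is a power of a logarithm up to constants. Since $f(x)=\Theta(x)$ and $f$ is increasing and unbounded, $g=f^{-1}$ satisfies $g(x)=\Theta(x)$; combining this with $g(x)/g'(x)\in[\frac{1}{2K}x,2Kx]$ gives $g'(x)=\Theta(1)$. Hence $\frac{g'(\log z)^2}{g(\log z)^{2-2/L}}=\Theta\bigl((\log z)^{2/L-2}\bigr)$, and integrating (using $\int_c^x(\log z)^{\alpha}\,dz\sim x(\log x)^{\alpha}$ by L'H\^opital, with $\alpha=2/L-2$) yields $G(x)=\Theta\bigl(x(\log x)^{2/L-2}\bigr)$. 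To invert, note $G$ is strictly increasing, so $G^{-1}$ is well defined; writing $y=G(x)$ and taking logarithms gives $\log y=\log x-(2-2/L)\log\log x+O(1)$, a self-referential relation that bootstraps to $\log x=\Theta(\log y)$ and in fact $\log x\sim\log y$; substituting back gives $x=\Theta\bigl(y(\log y)^{2-2/L}\bigr)$, i.e. $G^{-1}(y)=\Theta\bigl(y(\log y)^{2-2/L}\bigr)$.

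Next I would obtain $G(1/\tilde{\mathcal{L}}(t))=\Theta(t)$. The lower bound $G(1/\tilde{\mathcal{L}}(t))\ge (t-t_1)e^{-1/L}L^2\tilde{\gamma}(t_1)^{2/L}=\Omega(t)$ is exactly Lemma~\ref{lem:estimation_loss_G}. For the matching upper bound I would establish the reverse differential inequality. From the flow, $-\frac{\mathrm{d}\tilde{\mathcal{L}}}{\mathrm{d}t}=\|\boldsymbol{\beta}^{\frac{1}{2}}(t)\odot\bar{\partial}\tilde{\mathcal{L}}(\boldsymbol{v}(t))\|^2\le \|\boldsymbol{\beta}^{\frac{1}{2}}(t)\|_\infty^2\,\|\bar{\partial}\tilde{\mathcal{L}}(\boldsymbol{v}(t))\|^2$, and since $\boldsymbol{\beta}(t)\to\mathbf{1}_p$ the prefactor is $\Theta(1)$ for $t$ large. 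Bounding $\|\bar{\partial}\tilde{\mathcal{L}}\|\le \bigl(\max_i f'(\tilde q_i(\boldsymbol{v}))\|\bar{\partial}\tilde q_i(\boldsymbol{v})\|\bigr)\tilde{\mathcal{L}}$ and using that $\tilde q_i(\boldsymbol{v}(t))\ge \tilde q_{\min}(\boldsymbol{v}(t))\ge g(\log\frac{1}{\tilde{\mathcal{L}}(t)})\to\infty$ (so $f'(\tilde q_i)=\Theta(1)$ by Lemma~\ref{lem:property_loss}), together with $\|\bar{\partial}\tilde q_i(\boldsymbol{v})\|\le B_1\|\boldsymbol{v}\|^{L-1}$ and $\|\boldsymbol{v}(t)\|^L=\Theta(\rho(t)^L)=\Theta\bigl(g(\log\tfrac{1}{\tilde{\mathcal{L}}(t)})\bigr)$ (the last from Lemma~\ref{lem:lower_bound_margin} and the boundedness of $\tilde\gamma$, i.e. $\tilde\gamma(t)=\Theta(1)$), I get $-\frac{\mathrm{d}\tilde{\mathcal{L}}}{\mathrm{d}t}=O\bigl(\tilde{\mathcal{L}}^2\,g(\log\tfrac{1}{\tilde{\mathcal{L}}})^{2-2/L}\bigr)$. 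Multiplying by $\frac{g'(\log(1/\tilde{\mathcal{L}}))^2}{g(\log(1/\tilde{\mathcal{L}}))^{2-2/L}}$ and using $g'=\Theta(1)$, this reads $\frac{g'(\log(1/\tilde{\mathcal{L}}))^2}{g(\log(1/\tilde{\mathcal{L}}))^{2-2/L}}\frac{\mathrm{d}}{\mathrm{d}t}\frac{1}{\tilde{\mathcal{L}}(t)}=O(1)$; integrating from $t_1$ to $t$ and recognizing the left side as $\frac{\mathrm{d}}{\mathrm{d}t}G(1/\tilde{\mathcal{L}}(t))$ gives $G(1/\tilde{\mathcal{L}}(t))=O(t)$, hence $G(1/\tilde{\mathcal{L}}(t))=\Theta(t)$.

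Finally, applying $G^{-1}$ gives $1/\tilde{\mathcal{L}}(t)=G^{-1}(\Theta(t))=\Theta\bigl(t(\log t)^{2-2/L}\bigr)$, i.e. $\tilde{\mathcal{L}}(t)=\Theta\bigl(1/(t(\log t)^{2-2/L})\bigr)$; then $\log\frac{1}{\tilde{\mathcal{L}}(t)}=\Theta(\log t)$, and since $\|\boldsymbol{v}(t)\|=\Theta(\rho(t))$ with $\rho(t)^L=\Theta\bigl(g(\log\tfrac{1}{\tilde{\mathcal{L}}(t)})\bigr)=\Theta\bigl(\log\tfrac{1}{\tilde{\mathcal{L}}(t)}\bigr)=\Theta(\log t)$, we conclude $\|\boldsymbol{v}(t)\|=\Theta\bigl((\log t)^{1/L}\bigr)$. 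The main obstacle is the reverse differential inequality of the third step: the lower bound is handed to us by Lemma~\ref{lem:estimation_loss_G}, but for the upper bound we must control $\|\bar{\partial}\tilde{\mathcal{L}}\|$ uniformly from above in terms of $\tilde{\mathcal{L}}$ and $\rho$, which forces a careful use of Lemma~\ref{lem:property_loss} (the bounds $f(x)=\Theta(x)$ and $f'(x)=\Theta(1)$ for large $x$) and of the two-sided estimate $\tilde\gamma(t)=\Theta(1)$; a secondary technical point is that inverting $G$ is only implicit, so one has to run a logarithmic bootstrap and verify that a $\Theta$-estimate for $G$ (rather than an exact asymptotic) still transfers to a $\Theta$-estimate for $G^{-1}$.
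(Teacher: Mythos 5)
Your proposal is correct and follows essentially the same route as the paper: establish $G(x)=\Theta(x(\log x)^{2/L-2})$ from Lemma \ref{lem:property_loss}, invert $G$ by a logarithmic bootstrap, take the lower bound on $G(1/\tilde{\mathcal{L}}(t))$ from Lemma \ref{lem:estimation_loss_G} and a matching upper bound from $-\mathrm{d}\tilde{\mathcal{L}}/\mathrm{d}t=O\bigl(\tilde{\mathcal{L}}^2 g(\log\tfrac{1}{\tilde{\mathcal{L}}})^{2-2/L}\bigr)$, then transfer to $\Vert\boldsymbol{v}\Vert$ via the two-sided surrogate-margin bound. Your exponent in the gradient-norm estimate ($\tilde{\mathcal{L}}^2 g^{2-2/L}$ rather than the paper's displayed $\tilde{\mathcal{L}}\,g^{1-1/L}$) is in fact the correct one, and your conclusion $\Vert\boldsymbol{v}(t)\Vert=\Theta\bigl((\log t)^{1/L}\bigr)$ is the intended (diverging) reading of the statement.
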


\begin{proof}
 For any large enough $x$, 
 \begin{align*}
     G(x)&= \int_{\frac{1}{\tilde{\mathcal{L}}(t_1)}}^{x}  \frac{g^{\prime}\left(\log z\right)^{2}}{g\left(\log z\right)^{2-2/ L}} \cdot  d z
     \\
     &\overset{(*)}{=} \Theta\left(\int_{\frac{1}{\tilde{\mathcal{L}}(t_1)}}^{x}  \frac{g\left(\log z\right)^{2/ L}}{(\log z)^2} \cdot  d z\right)
     \\
      &= \Theta\left(\int_{\frac{1}{\tilde{\mathcal{L}}(t_1)}}^{x}  \frac{1}{(\log z)^{2-\frac{2}{L}}} \cdot  d z\right)
      \\
      &= \Theta\left(x (\log x)^{\frac{2}{L}-2}\right),
 \end{align*}
 where eq. $(*)$ is due to Proposition \ref{lem:property_loss}.
 
 Since $G(x)$ is monotonously increasing, and $\lim_{x\rightarrow\infty}G(x)=\infty$, we have $x=\Theta\left(G^{-1}(x) (\log G^{-1}(x))^{\frac{2}{L}-2}\right)$, which further leads to $G^{-1}(x)=\Theta(x(\log x)^{2-\frac{2}{L}})$.

 Since $\frac{1}{\tilde{\mathcal{L}}(t)}=G^{-1}(\Omega(t))$, we have 
 $\frac{1}{\tilde{\mathcal{L}}(t)}=\Omega( t(\log t)^{2-\frac{2}{L}})$, which further leads to $\tilde{\mathcal{L}}(t)=\mathcal{O}\left(\frac{1}{ t(\log t)^{2-\frac{2}{L}}}\right)$.
 
 On the other hand,
 \begin{equation*}
 -\frac{\mathrm{d} \tilde{\mathcal{L}}(\boldsymbol{v}(t))}{\mathrm{d} t}=\left\|\boldsymbol{\beta}^{\frac{1}{2}}(t)\odot\partial^s 
 \tilde{\mathcal{L}}(t)\right\|_{2}^{2}\le 2\left\|\partial^s 
 \tilde{\mathcal{L}}(t)\right\|_{2}^{2}= \mathcal{O}\left(\tilde{\mathcal{L}}(\boldsymbol{v}(t))g\left(\log \frac{1}{\tilde{\mathcal{L}}(t)}\right)^{1-\frac{1}{L}}\right),
 \end{equation*}
 which leads to $\frac{1}{\tilde{\mathcal{L}}(t)}=G^{-1}(\mathcal{O}(t))$, and $\tilde{\mathcal{L}}(t)=\Omega\left(\frac{1}{ t(\log t)^{2-\frac{2}{L}}}\right)$. Therefore, $\tilde{\mathcal{L}}(t)=\Theta\left(\frac{1}{ t(\log t)^{2-\frac{2}{L}}}\right)$
 
 By Lemma \ref{lem:appen_bound_gamma}, $\Vert \boldsymbol{v}(t) \Vert^L=\Theta \left(\log \frac{1}{\tilde{\mathcal{L}}(t)}\right)$, which leads to $\Vert\boldsymbol{v}(t)\Vert=\Theta \left(\frac{1}{\log t^{\frac{1}{L}}}\right)$.
 
 The proof is completed.
\end{proof}

The convergent behavior of $\partial^s \tilde{\mathcal{L}}(\boldsymbol{v}(t))$ can be derived immediately by the above Theorem.
\begin{corollary}
Let $\boldsymbol{v}$ obey an adaptive gradient flow $\mathcal{F}$ with empirical loss $\tilde{L}$ satisfying Assumption \ref{assum: continuous}. Then, $\Vert\partial^s \tilde{\mathcal{L}}(\boldsymbol{v}(t))\Vert=\Theta\left(\frac{1}{t(\log t)^{1-\frac{1}{L}}}\right) $.
\end{corollary}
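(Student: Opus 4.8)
The plan is to derive the corollary purely from the two rates established in the preceding theorem --- namely $\tilde{\mathcal{L}}(t)=\Theta(1/(t(\log t)^{2-2/L}))$ and $\|\boldsymbol{v}(t)\|^{L}=\Theta(\log\tfrac{1}{\tilde{\mathcal{L}}(t)})=\Theta(\log t)$ --- together with the two pointwise bounds on $\|\bar{\partial}\tilde{\mathcal{L}}(\boldsymbol{v}(t))\|$ that already occur inside the proofs of Lemma \ref{lem:change_direction} and Lemma \ref{lem: construction of KKT_appen}. For the upper bound I would use the triangle-inequality estimate $\|\bar{\partial}\tilde{\mathcal{L}}(\boldsymbol{v}(t))\|\le\sum_{i\in[N]}e^{-f(\tilde{q}_i(\boldsymbol{v}(t)))}f'(\tilde{q}_i(\boldsymbol{v}(t)))\,\|\bar{\partial}\tilde{q}_i(\boldsymbol{v}(t))\|$, bounding $e^{-f(\tilde{q}_i(\boldsymbol{v}(t)))}\le\tilde{\mathcal{L}}(t)$ and $\|\bar{\partial}\tilde{q}_i(\boldsymbol{v}(t))\|\le B_1\|\boldsymbol{v}(t)\|^{L-1}$ via Lemma \ref{lem:appen_bound_gamma}. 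For the lower bound I would use $\|\bar{\partial}\tilde{\mathcal{L}}(\boldsymbol{v}(t))\|\ge|\langle\bar{\partial}\tilde{\mathcal{L}}(\boldsymbol{v}(t)),\hat{\boldsymbol{v}}(t)\rangle|=L\nu(t)/\|\boldsymbol{v}(t)\|$ together with the estimate $\nu(t)\ge\frac{g(\log\frac{1}{\tilde{\mathcal{L}}(t)})}{g'(\log\frac{1}{\tilde{\mathcal{L}}(t)})}\tilde{\mathcal{L}}(t)$ of Lemma \ref{lem:derivative_rho}.

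The second step is to feed in the elementary asymptotics of the loss factors from Lemma \ref{lem:property_loss}. Since $f(x)=\Theta(x)$ and $f(x)/f'(x)=\Theta(x)$ as $x\to\infty$, both $f'$ and $g'=(f^{-1})'$ are $\Theta(1)$ on the relevant ranges while $g(x)=\Theta(x)$; and since $\tilde{q}_i(\boldsymbol{v}(t))\ge\tilde{q}_{\min}(\boldsymbol{v}(t))\ge g(\log\tfrac{1}{\tilde{\mathcal{L}}(t)})=\Theta(\log t)\to\infty$, these apply at the points $\tilde{q}_i(\boldsymbol{v}(t))$ for large $t$. Hence $f'(\tilde{q}_i(\boldsymbol{v}(t)))=\Theta(1)$ and $\frac{g(\log\frac{1}{\tilde{\mathcal{L}}(t)})}{g'(\log\frac{1}{\tilde{\mathcal{L}}(t)})}=\Theta(\log t)$. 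Substituting: the upper bound becomes $\|\bar{\partial}\tilde{\mathcal{L}}(\boldsymbol{v}(t))\|=\mathcal{O}\big(\tilde{\mathcal{L}}(t)\|\boldsymbol{v}(t)\|^{L-1}\big)=\mathcal{O}\big(\tfrac{1}{t(\log t)^{2-2/L}}(\log t)^{(L-1)/L}\big)=\mathcal{O}\big(\tfrac{1}{t(\log t)^{1-1/L}}\big)$, while the lower bound becomes $\|\bar{\partial}\tilde{\mathcal{L}}(\boldsymbol{v}(t))\|=\Omega\big(\log t\cdot\tilde{\mathcal{L}}(t)/\|\boldsymbol{v}(t)\|\big)=\Omega\big(\tfrac{\log t}{t(\log t)^{2-2/L}}\cdot\tfrac{1}{(\log t)^{1/L}}\big)=\Omega\big(\tfrac{1}{t(\log t)^{1-1/L}}\big)$. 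The exponents coincide, which gives $\|\bar{\partial}\tilde{\mathcal{L}}(\boldsymbol{v}(t))\|=\Theta(1/(t(\log t)^{1-1/L}))$.

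There is essentially no hard step here: once the preceding theorem is available, the corollary is a substitution, and the only place that asks for attention is the bookkeeping of the $\log t$ exponents --- one must verify that $(\log t)^{(L-1)/L-(2-2/L)}=(\log t)^{-(1-1/L)}$ for the upper bound and $(\log t)^{1-(2-2/L)-1/L}=(\log t)^{-(1-1/L)}$ for the lower bound, so that the two sides match. A minor point worth a sentence is well-definedness: by Assumption \ref{assum: continuous}.I, $\tilde{\mathcal{L}}$ admits a chain rule along the arc, so every subgradient appearing in $\frac{\mathrm{d}\boldsymbol{v}(t)}{\mathrm{d}t}=-\boldsymbol{\beta}(t)\odot\bar{\partial}\tilde{\mathcal{L}}(\boldsymbol{v}(t))$ satisfies both of the above bounds; equivalently, since $\boldsymbol{\beta}(t)\to\mathbf{1}_p$ one also has $\|\bar{\partial}\tilde{\mathcal{L}}(\boldsymbol{v}(t))\|=\Theta\big(\big\|\frac{\mathrm{d}\boldsymbol{v}(t)}{\mathrm{d}t}\big\|\big)$, so the statement is unambiguous.
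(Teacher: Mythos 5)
Your proposal is correct and follows essentially the same route as the paper: the paper's proof is the single inequality chain $\Omega(\tilde{\mathcal{L}}(\boldsymbol{v}(t))\Vert\boldsymbol{v}(t)\Vert^{L-1})=\langle\bar{\partial}\tilde{\mathcal{L}}(\boldsymbol{v}(t)),\hat{\boldsymbol{v}}(t)\rangle\le\Vert\bar{\partial}\tilde{\mathcal{L}}(\boldsymbol{v}(t))\Vert=\mathcal{O}(\tilde{\mathcal{L}}(\boldsymbol{v}(t))\Vert\boldsymbol{v}(t)\Vert^{L-1})$ followed by substitution of the rates, and your upper bound (triangle inequality with $B_1$) and lower bound (radial projection via $\nu(t)$) are exactly the justifications of the two ends of that chain, with $\log t\cdot\tilde{\mathcal{L}}/\Vert\boldsymbol{v}\Vert=\Theta(\tilde{\mathcal{L}}\Vert\boldsymbol{v}\Vert^{L-1})$ since $\log t=\Theta(\Vert\boldsymbol{v}\Vert^{L})$. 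Your exponent bookkeeping is also correct.
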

\begin{proof}
 Since 
 \begin{equation*}
     \Omega\left(\tilde{\mathcal{L}}(\boldsymbol{v}(t))\Vert \boldsymbol{v}(t)\Vert^{L-1}\right)=\langle\partial^s \tilde{\mathcal{L}}(\boldsymbol{v}(t)), \hat{\boldsymbol{v}}(t)\rangle\le\Vert\partial^s \tilde{\mathcal{L}}(\boldsymbol{v}(t))\Vert=\mathcal{O}\left(\tilde{\mathcal{L}}(\boldsymbol{v}(t))\Vert \boldsymbol{v}(t)\Vert^{L-1}\right),
 \end{equation*}
 the proof is completed.
 
\end{proof}

\section{Proof of Theorem \ref{thm:approximate_flow_definable}}

\label{sec:proof_definable}
In this section, we will prove that direction of parameters converges, that is, $\lim_{t\rightarrow\infty} \frac{\boldsymbol{v}(t)}{\Vert \boldsymbol{v}(t)\Vert}$ exists. Concretely, define the length swept by $\frac{\boldsymbol{v}(t)}{\Vert \boldsymbol{v}(t)\Vert}$ as $\zeta(t)$, i.e.,
\begin{equation*}
    \zeta(t)=\int_{0}^{t} \left\Vert\frac{\mathrm{d} \hat{\boldsymbol{v}}(\tau)}{\mathrm{d} \tau}\right\Vert \mathrm{d} \tau.
\end{equation*}
We will upper bound $\zeta(t)$ in the rest of this section.

To begin with, we first show that if the network is definable, then $\partial^s \tilde{\mL} (\bv(t))$ in adaptive flow is actually $\bar{\partial} \tilde{\mL} (\bv(t))$.

\begin{lemma}
\label{lem: optimal gradient}
If the neural network $\Phi(\bw,\bx)$ is definable with respect to $\bw$ for any $\bx$, for any $\bv$ satisfying the following adaptive gradient flow
\begin{equation*}
    \frac{\mathrm{d}\bv(t)}{\mathrm{d} t}=-\bbeta(t) \odot \partial^s \tilde{\mL}(\bv(t)), \text{a.e.} t>0,
\end{equation*}
with $\partial^s \tilde{\mL}(\bv(t))\in \partial \tilde{\mL} (\bv(t))$. Then $\Vert \bbeta(t)^{\frac{1}{2}}\odot \partial^s \tilde{\mL}(\bv(t))\Vert =\operatorname{dist} (0,\bbeta(t)^{\frac{1}{2}}\odot \partial \tilde{\mL}(\bv(t)))$.
\end{lemma}
The proof of Lemma \ref{lem: optimal gradient} follows the same routine as that of Lemma 5.2 in \citep{davis2020stochastic}, and we omit it here.

We then define another surrogate margin $\bar\gamma$ as
\begin{equation*}
    \bar{\gamma}(\boldsymbol{v})=\frac{g(\log \frac{1}{\tilde{\mathcal{L}}(\boldsymbol{v})})}{\Vert \boldsymbol{v}\Vert^L}+e^{-\tilde{\mathcal{L}}(\boldsymbol{v})}.
\end{equation*}

The following Lemma then lower bound the derivative of $\bar{\gamma}$.

\begin{lemma}
\label{lem:lower_bound_bar_gamma}
For large enough $t$, we have a.e., 
\begin{equation*}
    \frac{\mathrm{d} \bar{\gamma}(\boldsymbol{v}(t)) }{\mathrm{d} t}\ge \frac{1}{2}\left(\left\|\bar{\partial}_{\backslash\backslash} \bar{\gamma}\left(\boldsymbol{v}(t)\right)\right\|\left\|\bar{\partial}_{\backslash\backslash} \tilde{\mathcal{L}}\left(\boldsymbol{v}(t)\right)\right\|+\left\|\bar{\partial}_{\perp} \bar{\gamma}\left(\boldsymbol{v}(t)\right)\right\|\left\|\bar{\partial}_{\perp} \tilde{\mathcal{L}}\left(\boldsymbol{v}(t)\right)\right\|\right),
\end{equation*}
and 
\begin{equation*}
    \frac{\mathrm{d} \zeta(t)}{\mathrm{d} t}=\frac{\left\|(\boldsymbol{\beta}(t)\odot\partial^s \tilde{\mathcal{L}}\left(\boldsymbol{v}(t)\right))_{\perp}\right\|}{\left\|\boldsymbol{v}(t)\right\|}.
\end{equation*}
\end{lemma}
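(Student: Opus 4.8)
The plan is to prove the two displayed identities separately. The formula for $\frac{\mathrm{d}\zeta}{\mathrm{d}t}$ is an elementary differentiation, so I would dispose of it first; the lower bound on $\frac{\mathrm{d}\bar{\gamma}}{\mathrm{d}t}$ is the real content, and I would establish it by a radial/tangential ``alignment'' computation in the spirit of the directional-convergence analysis of \cite{ji2020directional}. For the $\zeta$ identity: $\zeta$ is by construction the arc length of the curve $\hat{\boldsymbol{v}}(\cdot)$, so $\frac{\mathrm{d}\zeta(t)}{\mathrm{d}t}=\bigl\Vert\frac{\mathrm{d}\hat{\boldsymbol{v}}(t)}{\mathrm{d}t}\bigr\Vert$ for a.e.\ $t$. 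Differentiating $\hat{\boldsymbol{v}}=\boldsymbol{v}/\Vert\boldsymbol{v}\Vert$ gives $\frac{\mathrm{d}\hat{\boldsymbol{v}}}{\mathrm{d}t}=\frac{1}{\Vert\boldsymbol{v}\Vert}\bigl(\boldsymbol{I}-\hat{\boldsymbol{v}}\hat{\boldsymbol{v}}^{\top}\bigr)\frac{\mathrm{d}\boldsymbol{v}}{\mathrm{d}t}$, which is $\Vert\boldsymbol{v}\Vert^{-1}$ times the component of $\frac{\mathrm{d}\boldsymbol{v}}{\mathrm{d}t}$ orthogonal to $\boldsymbol{v}$; substituting the adaptive gradient flow $\frac{\mathrm{d}\boldsymbol{v}}{\mathrm{d}t}=-\boldsymbol{\beta}(t)\odot\bar{\partial}\tilde{\mathcal{L}}(\boldsymbol{v}(t))$ of Definition \ref{def:approximate_GF} and taking norms yields exactly $\frac{\Vert(\boldsymbol{\beta}(t)\odot\bar{\partial}\tilde{\mathcal{L}}(\boldsymbol{v}(t)))_{\perp}\Vert}{\Vert\boldsymbol{v}(t)\Vert}$. (This uses $\boldsymbol{v}(t)\neq\boldsymbol{0}$ for large $t$, which is part of Lemma \ref{lem:assymptotic_loss}.)

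For the $\bar{\gamma}$ bound I would first apply the chain rule of Lemma \ref{lem:chain_rule} --- legitimate since $\bar{\gamma}$ is a $C^{1}$ function of the locally Lipschitz definable quantities $\tilde{\mathcal{L}}(\boldsymbol{v})$ and $\Vert\boldsymbol{v}\Vert$ --- to write $\frac{\mathrm{d}\bar{\gamma}(\boldsymbol{v}(t))}{\mathrm{d}t}=-\langle\bar{\partial}\bar{\gamma}(\boldsymbol{v}(t)),\boldsymbol{\beta}(t)\odot\bar{\partial}\tilde{\mathcal{L}}(\boldsymbol{v}(t))\rangle$. Then I would compute $\bar{\partial}\bar{\gamma}$ in closed form: differentiating the definition of $\bar{\gamma}$ and using $\bar{\partial}\log\frac{1}{\tilde{\mathcal{L}}}=-\bar{\partial}\tilde{\mathcal{L}}/\tilde{\mathcal{L}}$ puts it in the shape $\bar{\partial}\bar{\gamma}(\boldsymbol{v})=-s(\boldsymbol{v})\,\bar{\partial}\tilde{\mathcal{L}}(\boldsymbol{v})-\frac{L\,\bar{\gamma}_{1}(\boldsymbol{v})}{\Vert\boldsymbol{v}\Vert}\hat{\boldsymbol{v}}$, where $s(\boldsymbol{v})>0$ is an explicit scalar (assembled from $\Vert\boldsymbol{v}\Vert^{-L}\tilde{\mathcal{L}}^{-1}$ and $e^{-\tilde{\mathcal{L}}}$) and $\bar{\gamma}_{1}:=\bar{\gamma}-e^{-\tilde{\mathcal{L}}}\ge0$ is the first summand of $\bar{\gamma}$. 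Two facts then follow at once: tangentially, $\bar{\partial}_{\perp}\bar{\gamma}=-s\,\bar{\partial}_{\perp}\tilde{\mathcal{L}}$, so $\langle\bar{\partial}_{\perp}\bar{\gamma},-\bar{\partial}_{\perp}\tilde{\mathcal{L}}\rangle=\Vert\bar{\partial}_{\perp}\bar{\gamma}\Vert\,\Vert\bar{\partial}_{\perp}\tilde{\mathcal{L}}\Vert$; and radially, Euler's identity $\langle\bar{\partial}\tilde{q}_{i}(\boldsymbol{v}),\boldsymbol{v}\rangle=L\tilde{q}_{i}(\boldsymbol{v})$ gives $\bar{\partial}_{\backslash\backslash}\tilde{\mathcal{L}}=-\frac{L\nu(t)}{\Vert\boldsymbol{v}\Vert}\hat{\boldsymbol{v}}$ with $\nu$ as in Lemma \ref{lem:derivative_rho}, whence $\bar{\partial}_{\backslash\backslash}\bar{\gamma}=\frac{L}{\Vert\boldsymbol{v}\Vert}\bigl(s(\boldsymbol{v})\nu(t)-\bar{\gamma}_{1}(\boldsymbol{v})\bigr)\hat{\boldsymbol{v}}$.

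The step I expect to be the main obstacle is the radial sign: I must show $\bar{\partial}_{\backslash\backslash}\bar{\gamma}$ and $\bar{\partial}_{\backslash\backslash}\tilde{\mathcal{L}}$ are genuinely antiparallel, i.e.\ $s(\boldsymbol{v})\nu(t)\ge\bar{\gamma}_{1}(\boldsymbol{v})$, so that $\langle\bar{\partial}_{\backslash\backslash}\bar{\gamma},-\bar{\partial}_{\backslash\backslash}\tilde{\mathcal{L}}\rangle=\Vert\bar{\partial}_{\backslash\backslash}\bar{\gamma}\Vert\,\Vert\bar{\partial}_{\backslash\backslash}\tilde{\mathcal{L}}\Vert$ rather than picking up the wrong sign. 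This is the continuous, conditioner-present analogue of the classical fact that the smoothed margin is radially nondecreasing, and it is where I would feed in the lower bound $\nu>\frac{g(\log\frac{1}{\tilde{\mathcal{L}}})}{g'(\log\frac{1}{\tilde{\mathcal{L}}})}\tilde{\mathcal{L}}$ of Lemma \ref{lem:derivative_rho}, the refined estimates on $f$ and $g$ in Lemma \ref{lem:property_loss}, and $\Vert\boldsymbol{v}(t)\Vert\to\infty$ from Lemma \ref{lem:assymptotic_loss}; the auxiliary summand $e^{-\tilde{\mathcal{L}}(\boldsymbol{v})}$ in $\bar{\gamma}$ is included precisely to give this (and the next) step some slack. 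Granting it, orthogonality of the two components yields $-\langle\bar{\partial}\bar{\gamma},\bar{\partial}\tilde{\mathcal{L}}\rangle=\Vert\bar{\partial}_{\backslash\backslash}\bar{\gamma}\Vert\Vert\bar{\partial}_{\backslash\backslash}\tilde{\mathcal{L}}\Vert+\Vert\bar{\partial}_{\perp}\bar{\gamma}\Vert\Vert\bar{\partial}_{\perp}\tilde{\mathcal{L}}\Vert$ exactly.

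It remains to absorb the conditioner. I would write $\boldsymbol{\beta}(t)\odot\bar{\partial}\tilde{\mathcal{L}}=\bar{\partial}\tilde{\mathcal{L}}+(\boldsymbol{\beta}(t)-\mathbf{1}_{p})\odot\bar{\partial}\tilde{\mathcal{L}}$; the correction contributes at most $\Vert\boldsymbol{\beta}(t)-\mathbf{1}_{p}\Vert_{\infty}\,\Vert\bar{\partial}\bar{\gamma}\Vert\,\Vert\bar{\partial}\tilde{\mathcal{L}}\Vert$ in absolute value, and the closed form $\bar{\partial}\bar{\gamma}=-s\,\bar{\partial}\tilde{\mathcal{L}}-\frac{L\bar{\gamma}_{1}}{\Vert\boldsymbol{v}\Vert}\hat{\boldsymbol{v}}$ together with the radial-sign inequality lets me bound $\Vert\bar{\partial}\bar{\gamma}\Vert\Vert\bar{\partial}\tilde{\mathcal{L}}\Vert$ by a universal constant times $\Vert\bar{\partial}_{\backslash\backslash}\bar{\gamma}\Vert\Vert\bar{\partial}_{\backslash\backslash}\tilde{\mathcal{L}}\Vert+\Vert\bar{\partial}_{\perp}\bar{\gamma}\Vert\Vert\bar{\partial}_{\perp}\tilde{\mathcal{L}}\Vert$ once $t$ is large. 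Since $\boldsymbol{\beta}(t)\to\mathbf{1}_{p}$ as $t\to\infty$ (Definition \ref{def:approximate_GF}), for $t$ large enough the perturbation costs at most half of the main term, which produces the claimed factor $\frac{1}{2}$ and accounts for the ``for large enough $t$'' hypothesis.
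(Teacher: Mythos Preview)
Your treatment of the $\zeta$ identity and of the radial/tangential sign analysis is essentially the paper's, and the radial--sign step $s(\boldsymbol{v})\nu(t)\ge\bar{\gamma}_{1}(\boldsymbol{v})$ is exactly the inequality $\nu>\tilde{\mathcal{L}}\log\frac{1}{\tilde{\mathcal{L}}}$ (which is what Lemma~\ref{lem:derivative_rho} gives for the exponential loss). The problem is your last paragraph, the absorption of the conditioner.

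You claim that $\|\bar{\partial}\bar{\gamma}\|\,\|\bar{\partial}\tilde{\mathcal{L}}\|$ is bounded by a universal constant times the main term $-\langle\bar{\partial}\bar{\gamma},\bar{\partial}\tilde{\mathcal{L}}\rangle$, so that $\|\boldsymbol{\beta}-\mathbf{1}\|_{\infty}\to 0$ alone suffices for the $\tfrac{1}{2}$. This is false. Writing $\bar{\partial}\tilde{\mathcal{L}}=(-\alpha,\beta)$ in radial/tangential coordinates (so $\alpha=\tfrac{L\nu}{\|\boldsymbol{v}\|}$) and $r=\tfrac{L\bar{\gamma}_{1}}{\|\boldsymbol{v}\|}$, one has $\bar{\partial}\bar{\gamma}=(s\alpha-r,-s\beta)$ and the main term equals $\alpha(s\alpha-r)+s\beta^{2}$, whereas $\|\bar{\partial}\bar{\gamma}\|\,\|\bar{\partial}\tilde{\mathcal{L}}\|=\sqrt{(s\alpha-r)^{2}+s^{2}\beta^{2}}\sqrt{\alpha^{2}+\beta^{2}}$. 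Because $s$ is dominated by $s_{1}=(\tilde{\mathcal{L}}\|\boldsymbol{v}\|^{L})^{-1}\to\infty$ while the radial--sign inequality only gives $s_{1}\alpha>r$ without a margin, the two vectors $(\alpha,\beta)$ and $(s\alpha-r,s\beta)$ can become nearly orthogonal (e.g.\ when $\beta\sim\alpha/\sqrt{s_{1}}$), and the cosine between $\bar{\partial}\bar{\gamma}$ and $-\bar{\partial}\tilde{\mathcal{L}}$ can go to zero. So mere convergence $\boldsymbol{\beta}\to\mathbf{1}$ from Definition~\ref{def:approximate_GF} is not enough to recover the factor $\tfrac{1}{2}$.

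What the paper actually does is different in two places. First, it establishes a \emph{quantitative} rate $\|\boldsymbol{\beta}(t)-\mathbf{1}\|_{\infty}=\mathcal{O}\bigl(\tfrac{1}{t(\log t)^{2-2/L}}\bigr)=\mathcal{O}(\tilde{\mathcal{L}}(\boldsymbol{v}(t)))$, derived separately for AdaGrad and RMSProp from the loss--rate corollary. Second, it does not route the $(\boldsymbol{\beta}-\mathbf{1})$ error through all of $\bar{\partial}\bar{\gamma}$: it splits $\bar{\partial}\bar{\gamma}$ into the $\bar{\gamma}_{1}$--part (whose norm is $\mathcal{O}(1/\|\boldsymbol{v}\|)$, hence tame) and the $e^{-\tilde{\mathcal{L}}}\bar{\partial}\tilde{\mathcal{L}}$--part (which is kept paired with the full $\boldsymbol{\beta}\odot\bar{\partial}\tilde{\mathcal{L}}$ and contributes $\Theta(\|\bar{\partial}\tilde{\mathcal{L}}\|^{2})$). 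The error then satisfies
\[
\Bigl|\Bigl\langle\bar{\partial}\bar{\gamma}_{1},(\mathbf{1}-\boldsymbol{\beta})\odot\bar{\partial}\tilde{\mathcal{L}}\Bigr\rangle\Bigr|
\;\le\;\mathcal{O}(\tilde{\mathcal{L}})\cdot\mathcal{O}\!\left(\tfrac{1}{\|\boldsymbol{v}\|}\right)\cdot\|\bar{\partial}\tilde{\mathcal{L}}\|
\;=\;\mathcal{O}\!\left(\tfrac{1}{\|\boldsymbol{v}\|^{L}}\right)\|\bar{\partial}\tilde{\mathcal{L}}\|^{2},
\]
using $\|\bar{\partial}\tilde{\mathcal{L}}\|=\Theta(\tilde{\mathcal{L}}\|\boldsymbol{v}\|^{L-1})$; this is $o(\|\bar{\partial}\tilde{\mathcal{L}}\|^{2})$ and is absorbed by \emph{half} of the $e^{-\tilde{\mathcal{L}}}\|\bar{\partial}\tilde{\mathcal{L}}\|^{2}$ term. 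Only after this does the paper perform the radial/tangential sign computation (on both summands separately, then recombined by a reverse triangle inequality). Your proposal needs both modifications: the quantitative rate on $\boldsymbol{\beta}-\mathbf{1}$, and the asymmetric splitting that uses the $e^{-\tilde{\mathcal{L}}}$ summand as the sponge for the conditioner error rather than trying to compare against the full alignment term.
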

\begin{proof}
To begin with, we calculate the rate of $\boldsymbol{\beta}$ converging to $\mathbf{1}_p$. For AdaGrad, given a fixed index $i\in[N]$, we have that,
\begin{align}
\nonumber
    0\le\boldsymbol{\beta}_i(t)-1=&(\boldsymbol{h}_{\infty})^{-1}_i(\boldsymbol{h}_i(t)-(\boldsymbol{h}_{\infty})_i)
    =\Theta(\boldsymbol{h}_i(t)-(\boldsymbol{h}_{\infty})_i)
    \\
    \nonumber
    =&\Theta\left(\int_{t}^{\infty} \partial^s \mathcal{L}(\boldsymbol{w}(\tau))^2 d\tau\right)
    =\Theta\left(\int_{t}^{\infty} \frac{1}{\tau^2(\log \tau)^{2-\frac{2}{L}}} d\tau\right)
    \\
    \label{eq:rate_of_beta}
    =&\mathcal{O}\left(\frac{1}{t(\log t)^{2-\frac{2}{L}}} \right).
\end{align}

Similarly, for RMSProp, given a fixed index $i\in [N]$, 
\begin{align*}
    0&\le 1-\frac{\sqrt{\varepsilon}}{\sqrt{\varepsilon+(1-b)\int_{0}^t e^{-(1-b)(t-\tau)}(\partial^s \mathcal{L}(\tau))_i^2\mathrm{d}\tau}}
    =\Theta\left((1-b)\int_{0}^t e^{-(1-b)(t-\tau)} (\partial^s \mathcal{L}(\tau))_i^2\mathrm{d}\tau\right)
    \\
    =&\Theta\left((1-b)\int_{0}^{t-\sqrt{t}}e^{-(1-b)(t-\tau)} (\partial^s \mathcal{L}(\tau))_i^2\mathrm{d}\tau\right)+\Theta\left((1-b)\int_{t-\sqrt{t}}^{t}e^{-(1-b)(t-\tau)} (\partial^s \mathcal{L}(\tau))_i^2\mathrm{d}\tau\right)
    \\
    = &\mathcal{O}\left((1-b)e^{-(1-b)\sqrt{t}}\int_{0}^{\infty} (\partial^s {\mathcal{L}}(\tau))_i^2\mathrm{d}\tau\right)+\mathcal{O}\left((1-b)\frac{\sqrt{t}}{(t-\sqrt{t})^2}\right)
    \\
    =&\mathcal{O}\left(\frac{1}{t^{\frac{3}{2}}}\right)
     =\mathcal{O}\left(\frac{1}{t(\log t)^{2-\frac{2}{L}}} \right).
\end{align*}

We then directly calculate the derivative of $\bar{\gamma}$:
\begin{align*}
    \frac{\mathrm{d} \bar{\gamma}(\boldsymbol{v}(t)) }{\mathrm{d} t}=&\langle\partial \bar{\gamma}(\boldsymbol{v}(t)),\frac{d \boldsymbol{v}(t)}{dt} \rangle
    \\
    =&\left\langle-g'\left(\log\frac{1}{\tilde{\mathcal{L}}}\right)\frac{\partial^s \tilde{\mathcal{L}}}{\tilde{\mathcal{L}}\Vert\boldsymbol{v}(t)\Vert^L}-L\frac{g\left(\log \frac{1}{\tilde{\mathcal{L}}}\right)}{\Vert \boldsymbol{v}(t)\Vert^{L+1}}\hat{\boldsymbol{v}}(t)-e^{-\tilde{\mathcal{L}}}\partial^s \tilde{\mathcal{L}},\frac{d \boldsymbol{v}(t)}{dt}\right \rangle
    \\
    =&\left\langle g'\left(\log\frac{1}{\tilde{\mathcal{L}}}\right)\frac{\partial^s \tilde{\mathcal{L}}}{\tilde{\mathcal{L}}\Vert\boldsymbol{v}(t)\Vert^L}+L\frac{g\left(\log \frac{1}{\tilde{\mathcal{L}}}\right)}{\Vert \boldsymbol{v}(t)\Vert^{L+1}}\hat{\boldsymbol{v}}(t)+e^{-\tilde{\mathcal{L}}}\partial^s \tilde{\mathcal{L}},\boldsymbol{\beta}(t)\odot \partial^s \tilde{\mathcal{L}}\right \rangle
    \\
    =&\left\langle g'\left(\log\frac{1}{\tilde{\mathcal{L}}}\right)\frac{\partial^s \tilde{\mathcal{L}}}{\tilde{\mathcal{L}}\Vert\boldsymbol{v}(t)\Vert^L}, \partial^s \tilde{\mathcal{L}}\right \rangle
    +\left\langle L\frac{g\left(\log \frac{1}{\tilde{\mathcal{L}}}\right)}{\Vert \boldsymbol{v}(t)\Vert^{L+1}}\hat{\boldsymbol{v}}(t), \partial^s \tilde{\mathcal{L}}\right \rangle
    +\left\langle e^{-\tilde{\mathcal{L}}}\partial^s \tilde{\mathcal{L}},\boldsymbol{\beta}(t)\odot \partial^s \tilde{\mathcal{L}}\right \rangle
    \\
    +&
    \left\langle g'\left(\log\frac{1}{\tilde{\mathcal{L}}}\right)\frac{\partial^s \tilde{\mathcal{L}}}{\tilde{\mathcal{L}}\Vert\boldsymbol{v}(t)\Vert^L}+L\frac{g\left(\log \frac{1}{\tilde{\mathcal{L}}}\right)}{\Vert \boldsymbol{v}(t)\Vert^{L+1}}\hat{\boldsymbol{v}}(t),(\boldsymbol{\beta}(t)-\mathbf{1})\odot \partial^s \tilde{\mathcal{L}}\right \rangle
    \\
    \overset{(\diamond)}{\ge }&\left\langle g'\left(\log\frac{1}{\tilde{\mathcal{L}}}\right)\frac{\bar{\partial} \tilde{\mathcal{L}}}{\tilde{\mathcal{L}}\Vert\boldsymbol{v}(t)\Vert^L},\bbeta(t)\odot \bar{\partial} \tilde{\mathcal{L}}\right \rangle
    +\left\langle L\frac{g\left(\log \frac{1}{\tilde{\mathcal{L}}}\right)}{\Vert \boldsymbol{v}(t)\Vert^{L+1}}\hat{\boldsymbol{v}}(t), \bar{\partial} \tilde{\mathcal{L}}\right \rangle
    +\left\langle e^{-\tilde{\mathcal{L}}}\partial^s \tilde{\mathcal{L}},\boldsymbol{\beta}(t)\odot \partial^s \tilde{\mathcal{L}}\right \rangle
    \\
    +&
   \left\langle g'\left(\log\frac{1}{\tilde{\mathcal{L}}}\right)\frac{\partial^s \tilde{\mathcal{L}}}{\tilde{\mathcal{L}}\Vert\boldsymbol{v}(t)\Vert^L}+L\frac{g\left(\log \frac{1}{\tilde{\mathcal{L}}}\right)}{\Vert \boldsymbol{v}(t)\Vert^{L+1}}\hat{\boldsymbol{v}}(t),(\boldsymbol{\beta}(t)-\mathbf{1})\odot \partial^s \tilde{\mathcal{L}}\right \rangle
\end{align*}
where Eq. ($\diamond$) is due to $\bar{\partial} \tilde{\mL}$ has the smallest norm among $\partial \tilde{\mL}$ and the homogeneity of the neural network.

Let 
\begin{gather*}
A\overset{\triangle}{=}\left\langle e^{-\tilde{\mathcal{L}}}\partial^s \tilde{\mathcal{L}},\boldsymbol{\beta}(t)\odot \partial^s \tilde{\mathcal{L}}\right \rangle,
\\
    B\overset{\triangle}{=} \left\langle g'\left(\log\frac{1}{\tilde{\mathcal{L}}}\right)\frac{\partial^s \tilde{\mathcal{L}}}{\tilde{\mathcal{L}}\Vert\boldsymbol{v}(t)\Vert^L}+L\frac{g\left(\log \frac{1}{\tilde{\mathcal{L}}}\right)}{\Vert \boldsymbol{v}(t)\Vert^{L+1}}\hat{\boldsymbol{v}}(t),(\boldsymbol{\beta}(t)-\mathbf{1})\odot \partial^s \tilde{\mathcal{L}}\right \rangle.
\end{gather*}
On the one hand, as $e^{-\tilde{\mL}}\rightarrow 1$ and $\bbeta(t)\rightarrow \boldsymbol{1}$ as $t\rightarrow \infty$, we have 
\begin{equation*}
    A=(1+\boldsymbol{o}(1)) \Vert \partial^s \tilde{\mathcal{L}} (\bv(t))\Vert\ge (1+\boldsymbol{o}(1)) \Vert \bar{\partial} \tilde{\mathcal{L}} (\bv(t))\Vert.
\end{equation*}

On the other hand, we have 

\begin{align*}
     &\vert B\vert 
     \\
     \le&\left\vert\left\langle g'\left(\log\frac{1}{\tilde{\mathcal{L}}}\right)\frac{\partial^s \tilde{\mathcal{L}}}{\tilde{\mathcal{L}}\Vert\boldsymbol{v}(t)\Vert^L}+L\frac{g\left(\log \frac{1}{\tilde{\mathcal{L}}}\right)}{\Vert \boldsymbol{v}(t)\Vert^{L+1}}\hat{\boldsymbol{v}},(\mathbf{1}-\boldsymbol{\beta}(t))\odot \partial^s \tilde{\mathcal{L}}\right \rangle\right\vert
     \\
     \le& \Vert \mathbf{1}-\boldsymbol{\beta}(t) \Vert_{\infty} \left\Vert g'\left(\log\frac{1}{\tilde{\mathcal{L}}}\right)\frac{\partial^s \tilde{\mathcal{L}}}{\tilde{\mathcal{L}}\Vert\boldsymbol{v}(t)\Vert^L}+L\frac{g\left(\log \frac{1}{\tilde{\mathcal{L}}}\right)}{\Vert \boldsymbol{v}(t)\Vert^{L+1}}\hat{\boldsymbol{v}}\right \Vert \Vert \partial^s \tilde{\mathcal{L}} \Vert
     \\
     \le & \mathcal{O}\left(\frac{1}{t(\log t)^{2-\frac{2}{L}}} \right)\mathcal{O}\left(\frac{1}{\Vert\boldsymbol{v}(t) \Vert}\right)\Vert \partial^s \tilde{\mathcal{L}} \Vert
     \\
     =&\mathcal{O}\left(\tilde{\mathcal{L}}(\boldsymbol{v}(t)) \right)\mathcal{O}\left(\frac{1}{\Vert\boldsymbol{v}(t) \Vert}\right)\Vert \partial^s \tilde{\mathcal{L}} \Vert
     \\
     =&\mathcal{O}\left(\frac{1}{\Vert\boldsymbol{v}(t)\Vert^L}\right)\Vert \partial^s \tilde{\mathcal{L}} \Vert^2
     \\
     \le&\mathcal{O}\left(\frac{1}{\Vert\boldsymbol{v}(t)\Vert^L}\right)\Vert \bar{\partial} \tilde{\mathcal{L}} \Vert^2,
\end{align*}
where the last Inequality is due to Lemma \ref{lem: optimal gradient} and $\bar{\partial} \tilde{\mathcal{L}}$ having the smallest norm.

On the other hand, 
\begin{equation*}
    \left\langle e^{-\tilde{\mathcal{L}}(\boldsymbol{v}(t))}\bar{\partial} \tilde{\mathcal{L}}(\boldsymbol{v}(t)),\boldsymbol{\beta}(t)\odot \bar{\partial} \tilde{\mathcal{L}}(\boldsymbol{v}(t)) \right \rangle=\Theta(\Vert \bar{\partial} \tilde{\mathcal{L}}(\boldsymbol{v}(t)) \Vert^2).
\end{equation*}

Therefore, there exists a large enough $T$, such that, any $t\ge T$,
\begin{align*}
    A+B\ge\frac{1}{2} \left\langle e^{-\tilde{\mathcal{L}}(\boldsymbol{v}(t)) }\bar{\partial} \tilde{\mathcal{L}}(\boldsymbol{v}(t)) , \bar{\partial} \tilde{\mathcal{L}}(\boldsymbol{v}(t)) \right \rangle,
\end{align*}
which further leads to 
\begin{equation*}
    \frac{\mathrm{d} \bar{\gamma}(\boldsymbol{v}(t)) }{\mathrm{d} t}\ge \frac{1}{2} \left\langle  e^{-\tilde{\mathcal{L}}(\boldsymbol{v}(t))} \bar{\partial}\tilde{\mathcal{L}}(\boldsymbol{v}(t)), \bar{\partial} \tilde{\mathcal{L}}(\boldsymbol{v}(t))\right \rangle+\left\langle\frac{g'\left(\log \frac{1}{\tilde{\mathcal{L}}}\right)\bar{\partial} \tilde{\mathcal{L}}(\boldsymbol{v}(t))}{\tilde{\mathcal{L}}(\boldsymbol{v}(t))\Vert\boldsymbol{v}(t)\Vert^L}+L\frac{g\left(\log \frac{1}{\tilde{\mathcal{L}}(\boldsymbol{v}(t))}\right)}{\Vert \boldsymbol{v}(t)\Vert^{L+1}}\hat{\boldsymbol{v}}, \bar{\partial} \tilde{\mathcal{L}}(\boldsymbol{v}(t))\right  \rangle.
\end{equation*}
 We then calculate  axial component and radial component of $ e^{-\tilde{\mathcal{L}}(\boldsymbol{v})} \bar{\partial}\tilde{\mathcal{L}}(\boldsymbol{v})$, $g'\left(\log \frac{1}{\tilde{\mathcal{L}}}\right)\frac{\bar{\partial} \tilde{\mathcal{L}}(\boldsymbol{v})}{\tilde{\mathcal{L}}(\boldsymbol{v})\Vert\boldsymbol{v}\Vert^L}+L\frac{g\left(\log \frac{1}{\tilde{\mathcal{L}}(\boldsymbol{v})}\right)}{\Vert \boldsymbol{v}\Vert^{L+1}}\hat{\boldsymbol{v}}$, and $\bar{\partial} \tilde{\mathcal{L}}(\boldsymbol{v})$ respectively as follows:
 \begin{gather*}
     \left( e^{-\tilde{\mathcal{L}}(\boldsymbol{v})} \bar{\partial}\tilde{\mathcal{L}}(\boldsymbol{v})\right)_{\backslash\backslash}=\langle e^{-\tilde{\mathcal{L}}(\boldsymbol{v})}\bar{\partial} \tilde{\mathcal{L}}(\boldsymbol{v}),\hat{\boldsymbol{v}}\rangle  \hat{\boldsymbol{v}},
     \\
     \left( e^{-\tilde{\mathcal{L}}(\boldsymbol{v})} \bar{\partial}\tilde{\mathcal{L}}(\boldsymbol{v})\right)_{\perp} =e^{-\tilde{\mathcal{L}}(\boldsymbol{v})}(\bar{\partial} \tilde{\mathcal{L}}(\boldsymbol{v})-\langle \bar{\partial} \tilde{\mathcal{L}}(\boldsymbol{v}),\hat{\boldsymbol{v}}\rangle  \hat{\boldsymbol{v}}),
     \\
     \left(g'\left(\log \frac{1}{\tilde{\mathcal{L}}}\right)\frac{\bar{\partial} \tilde{\mathcal{L}}(\boldsymbol{v})}{\tilde{\mathcal{L}}(\boldsymbol{v})\Vert\boldsymbol{v}\Vert^L}+L\frac{g\left(\log \frac{1}{\tilde{\mathcal{L}}(\boldsymbol{v})}\right)}{\Vert \boldsymbol{v}\Vert^{L+1}}\hat{\boldsymbol{v}}\right)_{\backslash\backslash}=L\frac{g\left(\log \frac{1}{\mathcal{L}(\boldsymbol{v})}\right)}{\Vert \boldsymbol{v}\Vert^{L+1}}\hat{\boldsymbol{v}}+g'\left(\log \frac{1}{\tilde{\mathcal{L}}}\right)\frac{\langle\bar{\partial} \tilde{\mathcal{L}}(\boldsymbol{v}), \hat{\boldsymbol{v}}\rangle}{\tilde{\mathcal{L}}(\boldsymbol{v})\Vert\boldsymbol{v}\Vert^L}\hat{\boldsymbol{v}},
     \\
      \left(g'\left(\log \frac{1}{\tilde{\mathcal{L}}}\right)\frac{\bar{\partial} \tilde{\mathcal{L}}(\boldsymbol{v})}{\tilde{\mathcal{L}}(\boldsymbol{v})\Vert\boldsymbol{v}\Vert^L}+L\frac{g\left(\log \frac{1}{\tilde{\mathcal{L}}(\boldsymbol{v})}\right)}{\Vert \boldsymbol{v}\Vert^{L+1}}\hat{\boldsymbol{v}}\right)_{\perp}=
      g'\left(\log \frac{1}{\tilde{\mathcal{L}}}\right)\frac{\bar{\partial} \tilde{\mathcal{L}}(\boldsymbol{v})-\langle\bar{\partial} \tilde{\mathcal{L}}(\boldsymbol{v}), \hat{\boldsymbol{v}}\rangle\hat{\boldsymbol{v}}}{\tilde{\mathcal{L}}\Vert\boldsymbol{v}\Vert^L},
      \\
      \left(\bar{\partial} \tilde{\mathcal{L}}(\boldsymbol{v})\right)_{\backslash\backslash}=\langle \bar{\partial} \tilde{\mathcal{L}}(\boldsymbol{v}),\hat{\boldsymbol{v}}\rangle  \hat{\boldsymbol{v}}
      \\
       \left(\bar{\partial} \tilde{\mathcal{L}}(\boldsymbol{v})\right)_{\perp}=\bar{\partial} \tilde{\mathcal{L}}(\boldsymbol{v})-\langle \bar{\partial} \tilde{\mathcal{L}}(\boldsymbol{v}),\hat{\boldsymbol{v}}\rangle  \hat{\boldsymbol{v}}.
 \end{gather*}

Therefore,
\begin{align*}
    &\frac{1}{2} \left\langle  e^{-\tilde{\mathcal{L}}(\boldsymbol{v}(t))} \bar{\partial}\tilde{\mathcal{L}}(\boldsymbol{v}(t)), \bar{\partial} \tilde{\mathcal{L}}(\boldsymbol{v}(t))\right \rangle+\left\langle g'\left(\log \frac{1}{\tilde{\mathcal{L}}}\right)\frac{\bar{\partial} \tilde{\mathcal{L}}(\boldsymbol{v}(t))}{\tilde{\mathcal{L}}(\boldsymbol{v}(t))\Vert\boldsymbol{v}(t)\Vert^L}+L\frac{\log \frac{1}{\tilde{\mathcal{L}}(\boldsymbol{v}(t))}}{\Vert \boldsymbol{v}\Vert^{L+1}}\hat{\boldsymbol{v}}(t), \bar{\partial} \tilde{\mathcal{L}}(\boldsymbol{v}(t))\right \rangle
    \\
    =&\frac{1}{2} \left\langle  \left( e^{-\tilde{\mathcal{L}}(\boldsymbol{v}(t))} \bar{\partial}\tilde{\mathcal{L}}(\boldsymbol{v}(t))\right)_{\backslash\backslash}+\left( e^{-\tilde{\mathcal{L}}(\boldsymbol{v}(t))} \bar{\partial}\tilde{\mathcal{L}}(\boldsymbol{v}(t))\right)_{\perp} , \left(\bar{\partial} \tilde{\mathcal{L}}(\boldsymbol{v}(t))\right)_{\backslash\backslash}+\left(\bar{\partial} \tilde{\mathcal{L}}(\boldsymbol{v}(t))\right)_{\perp}\right \rangle
    \\
    +&\left\langle\left(g'\left(\log \frac{1}{\tilde{\mathcal{L}}}\right)\frac{\bar{\partial} \tilde{\mathcal{L}}(\boldsymbol{v}(t))}{\tilde{\mathcal{L}}(\boldsymbol{v}(t))\Vert\boldsymbol{v}(t)\Vert^L}+L\frac{g\left(\log \frac{1}{\tilde{\mathcal{L}}(\boldsymbol{v}(t))}\right)}{\Vert \boldsymbol{v}(t)\Vert^{L+1}}\hat{\boldsymbol{v}}(t)\right)_{\backslash\backslash}+ \left(g'\left(\log \frac{1}{\tilde{\mathcal{L}}}\right)\frac{\bar{\partial} \tilde{\mathcal{L}}(\boldsymbol{v}(t))}{\tilde{\mathcal{L}}(\boldsymbol{v}(t))\Vert\boldsymbol{v}(t)\Vert^L}+L\frac{g\left(\log \frac{1}{\tilde{\mathcal{L}}(\boldsymbol{v}(t))}\right)}{\Vert \boldsymbol{v}(t)\Vert^{L+1}}\hat{\boldsymbol{v}}(t)\right)_{\perp},\right.
    \\
    \quad&\left.\left(\bar{\partial} \tilde{\mathcal{L}}(\boldsymbol{v}(t))\right)_{\backslash\backslash}+\left(\bar{\partial} \tilde{\mathcal{L}}(\boldsymbol{v}(t))\right)_{\perp}\right \rangle
    \\
    \overset{(*)}{=}&\frac{1}{2} \left(\left\Vert \left(e^{-\tilde{\mathcal{L}}(\boldsymbol{v}(t))} \bar{\partial}\tilde{\mathcal{L}}(\boldsymbol{v}(t))\right)_{\backslash\backslash}\right\Vert \left\Vert \left(\bar{\partial} \tilde{\mathcal{L}}(\boldsymbol{v}(t))\right)_{\backslash\backslash}\right\Vert+\left\Vert \left(e^{-\tilde{\mathcal{L}}(\boldsymbol{v}(t))} \bar{\partial}\tilde{\mathcal{L}}(\boldsymbol{v}(t))\right)_{\perp}\right\Vert \left\Vert \left(\bar{\partial} \tilde{\mathcal{L}}(\boldsymbol{v}(t))\right)_{\perp}\right\Vert\right)
    \\
    +& \left\Vert\left(g'\left(\log \frac{1}{\tilde{\mathcal{L}}}\right)\frac{\bar{\partial} \tilde{\mathcal{L}}(\boldsymbol{v}(t))}{\tilde{\mathcal{L}}(\boldsymbol{v}(t))\Vert\boldsymbol{v}(t)\Vert^L}+L\frac{g\left(\log \frac{1}{\tilde{\mathcal{L}}(\boldsymbol{v}(t))}\right)}{\Vert \boldsymbol{v}(t)\Vert^{L+1}}\hat{\boldsymbol{v}}(t)\right)_{\backslash\backslash}\right\Vert \left\Vert \left(\bar{\partial} \tilde{\mathcal{L}}(\boldsymbol{v}(t))\right)_{\backslash\backslash}\right\Vert
    \\
    +&\left\Vert\left(g'\left(\log \frac{1}{\tilde{\mathcal{L}}}\right)\frac{\bar{\partial} \tilde{\mathcal{L}}(\boldsymbol{v}(t))}{\tilde{\mathcal{L}}(\boldsymbol{v}(t))\Vert\boldsymbol{v}(t)\Vert^L}+L\frac{g\left(\log \frac{1}{\tilde{\mathcal{L}}(\boldsymbol{v}(t))}\right)}{\Vert \boldsymbol{v}(t)\Vert^{L+1}}\hat{\boldsymbol{v}}(t)\right)_{\perp}\right\Vert \left\Vert \left(\bar{\partial} \tilde{\mathcal{L}}(\boldsymbol{v}(t))\right)_{\perp}\right\Vert
    \\
    \ge &\frac{1}{2}\left\Vert\left(e^{-\tilde{\mathcal{L}}(\boldsymbol{v}(t))} \bar{\partial}\tilde{\mathcal{L}}(\boldsymbol{v}(t))+g'\left(\log \frac{1}{\tilde{\mathcal{L}}}\right)\frac{\bar{\partial} \tilde{\mathcal{L}}(\boldsymbol{v}(t))}{\tilde{\mathcal{L}}(\boldsymbol{v}(t))\Vert\boldsymbol{v}(t)\Vert^L}+L\frac{g\left(\log \frac{1}{\tilde{\mathcal{L}}(\boldsymbol{v}(t))}\right)}{\Vert \boldsymbol{v}(t)\Vert^{L+1}}\hat{\boldsymbol{v}}(t)\right)_{\backslash\backslash}\right\Vert \left\Vert \left(\bar{\partial} \tilde{\mathcal{L}}(\boldsymbol{v}(t))\right)_{\backslash\backslash}\right\Vert
    \\
    +&\frac{1}{2}\left\Vert\left(e^{-\tilde{\mathcal{L}}(\boldsymbol{v}(t))} \bar{\partial}\tilde{\mathcal{L}}(\boldsymbol{v}(t))+g'\left(\log \frac{1}{\tilde{\mathcal{L}}}\right)\frac{\bar{\partial} \tilde{\mathcal{L}}(\boldsymbol{v}(t))}{\tilde{\mathcal{L}}(\boldsymbol{v}(t))\Vert\boldsymbol{v}(t)\Vert^L}+L\frac{g\left(\log \frac{1}{\tilde{\mathcal{L}}(\boldsymbol{v}(t))}\right)}{\Vert \boldsymbol{v}(t)\Vert^{L+1}}\hat{\boldsymbol{v}}(t)\right)_{\perp}\right\Vert \left\Vert \left(\bar{\partial} \tilde{\mathcal{L}}(\boldsymbol{v}(t))\right)_{\perp}\right\Vert,
\end{align*}
where eq. $(*)$ is because 
 \begin{gather*}
   \langle e^{-\tilde{\mathcal{L}}(\boldsymbol{v}(t))}\bar{\partial} \tilde{\mathcal{L}}(\boldsymbol{v}(t)),\hat{\boldsymbol{v}}(t)\rangle= -e^{-\tilde{\mathcal{L}}(\boldsymbol{v}(t))}\langle -\bar{\partial} \tilde{\mathcal{L}}(\boldsymbol{v}(t)),\hat{\boldsymbol{v}}(t)\rangle<0,
     \\
     e^{-\tilde{\mathcal{L}}(\boldsymbol{v}(t))}>0,
     \\
   L\frac{g\left(\log \frac{1}{\tilde{\mathcal{L}}(\boldsymbol{v}(t))}\right)}{\Vert \boldsymbol{v}(t)\Vert^{L+1}}+g'\left(\log \frac{1}{\tilde{\mathcal{L}}(\boldsymbol{v}(t))}\right)\frac{\langle\bar{\partial} \tilde{\mathcal{L}}(\boldsymbol{v}(t)), \hat{\boldsymbol{v}}(t)\rangle}{\tilde{\mathcal{L}}(\boldsymbol{v}(t))\Vert\boldsymbol{v}(t)\Vert^L}
   \\
   =-\frac{1}{\tilde{\mathcal{L}}(\boldsymbol{v}(t))\Vert \boldsymbol{v}(t)\Vert^{L+1}}\left(L\nu(t)g'\left(\log \frac{1}{\tilde{\mathcal{L}}(\boldsymbol{v}(t))}\right)-\tilde{\mathcal{L}}(\boldsymbol{v}(t))g\left(\log \frac{1}{\tilde{\mathcal{L}}(\boldsymbol{v}(t))}\right)\right)<0,
     \\
     \frac{1}{\tilde{\mathcal{L}}(\boldsymbol{v}(t))\Vert\boldsymbol{v}(t)\Vert^L}>0,
      \\
      \langle \bar{\partial} \tilde{\mathcal{L}}(\boldsymbol{v}(t)),\hat{\boldsymbol{v}}(t)\rangle =-\langle -\bar{\partial} \tilde{\mathcal{L}}(\boldsymbol{v}(t)),\hat{\boldsymbol{v}}(t)\rangle<0.
 \end{gather*}

The proof of the first claim is completed since $\bar{\partial} \bar{\gamma}(\boldsymbol{v})=-e^{-\tilde{\mathcal{L}}(\boldsymbol{v})} \bar{\partial}\tilde{\mathcal{L}}(\boldsymbol{v})-g'\left(\log \frac{1}{\tilde{\mathcal{L}}}\right)\frac{\bar{\partial} \tilde{\mathcal{L}}(\boldsymbol{v})}{\tilde{\mathcal{L}}(\boldsymbol{v})\Vert\boldsymbol{v}\Vert^L}-L\frac{g\left(\log \frac{1}{\tilde{\mathcal{L}}(\boldsymbol{v})}\right)}{\Vert \boldsymbol{v}\Vert^{L+1}}\hat{\boldsymbol{v}}$.

As for the second claim, we have a.e., 
\begin{equation*}
    \frac{\mathrm{d} \zeta}{\mathrm{d} t}=\left\Vert\frac{\mathrm{d} \hat{\bv} (t)}{\mathrm{d} t} \right\Vert = \frac{\left\|(\boldsymbol{\beta}(t)\odot\partial^s \tilde{\mathcal{L}}\left(\boldsymbol{v}(t)\right))_{\perp}\right\|}{\left\|\boldsymbol{v}(t)\right\|}. 
\end{equation*}
The proof is completed.

\end{proof}

The following lemma gives an equivalent proposition of that the curve length  $\zeta$ is finite.
\begin{lemma}
$\zeta$ is finite if
\begin{equation*}
    \int_{0}^{\infty}\frac{\left\|\bar{\partial}_{\perp} \tilde{\mathcal{L}}\left(\boldsymbol{v}(t)\right)\right\|}{\left\|\boldsymbol{v}(t)\right\|} dt<\infty.
\end{equation*}
\end{lemma}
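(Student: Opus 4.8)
The plan is to use the closed form of $\zeta'(t)$ from Lemma~\ref{lem:lower_bound_bar_gamma} and then turn the equivalence into a routine integrability comparison. By Lemma~\ref{lem:lower_bound_bar_gamma}, $\frac{\mathrm{d}\zeta(t)}{\mathrm{d}t}=\frac{\Vert(\boldsymbol{\beta}(t)\odot\bar{\partial}\tilde{\mathcal{L}}(\boldsymbol{v}(t)))_{\perp}\Vert}{\Vert\boldsymbol{v}(t)\Vert}$; since $\zeta$ is non-decreasing, $\zeta$ is finite if and only if $\int_{t_1}^{\infty}\frac{\Vert(\boldsymbol{\beta}(t)\odot\bar{\partial}\tilde{\mathcal{L}}(\boldsymbol{v}(t)))_{\perp}\Vert}{\Vert\boldsymbol{v}(t)\Vert}\,\mathrm{d}t<\infty$, and the integral in the statement is finite if and only if its tail $\int_{t_1}^{\infty}\frac{\Vert\bar{\partial}_{\perp}\tilde{\mathcal{L}}(\boldsymbol{v}(t))\Vert}{\Vert\boldsymbol{v}(t)\Vert}\,\mathrm{d}t$ is (the contribution of any bounded initial interval is finite). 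So I would reduce the claim to showing that these two tail integrands differ by an integrable function of $t$.

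Next I would estimate that difference. The orthogonal projection $\boldsymbol{u}\mapsto\boldsymbol{u}_{\perp}=\boldsymbol{u}-\langle\boldsymbol{u},\hat{\boldsymbol{v}}(t)\rangle\hat{\boldsymbol{v}}(t)$ onto the hyperplane orthogonal to $\hat{\boldsymbol{v}}(t)$ is linear and $1$-Lipschitz, so $(\boldsymbol{\beta}(t)\odot\bar{\partial}\tilde{\mathcal{L}})_{\perp}-(\bar{\partial}\tilde{\mathcal{L}})_{\perp}=\big((\boldsymbol{\beta}(t)-\mathbf{1}_p)\odot\bar{\partial}\tilde{\mathcal{L}}\big)_{\perp}$, whose norm is at most $\Vert\boldsymbol{\beta}(t)-\mathbf{1}_p\Vert_{\infty}\,\Vert\bar{\partial}\tilde{\mathcal{L}}(\boldsymbol{v}(t))\Vert$. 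Hence the two tail integrands differ pointwise by at most $\frac{\Vert\boldsymbol{\beta}(t)-\mathbf{1}_p\Vert_{\infty}\,\Vert\bar{\partial}\tilde{\mathcal{L}}(\boldsymbol{v}(t))\Vert}{\Vert\boldsymbol{v}(t)\Vert}$, and it remains to check that this quantity is integrable on $[t_1,\infty)$.

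For that last step I would substitute the convergence rates already established in the appendix: $\Vert\boldsymbol{\beta}(t)-\mathbf{1}_p\Vert_{\infty}=\mathcal{O}\!\big(t^{-1}(\log t)^{2/L-2}\big)$ for both AdaGrad and RMSProp (this is eq.~(\ref{eq:rate_of_beta}) together with its RMSProp counterpart, both established inside the proof of Lemma~\ref{lem:lower_bound_bar_gamma}), $\Vert\bar{\partial}\tilde{\mathcal{L}}(\boldsymbol{v}(t))\Vert=\Theta\!\big(t^{-1}(\log t)^{1/L-1}\big)$ (the corollary on the convergence rate of $\bar{\partial}\tilde{\mathcal{L}}$), and $\Vert\boldsymbol{v}(t)\Vert=\Theta\!\big((\log t)^{1/L}\big)$, which by Lemma~\ref{lem:assymptotic_loss} is in particular bounded below by a positive constant for $t\ge t_1$. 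Multiplying these, the integrand above is $\mathcal{O}\!\big(t^{-2}(\log t)^{2/L-3}\big)=\mathcal{O}(t^{-2})$, which is integrable on $[t_1,\infty)$. Therefore the two tail integrals are finite together, and the equivalence follows.

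I do not expect a genuine obstacle in this lemma: once the polynomial rate estimates are in hand the argument is pure bookkeeping. The one point that needs slight care is that the orthogonal components are taken with respect to the moving direction $\hat{\boldsymbol{v}}(t)$ rather than a fixed subspace, but this is harmless because the projection is linear and hence commutes with differences. The real content --- the polynomial decay of $\tilde{\mathcal{L}}$, of $\Vert\bar{\partial}\tilde{\mathcal{L}}\Vert$, and of $\Vert\boldsymbol{\beta}-\mathbf{1}_p\Vert_{\infty}$ --- was supplied by the preceding sections.
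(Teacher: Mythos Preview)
Your proposal is correct and follows essentially the same approach as the paper: decompose $(\boldsymbol{\beta}(t)\odot\bar{\partial}\tilde{\mathcal{L}})_{\perp}$ as $(\bar{\partial}\tilde{\mathcal{L}})_{\perp}$ plus the projection of $(\boldsymbol{\beta}(t)-\mathbf{1}_p)\odot\bar{\partial}\tilde{\mathcal{L}}$, bound the latter by $\Vert\boldsymbol{\beta}(t)-\mathbf{1}_p\Vert_{\infty}\Vert\bar{\partial}\tilde{\mathcal{L}}\Vert$, and plug in the rate estimates from eq.~(\ref{eq:rate_of_beta}) and the corollary on $\Vert\bar{\partial}\tilde{\mathcal{L}}\Vert$ to see the difference is $\mathcal{O}(t^{-2})$ and hence integrable. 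Your write-up is in fact slightly tidier than the paper's --- you use that the orthogonal projection is $1$-Lipschitz rather than bounding it by a factor of $2$, and you state the ``if and only if'' symmetrically via an integrable difference, whereas the paper only displays one inequality and leaves the reverse implicit.
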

\begin{proof}
By definition, 
\begin{align*}
   &\Vert(\boldsymbol{\beta}(t)\odot \partial^s \tilde{\mathcal{L}}\left(\boldsymbol{v}(t)\right))_{\perp}\Vert^2
   \\
   =&\Vert\boldsymbol{\beta}(t)\odot \partial^s \tilde{\mathcal{L}}\left(\boldsymbol{v}(t)\right)\Vert^2-\Vert(\boldsymbol{\beta}(t)\odot \partial^s \tilde{\mathcal{L}}\left(\boldsymbol{v}(t)\right))_{\backslash\backslash}\Vert^2
    \\
    =&\Vert\boldsymbol{\beta}^{\frac{1}{2}}(t)\odot \partial^s \tilde{\mathcal{L}}\left(\boldsymbol{v}(t)\right)\Vert^2+\left\Vert\left(\bbeta(t)-\boldsymbol{\beta}^{\frac{1}{2}}(t)\right)\odot \partial^s \tilde{\mathcal{L}}\left(\boldsymbol{v}(t)\right)\right\Vert^2+2\left\langle \boldsymbol{\beta}^{\frac{1}{2}}(t)\odot \partial^s \tilde{\mathcal{L}}\left(\boldsymbol{v}(t)\right),\left(\bbeta(t)-\boldsymbol{\beta}^{\frac{1}{2}}(t)\right)\odot \partial^s \tilde{\mathcal{L}}\left(\boldsymbol{v}(t)\right)\right\rangle
    \\
    &-\left\langle\boldsymbol{\beta}(t)\odot \partial^s \tilde{\mathcal{L}}\left(\boldsymbol{v}(t)\right), \hat{\bv}(t)\right\rangle^2
    \\
    =&\Vert\boldsymbol{\beta}^{\frac{1}{2}}(t)\odot \partial^s \tilde{\mathcal{L}}\left(\boldsymbol{v}(t)\right)\Vert^2+\left\Vert\left(\bbeta(t)-\boldsymbol{\beta}^{\frac{1}{2}}(t)\right)\odot \partial^s \tilde{\mathcal{L}}\left(\boldsymbol{v}(t)\right)\right\Vert^2+2\left\langle \boldsymbol{\beta}^{\frac{1}{2}}(t)\odot \partial^s \tilde{\mathcal{L}}\left(\boldsymbol{v}(t)\right),\left(\bbeta(t)-\boldsymbol{\beta}^{\frac{1}{2}}(t)\right)\odot \partial^s \tilde{\mathcal{L}}\left(\boldsymbol{v}(t)\right)\right\rangle
    \\
    -&\left\langle \partial^s \tilde{\mathcal{L}}\left(\boldsymbol{v}(t)\right), \hat{\bv}(t)\right\rangle^2
    -\left\langle \left(\boldsymbol{\beta}^{\frac{1}{2}}(t) 
    - \boldsymbol{1}\right)\odot\partial^s \tilde{\mathcal{L}}\left(\boldsymbol{v}(t)\right), \hat{\bv}(t)\right\rangle^2
    -2\left\langle \partial^s \tilde{\mathcal{L}}\left(\boldsymbol{v}(t)\right), \hat{\bv}(t)\right\rangle\left\langle \left(\boldsymbol{\beta}^{\frac{1}{2}}(t) 
    - \boldsymbol{1}\right)\odot\partial^s \tilde{\mathcal{L}}\left(\boldsymbol{v}(t)\right), \hat{\bv}(t)\right\rangle
    \\
    \overset{(*)}{\le} &\Vert\boldsymbol{\beta}^{\frac{1}{2}}(t)\odot \bar{\partial} \tilde{\mathcal{L}}\left(\boldsymbol{v}(t)\right)\Vert^2+\left\Vert\left(\bbeta(t)-\boldsymbol{\beta}^{\frac{1}{2}}(t)\right)\odot \partial^s \tilde{\mathcal{L}}\left(\boldsymbol{v}(t)\right)\right\Vert^2+2\left\langle \boldsymbol{\beta}^{\frac{1}{2}}(t)\odot \partial^s \tilde{\mathcal{L}}\left(\boldsymbol{v}(t)\right),\left(\bbeta(t)-\boldsymbol{\beta}^{\frac{1}{2}}(t)\right)\odot \partial^s \tilde{\mathcal{L}}\left(\boldsymbol{v}(t)\right)\right\rangle
    \\
    -&\left\langle \bar{\partial} \tilde{\mathcal{L}}\left(\boldsymbol{v}(t)\right), \hat{\bv}(t)\right\rangle^2
    -\left\langle \left(\boldsymbol{\beta}^{\frac{1}{2}}(t) 
    - \boldsymbol{1}\right)\odot\partial^s \tilde{\mathcal{L}}\left(\boldsymbol{v}(t)\right), \hat{\bv}(t)\right\rangle^2
    -2\left\langle \partial^s \tilde{\mathcal{L}}\left(\boldsymbol{v}(t)\right), \hat{\bv}(t)\right\rangle\left\langle \left(\boldsymbol{\beta}^{\frac{1}{2}}(t) 
    - \boldsymbol{1}\right)\odot\partial^s \tilde{\mathcal{L}}\left(\boldsymbol{v}(t)\right), \hat{\bv}(t)\right\rangle
    \end{align*}
    \begin{align*}
    \le &\Vert \bar{\partial} \tilde{\mathcal{L}}\left(\boldsymbol{v}(t)\right)\Vert^2+2\left\langle \bar{\partial} \tilde{\mathcal{L}}\left(\boldsymbol{v}(t)\right),\left(\bbeta(t)^{\frac{1}{2}}-\boldsymbol{1}\right)\odot \bar{\partial} \tilde{\mathcal{L}}\left(\boldsymbol{v}(t)\right)\right\rangle+\left\Vert\left(\boldsymbol{\beta}^{\frac{1}{2}}(t)-\boldsymbol{1}\right)\odot \bar{\partial} \tilde{\mathcal{L}}\left(\boldsymbol{v}(t)\right)\right\Vert^2
    \\
    +&\left\Vert\left(\bbeta(t)-\boldsymbol{\beta}^{\frac{1}{2}}(t)\right)\odot \partial^s \tilde{\mathcal{L}}\left(\boldsymbol{v}(t)\right)\right\Vert^2+2\left\langle \boldsymbol{\beta}^{\frac{1}{2}}(t)\odot \partial^s \tilde{\mathcal{L}}\left(\boldsymbol{v}(t)\right),\left(\bbeta(t)-\boldsymbol{\beta}^{\frac{1}{2}}(t)\right)\odot \partial^s \tilde{\mathcal{L}}\left(\boldsymbol{v}(t)\right)\right\rangle
    \\
    -&\left\langle \bar{\partial} \tilde{\mathcal{L}}\left(\boldsymbol{v}(t)\right), \hat{\bv}(t)\right\rangle^2
    -\left\langle \left(\boldsymbol{\beta}^{\frac{1}{2}}(t) 
    - \boldsymbol{1}\right)\odot\partial^s \tilde{\mathcal{L}}\left(\boldsymbol{v}(t)\right), \hat{\bv}(t)\right\rangle^2
    -2\left\langle \partial^s \tilde{\mathcal{L}}\left(\boldsymbol{v}(t)\right), \hat{\bv}(t)\right\rangle\left\langle \left(\boldsymbol{\beta}^{\frac{1}{2}}(t) 
    - \boldsymbol{1}\right)\odot\partial^s \tilde{\mathcal{L}}\left(\boldsymbol{v}(t)\right), \hat{\bv}(t)\right\rangle
    \\
    \overset{(**)}{\le } &\Vert \bar{\partial} \tilde{\mathcal{L}}\left(\boldsymbol{v}(t)\right)\Vert^2-\left\langle \bar{\partial} \tilde{\mathcal{L}}\left(\boldsymbol{v}(t)\right), \hat{\bv}(t)\right\rangle^2+\mathcal{O}\left(\frac{1}{t^{3}}\right).
\end{align*}
where Inequality $(*)$ is due to Lemma \ref{lem: optimal gradient} and the homogeneity of the neural network, and Inequality $(**)$ is due to $\Vert1-\boldsymbol{\beta}(t)\Vert_{\infty}=\mathcal{O}(\frac{1}{t(\log t)^{2-\frac{2}{L}}})$ by (eq. (\ref{eq:rate_of_beta})) and the convergent rate of $\bar{\partial} \tilde{\mathcal{L}}\left(\boldsymbol{v}(t)\right)$ and $\partial^s \tilde{\mathcal{L}}\left(\boldsymbol{v}(t)\right)$.
Therefore, 
\begin{align*}
    \int_{0}^{\infty}\frac{\mathrm{d} \zeta(t)}{\mathrm{d} t}\le \int_{0}^{\infty}\frac{\left\|\bar{\partial}_{\perp} \tilde{\mathcal{L}}\left(\boldsymbol{v}(t)\right)\right\|}{\left\|\boldsymbol{v}(t)\right\|} dt+\int_{0}^{\infty}\mathcal{O}\left(\frac{1}{t^{\frac{3}{2}}}\right) dt.
\end{align*}
The proof is completed.
\end{proof}

We then prove 
\begin{equation*}
     \int_{0}^{\infty}\frac{\left\|\bar{\partial}_{\perp} \tilde{\mathcal{L}}\left(\boldsymbol{v}(t)\right)\right\|}{\left\|\boldsymbol{v}(t)\right\|} dt<\infty.
\end{equation*}

\begin{theorem}
There exists $a,\gamma_0>0$  and a definable  desingularizing function $\Phi$ on $[0,a)$, such that, for large enough $t$, 
\begin{equation*}
    \frac{\left\|\bar{\partial}_{\perp} \tilde{\mathcal{L}}\left(\boldsymbol{v}(t)\right)\right\|}{\Vert \boldsymbol{v}(t)\Vert}\le -\frac{\mathrm{d} \Psi(\gamma_0-\bar{\gamma}(t))}{\mathrm{d}t}.
\end{equation*}
\end{theorem}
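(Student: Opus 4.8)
The plan is to manufacture a single definable desingularizing function $\Psi$ (the $\Phi$ in the statement) out of the Kurdyka--Lojasiewicz lemmas of Appendix~\ref{sec:proof_definable}, so that the asserted pointwise differential inequality holds; integrating it then bounds the arc length $\zeta$ and yields Theorem~\ref{thm:approximate_flow_definable}. First I would pin down the right-hand side. By Lemma~\ref{lem:lower_bound_bar_gamma} the map $t\mapsto\bar\gamma(\boldsymbol v(t))$ is non-decreasing for large $t$, and it is bounded above --- its first term is $\Theta(\gamma(t))$, bounded by Lemma~\ref{lem:appen_bound_gamma}, and its second term is at most $1$ --- so $\gamma_0:=\lim_{t\to\infty}\bar\gamma(\boldsymbol v(t))$ exists and is positive. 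I would then reduce to $\bar\gamma(\boldsymbol v(t))<\gamma_0$ for all large $t$: otherwise monotonicity forces $\tfrac{\mathrm d}{\mathrm dt}\bar\gamma(\boldsymbol v(t))\equiv 0$ eventually, which by Lemma~\ref{lem:lower_bound_bar_gamma} forces $\bar\partial_\perp\tilde{\mathcal L}(\boldsymbol v(t))\equiv 0$ and makes the claim trivial. Set $h(\boldsymbol v):=\gamma_0-\bar\gamma(\boldsymbol v)$; since each $\tilde q_i$, and $f$ together with $g=f^{-1}$, $e^{-f}$ and $\log$, are definable in an o-minimal structure containing the exponential (for both the exponential and the logistic loss), $h$ is locally Lipschitz definable on $\{\boldsymbol v:\|\boldsymbol v\|>R\}$, which contains the tail of the trajectory because $\|\boldsymbol v(t)\|\to\infty$.

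Next I would isolate two elementary identities. From $\bar\partial\bar\gamma=-e^{-\tilde{\mathcal L}}\bar\partial\tilde{\mathcal L}-\tfrac{\bar\partial\tilde{\mathcal L}}{\tilde{\mathcal L}\|\boldsymbol v\|^{L}}-L\tfrac{\log(1/\tilde{\mathcal L})}{\|\boldsymbol v\|^{L+1}}\hat{\boldsymbol v}$ (established in the proof of Lemma~\ref{lem:lower_bound_bar_gamma}) the radial summand drops out of the perpendicular part, so $\bar\partial_\perp\bar\gamma=-\big(e^{-\tilde{\mathcal L}}+\tfrac1{\tilde{\mathcal L}\|\boldsymbol v\|^{L}}\big)\bar\partial_\perp\tilde{\mathcal L}$, whence $\|\bar\partial_\perp\bar\gamma\|\ge\tfrac{\|\bar\partial_\perp\tilde{\mathcal L}\|}{\tilde{\mathcal L}\|\boldsymbol v\|^{L}}$ and, conversely, $\|\bar\partial_\perp\tilde{\mathcal L}\|\le\tilde{\mathcal L}\|\boldsymbol v\|^{L}\|\bar\partial_\perp\bar\gamma\|$. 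Using $\langle\bar\partial\tilde{\mathcal L},\hat{\boldsymbol v}\rangle=-L\nu/\|\boldsymbol v\|$, the bound $\nu=\Theta(\tilde{\mathcal L}\log\tfrac1{\tilde{\mathcal L}})$ from Lemma~\ref{lem:property_loss}, and $\|\boldsymbol v\|^{L}=\Theta(\log\tfrac1{\tilde{\mathcal L}})$ from Lemma~\ref{lem:appen_bound_gamma}, I would record $\|\bar\partial_{\backslash\backslash}\bar\gamma(\boldsymbol v(t))\|=\mathcal O(1/\|\boldsymbol v(t)\|)$ and $\|\bar\partial_{\backslash\backslash}\tilde{\mathcal L}(\boldsymbol v(t))\|=\Theta(\tilde{\mathcal L}\|\boldsymbol v\|^{L-1})$ for large $t$. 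These feed into $\tfrac{\mathrm d}{\mathrm dt}\bar\gamma\ge\tfrac12\big(\|\bar\partial_{\backslash\backslash}\bar\gamma\|\,\|\bar\partial_{\backslash\backslash}\tilde{\mathcal L}\|+\|\bar\partial_\perp\bar\gamma\|\,\|\bar\partial_\perp\tilde{\mathcal L}\|\big)$ of Lemma~\ref{lem:lower_bound_bar_gamma}.

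The core is then a dichotomy driven by Kurdyka--Lojasiewicz applied to $h$. Lemma~\ref{lem:construction_of_psi_conditional} (with $c,\eta>0$ to be tuned) gives $\Psi_1$ with $\Psi_1'(h)\|\boldsymbol v\|\,\|\bar\partial h\|\ge 1$ when $h\in(0,a_1)$ and $\|\bar\partial_\perp h\|\ge c\|\boldsymbol v\|^{\eta}\|\bar\partial_{\backslash\backslash}h\|$; Lemma~\ref{lem:construction_of_psi_no_condition} (with $\lambda>0$) gives $\Psi_2$ with $\Psi_2'(h)\|\boldsymbol v\|^{1+\lambda}\|\bar\partial h\|\ge c_\lambda$ on $(0,a_2)$; and $\|\bar\partial h\|=\|\bar\partial\bar\gamma\|$. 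Fix a large $t$. In the \emph{aligned} case $\|\bar\partial_\perp\bar\gamma\|\ge c\|\boldsymbol v\|^{\eta}\|\bar\partial_{\backslash\backslash}\bar\gamma\|$ one gets $\|\bar\partial\bar\gamma\|\le\sqrt2\,\|\bar\partial_\perp\bar\gamma\|$, hence $\Psi_1'(h)\|\boldsymbol v\|\,\|\bar\partial_\perp\bar\gamma\|\ge\tfrac1{\sqrt2}$; combining with $\tfrac{\mathrm d}{\mathrm dt}\bar\gamma\ge\tfrac12\|\bar\partial_\perp\bar\gamma\|\,\|\bar\partial_\perp\tilde{\mathcal L}\|$ and $-\tfrac{\mathrm d}{\mathrm dt}\Psi_1(\gamma_0-\bar\gamma)=\Psi_1'(h)\tfrac{\mathrm d}{\mathrm dt}\bar\gamma$ yields $\tfrac{\|\bar\partial_\perp\tilde{\mathcal L}\|}{\|\boldsymbol v\|}\le 2\sqrt2\big(-\tfrac{\mathrm d}{\mathrm dt}\Psi_1(\gamma_0-\bar\gamma)\big)$. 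In the \emph{misaligned} case $\|\bar\partial_\perp\bar\gamma\|<c\|\boldsymbol v\|^{\eta}\|\bar\partial_{\backslash\backslash}\bar\gamma\|$, Step~2 bounds $\|\bar\partial_\perp\tilde{\mathcal L}\|$ by $\mathcal O(\tilde{\mathcal L}\|\boldsymbol v\|^{L+\eta-1})$, while $\|\bar\partial\bar\gamma\|\le\sqrt2\,c\|\boldsymbol v\|^{\eta}\|\bar\partial_{\backslash\backslash}\bar\gamma\|$ turns the $\Psi_2$-inequality into $\Psi_2'(h)\|\boldsymbol v\|^{1+\lambda+\eta}\|\bar\partial_{\backslash\backslash}\bar\gamma\|\ge c'$, which with $\tfrac{\mathrm d}{\mathrm dt}\bar\gamma\ge\tfrac12\|\bar\partial_{\backslash\backslash}\bar\gamma\|\,\|\bar\partial_{\backslash\backslash}\tilde{\mathcal L}\|$ and $\|\bar\partial_{\backslash\backslash}\tilde{\mathcal L}\|=\Theta(\tilde{\mathcal L}\|\boldsymbol v\|^{L-1})$ gives $-\tfrac{\mathrm d}{\mathrm dt}\Psi_2(\gamma_0-\bar\gamma)=\Omega\big(\tilde{\mathcal L}\|\boldsymbol v\|^{L-2-\lambda-\eta}\big)$; tuning $\eta,\lambda$ against the sharp rates $\|\boldsymbol v(t)\|^{L}=\Theta(\log t)$, $\tilde{\mathcal L}(t)=\Theta\big(t^{-1}(\log t)^{-(2-2/L)}\big)$ of the convergence-rate theorem (and using the sharper radial estimate on $\bar\partial\bar\gamma$ that the $e^{-\tilde{\mathcal L}}$ correction supplies precisely in this regime) makes the right-hand side dominate $\tfrac{\|\bar\partial_\perp\tilde{\mathcal L}\|}{\|\boldsymbol v\|}$. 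Then $\Psi:=C(\Psi_1+\Psi_2)$, $a:=\min\{a_1,a_2\}$ is definable and desingularizing, and in either case $\tfrac{\|\bar\partial_\perp\tilde{\mathcal L}(\boldsymbol v(t))\|}{\|\boldsymbol v(t)\|}\le-\tfrac{\mathrm d}{\mathrm dt}\Psi(\gamma_0-\bar\gamma(t))$ for large $t$; integrating over $[T,\infty)$ and invoking the formula for $\tfrac{\mathrm d\zeta}{\mathrm dt}$ in Lemma~\ref{lem:lower_bound_bar_gamma} together with the reduction lemma preceding the theorem gives $\zeta(\infty)<\infty$, i.e.\ convergence of $\hat{\boldsymbol v}(t)$.

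The step I expect to be the main obstacle is the misaligned branch: verifying that $-\tfrac{\mathrm d}{\mathrm dt}\Psi_2(\gamma_0-\bar\gamma)$ genuinely dominates $\tfrac{\|\bar\partial_\perp\tilde{\mathcal L}\|}{\|\boldsymbol v\|}$ there. The naive estimates above leave a residual power of $\|\boldsymbol v\|$ of the wrong sign, so the balance must be recovered either by exploiting the $e^{-\tilde{\mathcal L}}$ term in $\bar\gamma$ (whose gradient is the dominant radial part of $\bar\partial\bar\gamma$ exactly when the misaligned case is in force) or by splitting the relevant time interval further using the explicit rates; the definability bookkeeping for the two loss families is a prerequisite but routine.
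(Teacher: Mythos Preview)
Your overall architecture matches the paper exactly: define $\gamma_0=\lim_t\bar\gamma(\boldsymbol v(t))$, dispose of the trivial case $\bar\gamma\equiv\gamma_0$, feed Lemma~\ref{lem:lower_bound_bar_gamma} into a dichotomy on $\|\bar\partial_\perp\bar\gamma\|$ versus $\|\boldsymbol v\|^{\eta}\|\bar\partial_{\backslash\backslash}\bar\gamma\|$, and apply Lemmas~\ref{lem:construction_of_psi_conditional} and~\ref{lem:construction_of_psi_no_condition} in the two branches. The aligned branch is done correctly and coincides with the paper's Case~I.

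The misaligned branch, however, has a genuine gap, and neither of your proposed rescues closes it. Your own bookkeeping shows the deficit: you obtain $-\tfrac{\mathrm d}{\mathrm dt}\Psi_2(\gamma_0-\bar\gamma)=\Omega(\tilde{\mathcal L}\|\boldsymbol v\|^{L-2-\lambda-\eta})$ while the target is $\mathcal O(\tilde{\mathcal L}\|\boldsymbol v\|^{L+\eta-2})$, so domination would require $-\lambda-\eta\ge\eta$, impossible for $\lambda,\eta>0$. The $e^{-\tilde{\mathcal L}}$ term cannot help: its contribution to $\bar\partial_{\backslash\backslash}\bar\gamma$ is $e^{-\tilde{\mathcal L}}L\nu/\|\boldsymbol v\|=\Theta(\tilde{\mathcal L}\|\boldsymbol v\|^{L-1})$, which is exponentially small in $\|\boldsymbol v\|^{L}$ and therefore negligible against any polynomial deficit. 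Rate-based time splitting is also too coarse, since the differential inequality must hold pointwise.

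The idea you are missing is to relate the radial and perpendicular parts of $\bar\partial\tilde{\mathcal L}$ \emph{directly}, rather than through separate absolute estimates. Since $\bar\partial\log\tfrac1{\tilde{\mathcal L}}$ is parallel to $\bar\partial\tilde{\mathcal L}$, it suffices to compare the two ratios
\[
\frac{\|\bar\partial_{\backslash\backslash}\log\tfrac1{\tilde{\mathcal L}}\|}{\|\bar\partial_{\backslash\backslash}\bar\gamma\|}\ \gtrsim\ \|\boldsymbol v\|^{2L},
\qquad
\frac{\|\bar\partial_\perp\log\tfrac1{\tilde{\mathcal L}}\|}{\|\bar\partial_\perp\bar\gamma\|}\ \le\ \|\boldsymbol v\|^{L}.
\]
The first follows from $\|\bar\partial_{\backslash\backslash}\log\tfrac1{\tilde{\mathcal L}}\|=L\nu/(\tilde{\mathcal L}\|\boldsymbol v\|)\gtrsim\|\boldsymbol v\|^{L-1}$ together with the sharper bound $\|\bar\partial_{\backslash\backslash}\bar\gamma\|\lesssim\|\boldsymbol v\|^{-L-1}$ (here one uses $\tilde{\mathcal L}\le\|\boldsymbol v\|^{-2L}$, which is where the $e^{-\tilde{\mathcal L}}$ term becomes harmless, not helpful); the second is your own identity $\|\bar\partial_\perp\bar\gamma\|\ge\|\boldsymbol v\|^{-L}\|\bar\partial_\perp\log\tfrac1{\tilde{\mathcal L}}\|$. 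Chaining these through the misaligned hypothesis with $\eta=L/4$ yields
\[
\|\bar\partial_{\backslash\backslash}\tilde{\mathcal L}\|\ \gtrsim\ \|\boldsymbol v\|^{2L-L/4-L}\,\|\bar\partial_\perp\tilde{\mathcal L}\|\ =\ \|\boldsymbol v\|^{3L/4}\,\|\bar\partial_\perp\tilde{\mathcal L}\|.
\]
Plugging this into $\tfrac{\mathrm d}{\mathrm dt}\bar\gamma\ge\tfrac12\|\bar\partial_{\backslash\backslash}\bar\gamma\|\,\|\bar\partial_{\backslash\backslash}\tilde{\mathcal L}\|$ and using $\|\bar\partial\bar\gamma\|\lesssim\|\boldsymbol v\|^{L/4}\|\bar\partial_{\backslash\backslash}\bar\gamma\|$ gives $\tfrac{\mathrm d}{\mathrm dt}\bar\gamma\gtrsim\|\boldsymbol v\|^{1+L/2}\|\bar\partial\bar\gamma\|\cdot\tfrac{\|\bar\partial_\perp\tilde{\mathcal L}\|}{\|\boldsymbol v\|}$, which matches Lemma~\ref{lem:construction_of_psi_no_condition} exactly with $\lambda=L/2$. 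The specific exponents $\eta=L/4$, $\lambda=L/2$ are not tunable in your sense; they are forced by the $\|\boldsymbol v\|^{2L}$ versus $\|\boldsymbol v\|^{L}$ disparity above.
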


\begin{proof}
Since $\frac{d \bar{\gamma}(t)}{dt}\ge 0$, and both $\frac{\log \frac{1}{\tilde{\mathcal{L}(\boldsymbol{v}(t))}}}{\Vert \boldsymbol{v}(t)\Vert^L}$ and $e^{-\tilde{\mathcal{L}}(\boldsymbol{v}(t))}$ are upper bounded, $\bar{\gamma}(t)$ converges to a limit non-decreasingly. Define $\gamma_0=\lim_{t\rightarrow\infty} \bar{\gamma}(t)$. If $\bar{\gamma}(t)=\gamma_0$ for a finite time $t^0$, then  $\frac{\mathrm{d}\bar{\gamma}(t)}{dt}=0$ for any $t\ge t^0$, which further leads to $\Vert (\bar{\partial} \tilde{\mathcal{L}}(\boldsymbol{v})(t) )_{\perp}\Vert=0$, and the proof is then completed by letting $\Psi(x)=x$. Therefore, we only consider the case where $\bar{\gamma}(t)<\gamma_0$ for any finite time $t$. For any large enough $t$, we further divide the proof into two cases.

\textbf{Case I. $ \left\|\bar{\partial}_{\perp} \bar{\gamma}(\boldsymbol{v}(t))\right\| \geq \|\boldsymbol{v}(t)\|^{\frac{L}{4}}\left\|\bar{\partial}_{\backslash\backslash} \bar{\gamma}(\boldsymbol{v}(t))\right\|$.}

Applying Lemma \ref{lem:construction_of_psi_conditional} to $\gamma_0-\bar{\gamma}(\boldsymbol{v})|_{\Vert\boldsymbol{v}\Vert>1}$, there exists an $a_1>0$ and a definable desingularizing function $\Psi_1$, such that if $\Vert\boldsymbol{v}\Vert> 1$, $\bar{\gamma}(\boldsymbol{v})>\gamma_0-a_1$, and 
\begin{equation*}
    \left\|\bar{\partial}_{\perp} \bar{\gamma}(\boldsymbol{v})\right\| \geq \|\boldsymbol{v}\|^{\frac{L}{4}}\left\|\bar{\partial}_{\backslash\backslash} \bar{\gamma}(\boldsymbol{v})\right\|,
\end{equation*}
then
\begin{equation*}
    \Psi'_1(\gamma_0-\bar{\gamma}(\boldsymbol{v}))\Vert \boldsymbol{v}\Vert \Vert \bar{\partial} \bar{\gamma}(\boldsymbol{v}(t))\Vert\ge 1.
\end{equation*}

Since $\lim_{t\rightarrow\infty} \bar{\gamma}(t)=\gamma_0$, and $\lim_{t\rightarrow\infty} \Vert \boldsymbol{v}(t)\Vert=\infty$, there exists a large enough time $T_1$, such that, for every $t\ge T_1$, $\Vert\boldsymbol{v}(t)\Vert> 1$, and $\bar{\gamma}(t)>\gamma_0-a_1$.

Therefore, for any $t\ge T_1$ which satisfies $ \left\|\bar{\partial}_{\perp} \bar{\gamma}(\boldsymbol{v}(t))\right\| \geq \|\boldsymbol{v}(t)\|^{\frac{L}{4}}\left\|\bar{\partial}_{\backslash\backslash} \bar{\gamma}(\boldsymbol{v}(t))\right\|$, we have 
\begin{equation*}
    \left\|\bar{\partial}_{\perp} \bar{\gamma}(\boldsymbol{v}(t))\right\| \geq\left\|\bar{\partial}_{\backslash\backslash} \bar{\gamma}(\boldsymbol{v}(t))\right\|,
\end{equation*}
which further indicates
\begin{equation*}
   \left\|\bar{\partial}_{\perp} \bar{\gamma}(\boldsymbol{v}(t))\right\|\ge \frac{1}{2}  \left\|\bar{\partial} \bar{\gamma}(\boldsymbol{v}(t))\right\|.
\end{equation*}

Furthermore, by Lemma \ref{lem:lower_bound_bar_gamma}, 
\begin{align*}
     \frac{\mathrm{d} \bar{\gamma}(t)}{dt}\ge &\frac{1}{2} \left\|\bar{\partial}_{\perp} \bar{\gamma}(\boldsymbol{v}(t))\right\|\left\|\bar{\partial}_{\perp} \tilde{\mathcal{L}}(\boldsymbol{v}(t))\right\|
     \\
     \ge&\frac{1}{4}\Vert \boldsymbol{v}(t)\Vert \left\|\bar{\partial} \bar{\gamma}(\boldsymbol{v}(t))\right\|\frac{\left\|\bar{\partial}_{\perp} \tilde{\mathcal{L}}(\boldsymbol{v}(t))\right\|}{\Vert \boldsymbol{v}(t)\Vert}
     \\
     \ge& \frac{1}{4\Psi'_1(\gamma_0-\bar{\gamma}(t))}\frac{\left\|\bar{\partial}_{\perp} \tilde{\mathcal{L}}(\boldsymbol{v}(t))\right\|}{\Vert \boldsymbol{v}(t)\Vert}.
\end{align*}

\textbf{Case II. $ \left\|\bar{\partial}_{\perp} \bar{\gamma}(\boldsymbol{v}(t))\right\| < \|\boldsymbol{v}(t)\|^{\frac{L}{4}}\left\|\bar{\partial}_{\backslash\backslash} \bar{\gamma}(\boldsymbol{v}(t))\right\|$.}

Applying Lemma \ref{lem:construction_of_psi_no_condition} to $\gamma_0-\bar{\gamma}(\boldsymbol{v})|_{\Vert \boldsymbol{v}\Vert>1}$, we have that there exists an $a_2>0$ and  a desingularizing function on $[0,a_2)$, such that if $\boldsymbol{v}>1$, and $\bar{\gamma}(\boldsymbol{v})>\gamma_0-a_2$, then 
\begin{equation*}
    \max\left\{1,\frac{4}{L}\right\} \Psi_2'(\gamma_0-\bar{\gamma}(\boldsymbol{v}))\Vert \boldsymbol{v}\Vert^{\frac{L}{2}+1}\Vert\bar{\partial} \bar{\gamma}(\boldsymbol{v}) \Vert\ge 1.
\end{equation*}

Similar to Case I., there exists a large enough time $T_2$ and constants $C_1$, such that, for every $t\ge T_1$, $\Vert \boldsymbol{v}(t)\Vert>1$, $\bar{\gamma}(t)>\gamma_0-a_2$, $C_1\Vert \boldsymbol{v}(t)\Vert^L\le g\left(\log \frac{1}{\tilde{\mathcal{L}}(\boldsymbol{v}(t))}\right)$, and $\tilde{\mathcal{L}}(\boldsymbol{v}(t))\le \Vert \boldsymbol{v}(t)  \Vert^{-2L}$.

Therefore, 
\begin{equation}
\label{eq:paralel_g}
    \left\Vert \bar{\partial}_{\backslash\backslash} g\left( \log \frac{1}{\tilde{\mathcal{L}}(\boldsymbol{v}(t))}\right)\right\Vert = g'\left( \log \frac{1}{\tilde{\mathcal{L}}(\boldsymbol{v}(t))}\right)\frac{1}{\tilde{\mathcal{L}}(\boldsymbol{v}(t))}\frac{L\nu(t)}{\Vert \boldsymbol{v}(t)\Vert}\ge \frac{Lg\left(\log \frac{1}{\tilde{\mathcal{L}}(\boldsymbol{v}(t))}\right)}{\Vert \boldsymbol{v}(t)\Vert}\ge LC_1 \Vert\boldsymbol{v}(t)\Vert^{L-1},
\end{equation}
and 
\begin{align}
    \nonumber
    &\Vert\bar{\partial}_{\backslash\backslash} \bar{\gamma}(\boldsymbol{v}(t)) \Vert
    \\
    \nonumber
    =&\left\Vert L\frac{g\left(\log \frac{1}{\tilde{\mathcal{L}}(\boldsymbol{v}(t))}\right)}{\Vert \boldsymbol{v}(t)\Vert^{L+1}}\hat{\boldsymbol{v}}(t)+g'\left(\log \frac{1}{\tilde{\mathcal{L}}(\boldsymbol{v}(t))}\right)\frac{\langle\bar{\partial} \tilde{\mathcal{L}}(\boldsymbol{v}(t)), \hat{\boldsymbol{v}}(t)\rangle}{\tilde{\mathcal{L}}\Vert\boldsymbol{v}(t)\Vert^L}\hat{\boldsymbol{v}}(t)+e^{-\tilde{\mathcal{L}}(\boldsymbol{v}(t))}\langle \bar{\partial} \tilde{\mathcal{L}}(\boldsymbol{v}(t)),\hat{\boldsymbol{v}}(t)\rangle  \hat{\boldsymbol{v}}(t)\right\Vert
    \\
    \nonumber
    = & -L\frac{g\left(\log \frac{1}{\tilde{\mathcal{L}}(\boldsymbol{v}(t))}\right)}{\Vert \boldsymbol{v}(t)\Vert^{L+1}}-g'\left(\log \frac{1}{\tilde{\mathcal{L}}(\boldsymbol{v}(t))}\right)\frac{\langle\bar{\partial} \tilde{\mathcal{L}}(\boldsymbol{v}(t)), \hat{\boldsymbol{v}}(t)\rangle}{\tilde{\mathcal{L}}(\boldsymbol{v}(t))\Vert\boldsymbol{v}(t)\Vert^L}-e^{-\tilde{\mathcal{L}}(\boldsymbol{v}(t))}\langle \bar{\partial} \tilde{\mathcal{L}}(\boldsymbol{v}(t)),\hat{\boldsymbol{v}}(t)\rangle  
    \\
    \nonumber
    \overset{(*)}{\le } &\frac{L g\left(\log \frac{1}{Ne^{-f(0)}}\right)}{\Vert\boldsymbol{v}(t)\Vert^{L+1}}+\Vert \bar{\partial} \tilde{\mathcal{L}}(\boldsymbol{v}(t)) \Vert
    \\
    \nonumber
    \le &
    \frac{L g\left(\log \frac{1}{Ne^{-f(0)}}\right)} {\Vert\boldsymbol{v}(t)\Vert^{L+1}}+B_1\tilde{\mathcal{L}}(\boldsymbol{v}(t))\Vert \boldsymbol{v}(t)  \Vert^{L-1}
    \\
    \label{eq:paralel_gamma}
    \le &\frac{B_1+L g\left(\log \frac{1}{Ne^{-f(0)}}\right)} {\Vert\boldsymbol{v}(t)\Vert^{L+1}},
\end{align}
where inequality $(*)$ is due to 
\begin{align*}
    &-Lg\left(\log \frac{1}{\tilde{\mathcal{L}}(\boldsymbol{v}(t))}\right)-g'\left(\log \frac{1}{\tilde{\mathcal{L}}(\boldsymbol{v}(t))}\right)\frac{\langle\bar{\partial} \tilde{\mathcal{L}}(\boldsymbol{v}(t)), \boldsymbol{v}(t)\rangle}{\tilde{\mathcal{L}}(\boldsymbol{v}(t))}
    \\
    =&-Lg\left(\log \frac{1}{\tilde{\mathcal{L}}(\boldsymbol{v}(t))}\right)+g'\left(\log \frac{1}{\tilde{\mathcal{L}}(\boldsymbol{v}(t))}\right)\frac{L\nu(t)}{\tilde{\mathcal{L}}(\boldsymbol{v}(t))}
    \\
    =&-Lg\left(\log \frac{1}{\tilde{\mathcal{L}}(\boldsymbol{v}(t))}\right)+g'\left(\log \frac{1}{\tilde{\mathcal{L}}(\boldsymbol{v}(t))}\right)\frac{L}{\tilde{\mathcal{L}}(\boldsymbol{v}(t))}\left(\sum_{i=1}^N e^{-\tilde{q}_i(\boldsymbol{v}(t))}\tilde{q}_i(\boldsymbol{v}(t))f'\left(\tilde{q}_i(\boldsymbol{v}(t))\right)\right)
    \\
    =&-Lg\left(\log \frac{1}{\tilde{\mathcal{L}}(\boldsymbol{v}(t))}\right)+L\left\langle \nabla_{\tilde{q}} g\left(\log \frac{1}{\tilde{\mathcal{L}}(\boldsymbol{v}(t))}\right),\tilde{q} \right\rangle
    \\
    \overset{(*)}{\le}& -L g\left(\log \frac{1}{Ne^{-f(0)}}\right),
\end{align*}
where $\tilde{q}=(\tilde{q}_i)_{i=1}^N$ inequality $(*)$ comes from Jensen Inequality and $g\left(\log \frac{1}{\tilde{\mathcal{L}}(\boldsymbol{v}(t))}\right)$ is concave with respect to $\tilde{q}$.

Combining eqs. (\ref{eq:paralel_g}) and (\ref{eq:paralel_gamma}), we have
\begin{equation}
\label{eq:brige_parallel}
     \left\Vert  \bar{\partial}_{\backslash\backslash}g\left( \log \frac{1}{\tilde{\mathcal{L}}(\boldsymbol{v}(t))}\right)\right\Vert\ge \frac{LC_1}{B_1+L g\left(\log \frac{1}{Ne^{-f(0)}}\right)}\Vert \boldsymbol{v}(t)\Vert^{2L} \Vert\bar{\partial}_{\backslash\backslash} \bar{\gamma}(\boldsymbol{v}(t)) \Vert.
\end{equation}
On the other hand, 
\begin{equation*}
     \left\Vert \bar{\partial}_{\perp}g\left( \log \frac{1}{\tilde{\mathcal{L}}(\boldsymbol{v}(t))}\right)\right\Vert=\frac{g'\left( \log \frac{1}{\tilde{\mathcal{L}}(\boldsymbol{v}(t))}\right)}{\tilde{\mathcal{L}}(\boldsymbol{v}(t))}\Vert\bar{\partial} \tilde{\mathcal{L}}(\boldsymbol{v}(t))-\langle \bar{\partial} \tilde{\mathcal{L}}(\boldsymbol{v}(t)),\hat{\boldsymbol{v}}(t)\rangle  \hat{\boldsymbol{v}}(t)\Vert,
\end{equation*}
while 
\begin{align}
\nonumber
     &\Vert\bar{\partial}_{\perp} \bar{\gamma}(\boldsymbol{v}(t)) \Vert
     \\
     \nonumber
     =&\left\Vert e^{-\tilde{\mathcal{L}}(\boldsymbol{v}(t))}(\bar{\partial} \tilde{\mathcal{L}}(\boldsymbol{v}(t))-\langle \bar{\partial} \tilde{\mathcal{L}}(\boldsymbol{v}(t)),\hat{\boldsymbol{v}}(t)\rangle  \hat{\boldsymbol{v}}(t))+ g'\left( \log \frac{1}{\tilde{\mathcal{L}}(\boldsymbol{v}(t))}\right)\frac{\bar{\partial} \tilde{\mathcal{L}}(\boldsymbol{v}(t))-\langle\bar{\partial} \tilde{\mathcal{L}}(\boldsymbol{v}(t)), \hat{\boldsymbol{v}}(t)\rangle\hat{\boldsymbol{v}}(t)}{\tilde{\mathcal{L}}(\boldsymbol{v}(t))\Vert\boldsymbol{v}(t)\Vert^L}\right\Vert
     \\
     \nonumber
     =&\left(e^{-\tilde{\mathcal{L}}(\boldsymbol{v}(t))}+\frac{g'\left( \log \frac{1}{\tilde{\mathcal{L}}(\boldsymbol{v}(t))}\right)}{\tilde{\mathcal{L}}(\boldsymbol{v}(t))\Vert\boldsymbol{v}(t)\Vert^L}\right)\left\Vert \bar{\partial} \tilde{\mathcal{L}}(\boldsymbol{v}(t))-\langle \bar{\partial} \tilde{\mathcal{L}}(\boldsymbol{v}(t)),\hat{\boldsymbol{v}}(t)\rangle  \hat{\boldsymbol{v}}(t)\right\Vert
     \\
     \label{eq:brige_perp}
     \ge &\frac{1}{\Vert \boldsymbol{v}(t)\Vert^L}\left\Vert \bar{\partial}_{\perp} g\left( \log \frac{1}{\tilde{\mathcal{L}}(\boldsymbol{v}(t))}\right)\right\Vert.
\end{align}
Since 
\begin{equation*}
    \left\|\bar{\partial}_{\perp} \bar{\gamma}(\boldsymbol{v}(t))\right\| < \|\boldsymbol{v}\|^{\frac{L}{4}}\left\|\bar{\partial}_{\backslash\backslash} \bar{\gamma}(\boldsymbol{v}(t))\right\|,
\end{equation*}
combining eqs. (\ref{eq:brige_parallel}) and (\ref{eq:brige_perp}), we have that 
\begin{align*}
     \left\Vert \bar{\partial}_{\backslash\backslash} g\left( \log \frac{1}{\tilde{\mathcal{L}}(\boldsymbol{v}(t))}\right)\right\Vert\ge& \frac{LC_1}{B_1+L g\left(\log \frac{1}{Ne^{-f(0)}}\right)}\Vert \boldsymbol{v}(t)\Vert^{2L} \Vert\bar{\partial}_{\backslash\backslash} \bar{\gamma}(\boldsymbol{v}(t)) \Vert
     \\
     \ge& \frac{LC_1}{B_1+L g\left(\log \frac{1}{Ne^{-f(0)}}\right)}\Vert \boldsymbol{v}(t)\Vert^{\frac{7}{4}L} \Vert\bar{\partial}_{\perp} \bar{\gamma}(\boldsymbol{v}(t)) \Vert
     \\
     \ge &\frac{LC_1}{B_1+L g\left(\log \frac{1}{Ne^{-f(0)}}\right)}\Vert \boldsymbol{v}(t)\Vert^{\frac{3}{4}L} \left\Vert\bar{\partial}_{\perp} g\left( \log \frac{1}{\tilde{\mathcal{L}}(\boldsymbol{v}(t))}\right)\right\Vert.
\end{align*}
Since $\bar{\partial} \tilde{\mathcal{L}}(\boldsymbol{v}(t))$ is parallel to  $\bar{\partial}g\left( \log \frac{1}{\tilde{\mathcal{L}}(\boldsymbol{v}(t))}\right) $, we also have
\begin{equation*}
     \left\Vert \bar{\partial}_{\backslash\backslash}\tilde{\mathcal{L}}(\boldsymbol{v}(t))\right\Vert\ge\frac{LC_1}{B_1+L g\left(\log \frac{1}{Ne^{-f(0)}}\right)}\Vert \boldsymbol{v}(t)\Vert^{\frac{3}{4}L} \left\Vert\bar{\partial}_{\perp} \tilde{\mathcal{L}} (\boldsymbol{v}(t))\right\Vert.
\end{equation*}

Furthermore, 
\begin{equation*}
    \left\|\bar{\partial} \bar{\gamma}(\boldsymbol{v}(t))\right\| = \left\|\bar{\partial}_{\perp} \bar{\gamma}(\boldsymbol{v}(t))\right\|+ \left\|\bar{\partial}_{\backslash\backslash} \bar{\gamma}(\boldsymbol{v}(t))\right\|\le\|\boldsymbol{v}\|^{\frac{L}{4}}\left\|\bar{\partial}_{\backslash\backslash} \bar{\gamma}(\boldsymbol{v}(t))\right\|.
\end{equation*}

Thus, by Lemma \ref{lem:lower_bound_bar_gamma}, 
\begin{align*}
    \frac{\mathrm{d} \bar{\gamma}(t)}{\mathrm{d} t}\ge& \frac{1}{2}\left\|\bar{\partial}_{\backslash\backslash} \bar{\gamma}\left(\boldsymbol{v}(t)\right)\right\|\left\|\bar{\partial}_{\backslash\backslash} \tilde{\mathcal{L}}\left(\boldsymbol{v}(t)\right)\right\|
    \\
    \ge &\frac{LC_1}{2(B_1+L g\left(\log \frac{1}{Ne^{-f(0)}}\right))}\Vert \boldsymbol{v}(t)\Vert^{\frac{1}{2}L}\left\|\bar{\partial}\bar{\gamma}\left(\boldsymbol{v}(t)\right)\right\| \left\Vert\bar{\partial}_{\perp} \tilde{\mathcal{L}}(\boldsymbol{v}(t)) \right\Vert
    \\
    =&\frac{LC_1}{2(B_1+L g\left(\log \frac{1}{Ne^{-f(0)}}\right))}\Vert \boldsymbol{v}(t)\Vert^{\frac{1}{2}L+1}\left\|\bar{\partial}\bar{\gamma}\left(\boldsymbol{v}(t)\right)\right\|\frac{\left\|\bar{\partial}_{\perp} \tilde{\mathcal{L}}\left(\boldsymbol{v}(t)\right)\right\|}{\left\|\boldsymbol{v}(t)\right\|}
    \\
    \ge&\frac{LC_1}{2(B_1+L g\left(\log \frac{1}{Ne^{-f(0)}}\right)) \max\left\{1,\frac{4}{L}\right\} \Psi_2'(\gamma_0-\bar{\gamma}(\boldsymbol{v}(t)))}\frac{\left\|\bar{\partial}_{\perp} \tilde{\mathcal{L}}\left(\boldsymbol{v}(t)\right)\right\|}{\left\|\boldsymbol{v}(t)\right\|}.
\end{align*}

Concluding \textbf{Case I.} and \textbf{Case II.}, for any $t\ge\max\{ T_1,T_2\}$, and $\Psi(x)=\max\{4 \Psi_1(x),$ $\frac{2(B_1+L g\left(\log \frac{1}{Ne^{-f(0)}}\right)) \max\left\{1,\frac{4}{L}\right\} }{LC_1}\Psi_2(x)\}$, we have that
\begin{equation*}
    \Psi'(\gamma_0-\bar{\gamma}(\boldsymbol{v}(t))) \frac{\mathrm{d} \bar{\gamma}(t)}{\mathrm{d} t}\ge \frac{\left\|\bar{\partial}_{\perp} \tilde{\mathcal{L}}\left(\boldsymbol{v}(t)\right)\right\|}{\Vert \boldsymbol{v}(t)\Vert}.
\end{equation*}

The proof is completed.
\end{proof}

\section{Proof for the Discrete Case}
\label{sec:discrete case}
We prove the result for AdaGrad and experiential loss, with the result for RMSProp and logistic loss follows exactly as the continuous case. We slightly change the order of four stages in the flow: First, in Section \ref{sec:discrete_learning rate}, we prove that the conditioner has a limit with no zero entry; secondly, in Section \ref{sec:loss_discrete}, we prove that the empirical loss converges to zero;  then, in Section \ref{sec:discrete_surrogate margin}, we construct a further smoothed approximate margin, and prove it has a lower bound; finally, in Section \ref{sec:verification_discrete}, we prove that every limit point of AdaGrad is along some KKT point of optimization problem $(P^A)$ defined in Theorem \ref{thm:AdaGrad_flow}.

\subsection{Convergence of conditioners}
\label{sec:discrete_learning rate}
Before the proof, we give a formal definition of the learning rate bound $C(t)$ in Assumption \ref{assum:discrete}: let $M$ be the smooth constant in Assumption\ \ref{assum:discrete}. I. Then, $C(t)=\max\{\min_{i}\{\boldsymbol{h}^A_i(t)^{-1}\}/M,1,\frac{C^2_1}{2LNe^{-1}}\}$, where $C_0$ will be clear below. By the monotony of $\boldsymbol{h}^A_i(t)$, apparently $C(t)$ is non-decreasing. Now we can prove $\sum_{t=1}^\infty \partial^s \mathcal{L}(\boldsymbol{w}(t))^2<\infty$.

\begin{lemma}
\label{lem:conver_learning_rate_adagrad_discrete}
Suppose $\mathcal{L}$ is $M$ smooth with respect to $\boldsymbol{w}$. Then, for $\{\partial^s \mathcal{L}(\boldsymbol{w}(t))\}_{t=1}^{\infty}$ updated by AdaGrad (eq. (\ref{eq:update})), $\sum_{t=1}^\infty \partial^s \mathcal{L}(\boldsymbol{w}(t))^2<\infty$.
\end{lemma}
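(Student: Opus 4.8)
The plan is to mirror the proof-by-contradiction used in the continuous case (Lemma~\ref{lem:adagrad_flow_rate_conver}), replacing the monotone decrease of $\mathcal{L}$ along the flow by a descent-lemma estimate that controls the discretization error. Writing $a_i(t)=(\bar{\partial}\mathcal{L}(\boldsymbol{w}(t)))_i^2$ and applying the descent lemma for an $M$-smooth function to one AdaGrad step (eq.~(\ref{eq:update})) gives
\begin{equation*}
\mathcal{L}(\boldsymbol{w}(t+1)) \le \mathcal{L}(\boldsymbol{w}(t)) - \sum_{i=1}^{p}\eta_t \boldsymbol{h}^A_i(t)\,a_i(t)\Bigl(1 - \tfrac{M}{2}\eta_t \boldsymbol{h}^A_i(t)\Bigr).
\end{equation*}
So the first step is to establish this inequality; it is the discrete surrogate for ``$\mathcal{L}$ decreases along the flow''.

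The second step is to turn this into a genuine half-descent bound using Assumption~\ref{assum:discrete}. When the effective per-coordinate step $\eta_t\boldsymbol{h}^A_i(t)$ is at most $1/M$ the bracket is $\ge 1/2$; the point of the prescribed bound $C(t)=\max\{\min_j\boldsymbol{h}^A_j(t)^{-1}/M,\,1,\,C_1^2/(2LNe^{-1})\}$ is that a coordinate with $\eta_t\boldsymbol{h}^A_i(t)>1/M$ necessarily has $\boldsymbol{h}^A_i(t)^{-1}<M\max\{1,C_1^2/(2LNe^{-1})\}$, hence $\varepsilon+\sum_{\tau\le t}a_i(\tau)$ bounded by an absolute constant; such a coordinate already satisfies $\sum_t a_i(t)<\infty$, and (since $\boldsymbol{h}^A$ is monotone) its ``large-step'' epochs are confined to a regime where $C(t)$ is itself bounded. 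Summing the descent inequality over $t$ and over all coordinates, using $\mathcal{L}\ge 0$ and absorbing the finitely-controlled positive contributions of the large-step coordinate--time pairs, I would conclude $\sum_t\sum_i \eta_t\boldsymbol{h}^A_i(t)a_i(t)<\infty$.

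The final step uses $\eta_t\ge\tilde\eta>0$ to get $\sum_t \boldsymbol{h}^A_i(t)a_i(t)<\infty$ for each $i$, and then the elementary telescoping inequality $\sqrt{S_i(t)}-\sqrt{S_i(t-1)}=a_i(t)/(\sqrt{S_i(t)}+\sqrt{S_i(t-1)})\le a_i(t)/\sqrt{S_i(t)}=\boldsymbol{h}^A_i(t)a_i(t)$, with $S_i(t)=\varepsilon+\sum_{\tau\le t}a_i(\tau)$, which sums to $\sqrt{S_i(T)}-\sqrt{\varepsilon}\le\sum_{t\le T}\boldsymbol{h}^A_i(t)a_i(t)$; hence $S_i(T)$ stays bounded and $\sum_t a_i(t)<\infty$ for every $i$, which is the claim after summing over the $p$ coordinates. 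I expect the main obstacle to be the bookkeeping in the second step: isolating the (at most $p$-many, eventually inactive) large-step coordinate--time pairs and verifying that the half-descent bound holds for all sufficiently large $t$ on the remaining coordinates, rather than uniformly from $t=t_0$; the monotonicity of $\boldsymbol{h}^A$ and the non-decrease of $C(t)$ are exactly what make this possible and must be invoked explicitly. Once Lemma~\ref{lem:conver_learning_rate_adagrad_discrete} is in hand, the rest of the discretized pipeline (convergence of the loss, the further-smoothed surrogate margin, and the KKT verification of Sections~\ref{sec:loss_discrete}--\ref{sec:verification_discrete}) follows the continuous template.
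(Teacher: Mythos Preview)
Your proposal is correct and follows essentially the same three-beat argument as the paper: the descent lemma yields $\mathcal{L}(t)-\mathcal{L}(t+1)\ge\frac{\tilde\eta}{2}\sum_i\boldsymbol{h}^A_i(t)a_i(t)$, summing gives $\sum_t\boldsymbol{h}^A_i(t)a_i(t)<\infty$ for each $i$, and a square-root comparison closes the loop. The paper is terser---it simply asserts the half-descent inequality (relying implicitly on the learning-rate bound) and then invokes the Abel--Dini-type equivalence and Lemma~\ref{lem:adagrad_flow_rate_conver} rather than your explicit telescoping---so your extra bookkeeping on large-step coordinate--time pairs is more careful than, but not different from, what the paper does.
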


\begin{proof}
For any $t>t_0$,
\begin{align*}
    \mathcal{L}(\boldsymbol{w}(t))-\mathcal{L}(\boldsymbol{w}(t+1))&=\eta_t\langle \partial^s \mathcal{L}(\boldsymbol{w}(t)),\boldsymbol{h}^A(t)\odot\partial^s \mathcal{L}(\boldsymbol{w}(t))\rangle-\frac{M}{2}\eta_t^2\Vert \boldsymbol{h}^A(t)\odot\partial^s \mathcal{L}(\boldsymbol{w}(t))\Vert^2
    \\
    &\ge \eta_t\frac{1}{2}\langle \partial^s \mathcal{L}(\boldsymbol{w}(t)),\boldsymbol{h}^A(t)\odot\partial^s \mathcal{L}(\boldsymbol{w}(t))\rangle
    \\
    &\ge\tilde{\eta} \frac{1}{2}\langle \partial^s \mathcal{L}(\boldsymbol{w}(t)),\boldsymbol{h}^A(t)\odot\partial^s \mathcal{L}(\boldsymbol{w}(t))\rangle.
\end{align*}

Thus, since $\sum_{t=1}^\infty a_t$ share the same convergent behavior with $\sum_{t=1}^\infty \frac{a_t}{\sum_{\tau=1}^t a_{\tau}}$ ($a_t\ge 0$), by similar routine of Lemma \ref{lem:adagrad_flow_rate_conver}, the proof is completed.
\end{proof}

Therefore,  $\boldsymbol{h}_{\infty}\overset{\triangle}{=}\lim_{t\rightarrow\infty} \boldsymbol{h}^A(t)$ has no zero entry. We can then define a discrete version of adaptive gradient flow as 
\begin{gather*}
    \boldsymbol{v}(t)=\boldsymbol{h}_{\infty}^{-1 / 2} \odot \boldsymbol{w}(t),\\
    \boldsymbol{\beta}(t)=\boldsymbol{h}_{\infty}^{-1} \odot \boldsymbol{h}(t),\\
    \tilde{\mathcal{L}}(\boldsymbol{v})=\mathcal{L}(\boldsymbol{h}_{\infty}^{\frac{1}{2}}\odot \boldsymbol{v}),
\end{gather*}
and 
\begin{equation*}
    \tilde{q}_i(\boldsymbol{v})=q_i(\boldsymbol{h}_{\infty}^{\frac{1}{2}}\odot\boldsymbol{v}),
\end{equation*}
which further leads to
\begin{equation}
\label{eq:adagrad_normalized_discrete}
    \boldsymbol{v}(t+1)-\boldsymbol{v}(t)=- \boldsymbol{\beta}(t) \odot \partial^s \mathcal{L}_{i n d}(\boldsymbol{v}(t)),
\end{equation}
and $\boldsymbol{\beta}(t)$ decreases component-wisely to $\mathbf{1}_p$.

By Lemma \ref{lem:conver_learning_rate_adagrad_discrete}, for any $t\ge t_0$, $\mathcal{L}(\boldsymbol{w}(t))\le \mathcal{L}(\boldsymbol{w}(t_0))<N$. Therefore, there exists a positive real constant $C_0$ only depending on $\mathcal{L}(\boldsymbol{w}(t_0))$, such that, $\Vert\boldsymbol{w}(t)\Vert\ge C_0$. Furthermore, since $\boldsymbol{h}_{\infty}^{-\frac{1}{2}}\ge\boldsymbol{h}(t_0)^{-\frac{1}{2}} $, $\Vert\boldsymbol{v}(t)\Vert\ge C_0 \max_i\{\boldsymbol{h}_i(t_0)^{-\frac{1}{2}}\}$. Define $C_1= C_0 \max_i\{\boldsymbol{h}_i(t_0)^{-\frac{1}{2}}\}$, which only depends on $\mathcal{L}(\boldsymbol{w}(t_0))$ and $\partial^s \mathcal{L}(\boldsymbol{w}(t))$ $(t\le t_0)$.

Moreover, similar to approximate flow, there exists a time $t_1$, such that, for any time $t\ge t_1$,
\begin{gather*}
    \sum_{i=1}^p \log \frac{1}{\boldsymbol{\beta}_i(t)^{-\frac{1}{2}}}\le \frac{1}{2},
\\
    \Vert\boldsymbol{\beta}^{-1}(t)\Vert\ge \sqrt{\frac{1}{2}}.
\end{gather*}

\subsection{Convergence of Empirical Loss}
\label{sec:loss_discrete}

Define function $\tilde{\gamma}'$ as the rate of $\log \frac{1}{\mathcal{L}}$ to $\Vert \boldsymbol{v}\Vert^{\frac{L}{4}}$:
\begin{equation*}
    \tilde{\gamma}'(t)=\frac{\log \frac{1}{\tilde{\mathcal{L}}(\boldsymbol{v}(t))}}{\Vert \boldsymbol{v}(t)\Vert^{\frac{L}{4}}}.
\end{equation*}
 
Then, we have the following lemma.
\begin{lemma}
\label{lem:tilde_gamma'}
For any $t\ge t_1$, $\tilde{\gamma}'(t)$ is non-decreasing.
\end{lemma}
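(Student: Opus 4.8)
The plan is to carry out, step by step, the discrete-time analogue of the argument that lower bounds the continuous surrogate margin (Lemmas~\ref{lem:derivative_rho}--\ref{lem:lower_bound_margin}); the one genuinely new feature is that the exponent appearing here is $L/4$ rather than $L$. In continuous time $\log\!\big(\log\tfrac{1}{\tilde{\mathcal{L}}}\big)-L\log\rho$ is already non-decreasing along the flow, so dividing the exponent by four leaves a fixed multiplicative surplus, and this surplus is exactly what is needed to absorb the two error sources that arise only in discrete time: the second-order Taylor remainder of a single AdaGrad step and the residual drift of the conditioner $\boldsymbol{\beta}(t)$ towards $\mathbf{1}_p$. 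It suffices to show that $\log\tilde{\gamma}'(t)=\log\!\big(\log\tfrac{1}{\tilde{\mathcal{L}}(\boldsymbol{v}(t))}\big)-\tfrac{L}{4}\log\|\boldsymbol{v}(t)\|$ does not decrease from step $t$ to step $t+1$ for every $t\ge t_1$; I would expand this one-step difference as the increment of the numerator term minus $\tfrac{L}{4}$ times the increment of $\log\|\boldsymbol{v}\|$, and bound each piece separately.

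For the numerator increment, I would use the $M$-smoothness of Assumption~\ref{assum:discrete}(I) together with the learning-rate bound of Assumption~\ref{assum:discrete}(II), whose constant $C(t)$ is engineered so that for $t$ large (hence $t\ge t_1$) the descent lemma applied to one step of the normalized update, eq.~(\ref{eq:adagrad_normalized_discrete}), makes the quadratic Taylor term at most half the linear one, yielding $\tilde{\mathcal{L}}(\boldsymbol{v}(t+1))\le \tilde{\mathcal{L}}(\boldsymbol{v}(t))-\tfrac12\big\|\boldsymbol{\beta}^{1/2}(t)\odot\bar{\partial}\tilde{\mathcal{L}}(\boldsymbol{v}(t))\big\|^2$; since $\log\frac{1}{1-x}\ge x$ this gives $\log\tfrac{1}{\tilde{\mathcal{L}}(\boldsymbol{v}(t+1))}-\log\tfrac{1}{\tilde{\mathcal{L}}(\boldsymbol{v}(t))}\ge \tfrac{1}{2\tilde{\mathcal{L}}(\boldsymbol{v}(t))}\big\|\boldsymbol{\beta}^{1/2}(t)\odot\bar{\partial}\tilde{\mathcal{L}}(\boldsymbol{v}(t))\big\|^2$. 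For the $\log\|\boldsymbol{v}\|$ increment, I would expand $\|\boldsymbol{v}(t+1)\|^2=\|\boldsymbol{v}(t)\|^2-2\langle\boldsymbol{v}(t),\boldsymbol{\beta}(t)\odot\bar{\partial}\tilde{\mathcal{L}}(\boldsymbol{v}(t))\rangle+\|\boldsymbol{\beta}(t)\odot\bar{\partial}\tilde{\mathcal{L}}(\boldsymbol{v}(t))\|^2$, use $L$-homogeneity of $\tilde{q}_i$ (so that $-\langle\boldsymbol{v}(t),\bar{\partial}\tilde{\mathcal{L}}(\boldsymbol{v}(t))\rangle=L\nu(t)$) and $\boldsymbol{\beta}_i(t)\ge 1$ to peel off the conditioner overshoot, and bound the quadratic term by a constant multiple of $\big\|\boldsymbol{\beta}^{1/2}(t)\odot\bar{\partial}\tilde{\mathcal{L}}(\boldsymbol{v}(t))\big\|^2$. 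Cauchy--Schwarz with the weights $\boldsymbol{\beta}^{\pm1/2}(t)$ then gives $\big\|\boldsymbol{\beta}^{1/2}(t)\odot\bar{\partial}\tilde{\mathcal{L}}(\boldsymbol{v}(t))\big\|\,\|\boldsymbol{v}(t)\|\ge L\nu(t)$, and $\nu(t)\ge\tilde{q}_{\min}(\boldsymbol{v}(t))\,\tilde{\mathcal{L}}(\boldsymbol{v}(t))\ge\big(\log\tfrac{1}{\tilde{\mathcal{L}}(\boldsymbol{v}(t))}\big)\tilde{\mathcal{L}}(\boldsymbol{v}(t))$ for the exponential loss (where $g(x)=x$, Lemma~\ref{lem:property_loss}), which ties the norm increment to the numerator increment with matching powers.

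Combining the two estimates, the one-step change of $\log\tilde{\gamma}'$ is the numerator increment minus $\tfrac{L}{4}$ times the $\log\|\boldsymbol{v}\|$ increment, and because the factor is $\tfrac{L}{4}$ rather than $L$ there is room left over to dominate both (i) the second-order Taylor remainder incurred in passing from $\|\boldsymbol{v}(t+1)\|^2$ to $\log\|\boldsymbol{v}(t+1)\|$, and (ii) the conditioner-drift term $\sum_{i=1}^p\big(\log\boldsymbol{\beta}_i^{1/2}(t)-\log\boldsymbol{\beta}_i^{1/2}(t+1)\big)\ge0$, which telescopes and whose total mass over $t\ge t_1$ is at most $\tfrac12$ by the construction of $t_1$. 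This forces the one-step change to be non-negative, that is, $\tilde{\gamma}'(t+1)\ge\tilde{\gamma}'(t)$ for all $t\ge t_1$. The hard part is the bookkeeping in the second step: bounding $\|\boldsymbol{\beta}(t)\odot\bar{\partial}\tilde{\mathcal{L}}(\boldsymbol{v}(t))\|^2$ and the conditioner overshoot uniformly, and propagating the constants through the definitions of $C(t)$ and of $t_1$ so that the $L/4$ slack really closes the inequality --- the continuous-time argument has no remainder terms at all, so essentially all of the new work lies there.
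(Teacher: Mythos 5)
Your plan follows the paper's proof of Lemma~\ref{lem:tilde_gamma'} essentially step for step: the descent-lemma bound $\tilde{\mathcal{L}}(\boldsymbol{v}(t))-\tilde{\mathcal{L}}(\boldsymbol{v}(t+1))\ge\tfrac{1}{2}\eta_t\|\boldsymbol{\beta}^{1/2}(t)\odot\bar{\partial}\tilde{\mathcal{L}}(\boldsymbol{v}(t))\|^2$, the expansion of $\|\boldsymbol{v}(t+1)\|^2-\|\boldsymbol{v}(t)\|^2$ into a first-order piece controlled by $\nu(t)$ plus a quadratic piece controlled via the learning-rate constant $C(t)$, Cauchy--Schwarz, the bound $\nu(t)\ge\tilde{\mathcal{L}}\log\tfrac{1}{\tilde{\mathcal{L}}}$, and the fact that the resulting constants close precisely because the exponent is $L/4$. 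Two slips are worth flagging. First, $\tilde{\gamma}'$ is normalized by $\|\boldsymbol{v}(t)\|$, not by $\rho(t)$, so no conditioner-drift term $\sum_i\log\bigl(\boldsymbol{\beta}_i^{1/2}(t)/\boldsymbol{\beta}_i^{1/2}(t+1)\bigr)$ enters the one-step change of $\log\tilde{\gamma}'$; and had such an additive term been present with only a total-mass bound of $\tfrac12$, your multiplicative $L/4$ slack could not absorb it at every step --- you would only obtain the lower bound $e^{-1/2}\tilde{\gamma}'(t_1)$ rather than monotonicity, which is exactly why the paper proves only a lower bound for $\hat{\gamma}$ (which does use $\rho$). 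Second, the numerator of $\log\tilde{\gamma}'$ is $\log\log\tfrac{1}{\tilde{\mathcal{L}}}$, so after your bound on the increment of $\log\tfrac{1}{\tilde{\mathcal{L}}}$ one more convexity step is needed: the paper passes from the bounds on $\tilde{\mathcal{L}}(t)-\tilde{\mathcal{L}}(t+1)$ and on $\|\boldsymbol{v}(t+1)\|^2-\|\boldsymbol{v}(t)\|^2$ to the log-increments via convexity of $x\mapsto\log\log\tfrac{1}{x}$ (with the Remark's surrogate $\Phi_0$ where convexity fails) and of $-\log x$. Neither slip changes the method, but both would need to be repaired in a full write-up.
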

\begin{proof}
    Since by Lemma \ref{lem:conver_learning_rate_adagrad_discrete}, $\tilde{\mathcal{L}}$ is non-increasing, if $\Vert \boldsymbol{v}(t+1)\Vert\le \Vert \boldsymbol{v}(t)\Vert$,  the proposition trivially holds. Therefore, we consider the case that $\Vert \boldsymbol{v}(t+1)\Vert> \Vert \boldsymbol{v}(t)\Vert$ in the following proof.
    
    The change of $\Vert \boldsymbol{v}(t)\Vert$ can be calculated as 
    \begin{align*}
    &\Vert \boldsymbol{v}(t+1)\Vert^2-\Vert \boldsymbol{v}(t)\Vert^2
    \\
    =&-\eta_t\langle \boldsymbol{\beta}(t)\odot \partial^s \tilde{\mathcal{L}}(\boldsymbol{v}(t)), \boldsymbol{v}(t)\rangle+ \eta_t^2\Vert\boldsymbol{\beta}(t)\odot\partial^s \tilde{\mathcal{L}} (\boldsymbol{v}(t))  \Vert^2.
\end{align*}

Let $A=-\eta_t\langle \boldsymbol{\beta}(t)\odot \partial^s \tilde{\mathcal{L}}(\boldsymbol{v}(t)), \boldsymbol{v}(t)\rangle$, and $B=\eta_t^2\Vert\boldsymbol{\beta}(t)\odot\partial^s \tilde{\mathcal{L}} (\boldsymbol{v}(t))  \Vert^2$. We estimate them separately.

As for $A$:
\begin{align*}
    A&=-\eta_t\langle \boldsymbol{\beta}(t)\odot \partial^s \tilde{\mathcal{L}}(\boldsymbol{v}(t)), \boldsymbol{v}(t)\rangle
    \\
    &\le \eta_t\Vert \boldsymbol{\beta}(t)\odot \partial^s \tilde{\mathcal{L}}(\boldsymbol{v}(t))\Vert \Vert\boldsymbol{v}(t)\Vert
    \\
    &\le  2\eta_t\Vert \boldsymbol{\beta}^{\frac{1}{2}}(t)\odot \partial^s \tilde{\mathcal{L}}(\boldsymbol{v}(t))\Vert \Vert\boldsymbol{v}(t)\Vert
    \\
    &\le 2\eta_t\Vert \boldsymbol{\beta}^{\frac{1}{2}}(t)\odot \partial^s \tilde{\mathcal{L}}(\boldsymbol{v}(t))\Vert \Vert\boldsymbol{v}(t)\Vert \frac{\Vert \partial^s \tilde{\mathcal{L}}(\boldsymbol{v}(t))\Vert\Vert \boldsymbol{v}(t)\Vert}{\langle\partial^s \tilde{\mathcal{L}}(\boldsymbol{v}(t)), \boldsymbol{v}(t)\rangle}
    \\
    &\le 2\eta_t \frac{\Vert \boldsymbol{\beta}^{\frac{1}{2}}(t)\odot \partial^s \tilde{\mathcal{L}}(\boldsymbol{v}(t))\Vert^2 \Vert\boldsymbol{v}(t)\Vert^2}{\langle\partial^s \tilde{\mathcal{L}}(\boldsymbol{v}(t)), \boldsymbol{v}(t)\rangle}.
\end{align*}

As for $B$:
\begin{align*}
    B&=\eta_t^2\Vert\boldsymbol{\beta}(t)\odot\partial^s \tilde{\mathcal{L}} (\boldsymbol{v}(t))  \Vert^2
\\
    &\le 2\eta_t^2\Vert\boldsymbol{\beta}^{-
    \frac{1}{2}}(t)\odot\partial^s \tilde{\mathcal{L}} (\boldsymbol{v}(t))  \Vert^2
    \\
    &\le \frac{LNe^{-1}}{L\nu(t)}2\eta_t^2\Vert\boldsymbol{\beta}^{-
    \frac{1}{2}}(t)\odot\partial^s \tilde{\mathcal{L}} (\boldsymbol{v}(t))  \Vert^2
    \\
    &\le  \frac{LNe^{-1}}{L\nu(t)C_1^2}2\eta^2_t\Vert\boldsymbol{\beta}^{-
    \frac{1}{2}}(t)\odot\partial^s \tilde{\mathcal{L}} (\boldsymbol{v}(t))  \Vert^2\Vert \boldsymbol{v}(t)\Vert^2
    \\
    &\le  2\eta_t \frac{\Vert \boldsymbol{\beta}^{\frac{1}{2}}(t)\odot \partial^s \tilde{\mathcal{L}}(\boldsymbol{v}(t))\Vert^2 \Vert\boldsymbol{v}(t)\Vert^2}{\langle\partial^s \tilde{\mathcal{L}}(\boldsymbol{v}(t)), \boldsymbol{v}(t)\rangle}.
\end{align*}

Therefore, 
\begin{equation}
\label{eq:estimation_v_discre}
    \Vert \boldsymbol{v}(t+1)\Vert^2-\Vert \boldsymbol{v}(t)\Vert^2\le 4\eta_t \frac{\Vert \boldsymbol{\beta}^{\frac{1}{2}}(t)\odot \partial^s \tilde{\mathcal{L}}(\boldsymbol{v}(t))\Vert^2 \Vert\boldsymbol{v}(t)\Vert^2}{\langle\partial^s \tilde{\mathcal{L}}(\boldsymbol{v}(t)), \boldsymbol{v}(t)\rangle}.
\end{equation}

On the other hand, by Lemma \ref{lem:conver_learning_rate_adagrad_discrete}, 
\begin{align}
\nonumber
    \tilde{\mathcal{L}}(\boldsymbol{v}(t))-\tilde{\mathcal{L}}(\boldsymbol{v}(t+1))&\ge\frac{1}{2}\langle \partial^s \mathcal{L}(\boldsymbol{w}(t)),\eta_t\boldsymbol{h}^A(t)\odot\partial^s \mathcal{L}(\boldsymbol{w}(t))\rangle
    \\
\label{eq:estimation_L_discre}
    &=\eta_t\frac{1}{2}\Vert \boldsymbol{\beta}^{\frac{1}{2}}(t)\odot \partial^s \tilde{\mathcal{L}}(\boldsymbol{v}(t))\Vert^2.
\end{align}

Combining eqs. (\ref{eq:estimation_v_discre}) and (\ref{eq:estimation_L_discre}), we have 
\begin{equation*}
    4\frac{1}{\nu(t)}(\tilde{\mathcal{L}}(\boldsymbol{v}(t))-\tilde{\mathcal{L}}(\boldsymbol{v}(t+1)))\ge \frac{L}{2}\frac{ \Vert \boldsymbol{v}(t+1)\Vert^2-\Vert \boldsymbol{v}(t)\Vert^2}{\Vert\boldsymbol{v}(t)\Vert^2}.
\end{equation*}

Since $\nu(t)\ge \tilde{\mathcal{L}}\log\frac{1}{\tilde{\mathcal{L}}(t)}$, we further have
\begin{equation*}
    \frac{1}{\tilde{\mathcal{L}}(\boldsymbol{v}(t))\log\frac{1}{\tilde{\mathcal{L}}(\boldsymbol{v}(t))}}(\tilde{\mathcal{L}}(\boldsymbol{v}(t))-\tilde{\mathcal{L}}(\boldsymbol{v}(t+1)))\ge \frac{L}{8}\frac{ \Vert \boldsymbol{v}(t+1)\Vert^2-\Vert \boldsymbol{v}(t)\Vert^2}{\Vert\boldsymbol{v}(t)\Vert^2}.
\end{equation*}

By the convexity of $\log \log \frac{1}{x}$ (when $x$ is small) and $-\log x$, 
\begin{equation*}
    \log \log \frac{1}{\mathcal{L}(\boldsymbol{v}(t+1))}-\log \log \frac{1}{\tilde{\mathcal{L}}(\boldsymbol{v}(t))}\ge \frac{L}{4}(\log \Vert\boldsymbol{v}(t+1)\Vert-\log \Vert\boldsymbol{v}(t)\Vert).
\end{equation*}

The proof is completed.
\end{proof}
\begin{remark}
Actually, the convexity does not hold for  $x\in[e^{-1},\tilde{\mathcal{L}}(\boldsymbol{v}(t_1))]$ (if $\tilde{\mathcal{L}}(\boldsymbol{v}(t_1))>e^{-1}$). However, we can instead define 
\begin{equation*}
    \Phi_0(x)=\log\log \frac{1}{x}+\int_{0}^{x}\inf\{-\frac{1}{w\log \frac{1}{w}}:w\in[x,\log\frac{1}{\tilde{\mathcal{L}}(\boldsymbol{v} (t_1))}]\}+\frac{1}{x\log\frac{1}{x}} dw.
\end{equation*}
Which satisfies for $x\in[0,\tilde{\mathcal{L}}(\boldsymbol{v}(t_1))]$,  $ \frac{1}{x\log\frac{1}{x}}\le - \Phi'_0(x)$, $ \Phi_0(x)\le \log\log x$, and $\Phi_0(x)$ is convex. We can then replace $\log x$ by $e^{-\Phi_0(x)}$ and prove the above theorem in exactly the same way.0
\end{remark}
With the relationship between $\Vert\boldsymbol{v}\Vert$ and $\tilde{\mathcal{L}}$, we can now prove that the empirical loss goes to zero.
\begin{theorem}
$\lim_{t\rightarrow\infty} \tilde{\mathcal{L}}(t)=0$. Furthermore, $\lim_{t\rightarrow\infty}\rho(t)=\infty$.
\end{theorem}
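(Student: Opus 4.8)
The plan is to combine the monotonicity of $\tilde{\gamma}'$ (Lemma \ref{lem:tilde_gamma'}) with the per-step loss-decrease inequality (eq.\ \eqref{eq:estimation_L_discre}) to derive a lower bound on the decrease of $\tilde{\mathcal{L}}$ purely in terms of $\tilde{\mathcal{L}}$ itself, and then run a standard ``discrete Gr\"onwall / summation'' argument to conclude $\tilde{\mathcal{L}}(t)\to 0$. First I would record, from Lemma \ref{lem:tilde_gamma'}, that for all $t\ge t_1$ we have $\log\frac{1}{\tilde{\mathcal{L}}(\boldsymbol{v}(t))}\ge \tilde{\gamma}'(t_1)\,\Vert\boldsymbol{v}(t)\Vert^{L/4}$, equivalently $\Vert\boldsymbol{v}(t)\Vert^{L}\le \big(\log\frac{1}{\tilde{\mathcal{L}}(\boldsymbol{v}(t))}\big)^{4}/\tilde{\gamma}'(t_1)^{4}$; by Lemma \ref{lem:property_loss} and homogeneity of $\tilde q_i$, this means $f(\tilde q_i(\boldsymbol{v}(t)))$, and hence $\nu(t)$ relative to $\tilde{\mathcal{L}}(\boldsymbol{v}(t))$, is controlled by a polynomial in $\log\frac{1}{\tilde{\mathcal{L}}(\boldsymbol{v}(t))}$.

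Next I would lower-bound the right side of \eqref{eq:estimation_L_discre}. Since $\Vert\boldsymbol{\beta}^{1/2}(t)\odot\bar{\partial}\tilde{\mathcal{L}}(\boldsymbol{v}(t))\Vert^2\ge \langle\boldsymbol{\beta}^{1/2}(t)\odot\bar{\partial}\tilde{\mathcal{L}}(\boldsymbol{v}(t)),\widehat{\boldsymbol{\beta}^{-1/2}(t)\odot\boldsymbol{v}(t)}\rangle^2 = L^2\nu(t)^2/\rho(t)^2$ (the Cauchy inequality used throughout the continuous case, which holds verbatim here), and using $\nu(t)\ge \tilde{\mathcal{L}}(\boldsymbol{v}(t))\log\frac{1}{\tilde{\mathcal{L}}(\boldsymbol{v}(t))}$ together with the bound on $\rho(t)\le\Theta(\Vert\boldsymbol{v}(t)\Vert)$ just obtained, I get
\begin{equation*}
\tilde{\mathcal{L}}(\boldsymbol{v}(t))-\tilde{\mathcal{L}}(\boldsymbol{v}(t+1))\ \ge\ \frac{\tilde\eta}{2}\,\frac{L^2\nu(t)^2}{\rho(t)^2}\ \ge\ c\,\tilde{\mathcal{L}}(\boldsymbol{v}(t))^2\,\Big(\log\tfrac{1}{\tilde{\mathcal{L}}(\boldsymbol{v}(t))}\Big)^{2-1/2}
\end{equation*}
for some constant $c>0$ depending on $\tilde\eta,L,\tilde{\gamma}'(t_1),K$. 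Dividing by $\tilde{\mathcal{L}}(\boldsymbol{v}(t))\tilde{\mathcal{L}}(\boldsymbol{v}(t+1))$ and telescoping (exactly as in the proof of Lemma \ref{lem:estimation_loss_G}, with sums replacing integrals), $\frac{1}{\tilde{\mathcal{L}}(\boldsymbol{v}(t))}$ grows at least linearly in $t$, so $\tilde{\mathcal{L}}(\boldsymbol{v}(t))\to 0$. Finally $\lim_{t\to\infty}\rho(t)=\infty$ (equivalently $\Vert\boldsymbol{v}(t)\Vert\to\infty$) follows from the reverse direction of Lemma \ref{lem:tilde_gamma'}: $\Vert\boldsymbol{v}(t)\Vert^{L/4}\ge \log\frac{1}{\tilde{\mathcal{L}}(\boldsymbol{v}(t))}/\sup_s\tilde{\gamma}'(s)$ is impossible, so instead one uses that $f(\tilde q_i(\boldsymbol{v}))$ stays bounded on any bounded set, hence $\tilde{\mathcal{L}}(\boldsymbol{v}(t))\to 0$ forces $\Vert\boldsymbol{v}(t)\Vert\to\infty$, just as in the continuous case.

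The main obstacle I anticipate is the bookkeeping in the telescoping step: unlike the continuous flow, here the decrement bound is evaluated at $\tilde{\mathcal{L}}(\boldsymbol{v}(t))$ while the denominator I want involves $\tilde{\mathcal{L}}(\boldsymbol{v}(t+1))$, so I must be careful that the step $\eta_t\le C(t)$ does not let $\tilde{\mathcal{L}}$ jump too far in a single iteration — this is precisely where the definition $C(t)=\max\{\min_i\{\boldsymbol{h}_i^A(t)^{-1}\}/M,1,C_1^2/(2LNe^{-1})\}$ and $M$-smoothness enter, guaranteeing $\tilde{\mathcal{L}}(\boldsymbol{v}(t+1))\le\tilde{\mathcal{L}}(\boldsymbol{v}(t))$ and a comparable order of magnitude. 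A second, more minor point is that the convexity used in Lemma \ref{lem:tilde_gamma'} only holds once $\tilde{\mathcal{L}}$ is small; the $\Phi_0$ surrogate from the remark handles the early phase, and I would simply invoke it rather than re-derive it. Apart from these, every inequality needed is a discrete transcription of one already established for the adaptive gradient flow, so the argument is routine once the decrement-in-terms-of-itself inequality is in hand.
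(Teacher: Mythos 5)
Your proposal is correct and follows essentially the same route as the paper: the per-step decrement bound from eq.\ (\ref{eq:estimation_L_discre}) combined with Cauchy--Schwarz to get $\tilde{\mathcal{L}}(t)-\tilde{\mathcal{L}}(t+1)\ge \tfrac{\tilde\eta}{2}L^2\nu(t)^2/\rho(t)^2$, Lemma \ref{lem:tilde_gamma'} to replace $\Vert\boldsymbol{v}(t)\Vert$ by a power of $\log\frac{1}{\tilde{\mathcal{L}}(t)}$, and a summation argument (your telescoping of $1/\tilde{\mathcal{L}}$ is equivalent to the paper's convex potential $E$ built from $\psi$), with $\rho\to\infty$ deduced exactly as in the continuous case. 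The only blemish is the exponent $2-\tfrac12$ in your displayed decrement bound, which should read $2-\tfrac{8}{L}$ (since $\rho(t)^2=\mathcal{O}\bigl((\log\frac{1}{\tilde{\mathcal{L}}(t)})^{8/L}\bigr)$ by Lemma \ref{lem:tilde_gamma'}); this is a harmless arithmetic slip that does not affect the argument.
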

\begin{proof}
    By Lemma \ref{lem:moving}, for any integer time $t\ge t_0$ 
    \begin{equation*}
    \tilde{\mathcal{L}}(t+1)-\tilde{\mathcal{L}}(t)\le-\frac{1}{2}\eta_t\frac{L^2\nu(t)^2}{\Vert\boldsymbol{v}(t)\Vert^2}\le -\frac{1}{2}\eta_t\frac{L^2\tilde{\mathcal{L}}(t)^2\log\frac{1}{\tilde{\mathcal{L}}(t)^2}}{\Vert\boldsymbol{v}(t)\Vert^2}.
    \end{equation*}
    
    Furthermore, since $\tilde{\gamma}'(t)\ge \tilde{\gamma}'(t_1)$, we have that
    \begin{equation*}
        \left(\frac{\log (\frac{1}{\tilde{\mathcal{L}}(t)})}{M_1^L\hat{\gamma}(t_0)}\right)^{\frac{1}{L}}\ge \rho.
    \end{equation*}
    
    Therefore, 
    \begin{align*}
    \tilde{\mathcal{L}}(t+1)-\tilde{\mathcal{L}}(t)&\le  -\frac{1}{2}\eta_t\frac{L^2\tilde{\mathcal{L}}(t)^2(\log\frac{1}{\tilde{\mathcal{L}}(t)})^2}{\Vert\boldsymbol{v}(t)\Vert^2}
    \\
    &\le -\frac{1}{2}\eta_tL^2\tilde{\mathcal{L}}(t)^2\left(\log\frac{1}{\tilde{\mathcal{L}}(t)}\right)^2 \left(\frac{\tilde{\gamma}'(t_1)}{\log (\frac{1}{\tilde{\mathcal{L}}(t)})}\right)^{\frac{8}{L}}
    \\
    &=-\frac{1}{2}\eta_tL^2\tilde{\mathcal{L}}(t)^2\left(\log\frac{1}{\tilde{\mathcal{L}}(t)}\right)^{2-\frac{8}{L}} \left(\tilde{\gamma}'(t_1)\right)^{\frac{8}{L}}.
    \end{align*}
    
   Let $E_0=\tilde{\mathcal{L}}\left(t_{1}\right)^{2}\left(\log \frac{1}{\tilde{\mathcal{L}}\left(t_{1}\right)}\right)^{2-8 / L}$, then $\psi(x)=\min\{x^2(\log\frac{1}{x})^{2-\frac{8}{L}}, E_0\}$. Apparently, $\psi(x)$ is non-decreasing in $(0,\tilde{\mathcal{L}}(t_1)]$. Therefore, $E(x)=\int_{x}^{\tilde{\mathcal{L}}(t_1)} \psi(s) ds$  is convex with respect to $x$, 
   and
   \begin{equation*}
       \begin{aligned}
        E(\tilde{\mathcal{L}}(t+1))-E(\tilde{\mathcal{L}}(t)) & \geq E^{\prime}(\tilde{\mathcal{L}}(t))(\tilde{\mathcal{L}}(t+1)-\tilde{\mathcal{L}}(t)) \\
        & \geq \frac{1}{2} \eta_tL^2\left(\tilde{\gamma}'(t_1)\right)^{\frac{8}{L}},
        \end{aligned}
   \end{equation*}
   which further implies
   \begin{equation*}
       E(\tilde{\mathcal{L}}(t))-E(\tilde{\mathcal{L}}(t_1))\geq \sum_{\tau=t_0}^{t-1} \frac{1}{2} \eta_\tau L^2\left(\tilde{\gamma}'(t_1)\right)^{\frac{2}{L}}.
   \end{equation*}
    
   Since $\lim_{t\rightarrow\infty}\sum_{\tau=t_0}^{t-1} \frac{1}{2} \eta_\tau L^2\left(\tilde{\gamma}'(t_1)\right)^{\frac{8}{L}}=\infty$, we then have 
   \begin{equation*}
       \lim_{t\rightarrow\infty} E(\tilde{\mathcal{L}}(t))=\infty,
   \end{equation*}
   and as a result,
   \begin{equation*}
       \lim_{t\rightarrow\infty} \tilde{\mathcal{L}} (t)=0.
   \end{equation*}
\end{proof}
\subsection{Convergence of surrogate margin}
\label{sec:discrete_surrogate margin}
As a preparation, define $B_1=\max\{\Vert\partial^s\tilde{q}_i(\boldsymbol{v})\Vert:i\in[N], \boldsymbol{v}\in\mathcal{B}(1)\}$, and $B_2=\max_\{\Vert\mathcal{H}\tilde{q}_i(\boldsymbol{v})\Vert:i\in[N], \boldsymbol{v}\in\mathcal{B}(1)\}$.

Furthermore, we define $\lambda(x)=(\log \frac{1}{x})^{-1}$, and $\mu(x)=\frac{\log \frac{1}{\tilde{\mathcal{L}}(\boldsymbol{v}(t_1))}}{\log \frac{1}{x}}$. Since $\lim_{t\rightarrow \infty} \tilde{\mathcal{L}}(t)=0$, we have the following lemma.
\begin{lemma}
There exists a large enough time $t_2\ge t_1$, such that, for any $t\ge t_2$,
\begin{equation*}
B_1^2\tilde{\mathcal{L}}(\boldsymbol{v}) \log^{7-\frac{8}{L}} \frac{1}{\tilde{\mathcal{L}}(\boldsymbol{v})}/\tilde{\gamma}'(t_1)^{\frac{8}{L}}\le \lambda(\tilde{\mathcal{L}}(\boldsymbol{v}(t)))\mu(\tilde{\mathcal{L}}(\boldsymbol{v}(t))),
\end{equation*}
and,
\begin{equation*}
 \frac{1}{\tilde{\gamma}'(t_1)^{8-8/L}}\tilde{\mathcal{L}}(\boldsymbol{v}(t))\log^{8-8/L} \frac{1}{\tilde{\mathcal{L}}(\boldsymbol{v}(t))}\left(B_{1}^{2}+C_1^{-L} B_{2}\right)
    \le\mu(t).
\end{equation*}
\end{lemma}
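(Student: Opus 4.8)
The plan is to reduce both bounds to the elementary scalar limit $\lim_{x\to 0^+} x\bigl(\log\tfrac1x\bigr)^{a}=0$, valid for every fixed $a\in\mathbb{R}$, and then feed in the convergence $\tilde{\mathcal{L}}(\boldsymbol{v}(t))\to 0$ already established in Section~\ref{sec:loss_discrete}.

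First I would verify that, apart from $\tilde{\mathcal{L}}(\boldsymbol{v}(t))$ itself, every quantity appearing in the two inequalities is a positive constant independent of $t$: $B_1,B_2$ are the fixed suprema of $\bar\partial\tilde q_i$ and of its Hessian over the unit ball introduced in Section~\ref{sec:discrete_surrogate margin}, $C_1>0$ is the lower bound $\|\boldsymbol{v}(t)\|\ge C_1$ fixed in Section~\ref{sec:discrete_learning rate}, $L>0$ is the homogeneity degree, and $\tilde{\gamma}'(t_1)=\log\tfrac{1}{\tilde{\mathcal{L}}(\boldsymbol{v}(t_1))}\,/\,\|\boldsymbol{v}(t_1)\|^{L/4}>0$; positivity here uses $\tilde{\mathcal{L}}(\boldsymbol{v}(t_1))\le\tilde{\mathcal{L}}(\boldsymbol{v}(t_0))<1$, which follows from the separability part of Assumption~\ref{assum: continuous} (for the exponential loss, $f^{-1}(\log\frac1{\mathcal{L}(t_0)})>0$ means $\mathcal{L}(t_0)<1$) together with the monotonicity of the loss recorded in the proof of Lemma~\ref{lem:conver_learning_rate_adagrad_discrete}.

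Next, substituting $\lambda(x)=(\log\tfrac1x)^{-1}$ and $\mu(x)=\log\tfrac{1}{\tilde{\mathcal{L}}(\boldsymbol{v}(t_1))}\,(\log\tfrac1x)^{-1}$, writing $x:=\tilde{\mathcal{L}}(\boldsymbol{v}(t))$ and reading $\mu(t)$ as $\mu(x)$, the first inequality is equivalent to
\begin{equation*}
\frac{B_1^2}{\tilde{\gamma}'(t_1)^{8/L}\,\log\frac{1}{\tilde{\mathcal{L}}(\boldsymbol{v}(t_1))}}\;x\,\Bigl(\log\tfrac1x\Bigr)^{9-8/L}\le 1,
\end{equation*}
and the second is equivalent to
\begin{equation*}
\frac{B_1^2+C_1^{-L}B_2}{\tilde{\gamma}'(t_1)^{8-8/L}\,\log\frac{1}{\tilde{\mathcal{L}}(\boldsymbol{v}(t_1))}}\;x\,\Bigl(\log\tfrac1x\Bigr)^{9-8/L}\le 1.
\end{equation*}
Both have the shape $c_j\,x(\log\tfrac1x)^{a}\le1$ with the single exponent $a=9-8/L$, whose sign is immaterial since $x(\log\tfrac1x)^{a}\to0$ as $x\to0^+$ in either case, and with a fixed constant $c_j>0$, $j=1,2$.

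Finally, by the scalar limit there is $x_0\in(0,e^{-1})$ such that $\max\{c_1,c_2\}\,x(\log\tfrac1x)^{a}\le1$ for all $x\in(0,x_0)$; and since $\tilde{\mathcal{L}}(\boldsymbol{v}(t))\to0$ by the convergence theorem of Section~\ref{sec:loss_discrete}, there is $t_2\ge t_1$ with $\tilde{\mathcal{L}}(\boldsymbol{v}(t))<x_0$ for all $t\ge t_2$; for such $t$ both displayed inequalities hold, which is the claim. The only genuine difficulty is bookkeeping: checking that the constants packaged on the left-hand sides are really $t$-independent (in particular that $\tilde{\gamma}'(t_1)$ and $C_1$ are the legitimate positive constants fixed earlier) and that the two structural inputs — $\tilde{\mathcal{L}}$ non-increasing and $\tilde{\mathcal{L}}(\boldsymbol{v}(t))\to0$ — are exactly those proved in Sections~\ref{sec:discrete_learning rate} and~\ref{sec:loss_discrete}; no estimate beyond the one-line scalar fact is required.
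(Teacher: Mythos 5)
Your proof is correct and takes essentially the same route as the paper's one-line argument: both displayed inequalities reduce, after substituting $\lambda$ and $\mu$, to $c\,x(\log\frac{1}{x})^{9-8/L}\le 1$, which holds for all sufficiently small $x$ since $x(\log\frac{1}{x})^{a}\to 0$ as $x\to 0^{+}$, and then one invokes $\tilde{\mathcal{L}}(\boldsymbol{v}(t))\to 0$ from the preceding section. Your write-up merely makes explicit the constant bookkeeping that the paper declares ``obvious.''
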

\begin{proof}
The proposition is obvious since $\log^i(\frac{1}{x})=\mathbf{o}(x)$, $\forall i$ as $x\rightarrow 0$.
\end{proof}

Then, we define a further surrogate margin $\hat{\gamma}$ of the discrete case as following:
\begin{equation*}
    \hat{\gamma}(t):=\frac{e^{\Phi(\tilde{\mathcal{L}})}}{\rho^{L}},
\end{equation*}
where $\rho(t)$ is defined as $\Vert\boldsymbol{\beta}^{-\frac{1}{2}}(t)\odot\boldsymbol{v}(t) \Vert$, and $\Phi(x)$ is defined as
\begin{equation*}
    \Phi(x)=\log \log \frac{1}{x}+\int_{0}^{x}\left(-\sup \left\{\frac{1+2(1+\lambda(\tilde w) / L) \mu(\tilde w)}{\tilde w \log \frac{1}{\tilde w}}: \tilde{w} \in\left[w, \tilde{\mathcal{L}}\left(t_{2}\right)\right]\right\}+\frac{1}{w \log \frac{1}{w}}\right) d w.
\end{equation*}

The following properties hold for $\hat{\gamma}$.

\begin{lemma}
\label{lem: relation_hat_tilde}
\begin{itemize}
    
    \item Let a series of $\boldsymbol{v}_i$ satisfies $\lim_{i\rightarrow\infty} \Vert\boldsymbol{v}_i\Vert=\infty$. Then, $\lim_{i\rightarrow\infty} \frac{\hat{\gamma}(\boldsymbol{v}_i)}{\tilde{\gamma}(\boldsymbol{v}_i)}=1$;
    \item If $\tilde{\mathcal{L}}(\boldsymbol{v}(t))\le \tilde{\mathcal{L}}(\boldsymbol{v}(t_2))$, then $\hat{\gamma}(t)<\tilde{\gamma}(t) \leq \bar{\gamma}(t)$.
\end{itemize}
\end{lemma}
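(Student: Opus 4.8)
The plan is to handle the two bullets separately, exploiting the fact that $\hat\gamma$ and $\tilde\gamma$ share the same denominator $\rho(t)^L$, so that --- writing $x=\tilde{\mathcal{L}}(\boldsymbol v)$ and recalling $g(y)=y$ for the exponential loss treated in this section --- the ratio collapses to $\hat\gamma(\boldsymbol v)/\tilde\gamma(\boldsymbol v) = e^{\Phi(x)}/\log\frac1x = \exp\!\big(\Phi(x)-\log\log\frac1x\big)$. Thus the first bullet is equivalent to $\Phi(x)-\log\log\frac1x \to 0$ as $x\to0^+$, and the strict inequality $\hat\gamma(t)<\tilde\gamma(t)$ in the second bullet is equivalent to $\Phi(x)<\log\log\frac1x$ on $(0,\tilde{\mathcal{L}}(\boldsymbol v(t_2))]$, the interval on which $\Phi$ is well-defined. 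Both reduce to sign and integrability statements about the integrand
\[
\phi(w) \;:=\; \frac{1}{w\log\frac1w} \;-\; \sup\left\{\frac{1+2(1+\lambda(\tilde w)/L)\,\mu(\tilde w)}{\tilde w\log\frac1{\tilde w}} \,:\, \tilde w\in[w,\tilde{\mathcal{L}}(\boldsymbol v(t_2))]\right\},
\]
since $\Phi(x)-\log\log\frac1x = \int_0^x \phi(w)\,dw$.

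For the strict inequality, I would observe that the supremising window contains $\tilde w=w$, where the bracketed expression equals $\big(1+2(1+\lambda(w)/L)\mu(w)\big)/(w\log\frac1w) > 1/(w\log\frac1w)$ because $\lambda(w),\mu(w)>0$ on the relevant range; the positivity of $\mu$ uses $\log\frac{1}{\tilde{\mathcal{L}}(\boldsymbol v(t_1))}>0$, which follows from the separability assumption at $t_0$ together with monotonicity of $\tilde{\mathcal{L}}$. Hence $\phi(w)<0$ everywhere on $(0,\tilde{\mathcal{L}}(\boldsymbol v(t_2))]$, and since $\phi$ is integrable near $0$ (shown below) the integral $\int_0^x\phi$ is strictly negative, giving $\hat\gamma(t)<\tilde\gamma(t)$. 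The remaining inequality $\tilde\gamma(t)\le\bar\gamma(t)$ is the soft half: $\bar\gamma$ exceeds $\tilde\gamma$ by the non-negative term $e^{-\tilde{\mathcal{L}}(\boldsymbol v(t))}$, and the only subtlety --- passing from $\rho(t)$ in $\tilde\gamma$ to $\|\boldsymbol v(t)\|$ in $\bar\gamma$ --- is absorbed using $\rho(t)\le\|\boldsymbol v(t)\|$, the fact that $\boldsymbol\beta(t)\to\mathbf 1_p$, and the loss/parameter-norm rate estimates, all valid once $t\ge t_2$.

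The substantive step for the first bullet is the integrability and vanishing of $\int_0^x\phi$. As $w\to0^+$ one has $\lambda(w)=(\log\frac1w)^{-1}\to0$ and $\mu(w)=\log\frac{1}{\tilde{\mathcal{L}}(\boldsymbol v(t_1))}/\log\frac1w\to0$, so the numerator of the bracketed expression is $1+O((\log\frac1w)^{-1})$; combined with the fact that $\tilde w\mapsto 1/(\tilde w\log\frac1{\tilde w})$ is, up to this slowly varying numerator, essentially maximised at the left endpoint $\tilde w=w$ for $w$ small, this yields $0\le -\phi(w) = O\!\big(1/(w(\log\frac1w)^2)\big)$, which is integrable on a neighbourhood of $0$. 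Absolute continuity of the integral then gives $\int_0^x\phi\to0$ as $x\to0^+$. Finally, for the given series $\boldsymbol v_i$ with $\|\boldsymbol v_i\|\to\infty$, Lemma \ref{lem:tilde_gamma'} (monotonicity of $\tilde\gamma'$) gives $\log\frac{1}{\tilde{\mathcal{L}}(\boldsymbol v_i)}\ge \tilde\gamma'(t_1)\|\boldsymbol v_i\|^{L/4}\to\infty$, hence $\tilde{\mathcal{L}}(\boldsymbol v_i)\to0$, and therefore $\Phi(\tilde{\mathcal{L}}(\boldsymbol v_i))-\log\log\frac{1}{\tilde{\mathcal{L}}(\boldsymbol v_i)}\to0$, i.e. $\hat\gamma(\boldsymbol v_i)/\tilde\gamma(\boldsymbol v_i)\to1$.

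The main obstacle I anticipate is pinning down the location of the supremum defining $\phi(w)$: one must verify that enlarging the averaging window from $\{w\}$ to $[w,\tilde{\mathcal{L}}(\boldsymbol v(t_2))]$ cannot push the value far above $1/(w\log\frac1w)$. This is precisely what the $\lambda,\mu=o(1)$ estimate above is designed to control, but it is the one point requiring genuine care; the sign of $\phi$, the comparison with $\bar\gamma$, and the reduction $\tilde{\mathcal{L}}(\boldsymbol v_i)\to0$ are all immediate consequences of the lemmas already established.
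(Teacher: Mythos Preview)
Your proof lines up with the paper's: both reduce the first bullet and the strict inequality $\hat\gamma<\tilde\gamma$ to the sign and near-zero integrability of the integrand $\phi(w)$, obtain $\phi<0$ by evaluating the bracketed expression at $\tilde w=w$, and then control $\int_0^x\phi$ --- the paper by locating the supremum at $\tilde w=w$ for small $w$ and integrating explicitly to $-\log\tfrac{1}{\tilde{\mathcal L}(t_1)}\big(\tfrac{1}{\log(1/x)}+\tfrac{1}{2L\log^2(1/x)}\big)$, you by the equivalent $O\!\big(1/(w\log^2\tfrac1w)\big)$ bound. One caveat on the soft half $\tilde\gamma\le\bar\gamma$: the inequality $\rho(t)\le\|\boldsymbol v(t)\|$ (true for AdaGrad since $\boldsymbol\beta(t)\ge\mathbf 1_p$) points the \emph{wrong} way --- it makes $\log(1/\tilde{\mathcal L})/\rho^L$ \emph{larger} than $\log(1/\tilde{\mathcal L})/\|\boldsymbol v\|^L$, so the add-on $e^{-\tilde{\mathcal L}}$ would have to absorb a deficit rather than augment a surplus; the paper's own proof, for what it is worth, simply omits this half of the inequality.
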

\begin{proof}
 As beginning, we verify the existence of $\Phi$. Actually, when $x$ is small enough,
$\frac{1+2(1+\lambda(x) / L) \mu(x)}{x \log \frac{1}{x}}$ decreases, and $\lim_{x\rightarrow0}\frac{1+2(1+\lambda(x) / L) \mu(x)}{x \log \frac{1}{x}}=\infty$. Therefore, there exists a small enough $\varepsilon$, such that, for any $w<\varepsilon$, 
\begin{equation*}
    \sup \left\{\frac{1+2(1+\lambda(\tilde w) / L) \mu(\tilde w)}{\tilde w \log \frac{1}{\tilde w}}: \tilde{w} \in\left[w, \tilde{\mathcal{L}}\left(t_{2}\right)\right]\right\}=\frac{1+2(1+\lambda(w) / L) \mu(w)}{w \log \frac{1}{w}},
\end{equation*}
which further leads to
\begin{align*}
     &-\sup \left\{\frac{1+2(1+\lambda(\tilde w) / L) \mu(\tilde w)}{\tilde w \log \frac{1}{\tilde w}}: \tilde{w} \in\left[w, \tilde{\mathcal{L}}\left(t_{2}\right)\right]\right\}+\frac{1}{w\log\frac{1}{w}}
     \\
     &=-\frac{2(1+\lambda(w) / L) \mu(w)}{w \log \frac{1}{w}},
\end{align*}
which is integrable as $w\rightarrow0$. Concretely, for any $x<\varepsilon$,
\begin{align*}
    &\int_{0}^{x}\left(-\sup \left\{\frac{1+2(1+\lambda(\tilde w) / L) \mu(\tilde w)}{\tilde w \log \frac{1}{\tilde w}}: \tilde{w} \in\left[w, \tilde{\mathcal{L}}\left(t_{2}\right)\right]\right\}+\frac{1}{w \log \frac{1}{w}}\right) d w
    \\
    &=\int_{0}^{x}-\frac{2(1+\lambda(w) / L) \mu(w)}{w \log \frac{1}{w}} d w
    \\
    &=-\log \frac{1}{\mathcal{L}(t_1)} \left(\frac{1}{\log\frac{1}{x}}+\frac{1}{2L\log^2\frac{1}{x}}\right).
\end{align*}

Therefore, for a series $\{\boldsymbol{v}_i\}_{i=1}^{\infty}$ satisfying $\lim_{i\rightarrow\infty} \Vert\boldsymbol{v}_i\Vert=\infty$, 
\begin{equation*}
    \lim_{i\rightarrow\infty}\frac{\tilde{\gamma}(\boldsymbol{v}_i)}{\hat{\gamma}(\boldsymbol{v}_i)}= \lim_{i\rightarrow\infty} e^{-\mathcal{O}\left(\frac{1}{\log^2\frac{1}{\tilde{\mathcal{L}}(\boldsymbol{v}_i)}}\right)}=1.
\end{equation*}

Furthermore, if $x\le \tilde{\mathcal{L}}(\boldsymbol{v}(t_2))$, 
\begin{equation*}
    \Phi'(x)\le \frac{1}{w\log \frac{1}{w}}- \frac{1+2(1+\lambda(\tilde w) / L) \mu(\tilde w)}{\tilde w \log \frac{1}{\tilde w}}<0, 
\end{equation*}
which proves that $\hat{\gamma}(t)<\tilde{\gamma}(t)$.

The proof is completed.
\end{proof}

To bound the norm of first and second derivatives of $\tilde{\mathcal{L}}$, we further need the following lemma.

The next lemma characterizes the behavior of surrogate margin $\hat{\gamma}(t)$.

\begin{lemma}
\label{lem:moving}
For positive integer time $t\ge t_2$, $\hat{\gamma}(t)\ge e^{-\frac{1}{2}}\hat{\gamma}(t_2)$.
\end{lemma}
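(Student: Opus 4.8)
The plan is to transcribe the continuous-case argument for Lemma~\ref{lem:lower_bound_margin} to the discrete setting, where the finite-difference bookkeeping produces genuinely new second-order error terms that the correction built into $\Phi$ is precisely designed to absorb. For $t\ge t_2$ I would bound below the one-step increment
\[
\log\hat{\gamma}(t+1)-\log\hat{\gamma}(t)=\bigl(\Phi(\tilde{\mathcal{L}}(t+1))-\Phi(\tilde{\mathcal{L}}(t))\bigr)-L\bigl(\log\rho(t+1)-\log\rho(t)\bigr)
\]
by a quantity whose sum over $t$ from $t_2$ onward exceeds $-\tfrac12$; this at once gives $\log\hat{\gamma}(t)-\log\hat{\gamma}(t_2)\ge-\tfrac12$, i.e.\ the claim. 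Throughout I would use that $\tilde{\mathcal{L}}$ is non-increasing (so $\tilde{\mathcal{L}}(t)\le\tilde{\mathcal{L}}(t_2)$ and $\Phi$ is evaluated where it is defined), that $\boldsymbol{\beta}(t)$ decreases component-wise to $\mathbf{1}_p$ after eq.~\eqref{eq:adagrad_normalized_discrete}, and the choice of $t_1\le t_2$ making the residual drift of $\log\boldsymbol{\beta}^{-1/2}$ small.

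For the $\rho$-part I would expand $\rho(t+1)^2-\rho(t)^2$ exactly: since $\rho^2$ is quadratic in $\boldsymbol{v}$, adding and subtracting $\sum_i\beta_i^{-1}(t)v_i(t+1)^2$ and inserting the update rule gives
\[
\rho(t+1)^2-\rho(t)^2 = 2\eta_t L\nu(t)+\eta_t^2\bigl\|\boldsymbol{\beta}^{1/2}(t)\odot\bar{\partial}\tilde{\mathcal{L}}(\boldsymbol{v}(t))\bigr\|^2+\sum_{i=1}^p\bigl(\beta_i^{-1}(t+1)-\beta_i^{-1}(t)\bigr)v_i(t+1)^2,
\]
with $\nu$ as in Lemma~\ref{lem:derivative_rho}. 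The last sum is nonnegative and, via $1-x\le-\log x$, is at most $2\rho(t+1)^2\sum_i\bigl(\log\beta_i^{-1/2}(t+1)-\log\beta_i^{-1/2}(t)\bigr)_+$, whose total over $t\ge t_1$ is controlled by the $t_1$-conditions. Using $\log(1+x)\le x$ then yields $L\bigl(\log\rho(t+1)-\log\rho(t)\bigr)\le \eta_t L^2\nu(t)/\rho(t)^2$ plus a second-order term of order $\eta_t^2\|\boldsymbol{\beta}^{1/2}(t)\odot\bar{\partial}\tilde{\mathcal{L}}(\boldsymbol{v}(t))\|^2/\rho(t)^2$ plus a summable $\boldsymbol{\beta}$-drift term. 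For the $\Phi$-part I would use that $\Phi$ is convex on $(0,\tilde{\mathcal{L}}(t_2)]$ — its derivative there is minus a supremum over the shrinking interval $[x,\tilde{\mathcal{L}}(t_2)]$, hence non-decreasing — and that $\Phi'<0$, so $\Phi(\tilde{\mathcal{L}}(t+1))-\Phi(\tilde{\mathcal{L}}(t))\ge -\Phi'(\tilde{\mathcal{L}}(t))\,\bigl(\tilde{\mathcal{L}}(t)-\tilde{\mathcal{L}}(t+1)\bigr)$; into this I would plug the value of that supremum at its left endpoint, $-\Phi'(\tilde{\mathcal{L}}(t))\ge \bigl(1+2(1+\lambda(\tilde{\mathcal{L}}(t))/L)\mu(\tilde{\mathcal{L}}(t))\bigr)/\bigl(\tilde{\mathcal{L}}(t)\log\tfrac1{\tilde{\mathcal{L}}(t)}\bigr)$, together with the descent estimate from $M$-smoothness, $\tilde{\mathcal{L}}(t)-\tilde{\mathcal{L}}(t+1)\ge \eta_t\|\boldsymbol{\beta}^{1/2}(t)\odot\bar{\partial}\tilde{\mathcal{L}}(\boldsymbol{v}(t))\|^2-\tfrac{M}{2}\eta_t^2\|\boldsymbol{h}^A(t)\odot\bar{\partial}\mathcal{L}(\boldsymbol{w}(t))\|^2$.

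Combining the two parts, the additive $1$ in the numerator of $-\Phi'$, together with the Cauchy--Schwarz inequality $\|\boldsymbol{\beta}^{1/2}(t)\odot\bar{\partial}\tilde{\mathcal{L}}(\boldsymbol{v}(t))\|^2\ge L^2\nu(t)^2/\rho(t)^2$ and the bound $\nu(t)\ge\tilde{\mathcal{L}}(t)\log\tfrac1{\tilde{\mathcal{L}}(t)}$ (Lemma~\ref{lem:derivative_rho}), reproduces $\eta_t L^2\nu(t)/\rho(t)^2$ and cancels it against the $\rho$-estimate exactly as in the continuous proof; the surplus $2(1+\lambda(\tilde{\mathcal{L}}(t))/L)\mu(\tilde{\mathcal{L}}(t))$ in $-\Phi'$, which is positive and shrinks as $\tilde{\mathcal{L}}\to0$, is then used to dominate both second-order errors — the one subtracted inside the descent estimate and the one in the $\rho$-bound — by invoking the previously established bounds on $\|\bar{\partial}\tilde{\mathcal{L}}\|$ and on $\eta_t\le C(t)$ together with the smallness of $\tilde{\mathcal{L}}(t_2)$. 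What survives is $\log\hat{\gamma}(t+1)-\log\hat{\gamma}(t)\ge -L\cdot(\text{summable }\boldsymbol{\beta}\text{-drift of step }t)$; summing from $t_2$ to $t-1$ and using the $t_1$-condition that caps the entire residual drift finishes the proof.

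The main obstacle is this last coupling step: one must check that the $\mu$- (and $\lambda$-) correction inside $\Phi$, which is asymptotically negligible compared with the main descent term, nonetheless majorizes the discretization errors uniformly for all $t\ge t_2$. This is not automatic — it fails if $\tilde{\mathcal{L}}(t_2)$ is not yet small enough — and quantifying it is precisely what dictates the definition of $t_2$ (and of $C(t)$, which prevents $\eta_t$ from being so large that the second-order Taylor remainder overwhelms the descent). The remainder is the routine, if lengthy, discrete analogue of the computations in the proof of Lemma~\ref{lem:lower_bound_margin}.
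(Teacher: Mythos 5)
Your overall architecture matches the paper's proof: the same decomposition of $\rho(t+1)^2-\rho(t)^2$ into a $\boldsymbol{v}$-change part ($2\eta_tL\nu(t)$ plus a second-order term) and a $\boldsymbol{\beta}$-drift part, the same use of convexity of $\Phi$ and of $-\log x$, the same Cauchy--Schwarz matching of the main terms, and the same summable control of the $\boldsymbol{\beta}$-drift via the $t_1$-conditions (the paper does this multiplicatively through $\Pi_i\beta_i^{-L/2}$, you do it additively through $(\log\beta_i^{-1/2})_+$ increments; these are equivalent). You also correctly identify the coupling step — showing the $\mu$-surplus in $-\Phi'$ dominates the discretization errors — as the crux.

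But your instantiation of that crux has a genuine gap: you propose to control the loss-side Taylor remainder by the global $M$-smoothness descent estimate, $\tilde{\mathcal{L}}(t)-\tilde{\mathcal{L}}(t+1)\ge\eta_t\|\boldsymbol{\beta}^{1/2}(t)\odot\bar{\partial}\tilde{\mathcal{L}}\|^2-\tfrac{M}{2}\eta_t^2\|\boldsymbol{h}^A(t)\odot\bar{\partial}\mathcal{L}\|^2$. The remainder here is at least $\tfrac{M}{2}\eta_t\min_i h_i^A(t)$ times the first-order term; since $\eta_t\ge\tilde\eta>0$ (Assumption~\ref{assum:discrete}.II) and $\boldsymbol{h}^A(t)$ converges to a strictly positive limit, this relative error is bounded below by a positive constant for all $t$. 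The surplus available in $-\Phi'$, namely $2(1+\lambda(\tilde{\mathcal{L}})/L)\mu(\tilde{\mathcal{L}})=\Theta(1/\log\tfrac{1}{\tilde{\mathcal{L}}})$, vanishes, so it cannot absorb a constant relative error no matter how large $t_2$ is — and the surplus cannot be enlarged to a constant without destroying the asymptotic equivalence $\hat{\gamma}/\tilde{\gamma}\to1$ (Lemma~\ref{lem: relation_hat_tilde}) needed downstream. The paper instead writes the exact second-order expansion $\tfrac12\eta_t^2(\boldsymbol{\beta}\odot\bar{\partial}\tilde{\mathcal{L}})^{\top}\mathcal{H}\tilde{\mathcal{L}}(\boldsymbol{v}(\xi))(\boldsymbol{\beta}\odot\bar{\partial}\tilde{\mathcal{L}})$ and bounds the Hessian at the intermediate point by homogeneity, $\|\mathcal{H}\tilde{\mathcal{L}}(\boldsymbol{v}(\xi))\|\le\tilde{\mathcal{L}}\,\|\boldsymbol{v}\|^{2L-2}(B_1^2+C_1^{-L}B_2)$, which combined with the auxiliary margin bound $\|\boldsymbol{v}(t)\|^{L/4}\le\log\tfrac{1}{\tilde{\mathcal{L}}(t)}/\tilde{\gamma}'(t_1)$ from Lemma~\ref{lem:tilde_gamma'} gives a remainder of relative size $\mathcal{O}(\tilde{\mathcal{L}}\,\mathrm{polylog}(1/\tilde{\mathcal{L}}))\le\mu(t)$ for $t\ge t_2$ — a \emph{vanishing} relative error, which the vanishing surplus can absorb. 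Replacing your $M$-smoothness step with this homogeneity-based Hessian bound (this is in fact what the condition defining $t_2$ is written for) repairs the argument; as written, your coupling step would fail.
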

\begin{proof}
 For  any time $t\ge t_2$, 
\begin{equation*}
    \rho(t+1)^2- \rho(t)^2= (\Vert \boldsymbol{\beta}^{-\frac{1}{2}}(t)\odot\boldsymbol{v}(t+1)\Vert^2- \rho(t)^2)+(\rho(t+1)^2-\Vert \boldsymbol{\beta}^{-\frac{1}{2}}(t)\odot\boldsymbol{v}(t+1)\Vert^2).
\end{equation*}
We calculate two parts separately
\begin{align*}
    &\Vert \boldsymbol{\beta}^{-\frac{1}{2}}(t)\odot\boldsymbol{v}(t+1)\Vert^2-\Vert \boldsymbol{\beta}^{-\frac{1}{2}}(t)\odot\boldsymbol{v}(t)\Vert^2
    \\
    =& \eta_t^2\Vert\boldsymbol{\beta}^{\frac{1}{2}}(t)\odot\partial^s \tilde{\mathcal{L}} (\boldsymbol{v}(t))  \Vert^2+2L\eta_t \nu(t)
    \\
    \ge& 0.
\end{align*}

On the other hand, since $\boldsymbol{\beta}^{-\frac{1}{2}}$ is non-decreasing, 
\begin{equation*}
    \rho(t+1)=\Vert \boldsymbol{\beta}^{-\frac{1}{2}}(t+1)\odot\boldsymbol{v}(t+1) \Vert\ge \Vert \boldsymbol{\beta}^{-\frac{1}{2}}(t)\odot\boldsymbol{v}(t+1) \Vert.
\end{equation*}

$\Vert \boldsymbol{\beta}^{-\frac{1}{2}}(t)\odot\boldsymbol{v}(t+1)\Vert^2-\Vert \boldsymbol{\beta}^{-\frac{1}{2}}(t)\odot\boldsymbol{v}(t)\Vert^2$ can also be upper bounded as follows: 
\begin{align*}
    &\Vert \boldsymbol{\beta}^{-\frac{1}{2}}(t)\odot\boldsymbol{v}(t+1)\Vert^2-\Vert \boldsymbol{\beta}^{-\frac{1}{2}}(t)\odot\boldsymbol{v}(t)\Vert^2
    \\
    =& \eta_{t}^2\Vert\boldsymbol{\beta}^{\frac{1}{2}}(t)\odot\partial^s \tilde{\mathcal{L}} (\boldsymbol{v}(t))  \Vert^2+2L\eta_{t} \nu(t)
    \\
    = &2L\eta_{t} \nu(t)\left(\frac{\eta_{t}\Vert\boldsymbol{\beta}^{\frac{1}{2}}(t)\odot\partial^s \tilde{\mathcal{L}} (\boldsymbol{v}(t))  \Vert^2}{2L\nu(t)}+1\right)
    \\
    \overset{(*)}{\le} &2L\eta_{t} \nu(t)\left(\frac{\lambda(\tilde{\mathcal{L}}(\boldsymbol{v}(t)))\mu(\tilde{\mathcal{L}}(\boldsymbol{v}(t)))}{L}+1\right),
\end{align*}
where inequality (*) comes from the estimation of $\Vert\boldsymbol{\beta}^{\frac{1}{2}}(t)\odot\partial^s \tilde{\mathcal{L}} (\boldsymbol{v}(t))  \Vert^2$ as follows: by the homogeneity of $\tilde{q}_i$, we have
\begin{align*}
    \|\partial^s \tilde{\mathcal{L}}(\boldsymbol{v})\| &=\left\|-\sum_{i=1}^{N} e^{-\tilde{q}_{i}(\boldsymbol{v})} \partial^s \tilde{q}_{i}(\boldsymbol{v})\right\| \overset{(**)}{\leq} B_{1}\tilde{\mathcal{L}}(\boldsymbol{v}) \Vert \boldsymbol{v}\Vert^{L-1}\\
    &\overset{(***)}{\le} B_{1}\frac{1}{\tilde{\gamma}'(t_1)^{\frac{4}{L}}}\tilde{\mathcal{L}}(\boldsymbol{v}) \log^{4-\frac{4}{L}} \frac{1}{\tilde{\mathcal{L}}(\boldsymbol{v})},
\end{align*}
where inequality $(**)$ is due to $\partial^s\tilde{q}_i$ is $(L-1)$ homogeneous, and inequality $(***)$ holds by Lemma \ref{lem:tilde_gamma'}. 
On the other hand, $\nu(t)\ge \tilde{\mathcal{L}}(\boldsymbol{v}(t))\log \frac{1}{\tilde{\mathcal{L}}(\boldsymbol{v}(t))}$. Combining the estimation of $\nu$ and $ \Vert \boldsymbol{\beta}(t)^{-\frac{1}{2}} \odot \partial^s \tilde{\mathcal{L}}(\boldsymbol{\beta}(t)) \Vert$, we have
\begin{align*}
    \frac{\eta_{t}\Vert\boldsymbol{\beta}^{\frac{1}{2}}(t)\odot\partial^s \tilde{\mathcal{L}} (\boldsymbol{v}(t))  \Vert^2}{2L\nu(t)}
    &\le\frac{B_1^2\tilde{\mathcal{L}}(\boldsymbol{v}) \log^{7-\frac{8}{L}} \frac{1}{\tilde{\mathcal{L}}(\boldsymbol{v})}}{L\tilde{\gamma}'(t_1)^{\frac{8}{L}}}
    \\
    &\le \frac{\lambda(\tilde{\mathcal{L}}(\boldsymbol{v}(t)))\mu(\tilde{\mathcal{L}}(\boldsymbol{v}(t)))}{L}.
\end{align*}

Similar to Lemma \ref{lem:conver_learning_rate_adagrad_discrete}, the decrease of $\tilde{\mathcal{L}}(\boldsymbol{v})$ can be calculated by second order Taylor Expansion:

\begin{align}
\nonumber
    \tilde{\mathcal{L}}(t+1)-\tilde{\mathcal{L}}(t)
    = &-\eta_{t}\left\langle\boldsymbol{\beta}(t)\odot \partial^s \tilde{\mathcal{L}}(\boldsymbol{v}(t)), \partial^s \tilde{\mathcal{L}}(\boldsymbol{v}(t))\right\rangle
    \\
    \label{eq:estimation_loss_discrete}
    +&\frac{1}{2}\eta_{t}^2(\boldsymbol{\beta}(t)\odot\partial^s \tilde{\mathcal{L}})^T \mathcal{H}(\tilde{\mathcal{L}}(\boldsymbol{v}(\xi)))(\boldsymbol{\beta}(t)\odot\partial^s \tilde{\mathcal{L}}),
\end{align}
where $\xi\in(0,1)$. 

By homogeneity of $\tilde{q}_i$, the norm of Hessian matrix $\Vert\mathcal{H}(\tilde{\mathcal{L}}(\boldsymbol{v}(\xi))) \Vert$ can be bounded as
\begin{align*}
    \Vert\mathcal{H}(\tilde{\mathcal{L}}(\boldsymbol{v}(\xi))) \Vert&=\left\|\sum_{i=1}^{N} e^{-\tilde{q}_{i}}\left(\partial^s \tilde{q}_{i} \partial^s \tilde{q}_{i}^{\top}-\mathcal{H} \tilde{q}_{i}\right)\right\|_{2} \\
& \overset{(*)}{\leq} \sum_{i=1}^{N} e^{-\tilde{q}_{i}}\left(B_{1}^{2} \Vert \boldsymbol{v}(\xi)\Vert^{2 L-2}+B_2 \Vert \boldsymbol{v}(\xi) \Vert^{L-2}\right)
\\
&\leq \tilde{\mathcal{L}}(\boldsymbol{v}(\xi)) \Vert\boldsymbol{v}(\xi)\Vert^{2 L-2}\left(B_{1}^{2}+C_1^{-L} B_{2}\right)
\\
&\leq \frac{1}{\tilde{\gamma}'(t_1)^{8-8/L}}\tilde{\mathcal{L}}(\boldsymbol{v}(\xi))\log^{8-8/L} \frac{1}{\tilde{\mathcal{L}}(\boldsymbol{v}(\xi))}\left(B_{1}^{2}+C_1^{-L} B_{2}\right).
\end{align*}

Therefore,
\begin{align}
\nonumber
    &\frac{1}{2}\eta_{t}^2(\boldsymbol{\beta}(t)\odot\partial^s \tilde{\mathcal{L}})^T \mathcal{H}(\tilde{\mathcal{L}}(\boldsymbol{v}(\xi)))(\boldsymbol{\beta}(t)\odot\partial^s \tilde{\mathcal{L}})
    \\
    \nonumber
    \le& \eta_{t}\Vert\boldsymbol{\beta}(t)^{\frac{1}{2}}\odot\partial^s \tilde{\mathcal{L}})\Vert^2 \frac{1}{\tilde{\gamma}'(t_1)^{8-8/L}}\tilde{\mathcal{L}}(\boldsymbol{v}(\xi))\log^{8-8/L} \frac{1}{\tilde{\mathcal{L}}(\boldsymbol{v}(\xi))}\left(B_{1}^{2}+C_1^{-L} B_{2}\right)
    \\
    \label{eq:estimation_second_order}
    \le& \eta_{t}\Vert\boldsymbol{\beta}(t)^{\frac{1}{2}}\odot\partial^s \tilde{\mathcal{L}})\Vert^2\mu(t).
\end{align}

Taking the estimation eq. (\ref{eq:estimation_second_order}) back to eq. (\ref{eq:estimation_loss_discrete}), we have
\begin{equation}
\label{eq:final_estim_loss}
    \tilde{\mathcal{L}}(t)-\tilde{\mathcal{L}}(t+1)
    \ge (1-\mu(t))\eta_{t}\Vert\boldsymbol{\beta}(t)^{\frac{1}{2}}\odot\partial^s \tilde{\mathcal{L}})\Vert^2
\end{equation}

By multiplying $\frac{1+\lambda(\tilde{\mathcal{L}}(t)) \mu(\tilde{\mathcal{L}}(t)) / L}{(1-\mu(\tilde{\mathcal{L}}(t))) \nu(t)}$ to both sides of eq. (\ref{eq:final_estim_loss}), we then have
\begin{align*}
    &\frac{1+\lambda(\tilde{\mathcal{L}}(t)) \mu(\tilde{\mathcal{L}}(t)) / L}{(1-\mu(\tilde{\mathcal{L}}(t))) \nu(t)}(\tilde{\mathcal{L}}(t+1)-\tilde{\mathcal{L}}(t)) 
    \\\leq&- \eta_{t}\left(1+\frac{\lambda(\tilde{\mathcal{L}}(t)) \mu(\tilde{\mathcal{L}}(t))}{L}\right)\frac{L^{2} \nu(t)}{\rho(t)^{2}}
    \\
    \le &-\frac{L}{2}\frac{\Vert\boldsymbol{ \beta}^{-\frac{1}{2}}(t)\odot\boldsymbol{v}(t+1)\Vert^2-\Vert \boldsymbol{ \beta}^{-\frac{1}{2}}(t)\odot\boldsymbol{v}(t)\Vert^2}{\Vert \boldsymbol{ \beta}^{-\frac{1}{2}}(t)\odot\boldsymbol{v}(t)\Vert^2}.
\end{align*}

Furthermore, since $-\Phi^{\prime}(\tilde{\mathcal{L}}(t)) \geq \frac{1+\lambda(\tilde{\mathcal{L}}(t)) \mu(\tilde{\mathcal{L}}(t)) / L}{(1-\mu(\tilde{\mathcal{L}}(t))) \tilde{\mathcal{L}}(t) / \lambda(\tilde{\mathcal{L}}(t))}$, by the convexity of $\Phi$ and $-\log x$, we have that
\begin{align*}
   &L\left(\log \frac{1}{\Vert \boldsymbol{ \beta}^{-\frac{1}{2}}(t)\odot\boldsymbol{v}(t+1)\Vert}-\log \frac{1}{\Vert \boldsymbol{ \beta}^{-\frac{1}{2}}(t)\odot\boldsymbol{v}(t)\Vert}\right)\\ 
   + & (\Phi(\tilde{\mathcal{L}}(t+1))-\Phi(\tilde{\mathcal{L}}(t)))
   \ge 0.
\end{align*}

Therefore, 
\begin{equation*}
    \log \frac{\Phi(\tilde{\mathcal{L}}(t+1))}{\Vert \boldsymbol{ \beta}^{-\frac{1}{2}}(t)\odot\boldsymbol{v}(t+1)\Vert^L}-\log \frac{\Phi(\tilde{\mathcal{L}}(t)}{\Vert \boldsymbol{ \beta}^{-\frac{1}{2}}(t)\odot\boldsymbol{v}(t)\Vert^L}\ge 0.
\end{equation*}
Furthermore, since 
\begin{equation*}
    \frac{\Vert\boldsymbol{ \beta}^{-\frac{1}{2}}(t+1)\odot\boldsymbol{v}(t+1)\Vert^L}{\Vert\boldsymbol{ \beta}^{-\frac{1}{2}}(t+1)\odot\boldsymbol{v}(t)\Vert^L}\ge\Pi_{i=1}^p \frac{\beta_i^{-L/2}(t)}{\beta_i^{-L/2}(t+1)},
\end{equation*}
we have
\begin{align*}
    \hat{\gamma}(t+1)&=\frac{e^{-\Phi(t+1)}}{\Vert\boldsymbol{ \beta}^{-\frac{1}{2}}(t+1)\odot\boldsymbol{v}(t+1)\Vert^L}\ge \frac{e^{-\Phi(t+1)}}{\Vert\boldsymbol{ \beta}^{-\frac{1}{2}}(t)\odot\boldsymbol{v}(t+1)\Vert^L}\Pi_{i=1}^p \frac{\beta_i^{-L/2}(t)}{\beta_i^{-L/2}(t+1)}
    \\
    &\ge \hat{\gamma}(t)\Pi_{i=1}^p \frac{\beta_i^{-L/2}(t)}{\beta_i^{-L/2}(t+1)}.
\end{align*}

Thus, by induction,
\begin{equation*}
    \hat{\gamma}(t+1)\ge \hat{\gamma}(t)\Pi_{i=1}^p \frac{\beta_i^{-L/2}(t_0)}{\beta_i^{-L/2}(t+1)}\ge e^{-\frac{1}{2}}\hat{\gamma}(t_1).
\end{equation*}

The proof is completed.
\end{proof}

Similar to the flow case, we can then prove the convergence of $\hat{\gamma}$.
\begin{lemma}
There exists a positive real $\hat{\gamma}_{\infty}$, such that
\begin{equation*}
    \lim_{t\rightarrow\infty} \hat{\gamma}(t)=\hat{\gamma}_{\infty}.
\end{equation*}
\end{lemma}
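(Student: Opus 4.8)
The plan is to transplant the continuous argument behind Lemma~\ref{lem:conver_gamma}: build from the conditioner a multiplicative correction of $\hat{\gamma}(t)$ that is monotone along the iterates, bound that corrected quantity from above, and then transfer the resulting convergence back to $\hat{\gamma}(t)$ itself because the correction factor tends to $1$.

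First I would put the conclusion of Lemma~\ref{lem:moving} into telescoped form. Inside that proof one already obtains, for every integer $t\ge t_2$, the inequality $\hat{\gamma}(t+1)\ge \hat{\gamma}(t)\prod_{i=1}^{p}\boldsymbol{\beta}_i^{-L/2}(t)\big/\boldsymbol{\beta}_i^{-L/2}(t+1)$. Since $\boldsymbol{h}^A(t)$ is non-increasing and, by Lemma~\ref{lem:conver_learning_rate_adagrad_discrete}, converges to a limit $\boldsymbol{h}_\infty$ with no zero component, each coordinate $\boldsymbol{\beta}_i^{-1/2}(t)$ is non-decreasing, lies in $(0,1]$, and tends to $1$; hence $r(t):=\prod_{i=1}^{p}\boldsymbol{\beta}_i^{-L/2}(t)$ is non-decreasing, lies in $(0,1]$, and $r(t)\to1$. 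The displayed inequality is then precisely the statement that $a(t):=\hat{\gamma}(t)\,r(t)$ is non-decreasing for integer $t\ge t_2$ (equivalently, $\log\hat{\gamma}(t)+L\sum_i\log\boldsymbol{\beta}_i^{-1/2}(t)$ is non-decreasing, mirroring the monotone quantity in the proof of Lemma~\ref{lem:conver_gamma}).

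Second I would bound $a(t)$ from above. The empirical loss is non-increasing (Lemma~\ref{lem:conver_learning_rate_adagrad_discrete}), so $\tilde{\mathcal{L}}(\boldsymbol{v}(t))\le\tilde{\mathcal{L}}(\boldsymbol{v}(t_2))$ for all $t\ge t_2$, and Lemma~\ref{lem: relation_hat_tilde} gives $\hat{\gamma}(t)<\tilde{\gamma}(t)$. Moreover, for $t\ge t_1$ we have $\boldsymbol{\beta}_i^{-1}(t)\in[e^{-1},1]$, because each nonnegative term in $\sum_i\log\bigl(1/\boldsymbol{\beta}_i^{-1/2}(t)\bigr)\le \frac{1}{2}$ is at most $\frac{1}{2}$; hence $\rho(t)\in[e^{-1/2}\Vert\boldsymbol{v}(t)\Vert,\Vert\boldsymbol{v}(t)\Vert]$ and $\tilde{\gamma}(t)\le e^{L/2}\gamma(t)$, where $\gamma$ is the normalized margin, universally bounded by Lemma~\ref{lem:appen_bound_gamma}. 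Combining with $r(t)\le1$, we get $a(t)\le\hat{\gamma}(t)\le C$ for a constant $C$ and all $t\ge t_2$.

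Finally, being non-decreasing and bounded above, $a(t)$ converges to some $a_\infty$; since Lemma~\ref{lem:moving} gives $a(t)\ge e^{-1/2}\hat{\gamma}(t_2)\,r(t_2)>0$, we have $a_\infty>0$. Dividing by $r(t)$ and using $r(t)\to1$ yields $\hat{\gamma}(t)=a(t)/r(t)\to a_\infty$, so the lemma holds with $\hat{\gamma}_\infty:=a_\infty>0$. The only genuinely delicate point in this proof is the claim $r(t)\to1$, i.e.\ $\sum_i\log\bigl(1/\boldsymbol{\beta}_i^{-1/2}(t)\bigr)\to0$, which is immediate from $\boldsymbol{h}^A(t)\to\boldsymbol{h}_\infty$ with $\boldsymbol{h}_\infty$ componentwise positive, a fact recorded right after Lemma~\ref{lem:conver_learning_rate_adagrad_discrete}; everything else is bookkeeping, since the substantive work—the near-monotonicity of $\hat{\gamma}$—was already carried out in Lemma~\ref{lem:moving}.
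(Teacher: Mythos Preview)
Your proposal is correct and follows essentially the same approach as the paper: form the corrected quantity $\hat{\gamma}(t)\prod_{i}\boldsymbol{\beta}_i^{-L/2}(t)$, observe from the inequality inside Lemma~\ref{lem:moving} that it is non-decreasing, bound it above via $\hat{\gamma}(t)<\tilde{\gamma}(t)$ and the universal bound on the normalized margin, and then divide out the correction factor using $\boldsymbol{\beta}(t)\to\mathbf{1}_p$. The paper's own proof is the same argument in terser form; your version merely adds the explicit justification that $\tilde{\gamma}(t)\le e^{L/2}\gamma(t)$ through the comparison of $\rho(t)$ and $\Vert\boldsymbol{v}(t)\Vert$.
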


\begin{proof}
 Since for any $t \ge t_2$
 \begin{equation*}
     \log \frac{\hat{\gamma}(t+1)}{\hat{\gamma}(t)}\ge \log\Pi_{i=1}^p \frac{\boldsymbol{\beta}^{-\frac{1}{2}}_i(t)}{\boldsymbol{\beta}^{-\frac{1}{2}}_i(t+1)},
 \end{equation*}
 we have that $\hat{\gamma}(t)\Pi_{i=1}^p \frac{1}{\boldsymbol{\beta}^{\frac{1}{2}}_i(t)}$ monotonously increases. Furthermore, since  $\hat{\gamma}(t)< \gamma(t)$ is bounded, so does $\hat{\gamma}(t)\Pi_{i=1}^p \frac{1}{\boldsymbol{\beta}^{\frac{1}{2}}_i(t)}$. Therefore, $\hat{\gamma}(t)\Pi_{i=1}^p \frac{1}{\boldsymbol{\beta}^{\frac{1}{2}}_i(t)}$ converges to a positive real. Since $\lim_{t\rightarrow\infty} \Pi_{i=1}^p \frac{1}{\boldsymbol{\beta}^{\frac{1}{2}}_i(t)}=1$, the proof is completed.
\end{proof}

\subsection{Verification of KKT point}
\label{sec:verification_discrete}
Similar to the flow case, we have the following construction of $(\varepsilon,\delta)$ KKT point. The proof is exactly the same as Lemma \ref{lem:construction_kkt}, and we omit it here.
\begin{lemma}
Let $\lambda_i=q_{\min }^{1-2 / L} \Vert \boldsymbol v\Vert \cdot e^{-f\left(q_{i}\right)} f^{\prime}\left(q_{i}\right) /\Vert\partial^s \tilde{\mathcal{L}} \Vert_{2}$. Then $\tilde{\boldsymbol{v}}(t)$ is a $(\varepsilon,\delta)$ KKT point of $(\tilde{P})$, where $\varepsilon$, $\delta$ are defined as follows:
\begin{align*}
     \varepsilon&=C_1(1+\cos(\boldsymbol{\theta}))
     \\
     \delta&=C_2 \frac{1}{\log\frac{1}{\mathcal{L}}},
\end{align*}
where $\cos(\boldsymbol{\theta})$ is defined as $\langle \hat{v}(t), \widehat{\partial^s\tilde{\mathcal{L}}}(t)\rangle$, and $C_1, C_2$ are positive real constants.

\end{lemma}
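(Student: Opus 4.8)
The plan is to verify Definition~\ref{def:general_KKT} directly for the $L^2$ max-margin problem $(\tilde P)$ (the discrete analogue of the problem in Theorem~\ref{thm:approximate_flow}, with constraints $g_i(\boldsymbol v)=1-\tilde q_i(\boldsymbol v)$), evaluated at the rescaled iterate $\tilde{\boldsymbol v}(t)=\tilde q_{\min}(\boldsymbol v(t))^{-1/L}\boldsymbol v(t)$ with the multipliers $\lambda_i(t)$ prescribed in the statement. This runs exactly parallel to Lemma~\ref{lem: construction of KKT_appen}; the one substantive change is that wherever the continuous argument invokes a uniform positive lower bound on the surrogate margin $\tilde\gamma$, the discrete argument invokes the discrete surrogate margin $\hat\gamma$, which by Lemma~\ref{lem:moving} satisfies $\hat\gamma(t)\ge e^{-1/2}\hat\gamma(t_2)>0$ and by Lemma~\ref{lem: relation_hat_tilde} obeys $\hat\gamma(t)<\tilde\gamma(t)\le\bar\gamma(t)$ once $t\ge t_2$. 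I would also use the facts proved in Sections~\ref{sec:discrete_learning rate}--\ref{sec:discrete_surrogate margin}: that $\tilde{\mathcal L}(t)$ is non-increasing with $\tilde{\mathcal L}(t)\to 0$, that $\|\boldsymbol v(t)\|\to\infty$, and that $\rho(t)=\Theta(\|\boldsymbol v(t)\|)$ for $t\ge t_1$. Feasibility of $\tilde{\boldsymbol v}(t)$ is immediate from $L$-homogeneity, since $\tilde q_i(\tilde{\boldsymbol v}(t))=\tilde q_{\min}(\boldsymbol v(t))^{-1}\tilde q_i(\boldsymbol v(t))\ge 1$.

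\textbf{The stationarity bound ($\varepsilon$).} Using that $\bar\partial\tilde q_i$ is $(L-1)$-homogeneous, that $\bar\partial\tilde{\mathcal L}(\boldsymbol v)=-\sum_i e^{-f(\tilde q_i(\boldsymbol v))}f'(\tilde q_i(\boldsymbol v))\bar\partial\tilde q_i(\boldsymbol v)$, and the explicit form of $\lambda_i(t)$, the stationarity sum $\sum_i\lambda_i(t)\bar\partial\tilde q_i(\tilde{\boldsymbol v}(t))$ collapses to $\tilde q_{\min}(\boldsymbol v(t))^{-1/L}\|\boldsymbol v(t)\|\,\widehat{-\bar\partial\tilde{\mathcal L}(\boldsymbol v(t))}$, while $\tilde{\boldsymbol v}(t)=\tilde q_{\min}(\boldsymbol v(t))^{-1/L}\|\boldsymbol v(t)\|\,\hat{\boldsymbol v}(t)$, so the squared stationarity residual equals $2\,\tilde q_{\min}(\boldsymbol v(t))^{-2/L}\|\boldsymbol v(t)\|^2\bigl(1+\cos\boldsymbol\theta(t)\bigr)$. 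It then remains to bound the prefactor by a constant: $\tilde q_{\min}(\boldsymbol v)\ge g\bigl(\log\frac{1}{\tilde{\mathcal L}(\boldsymbol v)}\bigr)$ together with $\|\boldsymbol v(t)\|=\Theta(\rho(t))$ gives $\tilde q_{\min}(\boldsymbol v(t))^{-2/L}\|\boldsymbol v(t)\|^2\le c^2\,\tilde\gamma(t)^{-2/L}\le c^2 e\,\hat\gamma(t_2)^{-2/L}$, which produces the stated $\varepsilon=C_1(1+\cos\boldsymbol\theta(t))$.

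\textbf{The complementary-slackness bound ($\delta$) --- the main obstacle.} Since every summand is nonnegative (feasibility), it suffices to bound $\sum_i\lambda_i(t)\bigl(\tilde q_i(\tilde{\boldsymbol v}(t))-1\bigr)=\sum_i\lambda_i(t)\bigl(\tilde q_i(\boldsymbol v(t))-\tilde q_{\min}(\boldsymbol v(t))\bigr)/\tilde q_{\min}(\boldsymbol v(t))$. The steps are: (i) lower-bound $\|\bar\partial\tilde{\mathcal L}(\boldsymbol v(t))\|\ge\langle\bar\partial\tilde{\mathcal L}(\boldsymbol v(t)),\hat{\boldsymbol v}(t)\rangle=L\nu(t)/\|\boldsymbol v(t)\|$ and then $\nu(t)\ge\frac{g(\log\frac{1}{\tilde{\mathcal L}})}{g'(\log\frac{1}{\tilde{\mathcal L}})}\tilde{\mathcal L}\ge\frac{1}{2K}e^{-f(\tilde q_{\min}(\boldsymbol v(t)))}\log\frac{1}{\tilde{\mathcal L}(\boldsymbol v(t))}$, using Lemma~\ref{lem:derivative_rho} and the regularity of $f,g$ in Lemma~\ref{lem:property_loss}, which converts the $1/\|\bar\partial\tilde{\mathcal L}\|$ inside $\lambda_i(t)$ into (up to constants) $\|\boldsymbol v(t)\|\,e^{f(\tilde q_{\min}(\boldsymbol v(t)))}/\log\frac{1}{\tilde{\mathcal L}(\boldsymbol v(t))}$; (ii) absorb the leftover $\tilde q_{\min}^{-2/L}\|\boldsymbol v\|^2$ into $\tilde\gamma$/$\hat\gamma$ exactly as in the $\varepsilon$ step; (iii) bound each surviving factor $e^{f(\tilde q_{\min}(\boldsymbol v))-f(\tilde q_i(\boldsymbol v))}f'(\tilde q_i(\boldsymbol v))\bigl(\tilde q_i(\boldsymbol v)-\tilde q_{\min}(\boldsymbol v)\bigr)$ by a constant. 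Step~(iii) is where the delicacy lies: by the mean value theorem $f(\tilde q_{\min})-f(\tilde q_i)=(\tilde q_{\min}-\tilde q_i)f'(\xi_i)$ for some $\xi_i\in[\tilde q_{\min},\tilde q_i]$, iterating the multiplicative comparison $f'(x)\le Kf'(\theta x)$ ($\theta\in[\frac{1}{2},1)$) of Lemma~\ref{lem:property_loss} gives $f'(\tilde q_i)\le K^{\lceil\log_2(\tilde q_i/\xi_i)\rceil}f'(\xi_i)$, and the \emph{a priori} unbounded exponent must be capped by $\lceil\log_2(\tilde q_i/\xi_i)\rceil\le\log_2\bigl(2^{L+1}B_1/\tilde\gamma(t)\bigr)+1$ --- this uses $\tilde q_i(\boldsymbol v)\le\frac{B_1}{L}\|\boldsymbol v\|^L$ (from $\|\bar\partial\tilde q_i\|\le B_1\|\boldsymbol v\|^{L-1}$ and Euler's relation for homogeneous functions), $\|\boldsymbol v\|=\Theta(\rho)$, and $\rho^L/\tilde q_{\min}\le 1/\tilde\gamma(t)$, and it is a constant precisely because $\hat\gamma$ (hence $\tilde\gamma$) is uniformly bounded below. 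The remaining exponential then disappears via $xe^{-x}\le e^{-1}$ applied with $x=(\tilde q_i-\tilde q_{\min})f'(\xi_i)\ge 0$. Summing the $N$ terms produces $\sum_i\lambda_i(t)\bigl(\tilde q_i(\tilde{\boldsymbol v}(t))-1\bigr)\le C_2/\log\frac{1}{\tilde{\mathcal L}(\boldsymbol v(t))}$, i.e.\ $\delta=C_2/\log\frac{1}{\tilde{\mathcal L}}$; the constants $C_1,C_2$ depend only on $L$, $N$, $K$, $B_1$ and the fixed value $\hat\gamma(t_2)$, hence are independent of $t$, which completes the verification.
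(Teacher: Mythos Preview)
Your proposal is correct and follows exactly the route the paper intends: the paper's own proof is simply ``The proof is exactly the same as Lemma~\ref{lem:construction_kkt}, and we omit it here,'' and you have faithfully reconstructed that argument, making the one necessary adjustment of replacing the continuous surrogate-margin lower bound $\tilde\gamma(t)\ge e^{-1/2}\tilde\gamma(t_1)$ by its discrete counterpart $\hat\gamma(t)\ge e^{-1/2}\hat\gamma(t_2)$ (Lemma~\ref{lem:moving}) together with $\hat\gamma<\tilde\gamma$ (Lemma~\ref{lem: relation_hat_tilde}). Your handling of the sign convention for $\cos(\boldsymbol\theta)$ (so that the discrete $1+\cos\boldsymbol\theta$ matches the continuous $1-\cos\boldsymbol\theta$), the capping of the exponent $\lceil\log_2(\tilde q_i/\xi_i)\rceil$ via the uniform margin lower bound, and the final $xe^{-x}\le e^{-1}$ step are all in line with the paper's Lemma~\ref{lem: construction of KKT_appen}.
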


By Lemma \ref{lem:equivalence of b}, we only need to prove that $\lim_{t\rightarrow \infty} \tilde {\cos(\boldsymbol{\theta})}=1 $. Furthermore, the estimation of $ \cos(\boldsymbol{\tilde{\theta}})$ can be given by the following lemma.

\begin{lemma}
\label{lem:b_sum}
For any $t_3>t_4\ge t_2$,
\begin{equation*}
    \sum_{\tau=t_3}^{t_4-1} \left( \tilde{\cos(\boldsymbol{\theta})}(\tau)^{-2}-1\right) \left(\log \frac{1}{\rho(t)}-\log \frac{1}{\Vert \boldsymbol{\beta}^{-\frac{1}{2}}(\tau)\odot\boldsymbol{v}(\tau+1)\Vert}\right)\le \frac{1}{L} \log \frac{\hat{\gamma}(t_4)}{\hat{\gamma}(t_3)}+ \log \left(\Pi_{i=1}^p\frac{\boldsymbol{\beta}^{-\frac{1}{2}}_i(t_3)}{\boldsymbol{\beta}^{-\frac{1}{2}}_i(t_4)}\right).
\end{equation*}
\end{lemma}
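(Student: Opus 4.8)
The plan is to mimic the continuous-flow argument of Lemma~\ref{lem: b_inte} and Corollary~\ref{coro:bound}, but now at the level of discrete sums, using the two ingredients already established in this section: the lower bound on the per-step decrease of $\tilde{\mathcal{L}}$ (eq.~(\ref{eq:final_estim_loss})) and the estimate that $\hat{\gamma}(t)\Pi_{i=1}^p \boldsymbol{\beta}_i^{-\frac12}(t)^{-1}$ is non-decreasing, together with the derivative identity $\frac{d\log\tilde\rho}{dt}$ replaced by the discrete increment $\log\|\boldsymbol{\beta}^{-\frac12}(\tau)\odot\boldsymbol{v}(\tau+1)\|-\log\rho(\tau)$. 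First I would fix $t_4>t_3\ge t_2$ and, for each integer $\tau\in[t_3,t_4)$, write out the one-step change $\log\hat\gamma(\tau+1)-\log\hat\gamma(\tau)$ exactly as in the proof of Lemma~\ref{lem:moving}: expand $\rho(\tau+1)^2-\rho(\tau)^2$ into the ``change of $\boldsymbol{v}$'' part $\|\boldsymbol{\beta}^{-\frac12}(\tau)\odot\boldsymbol{v}(\tau+1)\|^2-\rho(\tau)^2 = \eta_\tau^2\|\boldsymbol{\beta}^{\frac12}(\tau)\odot\bar\partial\tilde{\mathcal{L}}\|^2+2L\eta_\tau\nu(\tau)$ and the ``change of $\boldsymbol{\beta}$'' part, keeping the latter as the $\Pi_{i=1}^p \boldsymbol{\beta}_i^{-\frac12}(\tau)/\boldsymbol{\beta}_i^{-\frac12}(\tau+1)$ correction that appears on the right-hand side.

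Next, the core of the argument is to isolate the factor $\tilde{\cos(\boldsymbol{\theta})}(\tau)^{-2}-1$. As in eq.~(\ref{eq:reason_tilde_rho}), observe that the per-step decrease of loss, $\tilde{\mathcal{L}}(\tau)-\tilde{\mathcal{L}}(\tau+1)\ge(1-\mu(\tau))\eta_\tau\|\boldsymbol{\beta}^{\frac12}(\tau)\odot\bar\partial\tilde{\mathcal{L}}\|^2$ from eq.~(\ref{eq:final_estim_loss}), controls the numerator of $\hat\gamma$ (through $\Phi$), while the increment $\eta_\tau\nu(\tau)$ controls the $\boldsymbol{v}$-part of the denominator; the Cauchy–Schwarz defect $\|\boldsymbol{\beta}^{\frac12}(\tau)\odot\bar\partial\tilde{\mathcal{L}}\|^2 - \langle\boldsymbol{\beta}^{\frac12}(\tau)\odot\bar\partial\tilde{\mathcal{L}},\widehat{\boldsymbol{\beta}^{-\frac12}(\tau)\odot\boldsymbol{v}(\tau)}\rangle^2 = \|\boldsymbol{\beta}^{\frac12}(\tau)\odot\bar\partial\tilde{\mathcal{L}}\|^2(1-\tilde{\cos(\boldsymbol{\theta})}(\tau)^2)$ is exactly what survives in the lower bound on $\log\hat\gamma(\tau+1)-\log\hat\gamma(\tau)$. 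Dividing through by $L^2\nu(\tau)/\rho(\tau)^2$ and using $\rho(\tau)^2\ge (1-\tfrac{e^{1/2}}{2})\tilde\rho(\tau)^2$-type comparisons (the discrete analogue of Lemma~\ref{lem:property_of_tilde_rho}) together with the bound $\mu(\tau)<1$ for $\tau\ge t_2$, one gets, for each $\tau$,
\begin{equation*}
\left(\tilde{\cos(\boldsymbol{\theta})}(\tau)^{-2}-1\right)\left(\log\|\boldsymbol{\beta}^{-\frac12}(\tau)\odot\boldsymbol{v}(\tau+1)\|-\log\rho(\tau)\right)\le \frac{1}{L}\left(\log\hat\gamma(\tau+1)-\log\hat\gamma(\tau)\right)+\log\Pi_{i=1}^p\frac{\boldsymbol{\beta}_i^{-\frac12}(\tau)}{\boldsymbol{\beta}_i^{-\frac12}(\tau+1)}.
\end{equation*}
Summing this telescoping inequality over $\tau=t_3,\dots,t_4-1$ gives the claimed bound, since $\sum_\tau (\log\hat\gamma(\tau+1)-\log\hat\gamma(\tau)) = \log(\hat\gamma(t_4)/\hat\gamma(t_3))$ and $\sum_\tau \log\Pi_i \boldsymbol{\beta}_i^{-\frac12}(\tau)/\boldsymbol{\beta}_i^{-\frac12}(\tau+1)=\log\Pi_i \boldsymbol{\beta}_i^{-\frac12}(t_3)/\boldsymbol{\beta}_i^{-\frac12}(t_4)$.

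The main obstacle I anticipate is the bookkeeping of the second-order Taylor remainder and the $\mu(\tau)$, $\lambda(\tau)$ correction terms: in the continuous case the change of $\rho$ splits cleanly into two additive pieces, whereas here the step $\eta_\tau^2\|\boldsymbol{\beta}^{\frac12}(\tau)\odot\bar\partial\tilde{\mathcal{L}}\|^2$ contaminates both the numerator update (via the Hessian bound, eq.~(\ref{eq:estimation_second_order})) and the denominator update, and one must check that choosing $t_2$ large enough (as in the preceding lemma) makes all these cross terms absorbable into the $\frac{1}{L}\log(\hat\gamma(t_4)/\hat\gamma(t_3))$ slack — precisely the role played by the extra integral in the definition of $\Phi$. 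Once the one-step inequality is verified with the correct constants, the summation is immediate and the rest of the KKT-verification argument proceeds exactly as in the flow case via Lemma~\ref{lem:equivalence of b} and a discrete analogue of Lemma~\ref{lem:change_direction}.
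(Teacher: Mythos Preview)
Your proposal is correct and follows essentially the same route as the paper: derive a per-step inequality of the form
\[
\left(\tilde{\cos(\boldsymbol{\theta})}(\tau)^{-2}-1\right)\left(\log\|\boldsymbol{\beta}^{-\frac12}(\tau)\odot\boldsymbol{v}(\tau+1)\|-\log\rho(\tau)\right)\le \tfrac{1}{L}\log\frac{\hat\gamma(\tau+1)}{\hat\gamma(\tau)}+\log\Pi_{i=1}^p\frac{\boldsymbol{\beta}_i^{-\frac12}(\tau)}{\boldsymbol{\beta}_i^{-\frac12}(\tau+1)}
\]
from the computations in Lemma~\ref{lem:moving}, then telescope. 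The only minor divergence is that you propose to redo the $\mu,\lambda$ and Hessian bookkeeping and to invoke a discrete $\tilde\rho\sim\rho$ comparison, whereas the paper simply cites Lemma~\ref{lem:moving} (where that bookkeeping is already absorbed into $\Phi$) and reads off $\frac{\rho(\tau)^2}{L^2\nu(\tau)^2}\|\boldsymbol{\beta}^{1/2}(\tau)\odot\bar\partial\tilde{\mathcal{L}}\|^2=\tilde{\cos(\boldsymbol{\theta})}(\tau)^{-2}$ directly; no separate $\tilde\rho$ is needed in the discrete argument.
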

\begin{proof}
    By Lemma \ref{lem:moving}, for any $t\ge t_0$
    \begin{align*}
        \frac{1}{L} \log \frac{\hat{\gamma}(t+1)}{\hat{\gamma}(t)}&=\frac{1}{L}\left(\Phi(t+1)-\Phi(t)\right)+\left(\log\frac{1}{\rho(t+1)}-\log\frac{1}{\rho(t)}\right)
        \\
        &\ge \left(\log\frac{1}{\rho(t)}-\log\frac{1}{\Vert\boldsymbol{\beta}^{-\frac{1}{2}}(t)\odot \boldsymbol{v}(t+1)\Vert}\right)\frac{\rho(t)^2}{L^2\nu(t)^2} \Vert \boldsymbol{\beta}^{\frac{1}{2}}(t)\odot \partial^s \tilde{\mathcal{L}}(t)\Vert^2
        \\
        &+\left(\log\frac{1}{\Vert\boldsymbol{\beta}^{-\frac{1}{2}}(t)\odot \boldsymbol{v}(t+1)\Vert}-\log\frac{1}{\rho(t)}\right)-\log \left(\Pi_{i=1}^p \frac{\boldsymbol{\beta}_i^{-\frac{1}{2}}(t+1)}{\boldsymbol{\beta}_i^{-\frac{1}{2}}(t)}\right).
    \end{align*}
    
    Furthermore, since $L\nu(t)=\left\langle \widehat{\boldsymbol{\beta}^{\frac{1}{2}}(t)\odot\partial^s\tilde{\mathcal{L}}(t)}, \widehat{\boldsymbol{\beta}^{-\frac{1}{2}}(t)\odot\boldsymbol{v}} \right\rangle$, we have 
    \begin{align*}
        \frac{1}{L} \log \frac{\hat{\gamma}(t+1)}{\hat{\gamma}(t)}
        &\ge \left(\log\frac{1}{\rho(t)}-\log\frac{1}{\Vert\boldsymbol{\beta}^{-\frac{1}{2}}(t)\odot \boldsymbol{v}(t+1)\Vert}\right)\cos(\boldsymbol{\theta})^{-2}
        \\
        &+\left(\log\frac{1}{\Vert\boldsymbol{\beta}^{-\frac{1}{2}}(t)\odot \boldsymbol{v}(t+1)\Vert}-\log\frac{1}{\rho(t)}\right)-\log \left(\Pi_{i=1}^p \frac{\boldsymbol{\beta}_i^{-\frac{1}{2}}(t+1)}{\boldsymbol{\beta}_i^{-\frac{1}{2}}(t)}\right).
    \end{align*}
    
    The proof is completed.
\end{proof}

We still need a lemma to bound the change of the direction of $\boldsymbol{\beta}^{-\frac{1}{2}}\odot\boldsymbol{v}$.
\begin{lemma}
For any $t\ge t_2$, 
 \begin{align*}
         &\Vert \widehat{\boldsymbol{\beta}^{-\frac{1}{2}}(t+1)\odot\boldsymbol{v}}(t+1)-\widehat{\boldsymbol{\beta}^{-\frac{1}{2}}(t)\odot\boldsymbol{v}(t)}\Vert
         \\
          \le & \mathcal{O}(1)\sum_{i=1}^p \left(\boldsymbol{\beta}_i^{-1}(t+1)-\boldsymbol{\beta}_i^{-1}(t)\right)+\left(\mathcal{O}(1)\frac{\Vert\boldsymbol{\beta}^{-\frac{1}{2}} \odot \boldsymbol{v}(t+1)\Vert}{\rho(t)}+1\right) \log \frac{\Vert\boldsymbol{\beta}^{-\frac{1}{2}}(t) \odot \boldsymbol{v}(t+1)\Vert}{\rho(t)}
          \\
          +&\left(\mathcal{O}(1)\frac{\Vert\boldsymbol{\beta}^{-\frac{1}{2}} \odot \boldsymbol{v}(t+1)\Vert}{\rho(t)}+1\right) \log \Pi_{i=1}^p \frac{\boldsymbol{\beta}^{-\frac{1}{2}}_i(t+1)}{\boldsymbol{\beta}^{-\frac{1}{2}}_i(t)}.
     \end{align*}
\end{lemma}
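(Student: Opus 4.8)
The plan is to estimate the one-step change of the normalized vector $\widehat{\boldsymbol{\beta}^{-\frac12}\odot\boldsymbol{v}}$ by splitting it into the part caused by updating $\boldsymbol{v}$ (with the conditioner frozen at time $t$) and the part caused by updating $\boldsymbol{\beta}$ (with the parameter frozen at time $t+1$), and then recombining by the triangle inequality. Concretely, write $\boldsymbol{a}=\boldsymbol{\beta}^{-\frac12}(t)\odot\boldsymbol{v}(t)$ (so $\Vert\boldsymbol{a}\Vert=\rho(t)$), $\boldsymbol{c}=\boldsymbol{\beta}^{-\frac12}(t)\odot\boldsymbol{v}(t+1)$, and $\boldsymbol{b}=\boldsymbol{\beta}^{-\frac12}(t+1)\odot\boldsymbol{v}(t+1)$; then $\Vert\hat{\boldsymbol{b}}-\hat{\boldsymbol{a}}\Vert\le\Vert\hat{\boldsymbol{b}}-\hat{\boldsymbol{c}}\Vert+\Vert\hat{\boldsymbol{c}}-\hat{\boldsymbol{a}}\Vert$. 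For both pieces the workhorse is the elementary inequality $\Vert\hat{\boldsymbol{u}}-\hat{\boldsymbol{w}}\Vert\le 2\Vert\boldsymbol{u}-\boldsymbol{w}\Vert/\max\{\Vert\boldsymbol{u}\Vert,\Vert\boldsymbol{w}\Vert\}$, valid for any nonzero $\boldsymbol{u},\boldsymbol{w}$.

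For the conditioner piece, $\boldsymbol{b}=\boldsymbol{D}\boldsymbol{c}$ with $\boldsymbol{D}=\mathrm{diag}\big(\boldsymbol{\beta}_i^{-\frac12}(t+1)/\boldsymbol{\beta}_i^{-\frac12}(t)\big)$, whose entries are $\ge 1$ because $\boldsymbol{\beta}^{-\frac12}$ is coordinatewise non-decreasing (as $\boldsymbol{h}^A$ is non-increasing and $\boldsymbol{\beta}=\boldsymbol{h}_\infty^{-1}\odot\boldsymbol{h}^A\ge\mathbf{1}_p$). Hence $\Vert\boldsymbol{b}\Vert\ge\Vert\boldsymbol{c}\Vert$, so $\Vert\hat{\boldsymbol{b}}-\hat{\boldsymbol{c}}\Vert\le 2\Vert(\boldsymbol{D}-\boldsymbol{I})\boldsymbol{c}\Vert/\Vert\boldsymbol{c}\Vert\le 2\max_i(D_{ii}-1)\le 2\sum_i(D_{ii}-1)$. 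Using $\sqrt{1+x}-1\le x$ and the uniform bounds on $\boldsymbol{\beta}(t)$ for $t\ge t_1$ (which follow from the convergence of the conditioner, Lemma \ref{lem:conver_learning_rate_adagrad_discrete}, and the choice of $t_1$), each $D_{ii}-1\le\boldsymbol{\beta}_i(t)/\boldsymbol{\beta}_i(t+1)-1=\big(\boldsymbol{\beta}_i(t)-\boldsymbol{\beta}_i(t+1)\big)/\boldsymbol{\beta}_i(t+1)$ is $\mathcal{O}(1)\big(\boldsymbol{\beta}_i^{-1}(t+1)-\boldsymbol{\beta}_i^{-1}(t)\big)$, which produces the term $\mathcal{O}(1)\sum_i(\boldsymbol{\beta}_i^{-1}(t+1)-\boldsymbol{\beta}_i^{-1}(t))$; since $\log\prod_i(\boldsymbol{\beta}_i^{-\frac12}(t+1)/\boldsymbol{\beta}_i^{-\frac12}(t))=\tfrac12\sum_i\log(\boldsymbol{\beta}_i^{-1}(t+1)/\boldsymbol{\beta}_i^{-1}(t))$ is also comparable to this sum, the extra $\log\prod_i(\cdot)$ term in the claimed bound is merely a slacker accounting of the same contribution.

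For the parameter piece, $\boldsymbol{c}-\boldsymbol{a}=-\eta_t\,\boldsymbol{\beta}^{\frac12}(t)\odot\bar{\partial}\tilde{\mathcal{L}}(\boldsymbol{v}(t))$, and expanding gives $\Vert\boldsymbol{c}\Vert^2=\rho(t)^2+2L\eta_t\nu(t)+\eta_t^2\Vert\boldsymbol{\beta}^{\frac12}(t)\odot\bar{\partial}\tilde{\mathcal{L}}(\boldsymbol{v}(t))\Vert^2$ (using $\langle\bar{\partial}\tilde{\mathcal{L}}(\boldsymbol{v}),\boldsymbol{v}\rangle=-L\nu$ by homogeneity); in particular $\Vert\boldsymbol{c}\Vert\ge\rho(t)$ and, from $\log x\ge 1-1/x$, $\log\frac{\Vert\boldsymbol{c}\Vert}{\rho(t)}\ge\tfrac12\frac{\Vert\boldsymbol{c}\Vert^2-\rho(t)^2}{\Vert\boldsymbol{c}\Vert^2}\ge\frac{L\eta_t\nu(t)}{\Vert\boldsymbol{c}\Vert^2}$, i.e. $\eta_t\nu(t)\le L^{-1}\Vert\boldsymbol{c}\Vert^2\log\frac{\Vert\boldsymbol{c}\Vert}{\rho(t)}$. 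Separately, by $(L-1)$-homogeneity of $\bar{\partial}\tilde{q}_i$ and the bound $B_1$ on the unit ball, together with $\tilde{q}_i(\boldsymbol{v}(t))\ge\tilde{q}_{\min}(\boldsymbol{v}(t))\ge\log\frac{1}{\tilde{\mathcal{L}}(\boldsymbol{v}(t))}$ (exponential loss), $\Vert\bar{\partial}\tilde{\mathcal{L}}(\boldsymbol{v}(t))\Vert\le\sum_i e^{-\tilde{q}_i}\Vert\bar{\partial}\tilde{q}_i\Vert\le B_1\Vert\boldsymbol{v}(t)\Vert^{L-1}\tilde{\mathcal{L}}(\boldsymbol{v}(t))\le B_1\Vert\boldsymbol{v}(t)\Vert^{L-1}\nu(t)/\log\frac{1}{\tilde{\mathcal{L}}(\boldsymbol{v}(t))}$. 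Combining (and noting $\boldsymbol{\beta}^{\frac12}(t)$ has entries in $[1,\mathcal{O}(1)]$ for $t\ge t_1$), $\Vert\hat{\boldsymbol{c}}-\hat{\boldsymbol{a}}\Vert\le\frac{2\Vert\boldsymbol{c}-\boldsymbol{a}\Vert}{\Vert\boldsymbol{c}\Vert}\le\mathcal{O}(1)\frac{\Vert\boldsymbol{v}(t)\Vert^{L-1}\Vert\boldsymbol{c}\Vert}{\log\frac{1}{\tilde{\mathcal{L}}(\boldsymbol{v}(t))}}\log\frac{\Vert\boldsymbol{c}\Vert}{\rho(t)}$; finally $\frac{\Vert\boldsymbol{v}(t)\Vert^{L-1}\Vert\boldsymbol{c}\Vert}{\log\frac{1}{\tilde{\mathcal{L}}(\boldsymbol{v}(t))}}=\Big(\frac{\Vert\boldsymbol{v}(t)\Vert^{L-1}\rho(t)}{\log\frac{1}{\tilde{\mathcal{L}}(\boldsymbol{v}(t))}}\Big)\cdot\frac{\Vert\boldsymbol{c}\Vert}{\rho(t)}$, and the first factor is $\mathcal{O}\big(\rho(t)^L/\log\frac{1}{\tilde{\mathcal{L}}(\boldsymbol{v}(t))}\big)=\mathcal{O}(1/\tilde{\gamma}(t))=\mathcal{O}(1)$, because $\rho(t)=\Theta(\Vert\boldsymbol{v}(t)\Vert)$ and the surrogate margin is bounded below by the positive constant $e^{-1/2}\hat{\gamma}(t_2)$ (Lemma \ref{lem:moving}, together with $\hat{\gamma}\le\tilde{\gamma}$). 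Adding the two pieces gives the asserted bound, the harmless $+1$ and the additional $\log\prod_i(\cdot)$ term absorbing the slack in the recombination.

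The main obstacle is the parameter piece. The naive route — bounding $\Vert\boldsymbol{c}-\boldsymbol{a}\Vert=\eta_t\Vert\boldsymbol{\beta}^{\frac12}(t)\odot\bar{\partial}\tilde{\mathcal{L}}(\boldsymbol{v}(t))\Vert$ against $\Vert\boldsymbol{c}\Vert\sqrt{\log(\Vert\boldsymbol{c}\Vert/\rho(t))}$ directly — yields only a square-root dependence on the per-step log-increment, which is far too weak: it is not summable along the trajectory, and summability is precisely what Lemma \ref{lem:b_sum} needs downstream to conclude that $\cos\tilde{\boldsymbol{\theta}}(t)\to 1$. The fix, and the delicate point, is to factor the step norm as (radial magnitude $\eta_t\nu(t)$) times (the purely geometric ratio $\Vert\bar{\partial}\tilde{\mathcal{L}}\Vert/\nu(t)\lesssim\Vert\boldsymbol{v}\Vert^{L-1}/\log\frac{1}{\tilde{\mathcal{L}}}$), to control the radial magnitude by $\Vert\boldsymbol{c}\Vert^2\log(\Vert\boldsymbol{c}\Vert/\rho(t))$ through the norm expansion, and to recognize that the leftover geometric factor $\Vert\boldsymbol{v}\Vert^{L-1}\rho(t)/\log\frac{1}{\tilde{\mathcal{L}}}=\Theta(1/\tilde{\gamma})$ is $\mathcal{O}(1)$ exactly because the surrogate margin stays bounded away from zero. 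In short, it is the lower bound on the margin proved earlier in this section that upgrades the rate from $\sqrt{\log}$ to the required linear-in-$\log$ form.
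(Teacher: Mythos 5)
Your proposal is correct and follows essentially the same route as the paper's proof: the same triangle-inequality split into a parameter-update piece and a conditioner-update piece, the same gradient-norm bound $\Vert\bar{\partial}\tilde{\mathcal{L}}\Vert\lesssim \nu(t)/(\tilde{\gamma}(t)\rho(t))$ resting on the margin lower bound, and the same conversion of the radial increment $\eta_t\nu(t)$ into $\log\bigl(\Vert\boldsymbol{\beta}^{-\frac{1}{2}}(t)\odot\boldsymbol{v}(t+1)\Vert/\rho(t)\bigr)$ via the norm expansion. The only differences are cosmetic (which time the conditioner is frozen at for the intermediate point, and the particular elementary inequality used for differences of unit vectors), and your remark about why the margin bound is what upgrades the rate from $\sqrt{\log}$ to linear-in-$\log$ correctly identifies the crux.
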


\begin{proof}
    Since triangular inequality,  
    \begin{align*}
        &\Vert \widehat{\boldsymbol{\beta}^{-\frac{1}{2}}(t+1)\odot\boldsymbol{v}}(t+1)-\widehat{\boldsymbol{\beta}^{-\frac{1}{2}}(t)\odot\boldsymbol{v}(t)}\Vert
        \\
        \le&\left\Vert \frac{1}{\rho(t+1)}\boldsymbol{\beta}^{-\frac{1}{2}}(t+1)\odot\boldsymbol{v}(t+1)-\frac{1}{\rho(t+1)}\boldsymbol{\beta}^{-\frac{1}{2}}(t+1)\odot\boldsymbol{v}(t)\right\Vert  
        \\
        +&\left\Vert \frac{1}{\rho(t+1)}\boldsymbol{\beta}^{-\frac{1}{2}}(t+1)\odot\boldsymbol{v}(t)-\frac{1}{\rho(t+1)}\boldsymbol{\beta}^{-\frac{1}{2}}(t)\odot\boldsymbol{v}(t)\right\Vert
        \\
        +&\left\Vert \frac{1}{\rho(t+1)}\boldsymbol{\beta}^{-\frac{1}{2}}(t)\odot\boldsymbol{v}(t)-\frac{1}{\rho(t)}\boldsymbol{\beta}^{-\frac{1}{2}}(t)\odot\boldsymbol{v}(t)\right\Vert.
    \end{align*}
    
    Let 
    \begin{gather*}
        A=\left\Vert \frac{1}{\rho(t+1)}\boldsymbol{\beta}^{-\frac{1}{2}}(t+1)\odot\boldsymbol{v}(t+1)-\frac{1}{\rho(t+1)}\boldsymbol{\beta}^{-\frac{1}{2}}(t+1)\odot\boldsymbol{v}(t)\right\Vert;\\
        B=\left\Vert \frac{1}{\rho(t+1)}\boldsymbol{\beta}^{-\frac{1}{2}}(t+1)\odot\boldsymbol{v}(t)-\frac{1}{\rho(t+1)}\boldsymbol{\beta}^{-\frac{1}{2}}(t)\odot\boldsymbol{v}(t)\right\Vert;\\
        C=\left\Vert \frac{1}{\rho(t+1)}\boldsymbol{\beta}^{-\frac{1}{2}}(t)\odot\boldsymbol{v}(t)-\frac{1}{\rho(t)}\boldsymbol{\beta}^{-\frac{1}{2}}(t)\odot\boldsymbol{v}(t)\right\Vert.
    \end{gather*}
    
    Then,
    \begin{align*}
        A=&\mathcal{O}(1)\frac{1}{\rho(t+1)}\Vert \boldsymbol{v}(t+1)-\boldsymbol{v}(t)\Vert
        \\
        =&\mathcal{O}(1)\frac{1}{\rho(t+1)}\Vert\eta_t \boldsymbol{\beta}(t)\odot \partial^s \tilde{\mathcal{L}}\Vert
        =\mathcal{O}(1)\frac{\eta_t }{\rho(t+1)}\Vert \partial^s \tilde{\mathcal{L}}\Vert
        \\
        \overset{(*)}{\le} & \mathcal{O}(1)\frac{\eta_t \nu(t)}{\hat{\gamma}(t_2)\rho(t+1)\rho(t)}
        \le  \mathcal{O}(1)\frac{\rho(t+1)^2-\rho(t)^2}{\rho(t+1)\rho(t)}
        \\
        \le & \mathcal{O}(1)\frac{\rho(t+1)^2-\rho(t)^2}{\rho(t+1)^2}\frac{\rho(t+1)}{\rho(t)}
        \le  \mathcal{O}(1)\frac{\rho(t+1)}{\rho(t)} \log \frac{\rho(t+1)}{\rho(t)},
    \end{align*}
    where eq. $(*)$ can be derived in the same way as Lemma \ref{lem: bound_b};
     \begin{align*}
         B&\le \sum_{i=1}^p (\boldsymbol{\beta}_i^{-\frac{1}{2}}(t+1)-\boldsymbol{\beta}_i^{-\frac{1}{2}}(t))^2\frac{\Vert \boldsymbol{v}(t)\Vert}{\rho(t+1)}
         \\
         &\le \mathcal{O}(1)\sum_{i=1}^p \boldsymbol{\beta}_i^{-1}(t+1)-\boldsymbol{\beta}_i^{-1}(t);
     \end{align*}
     and 
     \begin{align*}
         C=1-\frac{\rho(t)}{\rho(t+1)}.
     \end{align*}
     
     Therefore,
     \begin{align*}
         &\Vert \widehat{\boldsymbol{\beta}^{-\frac{1}{2}}(t+1)\odot\boldsymbol{v}}(t+1)-\widehat{\boldsymbol{\beta}^{-\frac{1}{2}}(t)\odot\boldsymbol{v}(t)}\Vert
         \\
         \le & \mathcal{O}(1)\sum_{i=1}^p \left(\boldsymbol{\beta}_i^{-1}(t+1)-\boldsymbol{\beta}_i^{-1}(t)\right)+\left(\mathcal{O}(1)\frac{\rho(t+1)}{\rho(t)}+1\right) \log \frac{\rho(t+1)}{\rho(t)}
         \\
          \le & \mathcal{O}(1)\sum_{i=1}^p \left(\boldsymbol{\beta}_i^{-1}(t+1)-\boldsymbol{\beta}_i^{-1}(t)\right)+\left(\mathcal{O}(1)\frac{\Vert\boldsymbol{\beta}^{-\frac{1}{2}} \odot \boldsymbol{v}(t+1)\Vert}{\rho(t)}+1\right) \log \frac{\Vert\boldsymbol{\beta}(t)^{-\frac{1}{2}}(t) \odot \boldsymbol{v}(t+1)\Vert}{\rho(t)}
          \\
          +&\left(\mathcal{O}(1)\frac{\Vert\boldsymbol{\beta}(t)^{-\frac{1}{2}} \odot \boldsymbol{v}(t+1)\Vert}{\rho(t)}+1\right) \log \Pi_{i=1}^p \frac{\boldsymbol{\beta}_i^{-\frac{1}{2}}(t+1)}{\boldsymbol{\beta}^{-\frac{1}{2}}_i(t)}
     \end{align*}
     
\end{proof}

We then prove that $\sum_{\tau=t_2}^{\infty} \log\frac{\Vert\boldsymbol{\beta}^{-\frac{1}{2}}(\tau) \odot \boldsymbol{v}(\tau+1)\Vert}{\rho(\tau)}=\infty$.
\begin{lemma}
\label{lem: sum_infty}
The sum of $\log\frac{\Vert\boldsymbol{\beta}^{-\frac{1}{2}}(\tau) \odot \boldsymbol{v}(\tau+1)\Vert}{\rho(\tau)}$ diverges, that is, $\sum_{\tau=t_2}^{\infty}\log\frac{\Vert\boldsymbol{\beta}^{-\frac{1}{2}}(\tau) \odot \boldsymbol{v}(\tau+1)\Vert}{\rho(\tau)}=\infty$.
\end{lemma}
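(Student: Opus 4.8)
The plan is to read the divergence directly off the fact that $\log\rho(t)\to\infty$ while the part of the growth of $\log\rho$ that is \emph{not} caused by the update of $\boldsymbol{v}$ has only finite total mass. Concretely, for every integer $\tau\ge t_2$ I would write
\begin{align*}
\log\rho(\tau+1)-\log\rho(\tau)
=&\left(\log\left\Vert\boldsymbol{\beta}^{-\frac12}(\tau+1)\odot\boldsymbol{v}(\tau+1)\right\Vert-\log\left\Vert\boldsymbol{\beta}^{-\frac12}(\tau)\odot\boldsymbol{v}(\tau+1)\right\Vert\right)\\
&+\log\frac{\left\Vert\boldsymbol{\beta}^{-\frac12}(\tau)\odot\boldsymbol{v}(\tau+1)\right\Vert}{\rho(\tau)}.
\end{align*}
The last term is non-negative, since the proof of Lemma~\ref{lem:moving} gives $\Vert\boldsymbol{\beta}^{-\frac12}(\tau)\odot\boldsymbol{v}(\tau+1)\Vert^{2}-\rho(\tau)^{2}=\eta_\tau^{2}\Vert\boldsymbol{\beta}^{\frac12}(\tau)\odot\bar{\partial}\tilde{\mathcal{L}}(\boldsymbol{v}(\tau))\Vert^{2}+2L\eta_\tau\nu(\tau)\ge0$; in particular the series in the statement has non-negative terms, so its divergence just means its partial sums are unbounded.

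For the first (conditioner-drift) term I would use that $\boldsymbol{h}^{A}(\tau)$ is coordinatewise non-increasing and $\boldsymbol{h}_\infty=\lim_\tau\boldsymbol{h}^A(\tau)$ has no zero entry (Lemma~\ref{lem:conver_learning_rate_adagrad_discrete}), so $\boldsymbol{\beta}^{-\frac12}(\tau)=\boldsymbol{h}_\infty^{\frac12}\odot\boldsymbol{h}^{A}(\tau)^{-\frac12}$ is coordinatewise non-decreasing and $\le\mathbf{1}_p$. Hence for any $\boldsymbol{x}$, $\Vert\boldsymbol{\beta}^{-\frac12}(\tau+1)\odot\boldsymbol{x}\Vert\le\big(\max_i\boldsymbol{\beta}_i^{-\frac12}(\tau+1)/\boldsymbol{\beta}_i^{-\frac12}(\tau)\big)\Vert\boldsymbol{\beta}^{-\frac12}(\tau)\odot\boldsymbol{x}\Vert$, so the first term is bounded by $\sum_{i=1}^p\big(\log\boldsymbol{\beta}_i^{-\frac12}(\tau+1)-\log\boldsymbol{\beta}_i^{-\frac12}(\tau)\big)$. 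Summing the displayed identity over $\tau=t_2,\dots,T-1$ and telescoping both sides yields
\[
\log\rho(T)-\log\rho(t_2)\ \le\ \sum_{i=1}^p\log\frac{\boldsymbol{\beta}_i^{-\frac12}(T)}{\boldsymbol{\beta}_i^{-\frac12}(t_2)}\;+\;\sum_{\tau=t_2}^{T-1}\log\frac{\Vert\boldsymbol{\beta}^{-\frac12}(\tau)\odot\boldsymbol{v}(\tau+1)\Vert}{\rho(\tau)}\ \le\ K_0+\sum_{\tau=t_2}^{T-1}\log\frac{\Vert\boldsymbol{\beta}^{-\frac12}(\tau)\odot\boldsymbol{v}(\tau+1)\Vert}{\rho(\tau)},
\]
where $K_0:=-\sum_{i=1}^p\log\boldsymbol{\beta}_i^{-\frac12}(t_2)<\infty$ (using $\boldsymbol{\beta}_i^{-\frac12}(T)\le1$). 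Since $\lim_{T\to\infty}\rho(T)=\infty$ has already been proved in Section~\ref{sec:loss_discrete}, the left-hand side tends to $+\infty$ while $K_0$ is a fixed constant, so $\sum_{\tau=t_2}^{\infty}\log\frac{\Vert\boldsymbol{\beta}^{-\frac12}(\tau)\odot\boldsymbol{v}(\tau+1)\Vert}{\rho(\tau)}=\infty$, which is the claim.

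There is no genuine analytic difficulty here; the two points needing care are (i) the elementary estimate $\Vert\boldsymbol{a}\odot\boldsymbol{x}\Vert/\Vert\boldsymbol{b}\odot\boldsymbol{x}\Vert\le\max_i(a_i/b_i)$ for $a_i\ge b_i>0$, used to absorb the conditioner drift into a telescoping bound, and (ii) the monotone convergence $\boldsymbol{\beta}^{-\frac12}(t)\uparrow\mathbf{1}_p$, which is exactly what makes the accumulated conditioner drift finite (it rests on $\sum_t\bar{\partial}\mathcal{L}(\boldsymbol{w}(t))^2<\infty$, Lemma~\ref{lem:conver_learning_rate_adagrad_discrete}). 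Both are already implicit in the discrete analysis, so the write-up will be short; the only substantive fact it consumes from earlier in this section is $\lim_{t\to\infty}\rho(t)=\infty$.
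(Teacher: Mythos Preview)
Your proposal is correct and is essentially the paper's own argument: both split $\log\rho(\tau+1)-\log\rho(\tau)$ into the conditioner-drift piece $\log\frac{\Vert\boldsymbol{\beta}^{-1/2}(\tau+1)\odot\boldsymbol{v}(\tau+1)\Vert}{\Vert\boldsymbol{\beta}^{-1/2}(\tau)\odot\boldsymbol{v}(\tau+1)\Vert}$ and the update piece $\log\frac{\Vert\boldsymbol{\beta}^{-1/2}(\tau)\odot\boldsymbol{v}(\tau+1)\Vert}{\rho(\tau)}$, bound the summed drift by the telescoping quantity $\sum_i\log(1/\boldsymbol{\beta}_i^{-1/2}(t_2))<\infty$, and conclude from $\rho(T)\to\infty$. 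The paper writes this in one line as $\sum_\tau\log\frac{\Vert\boldsymbol{\beta}^{-1/2}(\tau)\odot\boldsymbol{v}(\tau+1)\Vert}{\rho(\tau)}\ge\sum_\tau\log\frac{\rho(\tau+1)}{\rho(\tau)}-\log\prod_i\boldsymbol{\beta}_i^{1/2}(t)$, but the content is identical.
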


\begin{proof}
    \begin{align*}
        \sum_{\tau=t}^{\infty} \log\frac{\Vert\boldsymbol{\beta}^{-\frac{1}{2}}(\tau) \odot \boldsymbol{v}(\tau+1)\Vert}{\rho(\tau)}\ge  \sum_{\tau=t}^{\infty} \log\frac{\rho(\tau+1)}{\rho(\tau)}-\log\Pi_{i=1}^p \frac{1}{\boldsymbol{\beta}^{-\frac{1}{2}}_i(t)}.
    \end{align*}
    The proof is completed since $\lim_{t\rightarrow\infty} \rho(t)=\infty$ and $\log\Pi_{i=1}^p \frac{1}{\boldsymbol{\beta}^{-\frac{1}{2}}_i(t)}$ is bounded.
\end{proof}

Now we can prove the following lemma.

\begin{lemma}
Let $\bar{\boldsymbol{v}}$ be any limit point of $\{\boldsymbol{v}(t)\}_{t=1}^{\infty}$. Then $\bar{\boldsymbol{v}}$ is a KKT point of optimization problem $(P)$.
\end{lemma}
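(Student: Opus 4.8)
The plan is to replay, in the discrete setting, the argument used to prove Theorem~\ref{thm:approximate_flow}, feeding in the discrete counterparts assembled in Section~\ref{sec:verification_discrete}: the discrete $(\varepsilon,\delta)$-KKT construction lemma, Lemma~\ref{lem:b_sum}, the lemma bounding the one-step change of $\widehat{\boldsymbol\beta^{-1/2}\odot\boldsymbol v}$, and Lemma~\ref{lem: sum_infty}. Fix a limit point $\bar{\boldsymbol v}$; as in Theorem~\ref{thm:approximate_flow} this means $\bar{\boldsymbol v}$ is a limit point of the normalized iterates $\{\hat{\boldsymbol v}(t)\}$, so it suffices to produce a sequence of integer times $t^k\to\infty$ with $\hat{\boldsymbol v}(t^k)\to\bar{\boldsymbol v}$ and $\widetilde{\cos(\boldsymbol\theta)}(t^k)\to 1$. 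Granting this, the discrete KKT-construction lemma gives that $\tilde{\boldsymbol v}(t^k)=\tilde q_{\min}(\boldsymbol v(t^k))^{-1/L}\boldsymbol v(t^k)$ is an $(\varepsilon_k,\delta_k)$-KKT point of $(\tilde P)$ with $\delta_k\to 0$ (since $\tilde{\mathcal L}(t)\to 0$) and, by Lemma~\ref{lem:equivalence of b}, $\varepsilon_k\to 0$ as well. The normalized margin $\gamma(t^k)=\tilde q_{\min}(\boldsymbol v(t^k))/\|\boldsymbol v(t^k)\|^{L}$ converges to a positive number, being squeezed between the convergent $\hat\gamma$ and the bounded-above $\tilde\gamma$, cf.\ Lemma~\ref{lem: relation_hat_tilde}; hence $\tilde{\boldsymbol v}(t^k)=\hat{\boldsymbol v}(t^k)/\gamma(t^k)^{1/L}\to\bar{\boldsymbol v}/\lim_k\gamma(t^k)^{1/L}$, a positive multiple of $\bar{\boldsymbol v}$. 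Applying Lemma~\ref{lem: appro_KKT_to_KKT} together with the MFCQ property of $(\tilde P)$ (which holds by homogeneity of $\tilde q_i$, exactly as in the continuous case) shows this limit, hence the direction of $\bar{\boldsymbol v}$, is a KKT point of $(\tilde P)$; translating through $\tilde q_i(\boldsymbol v)=q_i(\boldsymbol h_\infty^{1/2}\odot\boldsymbol v)$ as in the proof of Theorem~\ref{thm:AdaGrad_flow} then gives a KKT point of $(P^A)$.

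The construction of $\{t^k\}$ is the discrete replacement for the First Mean Value Theorem step. Suppose $t^{k-1}$ has been built. Using convergence of $\hat\gamma$, the fact that $\bar{\boldsymbol v}$ is a limit point, the convergence $\boldsymbol\beta(t)\to\mathbf 1_p$, and summability of the increments $\boldsymbol\beta_i^{-1}(t+1)-\boldsymbol\beta_i^{-1}(t)$, choose $s_k>t^{k-1}$ so large that $\|\hat{\boldsymbol v}(s_k)-\bar{\boldsymbol v}\|$, $\tfrac1L\log\tfrac{\hat\gamma(t)}{\hat\gamma(s_k)}$ for all $t\ge s_k$, and the tail $\sum_{\tau\ge s_k}\big(\sum_i(\boldsymbol\beta_i^{-1}(\tau+1)-\boldsymbol\beta_i^{-1}(\tau))+\log\prod_i\tfrac{\boldsymbol\beta_i^{-1/2}(\tau+1)}{\boldsymbol\beta_i^{-1/2}(\tau)}\big)$ are all at most $k^{-3}$. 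By Lemma~\ref{lem: sum_infty} the partial sums of $\log\tfrac{\|\boldsymbol\beta^{-1/2}(\tau)\odot\boldsymbol v(\tau+1)\|}{\rho(\tau)}$ over $\tau\ge s_k$ diverge; let $s_k'>s_k$ be the first time this partial sum exceeds $k^{-1}$. Each increment is nonnegative up to a uniformly summable $\boldsymbol\beta$-correction and tends to $0$, so the sum over $[s_k,s_k')$ lies in $[k^{-1},\,k^{-1}+o(1)]$. Lemma~\ref{lem:b_sum} applied on $[s_k,s_k')$ gives
\[
\sum_{\tau=s_k}^{s_k'-1}\Big(\widetilde{\cos(\boldsymbol\theta)}(\tau)^{-2}-1\Big)\log\frac{\|\boldsymbol\beta^{-1/2}(\tau)\odot\boldsymbol v(\tau+1)\|}{\rho(\tau)}\le\frac1L\log\frac{\hat\gamma(s_k')}{\hat\gamma(s_k)}+\log\prod_{i=1}^p\frac{\boldsymbol\beta_i^{-1/2}(s_k)}{\boldsymbol\beta_i^{-1/2}(s_k')}=\mathcal O(k^{-3}).
\]
Since the weights are nonnegative (up to the uniformly summable correction) and their total over the window is $\ge k^{-1}-o(1)$, a weighted-average (pigeonhole) argument produces $t^k\in[s_k,s_k')$ with $\widetilde{\cos(\boldsymbol\theta)}(t^k)^{-2}-1=\mathcal O(k^{-2})$, hence $\widetilde{\cos(\boldsymbol\theta)}(t^k)\to 1$. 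Finally, summing the one-step direction-change bound over $\tau\in[s_k,t^k)$: the $\boldsymbol\beta$-terms contribute $\mathcal O(k^{-3})$ by the tail choice, while the $\log\tfrac{\|\boldsymbol\beta^{-1/2}(\tau)\odot\boldsymbol v(\tau+1)\|}{\rho(\tau)}$ terms are bounded by the window total, so $\|\widehat{\boldsymbol\beta^{-1/2}(t^k)\odot\boldsymbol v}(t^k)-\hat{\boldsymbol v}(s_k)\|=\mathcal O(k^{-1})$; combined with $\|\hat{\boldsymbol v}(s_k)-\bar{\boldsymbol v}\|\le k^{-3}$ and $\boldsymbol\beta(t^k)\to\mathbf 1_p$, this yields $\hat{\boldsymbol v}(t^k)\to\bar{\boldsymbol v}$.

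I expect the main obstacle to be precisely this discrete pigeonhole step: one must choose the window $[s_k,s_k')$ simultaneously long enough (in the weighted sense) that the denominator $k^{-1}-o(1)$ in the averaging is not degenerate, yet short enough that the accumulated drift of $\hat{\boldsymbol v}$ across it is $o(1)$, and one must absorb the non-telescoping $\boldsymbol\beta$-corrections — which are only summable, not monotone — uniformly in $k$ throughout both the Lemma~\ref{lem:b_sum} bound and the direction-change bound. All the second-order/Hessian terms introduced by discretization are already controlled by the discrete surrogate-margin lemmas of Section~\ref{sec:discrete_surrogate margin}, so beyond careful bookkeeping no genuinely new estimate is required there.
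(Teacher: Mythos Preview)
Your proposal is correct and follows essentially the same route as the paper: choose $s_k$ large enough that $\hat{\boldsymbol v}(s_k)$ is close to $\bar{\boldsymbol v}$ and the $\hat\gamma$- and $\boldsymbol\beta$-tails are $\mathcal O(k^{-3})$, use Lemma~\ref{lem: sum_infty} to pick a window with weighted length $\ge k^{-1}$, apply Lemma~\ref{lem:b_sum} and pigeonhole to extract $t^k$ with $\widetilde{\cos(\boldsymbol\theta)}(t^k)^{-2}-1=\mathcal O(k^{-2})$, and use the one-step direction-change bound to show $\hat{\boldsymbol v}(t^k)\to\bar{\boldsymbol v}$. Your caution about non-monotone $\boldsymbol\beta$-corrections is unnecessary in the AdaGrad case treated here, where $\boldsymbol\beta(t)$ is monotone and the weights $\log\tfrac{\|\boldsymbol\beta^{-1/2}(\tau)\odot\boldsymbol v(\tau+1)\|}{\rho(\tau)}$ are genuinely nonnegative, but it does no harm.
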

\begin{proof}
 Let $t^1$ be any integer time larger than $t_2$. We construct a sequence $\{t^i\}_{i=1}^{\infty}$ by iteration. Suppose $t^1,\cdots,t^{k-1}$ have been constructed. Let $s^k>t^{k-1}$ be a large enough time which satisfies   
 \begin{gather*}
     \log\Pi_{i=1}^p \frac{1}{\boldsymbol{\beta}^{-\frac{1}{2}}_i(s^k)}\le \frac{1}{k^3},
     \\
     \Vert\hat{\boldsymbol{v}}(s^k)- \bar{\boldsymbol{v}}\Vert\le \frac{1}{k},
     \\
     \log\left(\frac{\hat{\gamma}_{\infty}}{\hat{\gamma}(s^k)}\right)\le \frac{1}{k^3}
     \\
     \Vert\boldsymbol{\beta}^{-1}(s^k)\Vert\le 1+\frac{1}{k-1}.
 \end{gather*}
 
 Then let $(s^k)'$ (guaranteed by Lemma \ref{lem: sum_infty}) be the first time greater than $s^k$ that $\sum_{\tau=s^k}^{(s^k)'-1} \log\frac{\Vert\boldsymbol{\beta}^{-\frac{1}{2}}(\tau) \odot
 \boldsymbol{v}(\tau+1)\Vert}{\rho(\tau)}\ge \frac{1}{k}$. By Lemma \ref{lem:b_sum}, there exists a time $t^k\in[s^k,(s^k)'-1]$, such that $\tilde{\cos(\boldsymbol{\theta})}^{-2}-1\le \frac{1}{k^2}$.
 
 Moreover, 
 \begin{align*}
     \Vert\hat{\boldsymbol{v}}(t^k)-\bar{\boldsymbol{v}}\Vert^2 &\le 2\left(\Vert\widehat{\boldsymbol{\beta}^{-\frac{1}{2}}(t^k)\odot\boldsymbol{v}}(t^k)-\bar{\boldsymbol{v}}\Vert^2+\Vert\widehat{\boldsymbol{\beta}^{-\frac{1}{2}}(t^k)\odot\boldsymbol{v}}(t^k)-\hat{\boldsymbol{v}}(t^k)\Vert^2\right)
     \\
     &=2\Vert\widehat{\boldsymbol{\beta}^{-\frac{1}{2}}(t^k)\odot\boldsymbol{v}(t^k)}-\bar{\boldsymbol{v}}\Vert^2+2\left(2-2\left\langle\widehat{\boldsymbol{\beta}^{-\frac{1}{2}}(t^k)\odot\boldsymbol{v}(t^k)},\hat{\boldsymbol{v}}(t^k)\right\rangle\right)
     \\
     &\le 2\Vert\widehat{\boldsymbol{\beta}(t^k)\odot\boldsymbol{v}(t^k)}-\bar{\boldsymbol{v}}\Vert^2+O\left(\frac{1}{k}\right)
     \\
     &\le  2\Vert\widehat{\boldsymbol{\beta}(t^k)\odot\boldsymbol{v}(t^k)}-\widehat{\boldsymbol{\beta}(s^k)\odot\boldsymbol{v}(s^k)}\Vert^2+O\left(\frac{1}{k}\right)
     \\
     &\le \mathcal{O}(1)\frac{1}{k}+O(e^{\frac{1}{k}})\frac{1}{k}\rightarrow 0.
 \end{align*}
 
 The proof is completed.

\end{proof}

Therefore, similar to the gradient flow case, we then have the following theorem.
 \begin{theorem}
 Let $\bar{\boldsymbol{w}}$ be any limit point of $\{\hat{\boldsymbol{w}}(t)\}$. Then $\bar{\boldsymbol{w}}$ is along the direction of a KKT point of the following optimization problem.
\begin{gather*}
    \text{Minimize } \frac{1}{2}\Vert \boldsymbol{h}_{\infty}^{-\frac{1}{2}}\odot \boldsymbol{w} \Vert\\
    \text{Subject to: } q_i(w)\ge 1.
\end{gather*}
  
 \end{theorem}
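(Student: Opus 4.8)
The plan is to deduce the theorem from the preceding lemma --- which already shows that every limit point $\bar{\boldsymbol{v}}$ of the normalized discrete trajectory $\{\hat{\boldsymbol{v}}(t)\}$ is a KKT point of $(P)$ --- by the same linear reparametrization used in the proof of Theorem~\ref{thm:AdaGrad_flow}. Recall that in the discrete setting one sets $\boldsymbol{v}(t)=\boldsymbol{h}_{\infty}^{-1/2}\odot\boldsymbol{w}(t)$ and $\tilde{q}_i(\boldsymbol{v})=q_i(\boldsymbol{h}_{\infty}^{1/2}\odot\boldsymbol{v})$, where $\boldsymbol{h}_{\infty}=\lim_{t\to\infty}\boldsymbol{h}^A(t)$ has strictly positive entries by Lemma~\ref{lem:conver_learning_rate_adagrad_discrete}. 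Since $\boldsymbol{w}\mapsto\boldsymbol{h}_{\infty}^{-1/2}\odot\boldsymbol{w}$ is an invertible diagonal map, if $\bar{\boldsymbol{w}}$ is a limit point of $\{\hat{\boldsymbol{w}}(t)\}$ along a subsequence $\{t_j\}$, then $\widehat{\boldsymbol{h}_{\infty}^{-1/2}\odot\bar{\boldsymbol{w}}}$ is the corresponding limit point of $\{\hat{\boldsymbol{v}}(t_j)\}$.

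First I would invoke the preceding lemma to conclude that $\widehat{\boldsymbol{h}_{\infty}^{-1/2}\odot\bar{\boldsymbol{w}}}$ is along the direction of a KKT point of $(P)$. Using the $L$-homogeneity of $\tilde{q}_i$ to rescale onto the feasible boundary, the KKT point can be taken to be $\boldsymbol{h}_{\infty}^{-1/2}\odot\bar{\boldsymbol{w}}/q_{\min}(\bar{\boldsymbol{w}})^{1/L}$; here $q_{\min}(\bar{\boldsymbol{w}})>0$ because the normalized margin converges to a strictly positive limit (Lemma~\ref{lem:moving} and the subsequent convergence of $\hat{\gamma}$, together with Lemma~\ref{lem: relation_hat_tilde}). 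Hence there exist reals $\lambda_i\ge 0$ and $\boldsymbol{k}_i\in\partial^{\circ}\tilde{q}_i\big(\boldsymbol{h}_{\infty}^{-1/2}\odot\bar{\boldsymbol{w}}/q_{\min}(\bar{\boldsymbol{w}})^{1/L}\big)$ with $\boldsymbol{h}_{\infty}^{-1/2}\odot\bar{\boldsymbol{w}}/q_{\min}(\bar{\boldsymbol{w}})^{1/L}=\sum_i\lambda_i\boldsymbol{k}_i$ and $\sum_i\lambda_i(\tilde{q}_i(\cdot)-1)=0$.

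Next I would translate these relations into $\boldsymbol{w}$-coordinates. The Clarke chain rule for the strictly positive diagonal map gives $\partial^{\circ}\tilde{q}_i(\boldsymbol{v})=\boldsymbol{h}_{\infty}^{1/2}\odot\partial^{\circ}q_i(\boldsymbol{h}_{\infty}^{1/2}\odot\boldsymbol{v})$; substituting and multiplying the stationarity identity by $\boldsymbol{h}_{\infty}^{1/2}$ yields $\boldsymbol{h}_{\infty}^{-1}\odot\bar{\boldsymbol{w}}/q_{\min}(\bar{\boldsymbol{w}})^{1/L}=\sum_i\lambda_i\,\partial^{\circ}q_i\big(\bar{\boldsymbol{w}}/q_{\min}(\bar{\boldsymbol{w}})^{1/L}\big)$, while complementary slackness becomes $\sum_i\lambda_i\big(q_i(\bar{\boldsymbol{w}}/q_{\min}(\bar{\boldsymbol{w}})^{1/L})-1\big)=0$. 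Since $\bar{\partial}\big(\frac{1}{2}\|\boldsymbol{h}_{\infty}^{-1/2}\odot\boldsymbol{w}\|^2\big)=\boldsymbol{h}_{\infty}^{-1}\odot\boldsymbol{w}$, and the problem $\min\frac{1}{2}\|\boldsymbol{h}_{\infty}^{-1/2}\odot\boldsymbol{w}\|$ subject to $q_i(\boldsymbol{w})\ge 1$ shares its KKT points with its squared-objective version (the equivalence already noted for $(P)$), these two identities are exactly the KKT conditions of the stated problem at $\bar{\boldsymbol{w}}/q_{\min}(\bar{\boldsymbol{w}})^{1/L}$, a point along the direction of $\bar{\boldsymbol{w}}$; this completes the proof.

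The substantive content is all upstream: that limit points of $\{\hat{\boldsymbol{v}}(t)\}$ are KKT points of $(P)$ rests on the discrete surrogate margin $\hat{\gamma}$ being lower bounded and convergent (Lemma~\ref{lem:moving}), on $\sum_{\tau}\log\big(\|\boldsymbol{\beta}^{-1/2}(\tau)\odot\boldsymbol{v}(\tau+1)\|/\rho(\tau)\big)=\infty$ (Lemma~\ref{lem: sum_infty}), on the mean-value extraction of a nearly-aligned iterate (Lemma~\ref{lem:b_sum}), and on MFCQ together with Lemma~\ref{lem: appro_KKT_to_KKT}. The step that is the object of this theorem is the routine diagonal-scaling substitution above; the only points needing care are verifying $q_{\min}(\bar{\boldsymbol{w}})>0$ so the rescaling is legitimate and invoking the Clarke chain rule for the positive diagonal map --- both immediate --- so I expect no genuine obstacle here, exactly as in the flow case treated in the proof of Theorem~\ref{thm:AdaGrad_flow}.
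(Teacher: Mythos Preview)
Your proposal is correct and follows the paper's approach exactly: the paper's proof is literally ``similar to the gradient flow case,'' pointing back to the proof of Theorem~\ref{thm:AdaGrad_flow}, and you have spelled out precisely that diagonal reparametrization and the translation of the KKT conditions from $(P)$ to $(P^A)$. (One trivial slip: to pass from $\boldsymbol{h}_{\infty}^{-1/2}\odot\bar{\boldsymbol{w}}$ to $\boldsymbol{h}_{\infty}^{-1}\odot\bar{\boldsymbol{w}}$ you multiply elementwise by $\boldsymbol{h}_{\infty}^{-1/2}$, not $\boldsymbol{h}_{\infty}^{1/2}$, but the resulting identity you state is correct.)
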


\section{Proof of Multi-class Classification with Logistic Loss}
\label{appen: multi-class-classification}
In this section, we prove the result for multi-class classification with logistic loss mentioned in Remark \ref{remark: multi_class}. Concretely, the dataset for this case can be represented as $\{(\boldsymbol{x}_i,y_i)\}_{i=1}^N$, where $y_i \in [C]$ represents the class $\boldsymbol{x}_i$ belongs to. Unlike the binary classification case, neural network $\boldsymbol \Phi$ outputs a $C$-dimension vector as scores for $C$ classes, and we use $\boldsymbol{\Phi}(\boldsymbol{w},\boldsymbol{x}_i)_j$ as the $j$-th component of $\boldsymbol{\Phi}(\boldsymbol{w},\boldsymbol{x}_i)$. The empirical loss can then be represented as 
\begin{equation}
\label{eq: def_empirical_loss_multi_class}
    \mathcal{L}(\boldsymbol{w})=\sum_{i=1}^N -\log \frac{e^{\boldsymbol{\Phi}(\boldsymbol{w},\boldsymbol{x}_i)_{y_i}}}{\sum_{j=1}^C e^{\boldsymbol{\Phi}(\boldsymbol{w},\boldsymbol{x}_i)_{j}}}.
\end{equation}
For AdaGrad, RMSProp, and Adam (w/m), limit $\boldsymbol{h}^A_{\infty}$,  $\boldsymbol{h}^R_{\infty}$, $\boldsymbol{h}^M_{\infty}$  remains non-zero, and we can then define $\boldsymbol{v}$, $\tilde{\mathcal{L}}$, and $\boldsymbol{\beta}$ the same as Theorems \ref{thm:AdaGrad_flow} and \ref{thm:RMS_flow} (we use $\boldsymbol{v}$ and $\tilde{\mathcal{L}}$ to represent all cases), and $\tilde{\boldsymbol{\Phi}}(\boldsymbol{v},\boldsymbol{x}_i)=\boldsymbol{\Phi}(\boldsymbol{h}_{\infty}^{\frac{1}{2}}\odot\boldsymbol{v},\boldsymbol{x}_i)$.

We can then define margins in the multi-class classification similarly as the binary case: surrogate norm and margin are defined exactly the same as the binary case; define $\tilde{q}_i(\boldsymbol{v})= \tilde{\boldsymbol{\Phi}}(\boldsymbol{v},\boldsymbol{x}_i)_{y_i}-\max_{j\ne y_i}\tilde{\boldsymbol{\Phi}}(\boldsymbol{v},\boldsymbol{x}_i)_{j}$ and normalized margin can be still defined  as $\frac{\tilde{q}_{\min}(\boldsymbol{v})}{\Vert \boldsymbol{v} \Vert^L}$. The corresponding convergent direction for adaptive gradient flow under multi-class setting can then be characterized by the following theorem:
\begin{theorem}
\label{thm:approximate_flow_multi}
Let $\boldsymbol{v}$ satisfy an adaptive gradient flow $\mathcal{F}$ which satisfies Assumption \ref{assum: continuous}. Let $\bar{\boldsymbol{v}}$ be any limit point of $\{\hat{\boldsymbol{v}}(t)\}_{t=0}^{\infty}$ (where $\hat{\boldsymbol{v}}(t)=\frac{\boldsymbol{v}(t)}{\Vert \boldsymbol{v}(t)\Vert}$ is normalized parameter). Then $\bar{\boldsymbol{v}}$ is along the direction of a KKT point of the following $L^2$ max-margin problem $(P)$:
\begin{gather*}
\min \frac{1}{2}\Vert \boldsymbol{v} \Vert^2\\
\text{subject to } \tilde{q}_i(\boldsymbol{v}) \ge 1,
\forall i \in[N].
\end{gather*}
\end{theorem}

Proof of Theorem \ref{thm:approximate_flow_multi} differs from that of Theorem \ref{thm:approximate_flow} only by Lemma \ref{lem:appen_smoothed_normal}, Lemma \ref{lem:derivative_rho}, and the construction of $\lambda_i$ in Lemma \ref{lem: construction of KKT_appen}. We show modifications respectively.

First of all, we show normalized margin $\gamma$ and surrogate margin $\tilde{\gamma}$ converge to the same limit: 
\begin{lemma}
\label{lem:appen_smoothed_normal_multi_class}
 Let a function $\boldsymbol{v}(t)$ obey an adaptive gradient flow $\mathcal{F}$ which satisfies Assumption \ref{assum: continuous}, with loss $\tilde{\mathcal{L}}$ and component learning rate $\boldsymbol{\beta}(t)$. Then we have $\lim_{t\rightarrow\infty}\frac{\rho(t)}{\Vert\boldsymbol{v}(t)\Vert}=1$. Furthermore, if further $\lim_{t\rightarrow \infty }\tilde{\mathcal{L}}(t)=\infty$, we have  $\lim_{t\rightarrow\infty}\frac{\gamma(t)}{\tilde{\gamma}(t)}=1$.
\end{lemma}
\begin{proof}
 By definition of empirical loss (eq. (\ref{eq: def_empirical_loss_multi_class})), 
 \begin{align*}
     \tilde{\mathcal{L}}(\boldsymbol{v})=&\sum_{i=1}^N -\log \frac{e^{\tilde{\boldsymbol{\Phi}}(\boldsymbol{v},\boldsymbol{x}_i)_{y_i}}}{\sum_{j=1}^C e^{\tilde{\boldsymbol{\Phi}}(\boldsymbol{v},\boldsymbol{x}_i)_{j}}}
     \\
     =&\sum_{i=1}^N -\log \frac{1}{1+\sum_{j\ne y_i} e^{\tilde{\boldsymbol{\Phi}}(\boldsymbol{v},\boldsymbol{x}_i)_{j}-\tilde{\boldsymbol{\Phi}}(\boldsymbol{v},\boldsymbol{x}_i)_{y_i}}}.
 \end{align*}
 
 Let $\tilde{q}_i'(\boldsymbol{v})=\log \sum_{j\ne y_i} e^{\tilde{\boldsymbol{\Phi}}(\boldsymbol{v},\boldsymbol{x}_i)_{j}-\tilde{\boldsymbol{\Phi}}(\boldsymbol{v},\boldsymbol{x}_i)_{y_i}}$. By exact the same routine of Lemma \ref{lem:appen_smoothed_normal}, we have
 \begin{equation}
 \label{eq:equiv_gamma_multi}
     \lim_{t\rightarrow\infty} \frac{\gamma(t)}{\frac{\tilde{q}'_{\min}(\boldsymbol{v})}{\rho^L}}=1.
 \end{equation}

 On the other hand,
 \begin{equation*}
     -\tilde{q}_{\min}(\boldsymbol{v})\le \log \sum_{j\ne y_i} e^{\tilde{\boldsymbol{\Phi}}(\boldsymbol{v},\boldsymbol{x}_i)_{j}-\tilde{\boldsymbol{\Phi}}(\boldsymbol{v},\boldsymbol{x}_i)_{y_i}}\le -\tilde{q}_{\min}(\boldsymbol{v})+ \log N,
 \end{equation*}
 which leads to 
  \begin{equation}
   \label{eq:equiv_tilgamma_multi}
     \lim_{t\rightarrow\infty} \frac{\tilde{\gamma}(t)}{\frac{\tilde{q}'_{\min}(\boldsymbol{v})}{\rho^L}}=1.
 \end{equation}
 
 Combining eqs. (\ref{eq:equiv_gamma_multi}) and  (\ref{eq:equiv_tilgamma_multi}), the proof is completed.
 
\end{proof}

Secondly, we calculate derivative of surrogate norm $\rho$ under multi-class classification setting.

\begin{lemma}
\label{lem:derivative_rho_multi_class}
The derivative of $\rho^2$ is as follows:
\begin{equation*}
    \frac{1}{2}\frac{\mathrm{d} \rho(t)^2}{\mathrm{d}t}= L \nu(t)+ \left\langle \boldsymbol{v}(t),\boldsymbol{\beta}^{-\frac{1}{2}}(t)\odot \frac{\mathrm{d}\boldsymbol{\beta}^{-\frac{1}{2}}}{\mathrm{d}t}(t)\odot \boldsymbol{v}(t) \right\rangle,
\end{equation*}
where $\nu(t)$ is defined as 
\begin{equation*}
    \nu(t)=\sum_{i=1}^{N} \frac{\sum_{j \neq y_{n}} e^{ \tilde{\boldsymbol{\Phi}}(\boldsymbol{v},\boldsymbol{x}_i)_{j}-\tilde{\boldsymbol{\Phi}}(\boldsymbol{v},\boldsymbol{x}_i)_{y_i}}(\tilde{\boldsymbol{\Phi}}(\boldsymbol{v},\boldsymbol{x}_i)_{y_i}-\tilde{\boldsymbol{\Phi}}(\boldsymbol{v},\boldsymbol{x}_i)_{j})}{1+\sum_{j \neq y_{n}} e^{\tilde{\boldsymbol{\Phi}}(\boldsymbol{v},\boldsymbol{x}_i)_{j}-\tilde{\boldsymbol{\Phi}}(\boldsymbol{v},\boldsymbol{x}_i)_{y_i}}}.
\end{equation*} 
Furthermore, we have that $\nu(t)>\frac{g\left(\log \frac{1}{\tilde{\mathcal{L}}(\boldsymbol{v}(t))}\right)}{g^{\prime}\left(\log \frac{1}{\tilde{\mathcal{L}}(\boldsymbol{v}(t))}\right)} \tilde{\mathcal{L}}(\boldsymbol{v}(t))$.
\end{lemma}

\begin{proof}
 We only need to show 
 \begin{equation}
 \label{eq:proof_derivative_rho_multi_1}
     -\langle\partial^s \tilde{\mathcal{L}}(\boldsymbol{v}(t)),\boldsymbol{v}(t) \rangle=L\nu(t),
 \end{equation}
 and 
 \begin{equation}
 \label{eq:proof_derivative_rho_multi_2}
     \nu(t)>\frac{g\left(\log \frac{1}{\tilde{\mathcal{L}}(\boldsymbol{v}(t))}\right)}{g^{\prime}\left(\log \frac{1}{\tilde{\mathcal{L}}(\boldsymbol{v}(t))}\right)} \tilde{\mathcal{L}}(\boldsymbol{v}(t)),
 \end{equation}
 while other parts of the proof follows exact the same as Lemma \ref{lem:derivative_rho}.
 
 By chain rule,
 \begin{equation*}
     \partial^s \tilde{\mathcal{L}}(\boldsymbol{v})=\sum_{i=1}^N \frac{\sum_{j\ne y_i} e^{\tilde{\boldsymbol{\Phi}}(\boldsymbol{v},\boldsymbol{x}_i)_{j}-\tilde{\boldsymbol{\Phi}}(\boldsymbol{v},\boldsymbol{x}_i)_{y_i}}\partial^s(\tilde{\boldsymbol{\Phi}}(\boldsymbol{v},\boldsymbol{x}_i)_{j}-\tilde{\boldsymbol{\Phi}}(\boldsymbol{v},\boldsymbol{x}_i)_{y_i})}{1+\sum_{j\ne y_i} e^{\tilde{\boldsymbol{\Phi}}(\boldsymbol{v},\boldsymbol{x}_i)_{j}-\tilde{\boldsymbol{\Phi}}(\boldsymbol{v},\boldsymbol{x}_i)_{y_i}}},
 \end{equation*}
 while by homogeneity of $\tilde{\boldsymbol{\Phi}}$,
 \begin{equation*}
     \langle\partial^s(\tilde{\boldsymbol{\Phi}}(\boldsymbol{v},\boldsymbol{x}_i)_{j}-\tilde{\boldsymbol{\Phi}}(\boldsymbol{v},\boldsymbol{x}_i)_{y_i}),\boldsymbol{v}\rangle=L(\tilde{\boldsymbol{\Phi}}(\boldsymbol{v},\boldsymbol{x}_i)_{j}-\tilde{\boldsymbol{\Phi}}(\boldsymbol{v},\boldsymbol{x}_i)_{y_i}),
 \end{equation*}
 which completes the proof of eq. (\ref{eq:proof_derivative_rho_multi_1}).

 As for eq. (\ref{eq:proof_derivative_rho_multi_2}),
 \begin{align*}
      \nu(t)=&\sum_{i=1}^{N} e^{-f(\tilde{q}_i'(\boldsymbol{v}(t)))} f'(\tilde{q}_i'(\boldsymbol{v}(t))) \frac{\sum_{j\ne y_i} e^{\tilde{\boldsymbol{\Phi}}(\boldsymbol{v},\boldsymbol{x}_i)_{j}-\tilde{\boldsymbol{\Phi}}(\boldsymbol{v},\boldsymbol{x}_i)_{y_i}}(\tilde{\boldsymbol{\Phi}}(\boldsymbol{v},\boldsymbol{x}_i)_{y_i}-\tilde{\boldsymbol{\Phi}}(\boldsymbol{v},\boldsymbol{x}_i)_{j})}{\sum_{j\ne y_i} e^{\tilde{\boldsymbol{\Phi}}(\boldsymbol{v},\boldsymbol{x}_i)_{j}-\tilde{\boldsymbol{\Phi}}(\boldsymbol{v},\boldsymbol{x}_i)_{y_i}}}
      \\
      \ge &\sum_{i=1}^{N} e^{-f(\tilde{q}_i'(\boldsymbol{v}(t)))} f'(\tilde{q}_i'(\boldsymbol{v}(t)))\tilde{q}_i'(\boldsymbol{v}(t))
      \\
      \ge & \frac{g\left(\log \frac{1}{\tilde{\mathcal{L}}(\boldsymbol{v}(t))}\right)}{g^{\prime}\left(\log \frac{1}{\tilde{\mathcal{L}}(\boldsymbol{v}(t))}\right)} \tilde{\mathcal{L}}(\boldsymbol{v}(t)).
 \end{align*}
    
The proof is completed.

\end{proof}

Finally, we provide construction of $\lambda_i$ similar to Lemma \ref{lem: construction of KKT_appen}. The proof follows the same routine as Lemma \ref{lem: construction of KKT_appen} and we omit it here.

\begin{lemma}
\label{lem: construction of KKT_appen_multi}
Let $\boldsymbol{v}$ obey adaptive gradient flow $\mathcal{F}$ with empirical loss $\tilde{\mathcal{L}}$ satisfying Assumption \ref{assum: continuous}. Let time $t_1$ be constructed as Lemma \ref{lem:lower_bound_margin}. Then, define coefficients in Definition \ref{def:general_KKT} as 
\begin{equation*}
    \lambda_{i,j}(t)=\tilde{q}_{\min }(\boldsymbol{v}(t))^{1-2 / L} \Vert \boldsymbol v(t)\Vert \cdot  \frac{1}{\Vert\partial^s \tilde{\mathcal{L}}(\boldsymbol{v}(t)) \Vert_{2}}\frac{1+ e^{\tilde{\boldsymbol{\Phi}}(\boldsymbol{v},\boldsymbol{x}_i)_{j}-\tilde{\boldsymbol{\Phi}}(\boldsymbol{v},\boldsymbol{x}_i)_{y_i}}}{\sum_{j\ne y_i} e^{\tilde{\boldsymbol{\Phi}}(\boldsymbol{v},\boldsymbol{x}_i)_{j}-\tilde{\boldsymbol{\Phi}}(\boldsymbol{v},\boldsymbol{x}_i)_{y_i}}},
\end{equation*}
(where
$i\in[N]$, $j\in[C]/\{i\}$). Then, for any time $t\ge t_1$, $\tilde{\boldsymbol{v}}(t)=\tilde{q}_{\min}(\boldsymbol{v}(t))^{-\frac{1}{L}}\boldsymbol{v}(t)$ is an $(\varepsilon(t),\delta(t))$ KKT point of $(\tilde{P})$, where $\varepsilon(t)$, $\delta(t)$ are defined as follows:
\begin{align*}
     \varepsilon(t)&=\mathcal{O}(1-\cos(\boldsymbol{\theta}(t)))
     \\
     \delta(t)&=\mathcal{O}\left( \frac{1}{\log\frac{1}{\tilde{\mathcal{L}}(\boldsymbol{v}(t))}}\right),
\end{align*}
where $\cos(\boldsymbol{\theta}(t))$  is defined as inner product of  $ \hat{\boldsymbol{v}}(t)$ and $-\widehat{\partial^s\tilde{\mathcal{L}}(\boldsymbol{v}(t))}$.

\end{lemma}

\section{Experiment Details}
In this section, we provide detailed explanation of experiments showed in Section \ref{sec:experiment} \footnote{https://github.com/bhwangfy/ICML-2021-Adaptive-Bias}. This section is divided into two parts according to Section \ref{sec:experiment}: in Section \ref{sec:expe_mnist}, we provide details of structure of neural network we use and hyper-parameters. We also further plot two additional experiments of Adam and SGD to show the influence of momentum; in Section \ref{sec:expe_toy}, we show construction of dataset in Section \ref{sec:exper_toy_main} and choose of hyper-parameters. We also show how direction of $\boldsymbol{h}_{\infty}^{-\frac{1}{2}}$ influence convergent direction of parameters.
\subsection{Experiment on MNIST}
\label{sec:expe_mnist}
\subsubsection{Construction of Neural Network and Choice of Hyper-parameters}
We use the $4$-layer convolutional neural network adopted by \cite{madry2017towards} as our model to conduct multi-class classification on MNIST \cite{lecun1998mnist}. Concretely, this convolutional neural network can be expressed in order as  convolutional layer with $32$ channel and filter size $5\times5$, max-pool layer with kernel size $2\time 2$ and stride $2$,
convolutional layer with $64$ channel and filter size $3\times 3$, max-pool with kernel size $2\time 2$, fully connected layer with width $1024$, and fully connected layer with width $10$. In order to guarantee this neural network is homogeneous, we further set bias in all layers to be zero. We use default method in Pytorch to initialize the neural network.

As for hyper-parameters, we set learning rate of AdaGrad to be the default value in Pytorch; while for RMSProp, we set learning rate and decay parameter $b$ as $0.001$ and $0.9$, which is  suggested by \cite{hinton2012neural} and used as a default value in Tensorflow;  for Adam, we set the learning rate to $0.0001$ as default value in Pytorch, and $b$ to be the same as RMSProp.

\subsubsection{Influence of Momentum}
We plot convergent behaviors for SGDm and Adam in this section. Figure \ref{fig:adding momentum} shows that adding momentum term will NOT keep normalized margin from lower bounded, which  indicates our theory might be extended to gradient based optimization methods with momentum. Specifically, for SGD, we use learning rate $0.1$ and momentum parameter $0.9$; for Adam, we use the same setting as Adam (w/m) with momentum parameter $0.9$.
\begin{figure*}[htb]
\centering
\begin{subfigure}
[b]{0.23\columnwidth}        \includegraphics[trim=0cm 0.5cm 0cm 0cm, width=\textwidth]{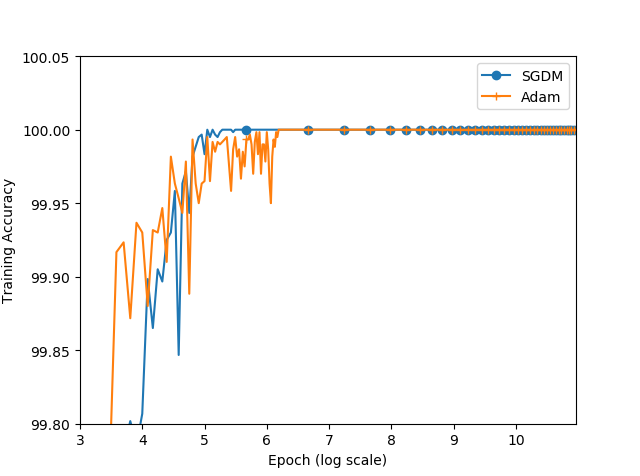}
\caption{ Training Accuracy }
\label{fig: training accuracy_m}
\end{subfigure}
\begin{subfigure}
   [b]{0.23\columnwidth}        \includegraphics[trim=0cm 0.5cm 0cm 0cm, width=\textwidth]{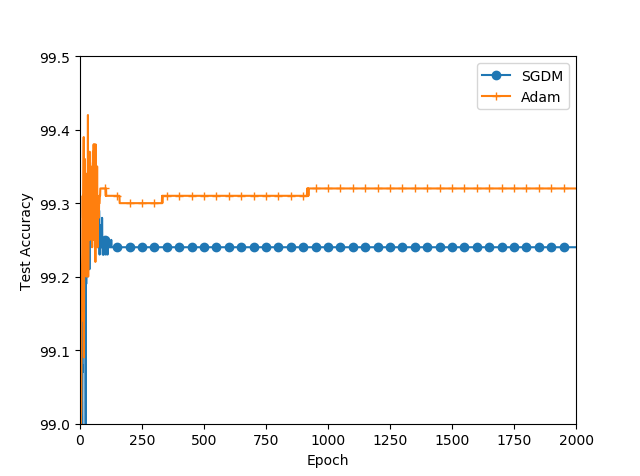}
\caption{ Test Accuracy}
\label{fig:test accuracy_m}
\end{subfigure}
\begin{subfigure}
[b]{0.23\columnwidth}        \includegraphics[trim=0cm 0.5cm 0cm 0cm, width=\textwidth]{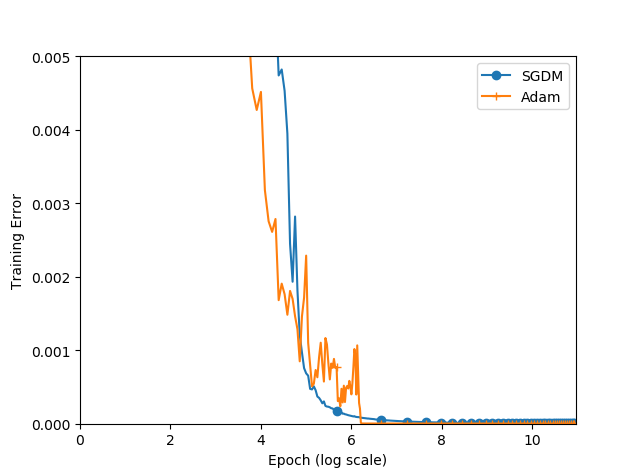}
\caption{ Training Loss}
\label{fig: loss_m}
\end{subfigure}
\begin{subfigure}
   [b]{0.23\columnwidth}        \includegraphics[trim=0cm 0.5cm 0cm 0cm, width=\textwidth]{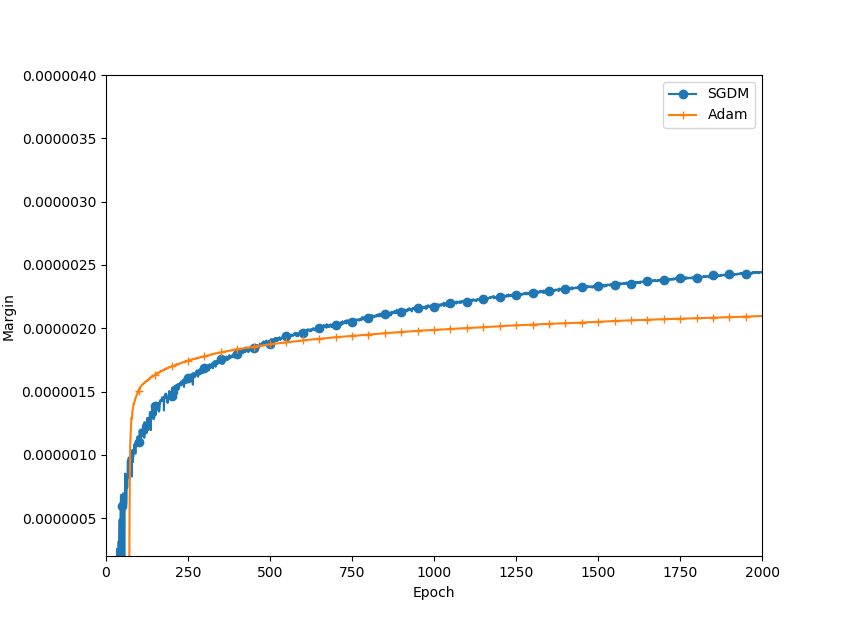}
\caption{ Normalized Margin}
\label{fig: margin_m}
\end{subfigure}
\vskip -0.3cm
\caption{Observation of convergent behavior after adding momentum for SGD and Adam. 
One can observe that loss still converge to zero, and margin will keep lower bounded.}
\label{fig:adding momentum}
\vskip -0.5cm
\end{figure*}
\begin{figure*}[htb]
\centering
\begin{subfigure}
[b]{0.25\columnwidth}        \includegraphics[trim=0cm 0.5cm 0cm 0cm, width=\textwidth]{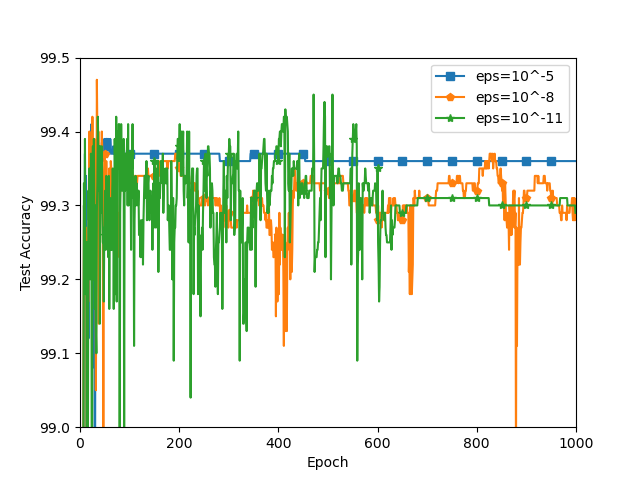}
\caption{ Test Accuracy }
\label{fig: eps_test}
\end{subfigure}
\begin{subfigure}
   [b]{0.25\columnwidth}        \includegraphics[trim=0cm 0.5cm 0cm 0cm, width=\textwidth]{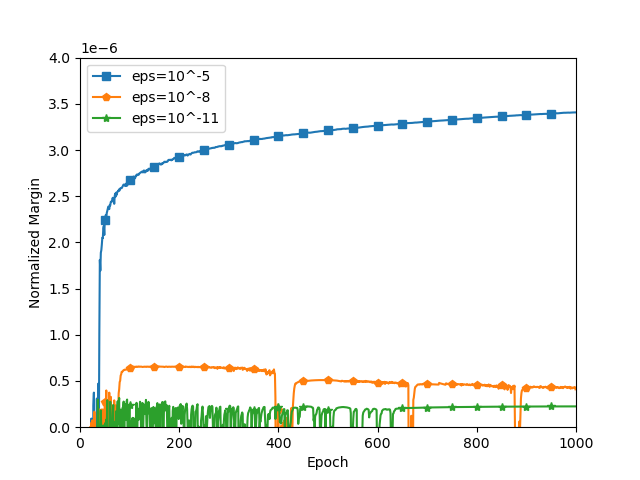}
\caption{Normalized margin}
\label{fig: eps_margin}
\end{subfigure}
\vskip -0.3cm
\caption{Observation of generalization behavior of RMSProp with different $\varepsilon$. 
One can observe that larger $\varepsilon$ leads to larger margin and smaller generalization error.}
\label{fig:different_eps}
\vskip -0.3cm
\end{figure*}

\subsection{Influence of $\varepsilon$}
We compare the generalization behaviors of RMSProp with different $\varepsilon$ selected in Figure \ref{fig:different_eps}. It is observed that as $\varepsilon$ decreases, normalized margin gets smaller and the generalization error gets larger, which indicates the importance of $\varepsilon$ on the generalization behavior. When $\varepsilon$ is completed removed (i.e., is set to $0$), the training does not converge. Therefore, we do not include the results for $\varepsilon=0$ here.

\subsection{Experiment on Two Layer MLP}
\label{sec:expe_toy}
\subsubsection{Dataset Construction and Choice of Hyper-parameters}
As mentioned in Section \ref{sec:exper_toy_main}, we use a two layer MLP $\Phi$ with leaky ReLU activation $\sigma$ defined as $\Phi(\boldsymbol{x},\boldsymbol{w},v)=v\sigma(\langle\boldsymbol{w},\boldsymbol{x}\rangle)$, where $x\in\mathbb{R}^2, \boldsymbol{w}\in\mathbb{R}^2$ and $v\in\mathbb{R}$ and $\sigma(t)$ is the Leaky ReLU activation function, i.e., $\sigma(t)=t$ for $t\geq 0$ and $\sigma(t)=\frac{t}{2}$ for $ t<0$. We construct binary classification dataset $S$ as $\{(\boldsymbol{x}_i,y_i)\}_{i=1}^{100}$ as follows:
\begin{gather*}
    (\boldsymbol{x}_i,y_i)=((\cos(0.5),\sin{0.5})+\boldsymbol{\varepsilon}_i,1),\text{ } i\in \{1,2,\cdots,50\};
    \\
     (\boldsymbol{x}_i,y_i)=((-\cos(0.5),-\sin{0.5})+\boldsymbol{\varepsilon}_i,-1),\text{ } i\in \{51,52,\cdots,100\},
\end{gather*}
where $\boldsymbol{\varepsilon}_i$ ($i=1,2,3,\cdots,100$) are random variables sampled uniformly and i.i.d. from $ [-0.6,0.6]\times [-0.6,0.6]$. We visualize  the dataset in  Figure \ref{fig:visualizing dataset}.
\begin{figure*}[t!]
\centering
\begin{subfigure}
[b]{0.3\columnwidth}        \includegraphics[trim=0cm 0.5cm 0cm 0cm, width=\textwidth]{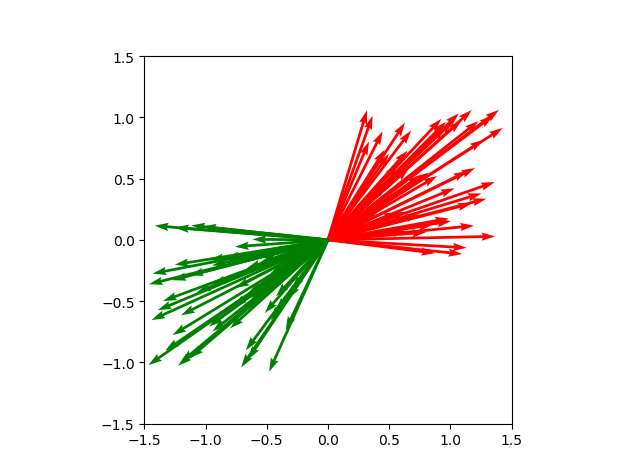}
\caption{ Visualizing Constructed Dataset  }
\label{fig:visualizing dataset}
\end{subfigure}
\begin{subfigure}
  [b]{0.3\columnwidth}        \includegraphics[trim=0cm 0.5cm 0cm 0cm, width=\textwidth]{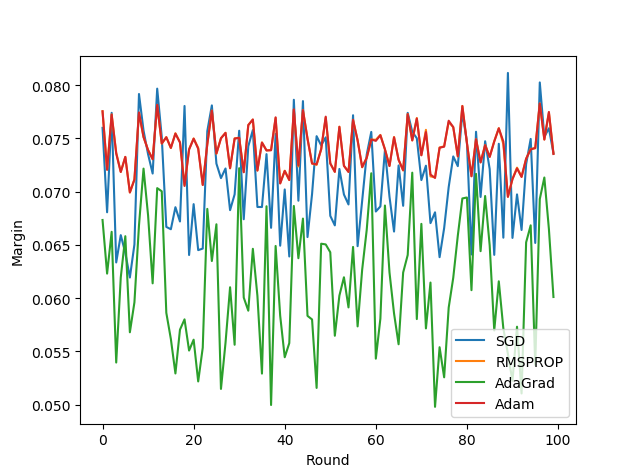}
\caption{ Margin varies across runs }
\label{fig:margin_toy}
\end{subfigure}
\begin{subfigure}
  [b]{0.3\columnwidth}        \includegraphics[trim=0cm 0.5cm 0cm 0cm, width=\textwidth]{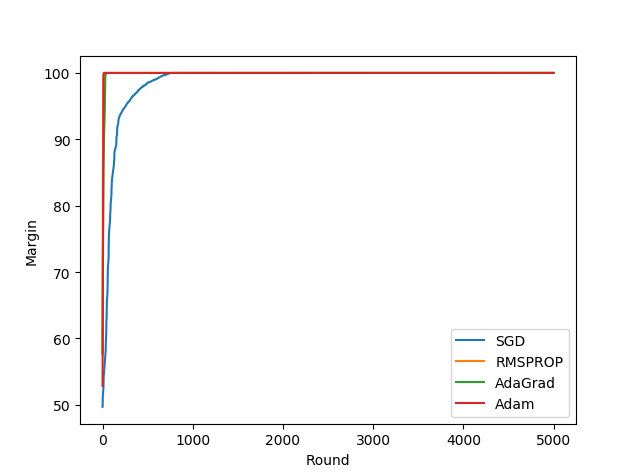}
\caption{Training accuracy}
\label{fig:training_toy}
\end{subfigure}

\begin{subfigure}
[b]{0.3\columnwidth}        \includegraphics[trim=0cm 0.5cm 0cm 0cm, width=\textwidth]{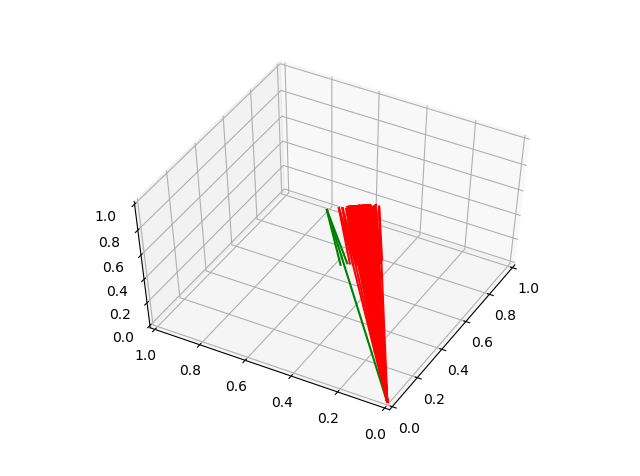}
\caption{ Direction of $\boldsymbol{h}_{\infty}^{-\frac{1}{2}}$ in AdaGrad}
\label{fig: directon_ada}
\end{subfigure}
\begin{subfigure}
  [b]{0.3\columnwidth}        \includegraphics[trim=0cm 0.5cm 0cm 0cm, width=\textwidth]{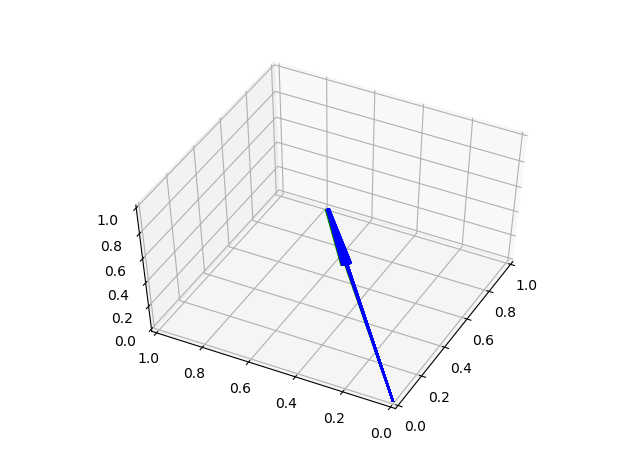}
\caption{ Direction of $\boldsymbol{h}_{\infty}^{-\frac{1}{2}}$ in RMSProp}
\label{fig: direction_rms}
\end{subfigure}
\begin{subfigure}
  [b]{0.3\columnwidth}        \includegraphics[trim=0cm 0.5cm 0cm 0cm, width=\textwidth]{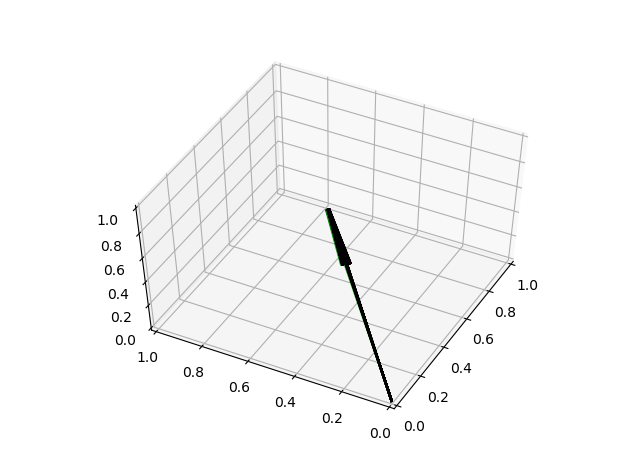}
\caption{Direction of $\boldsymbol{h}_{\infty}^{-\frac{1}{2}}$ in Adam}
\label{fig: direction_adam}
\end{subfigure}
\vskip -0.3cm
\caption{Experiment on two layer MLP. In (a), we visualize constructed dataset as vectors, with green vectors standing for data with label $-1$, and red stands for data with label $1$. In (b), we plot convergent margin of different optimizers across runs. In (c), averaged training accuracy across runs is shown, and one can observe all training accuracy achieves $100\%$. (d)-(e) respectively picture $\boldsymbol{h}_{\infty}^{-\frac{1}{2}}$ in AdaGrad (red vectors), RMSProp (blue vectors) and Adam (black vectors) across runs. While direction of $\boldsymbol{h}_{\infty}^{-\frac{1}{2}}$ in AdaGrad varies across runs, RMSProp and Adam stay the same and coincide with isotropic direction (green vector) $(\frac{1}{\sqrt{3}},\frac{1}{\sqrt{3}}, \frac{1}{\sqrt{3}})$}
\label{fig:MLP_two_layer}
\vskip -0.3cm
\end{figure*}

We then run SGD, AdaGrad, RMSProp (and Adam (w/m)) respectively with learning rates $\eta=0.1$, while Weight-decay hyper-parameter $b$ is set to be  
$0.9$. For each round, we train the model for $5000$ epochs to ensure that training accuracy achieves $100\%$ (see Figure \ref{fig:training_toy} for details); while for each optimizer, we conduct $100$ rounds of experiments with random initialization,  Convergent directions of square root of inverse conditioners $\boldsymbol{h}_{\infty}^{-\frac{1}{2}}$ are plotted in Figures \ref{fig: directon_ada}, \ref{fig: direction_rms}, and \ref{fig: direction_adam}. Since $\boldsymbol{h}_{\infty}^{-\frac{1}{2}}$ occurs in optimization target in $(P^A)$, different direction of $\boldsymbol{h}_{\infty}^{-\frac{1}{2}}$ may lead to different convergent direction of parameters, which further indicates convergent direction of parameters in AdaGrad can be vulnerable to random initialization.


\end{document}